\documentclass[10pt]{article} 
\usepackage[left=1in,top=1in,right=1in,bottom=1in,letterpaper]{geometry}
\usepackage[ansinew]{inputenc}
\usepackage[T1]{fontenc}    
\usepackage{hyperref}       
\hypersetup{
  colorlinks,
  linkcolor={red!50!black},
  citecolor={blue!50!black},
  urlcolor={blue!80!black}
}
\usepackage{url}            
\usepackage{booktabs}       
\usepackage{amsfonts}       
\usepackage{nicefrac}       
\usepackage{microtype}      
\usepackage{xcolor}         
\usepackage{macros}
\usepackage{natbib}

\newlength\TopY
\newlength\BotY
\newlength\TotalHeight

\usepackage{tcolorbox}  

\tcbset{
  colframe=black!60,
  colback=white,
  boxrule=0.5pt,
  sharp corners,
  left=5pt, right=5pt, top = 4pt, bottom = 4pt}

\usepackage{titlesec}
\titlespacing*{\paragraph}{0pt}{2.5pt plus 1pt minus 1pt}{1em}

\makeatletter
\newcommand{\leqnomode}{\tagsleft@true}
\newcommand{\reqnomode}{\tagsleft@false}
\makeatother

\title{SGD in Multiclass Logistic Regression: \\ Sequential Learning and Scaling Laws}

\author{
Konstantinos Christopher Tsiolis\thanks{University of Toronto, Vector Institute. \texttt{kc.tsiolis@mail.utoronto.ca}.} \and
Denny Wu\thanks{New York University, Flatiron Institute. \texttt{dennywu@nyu.edu}.} \and
Christos Thrampoulidis\thanks{University of British Columbia. \texttt{cthrampo@ece.ubc.ca}.} \and
Murat A. Erdogdu\thanks{University of Toronto, Vector Institute, Numurho. \texttt{erdogdu@cs.toronto.edu}.}
\vspace{-3mm}
}

\begin{document}

\maketitle
\begin{abstract}
  We study the training dynamics of multiclass logistic regression on high-dimensional Gaussian mixture models with a large number of classes and establish precise scaling laws governing the cross-entropy risk under gradient-based optimization. We show that learning proceeds sequentially across classes, from most to least frequent. When the class priors follow a power law distribution, the risk dynamics decompose into three phases: an initial plateau until the first class is learned, a power-law decay regime during which sequential learning occurs, and a final convergence regime. 

We then analyze how model capacity interacts with optimization under a fixed compute budget. When the effective dimension is restricted via projection onto leading principal components, the risk decomposes into a capacity term (a power law in the retained dimension) and an optimization term (a power law in training time). Optimizing this tradeoff yields a compute-optimal scaling law for logistic regression, with explicit prescriptions for model size and training time as functions of compute. These results extend theoretical scaling laws from linear regression to multiclass classification, while connecting to empirical scaling laws observed in large-scale neural networks.
\end{abstract}

\section{Introduction}\label{section:intro}

Large-scale neural networks exhibit remarkably predictable loss curves: as the amount of compute, data, and model size increases, the test loss often decreases as a power law over several orders of magnitude \citep{hestness2017deep,kaplan2020scaling,hoffmann2022training}. These empirical neural scaling laws have become a central tool for forecasting the outcome of large training runs and for allocating a fixed compute budget between model size and training time. A growing line of work has begun to develop a theoretical understanding of scaling law behaviors in stylized models. Existing results typically consider gradient-based learning of either sketched linear regression models, where the effective model size is controlled by a random feature map \citep{maloney2022solvable,bordelon2024dynamical,paquette2024phases,lin2024scaling}, or shallow neural networks with power-law features (i.e., additive model) \citep{nam2024exactly,ren2026emergence,ben2026learning,defilippis2026optimal}. For such model classes, the squared loss admits a decomposition in the form of: 
$
    \mathrm{Risk}(m,t) \asymp m^{-\gamma} + t^{-\beta},
$
where $m$ denotes an effective model size and $t$ denotes sample size or optimization time. Together, these results provide a mathematical picture for scaling laws in regression 
problems.

However, there is a noticeable mismatch between existing scaling law theory focusing on the squared loss and modern language-model pretraining, which is typically formulated as next-token prediction with the cross-entropy loss. On the other hand, next-token prediction is not a binary or fixed-class problem: the number of possible tokens is large, and token frequencies are highly non-uniform. This suggests that a natural theoretical abstraction should be a multiclass classification problem with a large number of classes and a heavy-tailed class prior. We therefore ask the following question:

\begin{center}
\emph{
What is a tractable model to derive scaling laws
for gradient-based optimization under cross-entropy loss, in a
multiclass classification problem with divergingly many classes?}
\end{center}

\subsection{Our Contributions}

\begin{wrapfigure}{r}{0.46\textwidth}  
\vspace{-5mm} 
\centering
\centering 
{\includegraphics[height=0.8\linewidth]{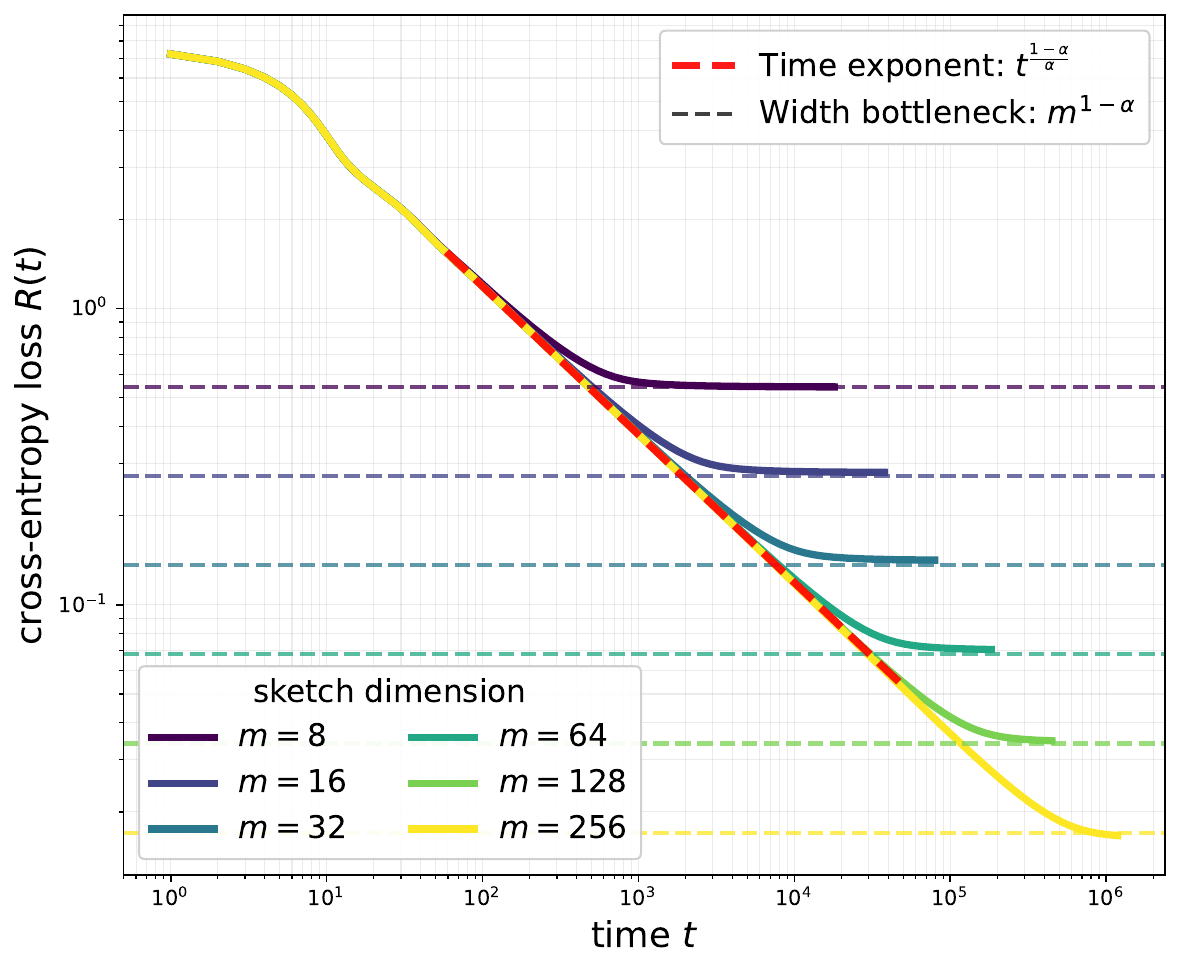}} 
\vspace{-1mm}
\vspace{-1mm} 
\caption{\small 
Gradient flow trajectory and cross-entropy scaling law under PCA sketch. We solve the population ODE \eqref{eq:full_theta_dynamics} using Euler discretization, where expectations are approximated via Monte Carlo samples. We set $K=2048, \sigma=0.1, \alpha=2$ and vary the sketch dimension $m$. The time \& width scaling exponents are predicted by Theorem \ref{thm:gmm_sketch}.  
}  
\vspace{-5mm}
\label{fig:sketched} 
\end{wrapfigure}

We study this question in a tractable high-dimensional classification model: $K$-class logistic regression on a Gaussian mixture model. The label $y\in[K]$ is sampled from a power-law prior
$\pi_i\asymp i^{-\alpha}$ with $\alpha>1$, and conditional on $y=i$,
\[
    \xbs \sim \mathcal N(\mubs_i,\sigma^2 \Ibs_d),
    \qquad
    \langle \mubs_i,\mubs_j\rangle = \1\{i=j\}.
\]
The model is trained either with gradient flow or (stochastic) gradient descent on the cross-entropy loss. Though this setting is much simpler than language modelling, it isolates three ingredients that are central to classification scaling laws: a large label space, heterogeneous class frequencies, and the cross-entropy objective.

Our main finding is that gradient-based training learns the classes sequentially in order of decreasing frequency. Aggregating these class-wise learning curves yields a scaling law for the cross-entropy loss trajectory. Under an appropriate notion of dimension reduction, we find the compute-optimal trade-off between the retained dimension and the number of gradient queries. Specifically, we establish the following: 
\begin{enumerate}[leftmargin=*]
    \item \textbf{Sequential learning and loss scaling.}
    In Section \ref{section:pop_gf}, we prove that population gradient flow learns class $i$ at time
    $t_i\asymp (\log K)/\pi_i$. For power-law priors $\pi_i\asymp i^{-\alpha}$ with $\alpha > 1$, this yields a scaling law in the cross-entropy risk:
    \[
        \Rcal(t) \asymp
        \begin{cases}
        \log K, & 0 \le t \lesssim \log K,\\[1mm]
        (\log K)^{2-1/\alpha} t^{-(1-1/\alpha)},
            & \log K \lesssim t \lesssim K^\alpha \log K,\\[1mm]
        K/t, & K^\alpha \log K \lesssim t.
        \end{cases}
    \]
    For noisy GMM with $\sigma^2>0$, this scaling persists up to a noise-limited horizon $T\asymp 1/\sigma$.

    \item \textbf{Compute-optimal scaling laws.}
    In Section \ref{section:compute_optimal_sl}, we introduce a PCA-type projection onto the span of the $m$ most frequent class means to control the effective model capacity. This differs from the random projection commonly used in the regression setting to introduce a capacity bottleneck \citep{bordelon2024dynamical,paquette2024phases,lin2024scaling}, since a Gaussian sketch may not create such a bottleneck in the classification loss. Under the PCA projection, we prove the risk scaling (see Figure~\ref{fig:sketched})
    \[
        \Rcal_m(t)
        \asymp
        m^{-(\alpha-1)}\log K
        +
        (\log K)^{2-1/\alpha}t^{-(1-1/\alpha)}.
    \]
    Optimizing this tradeoff under compute $C\asymp mKt$ yields
  the compute-optimal rate 
    $$
        \Rcal_{m_\star}(t_\star)
        \asymp
        (\log K)^{2\alpha/(\alpha+1)}
        (K/C)^{(\alpha-1)/(\alpha+1)}.
    $$

    \item \textbf{Online SGD dynamics.}
    Finally, in Section \ref{section:online_sgd}, we study the time discretization and show that for a suitable learning rate, online stochastic gradient descent (SGD) tracks the population gradient flow trajectory over the relevant time horizon. Consequently, the SGD dynamics exhibit similar scaling behavior to gradient flow with effective optimization time $t = \eta n$.
\end{enumerate}


\subsection{Related Work}

\paragraph{Neural scaling laws.}
Empirical studies have shown that neural network losses often obey predictable power laws as a function of model size, dataset size, and compute \citep{hestness2017deep,kaplan2020scaling,hoffmann2022training}. A growing theoretical literature has derived such scaling laws in tractable regression settings, including sketched linear models (random features regression) \citep{maloney2022solvable,bordelon2024dynamical,paquette2024phases,lin2024scaling,lin2025improved,ferbach2025dimension}, shallow neural networks with high-information-exponent activation functions \citep{ren2026emergence,ben2026learning,defilippis2026optimal}, as well as linear bigram \citep{kunstner2025scaling} and associative memory models \citep{cabannes2024scaling,li2026muon,kim2026sharp}. The phenomenology of our problem setting can be compared to the additive model hypothesis  \citep{michaud2023quantization,nam2024exactly}, where the ``global'' smooth power-law risk curve is a result of the sequential learning of a large number of ``atomic'' tasks. We extend this perspective from squared-loss regression to cross-entropy classification, where the individual components are classes with heterogeneous frequencies.


\paragraph{Learning dynamics for classification.}
For multiclass Gaussian mixture data, \cite{thrampoulidis2020theoretical} derive sharp asymptotics for the test error of linear classifiers, while \cite{mignacco2020dynamical} study SGD dynamics for binary Gaussian mixture classification. \cite{ben2025spectral,ben2025local} consider SGD for multiclass logistic regression on Gaussian mixtures, characterizing the emergence of spectral outliers in the Hessian through low-dimensional effective dynamics. Recent works further derive effective dynamics in generalized linear and mixture models \citep{collins2024hitting,collins2025exact}. Another related line studies logistic regression through implicit bias: on separable data, gradient descent converges in direction to max-margin solutions, with extensions to non-separable and multiclass settings \citep{soudry2018implicit,ji2018risk,nacson2019stochastic,wu2023implicit,ravi2024implicit,schliserman2025multiclass,fan2026implicit}. Closest in spirit to our sequential-learning result, \cite{zhao2025loss} show that in an unconstrained feature model for imbalanced classification, standard training learns majority before minority distinctions. In contrast, we analyze multiclass logistic regression with a diverging number of classes and power-law class frequencies, and derive explicit scaling laws for population gradient flow and online SGD.

\subsection{Preliminaries}\label{section:setup}

\paragraph{Notation.} For $k \in \N$, we use $[k]$ to denote the set $\{1,\dots,k\}$. The relations $\lesssim$ and $\gtrsim$ denote upper and lower bounds up to positive constant factors, respectively. We write $a \asymp b$ when $a \lesssim b$ and $a \gtrsim b$. The notations $\inner{\cdot,\cdot}$ and $||\cdot||$ refer respectively to the Euclidean inner product and norm for vectors in $\R^d$ in the absence of a subscript, while for matrices the notation $||\cdot||$ refers to the operator norm.

\paragraph{Problem setup.} We consider the problem of supervised classification on a $K$-component Gaussian mixture model (GMM) with orthonormal means:
\begin{equation}\label{eq:target_gmm}
    \begin{gathered}
    y \sim \mathrm{Categorical}\{\pi_1,\dots,\pi_K\} \quad \text{where}\quad\pi_i = i^{-\alpha}/Z_{K,\alpha} \quad \text{with} \quad Z_{K,\alpha} = \sum_{k=1}^K k^{-\alpha},\quad \alpha > 0,\\
    \xbs \mid y=i \sim \Ncal(\mubs_i, \sigma^2 \Ibs_d) \quad \text{with}\quad\inner{\mubs_i, \mubs_j} = \1\{i=j\}, \quad \sigma^2 \geq 0.
    \end{gathered}
\end{equation} 
We are particularly interested in the high-dimensional regime where the input dimension as well as the number of classes are both extremely large, i.e., $1 \ll K \leq d$. While our sequential learning results hold for an arbitrary choice of prior $(\pi_1,\dots,\pi_K)$, we focus on the \emph{power-law prior} where $\pi_i \asymp i^{-\alpha}$ for all $i \in [K]$ and for some $\alpha > 0$.

We fit data from the GMM with a $K$-class logistic regression model, which takes input $\xbs \in \R^d$ and predicts an associated mixture component (label) $y \in [K]$. The weights $\Wbs = [\wbs_1,\dots,\wbs_K] \in \R^{d \times K}$ of the model induce a probability distribution over $[K]$ and are trained to minimize the cross-entropy loss
\begin{equation}\label{eq:cross_entropy_loss}
    \Lcal(\Wbs; \xbs,y) = -\sum_{i=1}^K  \1\{y=i\}\log p_i(\xbs) 
    \quad\text{ where }\quad
    p_i(\xbs) := \frac{\exp(\inner{\wbs_i,\xbs})}{\sum_{k=1}^K \exp(\inner{\wbs_k, \xbs})}.
\end{equation}
We are ultimately interested in the trajectory of the \emph{population risk} over the course of training of $\Wbs$, 
\begin{equation}\label{eq:risk_def}
    \Rcal(\Wbs) := \E_{\xbs,y}\left[\Lcal(\Wbs;\xbs,y)\right],
\end{equation}
under gradient flow and online SGD.

\section{Population Gradient Flow}\label{section:pop_gf}
As an important building block towards our later study of online SGD, we consider the learning dynamics under \emph{population gradient flow} (GF):
\begin{equation}\label{eq:pop_gf}
    \dot{\Wbs} = -\nabla_{\Wbs} \Rcal(\Wbs) \quad\text{ with }\quad \Wbs(0) = \zerobs_{d \times K}.
\end{equation}
We track several key quantities of interest, namely:
\begin{equation}
    \theta_{ij} := \inner{\wbs_i, \mubs_j}, \quad p_{i|j} = p_i(\mubs_j), \quad P_{i|j} = \E_{\xbs|y=j}[p_i(\xbs)], \quad i,j \in [K].
\end{equation}
Following the convention in \cite{benarous2021online}, we refer to the $\theta_{ij}$ as \emph{summary statistics}. A short calculation (see Appendix \ref{appendix:proofs_pop_grad_flow}) shows that the summary statistics satisfy the ODE
\begin{equation}\label{eq:full_theta_dynamics}
    \dot{\theta}_{ij} = \pi_j(\1\{i=j\}-P_{i|j}) - \sigma^2 \sum_{\ell=1}^K \E_{\xbs}\big[p_i(\xbs)(\1\{i=\ell\} - p_{\ell}(\xbs))\big] \theta_{\ell j}, \quad i,j \in [K].
\end{equation}

\subsection{Warmup: The Idealized Noiseless Case}\label{section:idealized_case}

For a first result, we consider the idealized setting $\sigma^2 = 0$, where the problem reduces to associating each of $K$ points with the correct label (i.e., associate $\mubs_1$ with the label 1, $\mubs_2$ with the label 2, and so on). This coincides with a special case of associative memory models studied in \cite{cabannes2024scaling,li2026muon}, though our focus here is on sharply characterizing the risk and the underlying summary statistics along the GF dynamics.

Here, the problem reduces to tracking the quantities $p_{i|i} := p_i(\mubs_i)$ for $i \in [K]$, which we show in Appendix \ref{appendix:idealized_case} each satisfy the autonomous ODE
\begin{equation}\label{eq:idealized_p_ii_ode}
    \dot{p}_{i|i} = \frac{K}{K-1}\pi_i p_{i|i}(1-p_{i|i})^2 \quad\text{ with }\quad p_{i|i}(0) = \frac{1}{K}.
\end{equation}
Hence, the $p_{i|i}$ are increasing for all time, and the prior $\pi_i$ controls the speed of learning. Tracking the early-time dynamics of the above ODE yields the following lemma, confirming the intuition that more frequently observed classes are learned faster.

\begin{lemma}[Idealized Sequential Learning]\label{lem:idealized_sequential_learning}
    In the GMM setting of \eqref{eq:target_gmm}, suppose that $\sigma^2 = 0$. Fix $c \in (1/K,1)$ and define the hitting time $t_i(c) := \inf \{t: p_{i|i}(t) \geq c\}$ under population gradient flow \eqref{eq:pop_gf} with the cross-entropy loss. Then
    \begin{equation}
        t_i(c) = \frac{K-1}{K} \frac{1}{\pi_i} \left[\log \frac{c}{1-c} + \frac{1}{1-c} + \log(K-1) - \frac{K}{K-1}\right] \asymp \frac{\log K}{\pi_i}.
    \end{equation}
\end{lemma}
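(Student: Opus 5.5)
The plan is to take the autonomous ODE \eqref{eq:idealized_p_ii_ode} as given and solve it by separation of variables. This is legitimate because each $p_{i|i}$ evolves independently of the other classes. The right-hand side $\frac{K}{K-1}\pi_i p(1-p)^2$ is strictly positive on $(0,1)$. Starting from $p_{i|i}(0)=1/K\in(0,1)$, the solution is therefore strictly increasing and stays in $(0,1)$ for all time, since $p=1$ is an equilibrium and cannot be crossed. Hence the hitting time $t_i(c)$ is simply the unique time at which $p_{i|i}(t)=c$, for any $c\in(1/K,1)$. I also need that $c$ is actually reached. This follows from the integral computed below being finite for $c<1$, since the integrand is bounded away from $0$ on $[1/K,c]$.

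Separating variables gives
\[
\frac{K}{K-1}\pi_i\, t_i(c) = \int_{1/K}^{c} \frac{dp}{p(1-p)^2}.
\]
Partial fractions give $\frac{1}{p(1-p)^2} = \frac{1}{p} + \frac{1}{1-p} + \frac{1}{(1-p)^2}$, which I will check by clearing denominators. An antiderivative is therefore
\[
F(p)=\log\frac{p}{1-p}+\frac{1}{1-p}.
\]
Evaluating at the endpoints gives $F(c)=\log\frac{c}{1-c}+\frac{1}{1-c}$. At the lower endpoint, $\log\frac{1/K}{1-1/K}=-\log(K-1)$ and $\frac{1}{1-1/K}=\frac{K}{K-1}$. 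Subtracting and multiplying by $\frac{K-1}{K}\frac{1}{\pi_i}$ yields exactly the displayed closed form.

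For the asymptotic $t_i(c)\asymp \log K/\pi_i$, I treat $c$ as a fixed constant in $(0,1)$ independent of $K$, with $K$ large enough that $c>1/K$. The terms $\log\frac{c}{1-c}+\frac{1}{1-c}$ are then $O(1)$, $\frac{K}{K-1}\in[1,2]$, and $\frac{K-1}{K}\in[1/2,1]$. So the bracket equals $\log(K-1)+O(1)$, which is $\asymp\log K$ once $K$ exceeds a threshold depending on $c$. This gives the claim.

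The only step needing care is this asymptotic statement. For $c$ close to $1/K$ the bracket can be small; indeed it vanishes at $c=1/K$. So the $\asymp$ must be read with $c$ fixed, or bounded away from $0$ and $1$, as $K\to\infty$, and I would state that explicitly. The rest is a routine computation.
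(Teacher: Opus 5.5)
Your proposal is correct and follows the paper's proof essentially step for step: separation of variables in \eqref{eq:idealized_p_ii_ode}, the partial fractions $\frac{1}{p}+\frac{1}{1-p}+\frac{1}{(1-p)^2}$, and evaluation of the antiderivative at $p_{i|i}(0)=1/K$. Your caveat on the asymptotic is a fair sharpening of the paper's loose ``fix $c\in(1/K,1)$''; note, however, that the paper later applies the lemma with $c=K^{-1/2}$, so the condition actually needed is $\log(cK)\gtrsim\log K$ and $(1-c)^{-1}\lesssim\log K$, rather than $c$ being bounded away from $0$.
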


\begin{remark}
    Lemma \ref{lem:idealized_sequential_learning} holds for \emph{any} choice of $\pi_1,\dots,\pi_K$, not just power law.
\end{remark}

The proof in Appendix \ref{appendix:idealized_case} relies on the conservation law $\sum_{i=1}^K \theta_{ij}(t) = 0$ for all $j \in [K]$ and $t \geq 0$, which follows immediately from the sum-to-one constraint of the softmax and the initialization $\Wbs(0) = \zerobs$. This law holds for $\sigma^2 > 0$, but, in the noiseless case $\sigma^2 = 0$, we can take this even further to obtain $\theta_{ij}(t) = -\frac{1}{K-1}\theta_{jj}(t)$ for all $i \neq j$, leading directly to the decoupled dynamics \eqref{eq:idealized_p_ii_ode}. 

We write the population risk as a function of time, $\Rcal(t) = -\sum_{i=1}^K \pi_i \log p_{i|i}(t)$. For a given $t$, we decompose $\Rcal$ into a \emph{head} term covering the classes for which $p_{i|i} \geq 1/2$ and a \emph{tail} term for the rest, leveraging Lemma~\ref{lem:idealized_sequential_learning}. We treat head classes as being in the late phase of their dynamics, where $\dot{p}_{i|i} \asymp \pi_i(1-p_{i|i})^2$ and tail classes as being in the early phase, where $\dot{p}_{i|i} \asymp \pi_i p_{i|i}$. When we have the power law assumption on $\pi_i$ with $\alpha > 1$, we obtain power-law scaling of the risk.
\begin{proposition}[Idealized Scaling Law]\label{prop:idealized_scaling_law}
    In the GMM setting of \eqref{eq:target_gmm}, suppose $\sigma^2 = 0$. Then, for the population gradient flow on cross-entropy loss \eqref{eq:pop_gf}, there exist constants\footnote{Many choices of constants are possible. For any $K \geq 4$, a clean choice is $c_1 = 2^{-(\alpha+2)}$, $c_2 = 1$.} $c_1 \in (0,1)$, $c_2 > 0$ such that
    \begin{equation}\label{eq:idealized_risk_phases}
        \Rcal(t) \asymp
        \begin{cases}
            \log K, & 0 \leq t \leq t_1 \\
            f_{K,\alpha}(t), & t_1 \leq t \leq c_1 t_K \\
            K/t, &(1+c_2) t_K \leq t, \
        \end{cases},
        \quad \text{where} \quad
        f_{K,\alpha}(t) = 
        \begin{cases}
            (\log K)\left(1-(t/t_k)^{1/\alpha-1}\right), & 0 < \alpha < 1 \\
            \log\left(K(\log K)^2/t\right), & \alpha = 1\\
            (\log K)^{2-1/\alpha} \cdot t^{-(1-1/\alpha)}, & \alpha > 1. 
        \end{cases}
    \end{equation}
\end{proposition}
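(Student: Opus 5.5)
The plan is to integrate the autonomous ODE \eqref{eq:idealized_p_ii_ode} in closed form and then split the risk $\Rcal(t) = -\sum_i \pi_i \log p_{i|i}(t)$ according to whether each class has been learned by time $t$. Separating variables in \eqref{eq:idealized_p_ii_ode} gives the implicit solution
\[
G(p_{i|i}(t)) - G(1/K) = \tfrac{K}{K-1}\pi_i t, \qquad G(p) := \log\tfrac{p}{1-p} + \tfrac{1}{1-p},
\]
which is exactly the identity behind Lemma~\ref{lem:idealized_sequential_learning}. Set $s_i := \frac{K}{K-1}\pi_i t$. From this I will read off two-sided asymptotics for each $p_{i|i}$ as a function of $s_i$.
\begin{itemize}
\item \emph{Early regime.} When $s_i \le G(1/2)-G(1/K) \asymp \log K$, the $\frac{1}{1-p}$ term stays in $[1,2]$, so $\log p_{i|i}(t) = -\log K + s_i + O(1)$. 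This gives $-\log p_{i|i} \asymp \log K - s_i + O(1)$.
\item \emph{Late regime.} When $s_i$ exceeds that threshold, $\frac{1}{1-p} \approx s_i - \log K$, so $-\log p_{i|i} \asymp 1-p_{i|i} \asymp 1/(s_i-\log K+O(1))$. For $s_i \ge (1+c_2)\log K$ this is $\asymp 1/s_i \asymp 1/(\pi_i t)$.
\end{itemize}
The threshold time for class $i$ is $t_i(1/2) \asymp \log K/\pi_i \asymp i^\alpha \log K$ when $\alpha>1$, since $Z_{K,\alpha}\asymp 1$ there. For $\alpha\le 1$ the normalizer $Z_{K,\alpha}$ must be tracked.

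Next I split the risk at the index $k(t)$ of the last learned class, where $\pi_{k(t)} t \asymp \log K$, that is, $k(t) \asymp (t/\log K)^{1/\alpha}$ up to $Z_{K,\alpha}$. This gives $\Rcal = H + T$, with the head $H = \sum_{i\le k}\pi_i(-\log p_{i|i})$ and the tail $T=\sum_{i>k}$. The three phases then go as follows.
\begin{itemize}
\item \emph{Plateau, $t\le t_1$.} No class is learned, and every $-\log p_{i|i}$ lies between $\log 2$ and $\log K$. A lower bound comes from $s_i \le s_1 \le G(1/2)-G(1/K)$; the cleanest route may be $p_{i|i}\le 1/2$ together with the early-regime formula, which keeps a constant fraction of $\log K$ for classes with $s_i\le \frac12\log K$. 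Together this gives $\Rcal \asymp \log K$.
\item \emph{Intermediate phase, $\alpha>1$.} The tail gives $T \asymp \sum_{i>k}\pi_i(\log K - c\,\pi_i t)$. For $i\ge 2^{1/\alpha}k$ the bracket is at least $\frac12\log K$, so $T \asymp \log K \sum_{i\gtrsim k} i^{-\alpha} \asymp \log K\, k^{1-\alpha}$, provided $k\le c_1^{1/\alpha} K$. This is where $c_1$ enters, ensuring enough tail classes remain. The head gives $H\asymp \sum_{i\le k}\pi_i/(\pi_i t)\asymp k/t$. Substituting $k = (t/\log K)^{1/\alpha}$ makes both terms $\asymp (\log K)^{1-1/\alpha}\cdot(\log K)\, t^{-(1-1/\alpha)}/\log K$, i.e.\ $(\log K)^{2-1/\alpha}t^{-(1-1/\alpha)}$.
\item \emph{Intermediate phase, $\alpha\le 1$.} The same computation with $\sum_{i>k}\pi_i \asymp 1-(k/K)^{1-\alpha}$ (and a logarithm when $\alpha=1$) gives the other branches of $f_{K,\alpha}$.
\item \emph{Final phase, $t\ge (1+c_2)t_K$.} Every class is in the late regime with $s_i - \log K \gtrsim s_i$, so $\Rcal \asymp \sum_i \pi_i/(\pi_i t) = K/t$.
\end{itemize}

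I expect the main obstacle to be the boundary classes with $s_i$ close to $\log K$. For these neither asymptotic form is sharp: the early formula degenerates, since $\log K - s_i$ can be small, and the late one blows up, since $s_i-\log K$ can be small. I would handle them by bounding their contribution crudely, using $-\log p_{i|i}\in[\log 2$-ish$, O(\log K)]$ for $i\in[k,2^{1/\alpha}k]$. Their total mass is then $\asymp \log K\cdot k^{1-\alpha}$, the same order as the tail, so they cannot change the rate. The remaining work is bookkeeping of constants so that the upper and lower bounds hold uniformly in $K$, together with choosing $c_1$ and $c_2$ so that the window conditions hold.
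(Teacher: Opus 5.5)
Your proposal is correct and follows essentially the paper's argument. Both use the same implicit solution of the $p_{i|i}$ ODE and the same head/tail split at the recovery index. Both get the tail lower bound from classes still below $p_{i|i}\approx K^{-1/2}$: the paper phrases this through the second hitting time $t_i(K^{-1/2})$, and you phrase it through the explicit early-regime formula. Both control the head via $1-p_{i|i}\asymp 1/(1+\pi_i(t-t_i))$. Your crude mass bound on the near-threshold classes is enough for the statement; the paper instead uses a finer harmonic-sum estimate there.

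One slip is harmless. The deep-head term $k/t\asymp(\log K)^{-1/\alpha}t^{-(1-1/\alpha)}$ is not of the same order as the tail; it is a factor $(\log K)^2$ smaller, and the near-threshold classes are a factor $\log K$ smaller. This does not affect the conclusion, because the matching lower bound comes from the tail alone.
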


The full proof of this result is in Appendix \ref{appendix:idealized_case}. We identify three phases in the dynamics (see Figure~\ref{fig:unsketched-noiseless}): 
\begin{itemize}[leftmargin=0.15in]
    \item \textbf{The early phase}, where the risk behaves like $\Rcal(t) \asymp \log K$, is the interval during which all classes reside in the tail and have $p_{i|i} \asymp 1/K$.
    \item \textbf{The intermediate phase} is initiated by the recovery of the most frequent class (i.e., at $t_1$) and lasts until a constant fraction of the $K$ classes enter the head. There are three possible forms of the risk curve here, depending on the value of $\alpha$. If $\alpha > 1$, i.e., the prior probabilities remain summable as the number of classes $K$ diverges, the risk scales as a power law in time $t$. This matches the scaling law in the associative memory literature \cite{cabannes2024scaling,li2026muon}. On the other hand, there is no power law during this phase when $0 < \alpha \leq 1$.
    \item \textbf{The convergence phase} begins after all classes enter the head, and the risk follows the $K/t$ from the implicit bias literature (where we have made the dependence on the number of classes $K$ explicit) \cite{soudry2018implicit,nacson2019stochastic,ravi2024implicit}. This regime is separated from the intermediate phase by a transition window that we do not characterize exactly, during which the remaining constant fraction of classes exits the tail.
\end{itemize}

\begin{remark}
    The phenomenology of solving tasks sequentially while following power law scaling in the aggregate is also observed in gradient flow on additive models \cite{nam2024exactly,ren2026emergence,ben2026learning}. In the latter context, the target is a weighted sum of single-index models, where the weights have power law decay. Sharp phase transitions --- in the form of sudden drops and a ``staircase'' behaviour in the risk trajectory --- are observed in early time as the components with the largest weights are recovered, while the power law regime emerges later. By contrast, logistic regression does not exhibit these sharp transitions and instead the risk follows a smooth power law (when $\alpha > 1$) immediately upon exiting the initial $\Theta(\log K)$ plateau.
\end{remark}

\begin{remark}
    A similar sequential learning and scaling law phenomenology appears under population gradient flow with the squared loss and softmax activation. However, the dynamics progress more slowly, with the early and intermediate phases lasting longer, and the convergence phase exhibits a worse rate of $\Theta(t^{-1/2})$. See Appendix \ref{appendix:proofs_squared_loss} for details.
\end{remark}

\subsection{GMM Case}\label{section:gmm_case}

Moving to the case of fixed intra-class variance $\sigma^2 > 0$, the structure of the learning dynamics is preserved: classes are recovered at hitting times $t_i \asymp (\log K)/\pi_i$, and the risk follows the same three-phase trajectory as in Proposition \ref{prop:idealized_scaling_law} up to the time horizon $T \asymp 1/\sigma$, beyond which the noise effects dominate and we lack a sharp characterization of the GF dynamics. We remark that since the class means have unit norm, $T$ can be interpreted as a signal-to-noise ratio.

\begin{theorem}[GMM Scaling Law]\label{thm:gmm_scaling_law}
    In the GMM setting of \eqref{eq:target_gmm}, fix $0 < \sigma^2 \leq 1$. Then, there exists a time horizon $T = c_*/\sigma$ for some $c_* > 0$ and constants $0 < c_1 < c_2$ such that the cross-entropy risk along population gradient flow \eqref{eq:pop_gf} satisfies, for $t \leq T$,
    \begin{equation}\label{eq:risk_phases_gmm}
        \Rcal(t) \asymp
        \begin{cases}
            \log K, & 0 \leq t \lesssim Z_{K,\alpha} \log K \\
            f_{K,\alpha}(t), & Z_{K,\alpha} \log K \lesssim t \leq c_1 Z_{K,\alpha} K^{\alpha} \log K \\
            K/t, & c_2Z_{K,\alpha} K^{\alpha} \log K \leq t, \\
        \end{cases}
    \end{equation}
    where
    \begin{equation}
        f_{K,\alpha}(t) = 
        \begin{cases}
            (\log K)\left(1-(t/K \log K)^{1/\alpha-1}\right), & 0 < \alpha < 1 \\
            \log\left(K(\log K)^2/t\right), & \alpha = 1\\
            (\log K)^{2-1/\alpha} \cdot t^{-(1-1/\alpha)}, & \alpha > 1. 
        \end{cases}
    \end{equation}
\end{theorem}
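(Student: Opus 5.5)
My plan is to treat the noisy dynamics \eqref{eq:full_theta_dynamics} as a perturbation of the noiseless dynamics. I would then import the three-phase analysis behind Proposition~\ref{prop:idealized_scaling_law}, keeping track of the errors up to the horizon $T = c_*/\sigma$.

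\textbf{Step 1: closeness of the summary statistics.} Let $\bar\theta_{ij}(t)$ denote the noiseless solution, which satisfies $\bar\theta_{ij} = -\bar\theta_{jj}/(K-1)$ for $i\neq j$. I will show that the noisy $\theta_{ij}(t)$ stays uniformly close to $\bar\theta_{ij}(t)$ for $t\le T$. The two sources of discrepancy in \eqref{eq:full_theta_dynamics} are:
\begin{itemize}
\item the replacement of $p_{i|j}$ by $P_{i|j}=\E_{\xbs|y=j}[p_i(\xbs)]$;
\item the explicit drift $-\sigma^2\sum_\ell \E[p_i(\delta_{i\ell}-p_\ell)]\theta_{\ell j}$.
\end{itemize}
Conditional on $y=j$, write $\langle \wbs_k,\xbs\rangle = \theta_{kj} + \sigma\langle \wbs_k,\gbs\rangle$ with $\gbs\sim\Ncal(0,\Ibs_d)$. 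Along gradient flow, $\wbs_k$ lies in $\mathrm{span}\{\mubs_\ell\}$ up to an $O(\sigma^2)$ correction, so $\|\wbs_k\|^2 \approx \sum_\ell \theta_{k\ell}^2$. The key a priori bound is that every $|\theta_{k\ell}|$ grows at most linearly: $|\dot\theta|\lesssim 1$ gives $\|\wbs_k\|\lesssim t$. More sharply, $\theta_{kk}$ grows only logarithmically after learning, since $\dot\theta_{kk}\approx \pi_k(1-p_{k|k})$.

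Hence the Gaussian logit perturbation has scale $\sigma\|\wbs_k\|\lesssim \sigma t \le c_*$. For $c_*$ small, smoothness of the softmax gives $P_{i|j} = p_{i|j}(1+O(c_*))$, multiplicatively. The same holds for $1-P_{j|j}$, which is the quantity that matters in the late phase. The explicit drift term is bounded by $\sigma^2\max|\theta|\lesssim \sigma^2 t$. I would close this with a Grönwall/bootstrap argument on the quantities $\log p_{i|i}$ and $\log(1-p_{i|i})$, rather than on $\theta$ directly. Relative errors are what matter, because the risk involves $\log p_{i|i}$ and the late phase requires $1-p_{i|i}\asymp 1/(\pi_i t)$.

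\textbf{Step 2: comparison ODE.} The goal is that each $p_{i|i}$, or more precisely $-\E\log p_i(\xbs)$ given $y=i$, satisfies
\[
\dot p \asymp \pi_i\, p(1-p)^2
\]
with constants uniform for $t\le T$. This yields hitting times $t_i\asymp \log K/\pi_i = Z_{K,\alpha} i^{\alpha}\log K$, as in Lemma~\ref{lem:idealized_sequential_learning}, and the two-sided regimes $p_{i|i}\asymp e^{\pi_i t}/K$ in the tail and $1-p_{i|i}\asymp 1/(\pi_i t)$ in the head.

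\textbf{Step 3: the risk decomposition.} I will relate $\Rcal(t)=\sum_i\pi_i\E_{\xbs|y=i}[-\log p_i(\xbs)]$ to $\sum_i\pi_i(-\log p_{i|i})$, up to constant factors, using the same $O(c_*)$ logit-perturbation bound. Log-sum-exp is convex and Lipschitz, so the error in each class term is at most $O(\sigma\|\wbs\|)$ in absolute terms. In the head, where the term is about $1/(\pi_i t)$, this is not small enough, so the comparison must be multiplicative: $\E[1-p_i]\asymp 1-p_{i|i}$ via Gaussian moment bounds on $e^{\sigma\langle \wbs_k-\wbs_i,\gbs\rangle}$. Once this is in place, the head/tail splitting and summation from Proposition~\ref{prop:idealized_scaling_law} apply verbatim. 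That gives $\log K$, then $f_{K,\alpha}$, then $K/t$, with the breakpoints $t_1\asymp Z_{K,\alpha}\log K$ and $t_K\asymp Z_{K,\alpha}K^\alpha\log K$.

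\textbf{Main obstacle.} I expect the hardest step to be the multiplicative control of $1-P_{j|j}$ in Step 1, which must hold uniformly over all $K$ classes. The off-diagonal terms $\theta_{ij}$ are no longer exactly tied to $\theta_{jj}$. The noise drift couples all $K$ coordinates, and summing $K$ small errors could break the $O(1)$ relative bounds. I would first try to exploit the conservation law $\sum_i\theta_{ij}=0$ together with the near-symmetry among the unlearned coordinates $i\ne j$. The aim is to show that the deviation of $\theta_{ij}$ from $-\theta_{jj}/(K-1)$ is $O(\sigma^2 t^2)$ in total, which is absorbed by choosing $c_*$ small.
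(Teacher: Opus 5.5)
Your architecture is the paper's:
\begin{itemize}
\item an a priori bound $M(t):=\max_k\|\wbs_k(t)\|\lesssim t$ up to $T=c_*/\sigma$;
\item a comparison of $P_{i|j}$ with $p_{i|j}$ that is multiplicative in $p_{i|j}(1-p_{i|j})$;
\item the risk comparison $\E_{\xbs|y=i}[-\log p_i(\xbs)]\asymp-\log p_{i|i}$;
\item a differential inequality for each $p_{i|i}$, closed by integrating the noise error;
\item the idealized head/tail summation.
\end{itemize}

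\textbf{The main difference: your ``main obstacle'' is not one.} The paper never needs near-symmetry of the off-diagonal $\theta_{kj}$, and never compares with a noiseless trajectory $\bar\theta$.
\begin{itemize}
\item From $\dot p_{i|i}=p_{i|i}(\dot\theta_{ii}-\sum_k p_{k|i}\dot\theta_{ki})$, the signal part is $\pi_i p_{i|i}\bigl[(1-p_{i|i})(1-P_{i|i})+\sum_{k\neq i}p_{k|i}P_{k|i}\bigr]$.
\item The paper proves $|P_{k|i}-p_{k|i}|\lesssim\sigma^2M^2p_{k|i}(1-p_{k|i})$. Smoothness of the softmax alone does not give this; it takes a second-order Taylor expansion, an AM--GM tilt of the softmax, and the Gaussian MGF.
\item Because that bound carries the weight $p_{k|i}$, replacing $P$ by $p$ changes the bracket by at most $\sigma^2M^2(1-p_{i|i})^2\lesssim c_*^2(1-p_{i|i})^2$. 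Nothing accumulates over the $K$ classes.
\item Cauchy--Schwarz gives $\frac{(1-p_{i|i})^2}{K-1}\le\sum_{k\neq i}p_{k|i}^2\le(1-p_{i|i})^2$. So the signal lies between $\frac{K}{K-1}\pi_ip_{i|i}(1-p_{i|i})^2$ and $2\pi_ip_{i|i}(1-p_{i|i})^2$ for \emph{any} off-diagonal configuration.
\item Set $H_{k\ell}=\E_{\xbs}[p_k(\xbs)(\1\{k=\ell\}-p_\ell(\xbs))]$. The noise drift contributes at most $\sigma^2M\,p_{i|i}(1-p_{i|i})$, using $\sum_k|\1\{k=i\}-p_{k|i}|=2(1-p_{i|i})$, $|\theta_{\ell i}|\le M$, and $\sum_\ell|H_{k\ell}|\le\frac12$.
\end{itemize}

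Your near-symmetry route can also be closed by a bootstrap giving $|\theta_{kj}-\theta_{k'j}|\lesssim\sigma^2t^2$. The softmax term is restoring, the $P$-versus-$p$ mismatch is proportional to $p_{k|j}$, and the drift integrates to at most $\sigma^2\int_0^tM$. It buys a nearly exact signal $\frac{K}{K-1}\pi_ip_{i|i}(1-p_{i|i})^2(1+O(c_*^2))$. The theorem only needs constant factors, though, and the sandwich delivers them without the extra bootstrap.

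\textbf{Intermediate claims that need adjusting.}
\begin{itemize}
\item \emph{Pointwise comparison ODE.} With your drift bound $\sigma^2\max|\theta|\lesssim\sigma^2t$, a pointwise $\dot p_{i|i}\asymp\pi_ip_{i|i}(1-p_{i|i})^2$ is not available for rare classes with $\pi_i\lesssim\sigma^2t$. The paper keeps the error additive and integrates it using $\sigma^2\int_0^TM\lesssim c_*^2$. Before recovery this gives $-\log K+a_0\pi_it-Cc_*^2\le\log p_{i|i}(t)\le-\log K+b_0\pi_it+Cc_*^2$, so the constants sit in the exponent. After recovery, the integrating factor $\exp(\pm\sigma^2\int_{t_i}^tM)$ applied to $1/(1-p_{i|i})$ gives $1-p_{i|i}(t)\asymp1/(1+\pi_i(t-t_i))$.
\item \emph{Post-recovery decay.} The head sum $\sum_i(t-t_i+\pi_i^{-1})^{-1}$ uses $1/(1+\pi_i(t-t_i))$, not $1/(\pi_it)$. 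The two differ by a factor $\log K$ for recently recovered classes.
\item \emph{Staying recovered.} You also need $p_{i|i}$ to stay above $\frac12$ after $t_i$. This uses $\pi_i\gtrsim\sigma\log K$ for every class recovered before $T$.
\item \emph{Step 3.} The relevant quantity is $\E_{\xbs|y=i}[-\log p_i(\xbs)]$, not $\E[1-p_i(\xbs)]$. The paper gets $-\log p_{i|i}\le\E_{\xbs|y=i}[-\log p_i(\xbs)]\le e^{2\sigma^2M^2}(-\log p_{i|i})$ uniformly across phases. The ingredients are Jensen, the Gaussian MGF, and $\log(1+aS)\le a\log(1+S)$ for $a\ge1$.
\item \emph{A priori bound.} Entrywise $|\dot\theta_{k\ell}|\lesssim1$ only gives $\|\wbs_k\|\lesssim\sqrt{K}\,t$. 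Instead bound $\|\dot\wbs_i\|\le1+\frac{\sigma^2}{2}M$ using orthonormality of the means, then apply Gr\"onwall to get $M(t)\le te^{\sigma^2t/2}$. Also, $\wbs_k$ lies exactly, not approximately, in $\mathrm{span}\{\mubs_\ell\}$.
\end{itemize}
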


We prove this theorem in Appendix \ref{appendix:gmm_case}. When $\sigma^2 > 0$, the population dynamics no longer admit an exact decoupling due to the presence of expressions of the form
\begin{equation}\label{eq:annoying_sigma_term}
    \sigma^2 \E_{\xbs}\left[p_i(\xbs)\left(\1\{i=\ell\} - p_{\ell}(\xbs)\right)\right] \theta_{\ell j}
\end{equation}
in the summary statistic dynamics \eqref{eq:full_theta_dynamics}, which introduce cross-class interactions and expectations under Gaussian measure lacking a closed form. To handle this, we conduct a perturbative analysis where we Taylor expand $P_{i|i}$ in \eqref{eq:full_theta_dynamics} around $p_{i|i}$. We control the contribution of \eqref{eq:annoying_sigma_term} and the Taylor remainder by bounding weight norms. We show that $\max_i ||\wbs_i(t)|| \lesssim t$, which consequently means that the perturbative expansion remains controlled up to the time horizon $T \asymp 1/\sigma$. 
Depending on the value of $\sigma$, the dynamics may not enter all three phases from the idealized case. In particular:
\begin{itemize}
    \item If $\sigma \gtrsim 1/(Z_{K,\alpha}\log K)$, the risk dynamics are not guaranteed to exit the early phase.
    \item If $1/(Z_{K,\alpha}K^{\alpha} \log K) \lesssim \sigma \lesssim 1/(Z_{K,\alpha}\log K)$: The intermediate phase is reached, but only the first $(\sigma Z_{K,\alpha} \log K)^{-1/\alpha}$ classes are guaranteed to enter the head regime (see Appendix~\ref{appendix:gmm_case}).
    \item If $\sigma \lesssim 1/(Z_{K,\alpha} K^{\alpha} \log K)$, the dynamics are guaranteed to reach the convergence phase.
\end{itemize}

In Figure~\ref{fig:unsketched-noisy}, we plot the risk curves for a specific choice of $\sigma$ and various choices of $\alpha$. As predicted by Theorem~\ref{thm:gmm_scaling_law}, the dynamics follow the noiseless trajectory over a nontrivial time interval. However, the exact point at which the risk curve deviates from this trajectory, as well as its subsequent behaviour, are unexplained by our current theory; we leave this for future work.

\begin{figure}[!t] 
\centering
\begin{subfigure}[t]{0.49\linewidth}
\centering
{\includegraphics[height=0.75\textwidth]{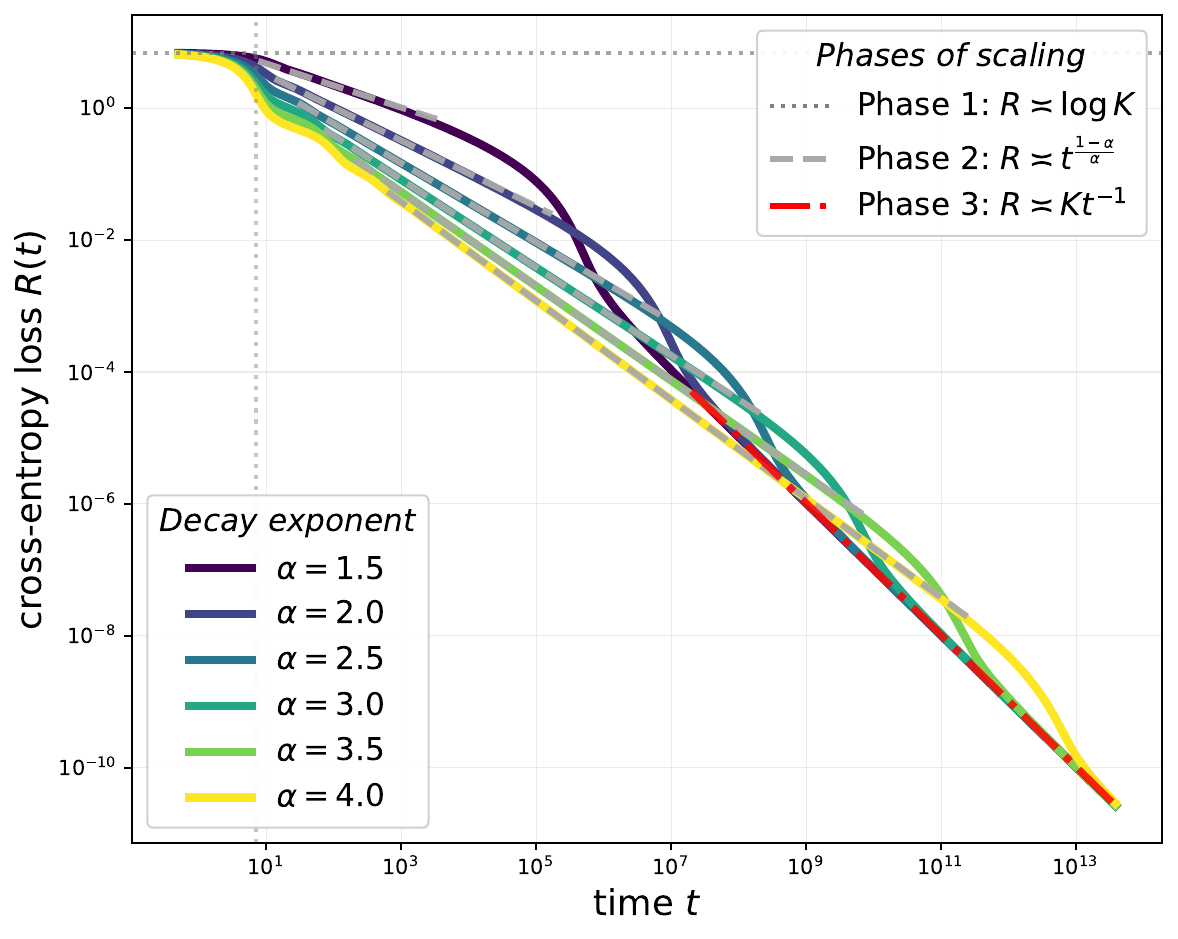}} 
\vspace{-1mm}
\caption{Noiseless setting $(\sigma=0).$}
\label{fig:unsketched-noiseless}
\end{subfigure}%
\begin{subfigure}[t]{0.49\linewidth}
\centering
{\includegraphics[height=0.75\textwidth]{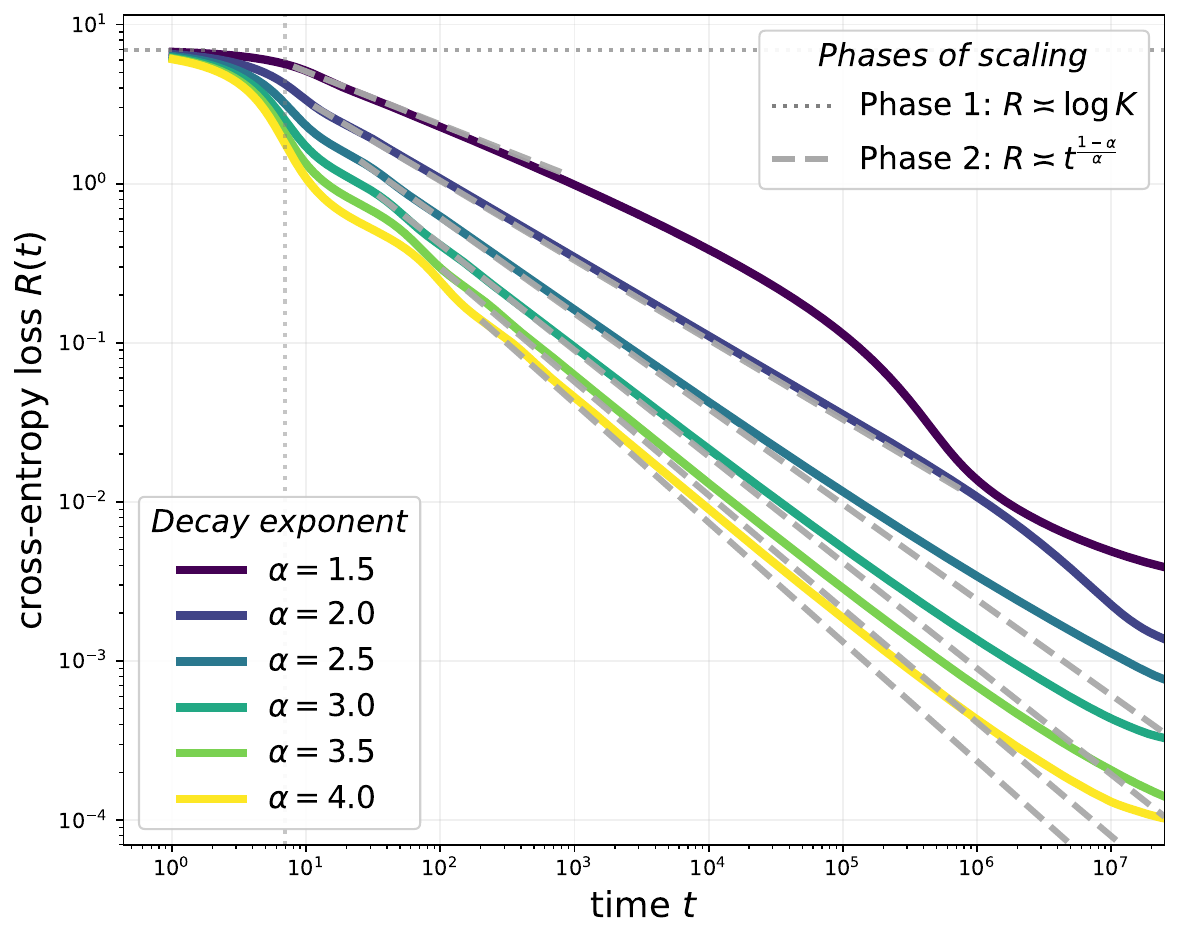}} 
\vspace{-1mm}
\caption{Noisy GMM setting $(\sigma=0.16).$}
\label{fig:unsketched-noisy}
\end{subfigure}%
\vspace{-1mm} 
\caption{\small 
Gradient flow trajectory and cross-entropy scaling law. We solve the population ODE \eqref{eq:full_theta_dynamics} using Euler discretization with adaptive step size, where expectations are approximated using Monte Carlo samples. We set $K=2048$ and vary the power-law exponent $\alpha$. 
\textbf{(a)} $\sigma=0$: The three-phase loss scaling matches the predictions of Proposition \ref{prop:idealized_scaling_law}: after the initial plateau, the loss follows a power-law decay with $\alpha$-dependent exponent, before entering the local convergence phase with $1/t$ rate.  
\textbf{(b)} $\sigma>0$: The early-phase learning dynamics resemble the noiseless setting, but as $t$ increases, convergence slows down and deviates from theoretical rates due to the noise. 
}  
\label{fig:unsketched} 
\end{figure}

To better understand the late-time scaling behavior, we characterize the smallest achievable risk value, i.e., the \emph{irreducible risk}, and we show that gradient flow converges to an optimal weight configuration. 
\begin{proposition}[Irreducible Risk]\label{prop:unsketched_irreducible_risk}
    If $0 < \sigma = o(1/\log K)$, then
    \begin{equation}
        \Rcal_{K,\pibs}^*(\sigma) := \inf_{\Wbs \in \R^{d \times K}} \Rcal(\Wbs) \asymp \sigma e^{-\frac{1}{4\sigma^2}}
        \begin{cases}
            K, & 0 < \alpha < 1, \\
            K/\log K, & \alpha = 1, \\
            K^{2-\alpha}, & 1< \alpha < 2, \\
            (\log K)^2, & \alpha = 2, \\
            1, & \alpha > 2.
        \end{cases}
    \end{equation}
    Let $\Wcal = \{\Wbs \in \R^{d \times K}: \sum_{i=1}^K \wbs_i = 0\}$. Then, there exists a unique $\Wbs^* \in \Wcal$ such that $\Rcal(\Wbs^*) = \Rcal_{K,\pibs}^*(\sigma)$. Moreover, under population gradient flow initialized at zero, $\Wbs(t) \to \Wbs^*$ with respect to $||\cdot||_F$.
\end{proposition}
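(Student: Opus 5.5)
I would split the statement into two parts: the asymptotics of the infimum, and existence, uniqueness and convergence of the minimizer.

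\textbf{Reduction to the span of the means.} The risk only sees $\Wbs$ through its projection onto $\mathrm{span}\{\mubs_1,\dots,\mubs_K\}$. The component orthogonal to this span only adds isotropic noise to the logits, and by Jensen this cannot help. I would therefore parametrize $\wbs_i = \sum_j \theta_{ij}\mubs_j$ with $\sum_i \wbs_i = 0$, which is the conserved quantity along GF from zero (see Section~\ref{section:idealized_case}).

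\textbf{Strict convexity on $\Wcal$.} I would check that $\Rcal$ is strictly convex on $\Wcal$. The Hessian is $\E[(\mathrm{diag}(p)-pp^\top)\otimes \xbs\xbs^\top]$. Since $\xbs$ has a nondegenerate Gaussian component when $\sigma>0$, the Hessian is positive definite on directions $\Vbs$ with $\sum_i \vbs_i=0$. The risk is also coercive on $\Wcal$: for any nonzero direction $\Vbs\in\Wcal$, the Gaussian noise makes the event $\{\xbs : \inner{\vbs_y - \vbs_k,\xbs}<0\}$ have positive probability, so the loss grows linearly along rays. Coercivity plus strict convexity gives a unique minimizer $\Wbs^*\in\Wcal$. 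For convergence, GF from $0$ stays in $\Wcal$, and on $\Wcal$ it is the gradient flow of a coercive, strictly convex $C^2$ function. Its sublevel sets are therefore compact, $\Rcal(\Wbs(t))$ decreases, and every limit point is critical. Hence $\Wbs(t)\to\Wbs^*$. Local strong convexity would even give an exponential rate.

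\textbf{Upper bound on the irreducible risk.} I would use the symmetric ansatz $\wbs_i = a_i\mubs_i - \frac{1}{K}\sum_k a_k\mubs_k$. For class $i$, the loss is $\log(1+\sum_{k\ne i} e^{-(a_i - g)+ \text{noise}})$. Here the margin term to class $k$ is $a_i$ on the true mean, the noise is $\sigma\inner{a_i\mubs_i - a_k\mubs_k, \zbs}$, and the term to compare against is $a_k x_k$. Using $\log(1+u)\le u$, the per-pair term becomes a log-normal expectation of order $\E[e^{-a_i x_i + a_k x_k}]$. Minimizing over the scale balances $e^{-a}$ against the Gaussian variance $e^{\sigma^2 a^2}$. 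This is the wrong regime, though: for large weights the dominant failure is the tail event where the noise flips the margin, which suggests $\log(1+e^{u})\approx u_+$. The pair $(i,k)$ then contributes about $\pi_i\,\E[(\text{margin deficit})_+]$. With $a_i=a_k=a$, the deficit is a Gaussian with mean $-a$ and standard deviation $\sqrt2\sigma a$, so the contribution is $\asymp \sigma a\,\phi(1/(\sqrt2\sigma))\cdot\sigma^2 \sim \sigma e^{-1/(4\sigma^2)}$ up to polynomial factors. I would choose $a$ large, of order $1/\sigma^2$, so that the $e^{-a}$ residual is negligible compared with the flipping term.

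\textbf{Heterogeneous priors and the case split.} With non-uniform priors the optimizer should use class-dependent margins. The cost is $\sum_i\sum_{k\neq i}\pi_i\,F(\text{boundary position between } i,k)$, and by symmetry of the two Gaussians the optimal boundary shifts by about $\sigma^2\log(\pi_i/\pi_k)$. Each pair then costs roughly $\sigma e^{-1/(4\sigma^2)}\sqrt{\pi_i\pi_k}$, the geometric mean coming from completing the square. The hypothesis $\sigma=o(1/\log K)$ ensures these shifts $\sigma^2\log K$ are small compared with the margin, so the Gaussian exponent stays $1/(4\sigma^2)$ up to constants. Summing gives $\sigma e^{-1/(4\sigma^2)}\big(\sum_i\sqrt{\pi_i}\big)^2$. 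Since $\sqrt{\pi_i}\asymp i^{-\alpha/2}/\sqrt{Z_{K,\alpha}}$, the five cases follow: $K$, $K/\log K$, $K^{2-\alpha}$, $(\log K)^2$, and $1$.

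\textbf{Matching lower bound.} Here I would lower-bound the cross-entropy by a sum of pairwise terms, $-\log p_y \ge \max_k \log(1+e^{\inner{\wbs_k-\wbs_y,\xbs}})$, and average over $k$ in a suitable way. I would then lower-bound each two-class logistic problem between classes $i$ and $k$ by its Bayes-type optimum $\asymp\sigma e^{-1/(4\sigma^2)}\sqrt{\pi_i\pi_k}$ (computed by a one-dimensional optimization), while keeping enough disjointness to sum over pairs. I expect this lower bound, namely getting the full $(\sum_i\sqrt{\pi_i})^2$ rather than only a max over pairs, to be the main obstacle. I would first try restricting to the $K-1$ disjoint events where the noise pushes $\xbs$ specifically toward $\mubs_k$, which are nearly disjoint across $k$ in high dimension.
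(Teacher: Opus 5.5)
Your existence, uniqueness and convergence argument is sound and essentially the paper's. Both use strict convexity on $\Wcal$ from the nondegenerate noise, linear growth along rays, and the fact that GF stays in $\Wcal$; the paper closes with the $O(1/t)$ suboptimality bound and monotonicity of $\norm{\Wbs(t)-\Wbs^*}_F$ rather than a limit-point argument. Your upper-bound ansatz with class-dependent scalings on $\mubs_i$ is also the family the paper uses, $\wbs_i=(1/\sigma^2+2\beta_i)\mubs_i$.

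\textbf{The gap is the matching lower bound.} You leave it open, and the route you sketch would not close it. Bounding $-\log p_y$ by $\max_k s(\inner{\wbs_k-\wbs_y,\xbs})$ and averaging over $k$ yields only $\frac{1}{K-1}\sum_k$, which can lose a factor $K$ against $\sum_{i<k}\sqrt{\pi_i\pi_k}$. Restricting to disjoint noise events instead forces you to control $\E[s(\inner{\wbs_k-\wbs_y,\xbs})\1_{A_k^c}]$ uniformly over \emph{all} $\Wbs$. That is not available, because these logits depend on the unknown weights.

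The missing idea is to eliminate $\Wbs$ first: $\Rcal(\Wbs)=\E_{\xbs}[H(p^*(\xbs))+\mathrm{KL}(p^*(\xbs)\,\|\,p(\xbs))]\ge\Rcal^*_{\text{Bayes}}$. In the Bayes risk the pairwise quantities are the explicit Gaussians $\ell_{ki}=\log(\pi_k/\pi_i)+(z_k-z_i)/\sigma^2$. Then apply the second-order Bonferroni-type inequality
\[
\log\Bigl(1+\sum_{k\neq i} e^{\ell_{ki}}\Bigr)\ \ge\ \sum_{k\neq i} s(\ell_{ki})\ -\sum_{j<k,\; j,k\neq i} s(\min\{\ell_{ji},\ell_{ki}\}).
\]
This reduces the Bayes risk to the binary entropies $B_{ij}$, each computed exactly by a one-dimensional integral. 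The overlap terms are removed using $s(\min\{u,v\})\le 2e^{(u+v)/4}$, which gives $\pi_i\E_i[s(\min\{\ell_{ji},\ell_{ki}\})]\le 2\pi_i^{1/2}(\pi_j\pi_k)^{1/4}e^{-5/(16\sigma^2)}$.

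The near-disjointness comes from small noise, not high dimension. A joint excursion toward two competitors costs $5/(16\sigma^2)$ in the exponent, against $1/(4\sigma^2)$ for one. The total overlap is at most $K^2e^{-5/(16\sigma^2)}$, which is negligible relative to $\sigma e^{-1/(4\sigma^2)}S(\pibs)$ because $\sigma=o(1/\log K)$ forces $K^2e^{-1/(16\sigma^2)}/\sigma\to 0$.

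Keeping $e^{-\sigma^2 b_{ij}^2/4}$ of order one requires $\sigma\log K=O(1)$. This is stronger than your justification that the shift $\sigma^2\log K$ is small compared with the margin.

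\textbf{A smaller error in your upper bound.} The residual $\log(1+e^{u})-u_+$ is not negligible. Take scale $a=c/\sigma^2$. The expectation of $\log(1+e^{-|u|})$ is dominated by points near the decision boundary, not by typical points where it is $\approx e^{-a}$. For $c>1/2$ it is of the same order $\sigma e^{-1/(4\sigma^2)}$ as the flipping term. For $c=1/2$, the scale minimizing your first log-normal bound, it is $\gtrsim e^{-1/(4\sigma^2)}$ and destroys the factor $\sigma$. Your order survives for fixed $c>1/2$. It still needs an exact Gaussian computation like the paper's evaluation of $J(m,\tau)$, which also handles the class-dependent noise scale $\tau_{ij}=\sigma\sqrt{r_i^2+r_j^2}$ created by unequal $r_i$.
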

The proof of this result is in Appendix \ref{appendix:irreducible risk} and follows from the construction of a weight configuration in our model class that approximates the Bayes-optimal classifier.
\begin{remark}\label{remark:irr_risk_is_small}
    Under the small noise condition $\sigma = o(1/\log K)$, we have $\Rcal_{K,\pibs}^*(\sigma) \ll \Rcal(\Wbs(t))$ throughout the phases covered by Theorem \ref{thm:gmm_scaling_law}.
\end{remark}

\section{Compute-Optimal Scaling Laws}\label{section:compute_optimal_sl}

Empirical scaling laws \cite{kaplan2020scaling,hoffmann2022training} are closely tied to \emph{compute-optimality}, i.e., finding the optimal tradeoff between the number of model parameters and training set size (or number of online SGD iterations) given a fixed compute budget. Theoretical studies of scaling laws in linear regression introduce a notion of parameter count through a random \emph{sketch} $\Sbs \in \R^{m \times d}$ with i.i.d.\@ $\Ncal(0,1/m)$ applied to the inputs. This induces a random features regression problem in $m$ dimensions \cite{lin2024scaling,paquette2024phases,bordelon2024dynamical}. The result is a scaling law of the form $\mathrm{MSE} \asymp t^{-\beta} + m^{-\gamma}$, which incorporates the model capacity constraint brought on by the sketch and yields the optimal choices of $m,t$ given a compute budget $C \asymp mt$. In our multiclass setting, evaluating all $K$ logits introduces an additional factor of $K$, so we take the compute model $C \asymp mKt$.

In the case of multiclass logistic regression with GMM data, however, a Gaussian sketch largely preserves the classification geometry and therefore fails to induce the approximation bottleneck underlying compute-optimal scaling. While dimension reduction in linear regression naturally inhibits the model's flexibility in matching the output, this does not meaningfully reduce expressivity in logistic regression unless the sketch substantially increases overlap between GMM components. Provided $m \gtrsim \log K$, the Johnson-Lindenstrauss Lemma \cite{johnson1984extensions} implies $\norm{\Sbs \mubs_i - \Sbs \mubs_j} \approx \norm{\mubs_i - \mubs_j}$ for all class pairs $i,j \in [K]$, while the noise in any unit direction (conditioned on $\Sbs$) remains $\Ncal(0,1)$, leaving the signal-to-noise ratio effectively unchanged.

\paragraph{PCA-based capacity control.} 
Therefore, instead of the Gaussian sketch, we rely on a classical statistical learning principle, dimension reduction through principal components analysis (PCA)~\cite{hastie2009elements},
which induces a capacity bottleneck by mapping onto the span of the means of the $m$ most frequent classes. Indeed, at the population level, for the input distribution in \eqref{eq:target_gmm}, consider the second moment matrix 
\begin{equation}
    \Gbs :=\E_{\xbs}[\xbs \xbs^{\top}] = \sum_{i=1}^K \pi_i \E_{\zbs \sim \Ncal(\zerobs,\Ibs_d)}\left[(\mubs_i + \sigma \zbs)(\mubs_i + \sigma \zbs)^{\top}\right] = \sum_{i=1}^K \pi_i \mubs_i \mubs_i^{\top} + \sigma^2 \Ibs_d,
\end{equation}
with top $m$ eigenvalues $(\pi_i + \sigma^2)_{i=1}^m$ and associated eigenvectors $\mubs_1,\dots,\mubs_m$. The sketch $\Sbs \in \R^{m \times d}$ is then $\Mbs_{m}^{\top}$, the transpose of the matrix with the first $m$ class means as columns. In particular, the class means $\mubs_{m+1},\dots,\mubs_K$ are collapsed to the origin and rendered indistinguishable under any linear classifier in the sketched space.

In practice, this sketching procedure can be approximated in a data-driven way via PCA on the empirical second moment matrix $\widehat{\Gbs} = \frac{1}{N}\sum_{i=1}^N \xbs_i \xbs_i^{\top}$ given i.i.d\@ samples $\xbs_1,\dots,\xbs_N$ from \eqref{eq:target_gmm}. The resulting sketch~is~given~by $\widehat{\Sbs} = \widehat{\Vbs}_m^{\top} \in \R^{m \times d}$, where $\widehat{\Vbs}_m$ is the matrix with the top $m$ eigenvectors $\widehat{\vbs}_1,\dots,\widehat{\vbs}_m$ as columns.

In the case $\sigma^2 = 0$, the oracle sketch $\Sbs$ mapping to $\spanop\{\mubs_1,\dots,\mubs_m\}$ with $m < K$ introduces a capacity bottleneck that prevents the recovery of all class means, eliminating the convergence phase from \eqref{eq:idealized_risk_phases}. Taking the summary statistics to be $\theta_{ij} := \inner{\wbs_i, \Sbs \mubs_j}$ in this setting, we have $\theta_{ii}(t) = 0$ for all $i > m$ and for all $t \geq 0$. The softmax output $p_{i|i}$ for such classes remains constant at its initialization $1/K$.  Due to the power law prior $\pi_i \asymp i^{-\alpha}$, the capacity bottleneck is a power law in $m$:
\begin{equation}
    \Rcal_{\text{approx}} := -\sum_{i=m+1}^K \pi_i \log(1/K) \asymp m^{1-\alpha} \log K.
\end{equation}
Meanwhile, for classes $i \in [m]$, the same decoupled dynamics from \eqref{eq:idealized_p_ii_ode} are followed, giving the familiar optimization error 
\begin{equation}
    \Rcal_{\text{opt}} := (\log K)^{2-1/\alpha} t^{-(1-1/\alpha)}
\end{equation}
upon exiting the early phase. 

The above logic still holds when moving to an empirical sketch $\widehat{\Sbs}$ and $\sigma^2 > 0$ but for the addition of a time horizon and a sample size requirement for the PCA step.

\begin{theorem}[GMM Scaling Law with Sketching]\label{thm:gmm_sketch}
    Consider the target GMM \eqref{eq:target_gmm} with fixed $\sigma^2 > 0$, $\alpha > 1$, and assume $m \leq K/2$. Define the sketch $\widehat{\Sbs} = \Vbs_m^{\top} \in \R^{m \times d}$, the transpose of the matrix of top $m$ eigenvectors for the second moment matrix $\widehat{\Gbs}$ with $N$ samples. Let $\Rcal_{\widehat{\Sbs}}(t) = \E_{\xbs,y}[\Lcal(\Wbs(t); \widehat{\Sbs} \xbs, y)]$ be the cross-entropy risk of multiclass logistic regression under sketched inputs with weights $\Wbs(t) \in \R^{m \times K}$ following population gradient flow initialized at the origin. Then, for any constant $\delta \in (0,1)$, there exist a time horizon $T = c_*/\sigma$ for some $c_* > 0$ and a threshold $N_0(\sigma^2,m,d,\delta)$ such that for all $N \geq N_0$  and $t \leq T$, the risk satisfies
    \begin{equation}\label{eq:gmm-sketched-scaling-law}
        \Rcal_{\widehat{\Sbs}}(t) \asymp
        \begin{cases}
            \log K & 0 \leq t \lesssim \log K, \\
            m^{-(\alpha-1)} \log K + (\log K)^{2-1/\alpha} \cdot t^{-(1-1/\alpha)} & \log K \lesssim t \leq T,
        \end{cases}
    \end{equation}
    with probability at least $1-\delta$.
\end{theorem}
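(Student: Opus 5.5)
We will reduce Theorem~\ref{thm:gmm_sketch} to the unsketched analysis of Theorem~\ref{thm:gmm_scaling_law} by working in the projected coordinates. First consider the oracle sketch $\Sbs = \Mbs_m^{\top}$. Under it, for $y=i\le m$ the sketched input is $\Sbs\xbs \sim \Ncal(\ebs_i,\sigma^2\Ibs_m)$. For $y=i>m$ it is $\Ncal(\zerobs,\sigma^2\Ibs_m)$. So the sketched problem is a $K$-class logistic regression on an $m$-dimensional mixture in which the $K-m$ tail classes share a common mean at the origin.

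We again track $\theta_{ij}=\inner{\wbs_i,\Sbs\mubs_j}$ for $j\le m$ and write the analogue of \eqref{eq:full_theta_dynamics}. At $\sigma=0$, the tail classes contribute gradient only through $\pi_j(\1\{i=j\}-p_i(\zerobs))$. Since $p_i(\zerobs)=1/K$ for every $\Wbs$, the expected gradient $\sum_{j>m}\pi_j(\1\{i=j\}-1/K)\cdot\zerobs$ vanishes identically. Hence the head coordinates follow exactly the decoupled dynamics \eqref{eq:idealized_p_ii_ode}: conservation gives $\theta_{ij}=-\theta_{jj}/(K-1)$ for the $j\le m$ block. Meanwhile $\wbs_i$ for $i>m$ evolves only through this conservation, and $p_{i}(\zerobs)=1/K$ for all tail classes. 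The risk therefore splits exactly into a head part $-\sum_{i\le m}\pi_i\log p_{i|i}(t)$ and the approximation term $\log K\sum_{i>m}\pi_i\asymp m^{1-\alpha}\log K$; here the bound uses $m\le K/2$ and $\alpha>1$.

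The head part is handled by repeating the head/tail split of Proposition~\ref{prop:idealized_scaling_law} restricted to classes $1,\dots,m$. For $t\lesssim\log K$ all head classes satisfy $p_{i|i}\asymp 1/K$, which gives $\log K$. Afterwards the head sum is $\lesssim(\log K)^{2-1/\alpha}t^{-(1-1/\alpha)}$, with a matching lower bound whenever that term dominates $m^{1-\alpha}\log K$. When it does not, we are in the regime $t\gtrsim m^\alpha\log K$, and the approximation term already supplies the lower bound. The truncation at $m$ therefore only ever removes contributions that are dominated, and the sum gives \eqref{eq:gmm-sketched-scaling-law}.

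Next, we pass to $\sigma^2>0$ with the oracle sketch. We repeat the perturbative argument of Theorem~\ref{thm:gmm_scaling_law}: Taylor expand $P_{i|j}$ around $p_{i|j}$ and bound the cross terms \eqref{eq:annoying_sigma_term} using $\max_i\|\wbs_i(t)\|\lesssim t$. This keeps all errors a constant factor below the main terms for $t\le T=c_*/\sigma$. The only new ingredient is the tail classes. Their inputs are $\sigma\zbs$, so $p_i(\sigma\zbs)=1/K+O(\sigma\|\Wbs\|)$, and the tail contribution stays $\asymp m^{1-\alpha}\log K$ as long as $\sigma t\le c_*$. The leaked gradient on head weights is likewise $O(\sigma^2 t)$ times bounded quantities, and is absorbed in the same way.

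Finally, we replace the oracle sketch by $\widehat{\Sbs}$. The eigengap of $\Gbs$ between the $m$-th and $(m+1)$-th eigenvalues is $\pi_m-\pi_{m+1}\asymp m^{-\alpha-1}$. Combining Davis--Kahan with a covariance concentration bound $\|\widehat{\Gbs}-\Gbs\|\lesssim\sqrt{(d+\log(1/\delta))/N}$ (with constants depending on $\sigma$), there is an $N_0(\sigma^2,m,d,\delta)$ such that for $N\ge N_0$, with probability $1-\delta$, we have $\widehat{\Sbs}=\Obs\Sbs+\Ebs$ for some orthogonal $\Obs$ and $\|\Ebs\|\le\epsilon$, with $\epsilon$ as small as we like. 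The rotation $\Obs$ is harmless because GF from zero is rotation-equivariant. The perturbation $\Ebs$ enters exactly like an additional noise term: it tilts the means by $O(\epsilon)$ and can be absorbed into the perturbative bounds provided $\epsilon T\lesssim 1$, which is how $N_0$ is chosen.

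\textbf{Main obstacle.} I expect the hardest step to be the noisy analysis with tail classes present. We must show that the nonzero-mean head dynamics are not significantly perturbed by the large mass of classes collapsed at the origin. We must also show that the tail softmax stays near $1/K$ uniformly up to $T$, since the lower bound $m^{1-\alpha}\log K$ needs $-\log p_i\gtrsim\log K$ on average rather than just at the origin.
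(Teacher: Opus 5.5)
Your overall plan matches the paper's: exact decoupling under the oracle sketch $\Sbs=\Mbs_m^{\top}$ at $\sigma^2=0$, then the same perturbative analysis up to $T=c_*/\sigma$, then Davis--Kahan with eigengap $\pi_m-\pi_{m+1}\asymp m^{-(\alpha+1)}$. There is one gap: the collapsed classes $i>m$ when $\sigma^2>0$. The expansion $p_i(\sigma\zbs)=1/K+O(\sigma\norm{\Wbs})$ is additive. Its error is of order $\sigma M(t)$, which reaches constant order $c_*$ near $t\asymp T$ and swamps $1/K$. So it gives no control of $-\log p_i$ on the scale $\log K$, and both sides of $\Rcal_{\text{approx}}\asymp m^{1-\alpha}\log K$ need exactly that control. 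You flag this as the main obstacle but leave it open.

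The missing idea is convexity of the per-class loss in the input. The map $\ubs\mapsto-\log p_i(\ubs)=\log\sum_k e^{\inner{\wbs_k,\ubs}}-\inner{\wbs_i,\ubs}$ is convex. Jensen therefore gives $\E_{\xbs|y=i}[-\log p_i(\Sbs\xbs)]\ge-\log p_i(\Sbs\mubs_i)=\log K$ exactly, for every $\Wbs$. For the upper bound, concavity of $\log$ and the Gaussian MGF give $\E_{\xbs|y=i}[-\log p_i(\Sbs\xbs)]\le\log\bigl(1+(K-1)e^{2\sigma^2M(t)^2}\bigr)\le\log K+2\sigma^2M(t)^2$, which is $\log K+O(c_*^2)$ on $[0,T]$. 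This is Lemma~\ref{lem:log_p_inequality} evaluated at $p_{i|i}=1/K$, i.e.\ Lemma~\ref{lem:collapsed_class_risk}. Your other worry, that the collapsed classes distort the head dynamics, does not arise. They contribute nothing to the first term of $\dot{\wbs}_i$, since $\Sbs\mubs_k=\zerobs$. The $\sigma^2$ term has norm at most $\frac{\sigma^2}{2}M(t)$ whatever the class composition. So Lemmas~\ref{lem:weight_norm_bound}--\ref{lem:post_recovery_decay} transfer verbatim for $i\le m$.

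Your empirical-sketch step takes a different route from the paper, and it is sound.
\begin{itemize}
\item \emph{The paper's route.} It does not factor out a rotation. It bounds $|\inner{\widehat{\Sbs}\mubs_i,\widehat{\Sbs}\mubs_j}-\1\{i=j\le m\}|\le\eps_{\text{PCA}}$ directly, which yields an extra $C\eps_{\text{PCA}}\,p_{i|i}(1-p_{i|i})$ term in $\dot p_{i|i}$ (Lemma~\ref{lem:empirical_pca_p_ii_dynamics}). It then assumes $\eps_{\text{PCA}}\le c_0\min\{\pi_m,\sigma\log K\}$. The $\pi_m$ part keeps the drift dominant pointwise but leaves a residual $\eps_{\text{PCA}}/\pi_i$ in $1-p_{i|i}$, summing to $m\eps_{\text{PCA}}\lesssim m^{1-\alpha}$. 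The $\sigma\log K$ part keeps $-\log p_{j|j}=\log K+O(t\eps_{\text{PCA}})$ for $j>m$.
\item \emph{Your route.} The condition $\eps T\lesssim1$ absorbs the sketch error through its time integral, just as $\sigma^2\int_0^TM$ is absorbed, and the residual head term disappears.
\item \emph{The trade-off.} You need $\eps\lesssim\sigma$, so $N_0\asymp(1+\sigma^2)^2(d+\log(1/\delta))\,m^{2\alpha+2}/\sigma^2$. This is no larger than the paper's threshold when $\sigma\gtrsim m^{-\alpha}$ and no smaller otherwise.
\end{itemize}
To make your step rigorous, state that the $\Ebs$-error enters $\dot p_{i|i}$ with the prefactor $p_{i|i}(1-p_{i|i})$. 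That prefactor is what lets the integrating-factor arguments of Lemmas~\ref{lem:noisy_sequential_recovery} and~\ref{lem:post_recovery_decay} absorb it.
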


After exiting the same early phase as in the unsketched case, we arrive at a new capacity-limited phase where the risk decomposes into the sum of the approximation error $\Rcal_{\text{approx}}$ (a power law in $m$) and the optimization error $\Rcal_{\text{opt}}$ (a power law in $t$) --- see Figure \ref{fig:sketched}. This mirrors the scaling law in linear regression with Gaussian sketch under a capacity condition (i.e., where the input distribution has covariance spectrum that follows a power law $\lambda_i \asymp i^{-\alpha}$) \cite[Theorem 4.1]{lin2024scaling}, where the risk decomposes into approximation and optimization terms with the same exponents, though here we additionally have a $(\log K)$-dependence that enters. 

The proof of Theorem \ref{thm:gmm_sketch} is in Appendix \ref{appendix:gf_pca_empirical}. We rely on the same perturbative analysis as in Section \ref{section:gmm_case} to control the non-zero $\sigma^2$ effect --- once again introducing the time horizon $T \asymp 1/\sigma$ --- and handle the appearance of randomly sketched means in the dynamics \eqref{eq:full_theta_dynamics} by controlling the gap between eigenspaces $\norm{\Mbs_m \Mbs_m^{\top} - \widehat{\Vbs}_m \widehat{\Vbs}_m}$. In the worst case, the latter introduces a dependence on the distance between sample and population second moment matrices $\norm{\widehat{\Gbs} - \Gbs}$ and the reciprocal of the eigengap $\pi_m - \pi_{m+1} \asymp m^{-(\alpha+1)}$. As a result, for the dynamics under empirical sketch to closely track those under population-level PCA (Appendix \ref{appendix:gf_pca_gmm}), we require $N \gtrsim (1+\sigma^2)^2 (d + \log(2/\delta)) m^{2\alpha+2} \max\left\{m^{2\alpha}, \frac{1}{\sigma^2 (\log K)^2}\right\}$ samples.\footnote{We did not attempt to optimize this worst-case Davis-Kahan bound. We emphasize however that these samples are unlabelled and our primary focus is on data and compute usage in SGD, not in pre-processing. The empirical PCA analysis is included to demonstrate that the proposed sketch arises from a finite-sample procedure.}

\paragraph{Compute-optimal scaling.} For a fixed compute budget $C \asymp mKt$, the optimal choices of hyperparameters in \eqref{eq:gmm-sketched-scaling-law} are given as
\begin{equation}\label{eq:compute_optimal_m_t}
    m^* \asymp \left(\frac{C}{K\log K}\right)^{1/(\alpha+1)}, \quad t^* \asymp (\log K)^{1/(\alpha+1)} \left(\frac{C}{K}\right)^{\alpha/(\alpha+1)},
\end{equation}
provided $K\log K \lesssim C \lesssim K^{\alpha+2}\log K$, which ensures $t \gtrsim \log K$ and $m < K$. This gives a rate
\begin{equation}
    \Rcal_{\widehat{S}}(C) \asymp (\log K)^{2\alpha/(\alpha+1)} \cdot \left(\frac{K}{C}\right)^{(\alpha-1)/(\alpha+1)}
\end{equation}
in the capacity-limited phase.


\section{Online Stochastic Gradient Descent}\label{section:online_sgd}
We now move from the continuous-time population gradient flow to the discrete finite-sample online SGD, which we specify in Algorithm \ref{alg:online_sgd}. We show that the same scaling law phenomenology from Sections \ref{section:pop_gf} and \ref{section:compute_optimal_sl} holds for online SGD over a fixed time horizon under an appropriate choice of learning rate $\eta > 0$. Specifically, the risk scaling in online SGD with the iteration counter $n$ is analogous to that of gradient flow with the adjusted time scale $t = \eta n$. Importantly, the choice of learning rate depends on the effective dimension of the optimization problem being solved by SGD, which is $d$ in the unsketched case and $m$ in the sketched case. This further informs the compute-optimal trade-off, as more aggressive sketching (i.e., smaller $m$) allows for faster learning and less computation, but at the cost of an approximation error.

\begin{algorithm}[H]
    \caption{Online SGD}\label{alg:online_sgd}
    \textbf{Input:} Learning rate $\eta > 0$, number of iterations $n_{\max}$\\
    \textbf{Initialize} $\Wbs^{(0)} = \zerobs_{d \times K}$ \\
    \For{$n = 0,1,\dots,n_{\max}-1$}{
        Draw i.i.d.\@ sample $(\xbs^{(n)},y^{(n)})$ from the GMM \eqref{eq:target_gmm} \\
        Update $\Wbs^{(n+1)} \leftarrow \Wbs^{(n)}- \eta\nabla_{\Wbs} \Lcal(\Wbs^{(n)}; \xbs^{(n)}, y^{(n)})$
    }
    \textbf{Output} $\Wbs^{(n_{\max})}$
\end{algorithm}

We distinguish between the weights under population gradient flow, which we denote by $\Wbs(t)$ and the weights under online SGD, which we denote by $\Wbs^{(n)}$. We first state our result for the unsketched model.

\begin{theorem}[Scaling Law for SGD, GMM Case]\label{thm:sgd_gmm}
    In the same setting as Theorem \ref{thm:gmm_scaling_law}, consider online SGD (Algorithm \ref{alg:online_sgd}) with learning rate $\eta > 0$. Fix $\delta \in (0,1)$ and a finite horizon $T \leq T_{\sigma} := c_*/\sigma$. For each iteration $n$, define the effective optimization time $t_n = \eta n$. If, for a sufficiently small constant $c > 0$, 
    \begin{equation}
        \eta \leq c \min\left\{1, \frac{\Rcal(\Wbs(T))}{T(1+T)}, \frac{\delta \min\{\Rcal(\Wbs(T), \Rcal(\Wbs(T))^2\}}{(1+\sigma^2 d)T}\right\},
    \end{equation}
    then there exist $0 < c_1 < c_2$ such that, with probability at least $1-\delta$, for $n \leq T/\eta$,
    \begin{equation}
        \Rcal(\Wbs^{(n)}) \asymp
        \begin{cases}
            \log K, & 0 \leq t_n \lesssim Z_{K,\alpha}\log K \\
            f_{K,\alpha}(t_n), & Z_{K,\alpha} \log K \lesssim t_n \leq c_1 Z_{K,\alpha} K^{\alpha} \log K \\
            K/t_n, & c_2 Z_{K,\alpha} K^{\alpha} \log K \leq t_n \leq T,
        \end{cases}
    \end{equation}
    where
    \begin{equation}
        f_{K,\alpha}(t) = 
        \begin{cases}
            (\log K)(1- (t/t_K)^{1/\alpha-1}), & 0 < \alpha < 1, \\
            \log(K (\log K)^2 /t), & \alpha = 1\\
            (\log K)^{2-1/\alpha} \cdot t^{-(1-1/\alpha)}, & \alpha > 1.
        \end{cases}
    \end{equation}
\end{theorem}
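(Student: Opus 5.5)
The plan is to reduce Theorem~\ref{thm:sgd_gmm} to Theorem~\ref{thm:gmm_scaling_law}. We show that online SGD at time $t_n=\eta n$ stays close enough to the population gradient flow $\Wbs(t_n)$ that the risk changes by at most a constant factor, uniformly for $t_n\le T$. Since $\Rcal$ is nonincreasing along gradient flow, $\Rcal(\Wbs(t))\ge \Rcal(\Wbs(T))$ on $[0,T]$. It therefore suffices to prove $|\Rcal(\Wbs^{(n)})-\Rcal(\Wbs(t_n))|\le \tfrac12\Rcal(\Wbs(T))$ for all $n\le T/\eta$ with probability $1-\delta$. Then $\Rcal(\Wbs^{(n)})\asymp\Rcal(\Wbs(t_n))$, and the three phases transfer verbatim from Theorem~\ref{thm:gmm_scaling_law}.

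We write the SGD recursion as a discretized gradient flow plus noise:
\begin{equation*}
\Wbs^{(n+1)}=\Wbs^{(n)}-\eta\nabla\Rcal(\Wbs^{(n)})+\eta\,\boldsymbol{\xi}^{(n)},\qquad \boldsymbol{\xi}^{(n)}:=\nabla\Rcal(\Wbs^{(n)})-\nabla_{\Wbs}\Lcal(\Wbs^{(n)};\xbs^{(n)},y^{(n)}),
\end{equation*}
so that $\boldsymbol{\xi}^{(n)}$ is a martingale difference sequence. The per-sample gradient is $\xbs(\mathbf{p}(\xbs)-\mathbf{e}_y)^\top$, hence $\E\|\boldsymbol{\xi}^{(n)}\|_F^2\lesssim \E\|\xbs\|^2\lesssim 1+\sigma^2 d$, uniformly in $\Wbs$. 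This is the source of the $(1+\sigma^2 d)$ factor in the step-size condition.

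Let $\boldsymbol{\Delta}_n=\Wbs^{(n)}-\Wbs(t_n)$. We get
\begin{equation*}
\boldsymbol{\Delta}_{n+1}=\boldsymbol{\Delta}_n-\eta\big[\nabla\Rcal(\Wbs^{(n)})-\nabla\Rcal(\Wbs(t_n))\big]+\eta\boldsymbol{\xi}^{(n)}+\mathbf{E}_n,
\end{equation*}
where $\mathbf{E}_n$ is the Euler discretization error of the flow over $[t_n,t_{n+1}]$.

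\textbf{Lipschitz control.} The Hessian of the cross-entropy satisfies $\|\nabla^2\Rcal\|\lesssim \E\|\xbs\|^2$. In the directions spanned by the means, which carry the flow, the relevant constant is $O(1)$; the noise directions have curvature $O(\sigma^2)$. This gives a Lipschitz constant $L\lesssim 1$ on the relevant subspace. The difference term is then bounded by $\eta L\|\boldsymbol{\Delta}_n\|$. A discrete Gr\"onwall argument combined with Doob's maximal inequality on the accumulated martingale $\eta\sum_k\boldsymbol{\xi}^{(k)}$ yields
\begin{equation*}
\max_{n\le T/\eta}\|\boldsymbol{\Delta}_n\|_F\lesssim e^{LT}\Big(\eta T\sup\|\nabla^2\Rcal\|\,\|\nabla \Rcal\|+\sqrt{\eta T(1+\sigma^2 d)/\delta}\Big)
\end{equation*}
with probability $1-\delta$. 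A crude $e^{LT}$ is unacceptable here, because $T$ can be as large as $1/\sigma$.

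\textbf{Main obstacle.} We expect the main difficulty to be the Gr\"onwall amplification. We would avoid it by exploiting convexity of $\Rcal$. Gradient maps of convex smooth functions are monotone, so $\langle\boldsymbol{\Delta}_n,\nabla\Rcal(\Wbs^{(n)})-\nabla\Rcal(\Wbs(t_n))\rangle\ge0$. For $\eta L\le c$, expanding $\|\boldsymbol{\Delta}_{n+1}\|^2$ therefore gives a non-expansive recursion with no exponential factor. The error then grows only polynomially: the bias accumulates like $\eta T(1+T)$, using $\|\Wbs(t)\|\lesssim t$ from the GMM analysis to bound the Euler error, and the variance like $\eta T(1+\sigma^2 d)/\delta$. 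This matches the three terms in the step-size condition.

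\textbf{Transfer to the risk.} We convert the weight deviation into a risk deviation through the local Lipschitzness of $\Rcal$: $|\Rcal(\Wbs)-\Rcal(\Wbs')|\lesssim\|\Wbs-\Wbs'\|$. To obtain relative rather than additive accuracy, we use that $\Rcal$ is self-bounding: $\|\nabla\Rcal\|^2\lesssim\Rcal$. Requiring the deviation to be at most $c\min\{\Rcal(\Wbs(T)),\Rcal(\Wbs(T))^2\}$ accounts for the $\min\{\Rcal,\Rcal^2\}$ in the condition on $\eta$. Under the stated condition with $c$ small enough, we get $|\Rcal(\Wbs^{(n)})-\Rcal(\Wbs(t_n))|\le\tfrac12\Rcal(\Wbs(T))$, which completes the reduction to Theorem~\ref{thm:gmm_scaling_law}.
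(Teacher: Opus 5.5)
Your proposal is correct and is essentially the paper's argument. Convexity plus global smoothness with $L\le 1+\sigma^2$ makes the gradient step non-expansive, so no Gr\"onwall factor appears; the discretization error accumulates linearly in $\eta T$; the stochastic term is controlled through $\E\|\xbs\|^2=1+\sigma^2 d$; and monotonicity of $\Rcal$ along the flow turns the additive bound into a constant-factor comparison before Theorem~\ref{thm:gmm_scaling_law} is invoked.

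The paper organizes this as two comparisons: flow versus population GD, then population GD versus SGD via a stopping-time/Markov argument. It controls the Euler error with $\|\nabla\Rcal(\Wbs(t))\|_F\le\|\nabla\Rcal(\zerobs)\|_F\le 1$ rather than with $\|\Wbs(t)\|\lesssim t$, and it gets non-expansiveness from $0\preceq\nabla^2\Rcal\preceq L$ (co-coercivity) rather than from monotonicity of the gradient alone.
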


The proof of this result is in Appendix \ref{appendix:proofs_online_sgd} and proceeds in two parts. First, a standard discretization argument (Appendix \ref{appendix:gf_to_gd}) controls the gap between population gradient flow and population gradient descent, relying on convexity and $L$-smoothness of $\Rcal$ with respect to the weights $\Wbs$, where $L \asymp 1$. Second, a martingale concentration argument controls the fluctuations of online SGD relative to population GD (Appendix \ref{appendix:gd_to_sgd}), incurring the $d$-dependence in the choice of learning rate because of the isotropic Gaussian fluctuations of an input given its class mean.

\begin{remark}
    See Corollary \ref{cor:lr_condition_online_sgd_unsketched} for a breakdown of the sufficient learning rate conditions in each of three regimes for the time horizon $T = c_*/\sigma$ in population gradient flow.
\end{remark}


Next, we have the result for the sketched case. The proof of this result in Appendix \ref{appendix:online_sgd_empirical_sketch_case} follows the same argument as for the previous theorem, with the only change being to the martingale bound, which now scales with $m$ instead of $d$ due to the reduced problem dimension.

\begin{theorem}[Scaling Law for SGD, Sketched Case]\label{thm:sgd_sketched}
    In the same setting as Theorem \ref{thm:gmm_sketch}, consider online SGD (Algorithm \ref{alg:online_sgd}) with learning rate $\eta > 0$. Fix $\delta \in (0,1)$ and $T \leq T_{\sigma} := c_*/\sigma$. Assume the sample used to construct $\hat{\Sbs}$ is independent of the samples used in online SGD. Assume the number of samples $N$ for empirical PCA exceeds the threshold $N_0$ in Theorem \ref{thm:gmm_sketch}. If
    \begin{equation}
        \eta \lesssim \min\left\{1, \frac{\Rcal_{\widehat{\Sbs}}(\Wbs(T))}{T(1+T)}, \frac{\delta \min\{\Rcal_{\widehat{\Sbs}}(\Wbs(T)), \Rcal_{\widehat{\Sbs}}(\Wbs(T))^2\}}{(1+\sigma^2 m)T}\right\},
    \end{equation}
    then, with probability at least $1-\delta$, for $n \leq T/\eta$,
    \begin{equation}
        \Rcal_{\widehat{\Sbs}}(\Wbs^{(n)}) \asymp 
        \begin{cases}
            \log K, & 0 \leq t_n \lesssim \log K \\
            m^{1-\alpha}\log K + (\log K)^{2-1/\alpha} \cdot t_n^{-(1-1/\alpha)}, & \log K \lesssim t_n \leq T
        \end{cases}
    \end{equation}
    uniformly for all $n$ satisfying $t_n \leq T$.
\end{theorem}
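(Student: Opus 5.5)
The plan is to reduce Theorem \ref{thm:sgd_sketched} to Theorem \ref{thm:gmm_sketch} in the same way Theorem \ref{thm:sgd_gmm} reduces to Theorem \ref{thm:gmm_scaling_law}. First condition on the PCA sample. Take the event of probability at least $1-\delta/2$ on which Theorem \ref{thm:gmm_sketch} holds; this fixes $\widehat{\Sbs}$. Because the online SGD samples are independent of the PCA sample, $\widehat{\Sbs}$ is a deterministic $m\times d$ matrix with orthonormal rows for the rest of the argument. The sketched inputs $\widehat{\Sbs}\xbs$ then form a GMM in $\R^m$ with means $\widehat{\Sbs}\mubs_i$, all of norm at most $1$, and isotropic noise $\Ncal(\zerobs,\sigma^2\Ibs_m)$. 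The sketched risk $\Rcal_{\widehat{\Sbs}}(\Wbs)$ is still convex in $\Wbs\in\R^{m\times K}$. Since $\|\widehat{\Sbs}\xbs\|\le\|\xbs\|$, it is also $L$-smooth with $L\lesssim 1+\sigma^2$; the Hessian is bounded by $\E\|\widehat{\Sbs}\xbs\|^2$ times the softmax Jacobian norm, which is $O(1)$.

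Next, transfer from gradient flow to gradient descent, following the discretization argument of Appendix \ref{appendix:gf_to_gd} without change. Let $\bar{\Wbs}^{(n)}$ be the population GD iterates with step $\eta$. Using convexity and smoothness, bound $|\Rcal_{\widehat{\Sbs}}(\bar{\Wbs}^{(n)})-\Rcal_{\widehat{\Sbs}}(\Wbs(\eta n))|$ by a quantity of order $\eta T(1+T)$ over $t_n\le T$. The condition $\eta\lesssim \Rcal_{\widehat{\Sbs}}(\Wbs(T))/(T(1+T))$ makes this a small constant fraction of $\Rcal_{\widehat{\Sbs}}(\Wbs(T))$. Since the risk along gradient flow is nonincreasing, $\Rcal_{\widehat{\Sbs}}(\Wbs(T))$ is its minimum on $[0,T]$, so the error is also a small fraction of $\Rcal_{\widehat{\Sbs}}(\Wbs(t_n))$. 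This preserves the $\asymp$ relation.

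Then control the gap between SGD and GD. Write $\Wbs^{(n)}-\bar{\Wbs}^{(n)}$ as an accumulated martingale $\eta\sum_k \xi_k$ with $\xi_k = \nabla\Lcal(\Wbs^{(k)};\widehat{\Sbs}\xbs^{(k)},y^{(k)})-\nabla\Rcal_{\widehat{\Sbs}}(\Wbs^{(k)})$, plus a drift-difference term. Because the update map $\Wbs\mapsto \Wbs-\eta\nabla\Rcal_{\widehat{\Sbs}}(\Wbs)$ is nonexpansive for $\eta\le 2/L$ (convexity), the drift-difference term does not amplify the deviation. The gradient of the per-sample loss has Frobenius norm at most $\sqrt2\|\widehat{\Sbs}\xbs\|$, so $\E\|\xi_k\|_F^2\lesssim 1+\sigma^2 m$. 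This is the only place where the sketch changes the argument: $d$ is replaced by $m$. Doob's $L^2$ maximal inequality then gives $\sup_{n\le T/\eta}\|\Wbs^{(n)}-\bar{\Wbs}^{(n)}\|_F^2\lesssim \eta T(1+\sigma^2 m)/\delta$ with probability at least $1-\delta/2$. Finally convert weight deviation to risk deviation by a first-order bound: $|\Rcal(\Wbs)-\Rcal(\Wbs')|\le \|\nabla\Rcal(\Wbs')\|_F\|\Wbs-\Wbs'\|_F+\tfrac L2\|\Wbs-\Wbs'\|_F^2$, together with $\|\nabla\Rcal\|_F\lesssim 1$. This gives deviations of order $\epsilon+\epsilon^2$ with $\epsilon^2\asymp \eta T(1+\sigma^2m)/\delta$. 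Requiring this to be at most a small constant times $\Rcal_{\widehat{\Sbs}}(\Wbs(T))$ is exactly the third learning-rate condition, with the $\min\{\Rcal,\Rcal^2\}$ factor covering both the linear and quadratic terms. A union bound over the two failure events then yields the stated two-phase risk profile in $t_n$.

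The main obstacle is the martingale step. The linear term requires $\epsilon\ll\Rcal(\Wbs(T))$, and $\Rcal(\Wbs(T))$ can be as small as about $m^{1-\alpha}\log K$. The concern is that the crude bound $\|\nabla\Rcal\|_F\lesssim 1$ could be too lossy. If it is, I would first try to sharpen it by showing $\|\nabla\Rcal_{\widehat{\Sbs}}(\Wbs)\|_F^2\lesssim \Rcal_{\widehat{\Sbs}}(\Wbs)$, which holds for smooth nonnegative losses. Using that bound in the deviation estimate is what makes the prescribed $\eta$ sufficient.
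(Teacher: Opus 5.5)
Your reduction is the paper's: condition on $\widehat{\Sbs}$, run the generic transfer gradient flow $\to$ population GD $\to$ online SGD with $L\lesssim 1$ and $M\le 1+\sigma^2 m$, and combine with Theorem~\ref{thm:gmm_sketch}. In the appendix the proof is literally Theorem~\ref{thm:emp_pca_scaling_law} plus Corollary~\ref{cor:general_lr_condition}. Your $\delta/2$ split between the PCA event and the SGD event is, if anything, more careful than the paper, which takes the PCA event as a hypothesis. However, the martingale step does not work as you describe it.

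\textbf{The martingale step.} The deviation $\Ebs_n=\Wbs^{(n)}-\bar{\Wbs}^{(n)}$ is not a martingale. Noise injected at step $k$ is afterwards transported by the path-dependent maps $U_\eta$, so Doob's $L^2$ inequality does not apply to it. Splitting $\Ebs_n$ into $-\eta\sum_k\xi_k$ plus a drift difference does not help either. Non-expansiveness is a property of the full map $U_\eta(\Wbs)=\Wbs-\eta\nabla\Rcal(\Wbs)$, not of the drift difference $-\eta\sum_k(\nabla\Rcal(\Wbs^{(k)})-\nabla\Rcal(\bar{\Wbs}^{(k)}))$. The latter is only bounded by $\eta L\sum_k\|\Ebs_k\|_F$, and Doob followed by Gr\"onwall then costs a factor $e^{LT}$, which is fatal at $T\asymp 1/\sigma$. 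The fix is to keep the recursion $\Ebs_{n+1}=U_\eta(\Wbs^{(n)})-U_\eta(\bar{\Wbs}^{(n)})-\eta\xi_{n+1}$ and use $\E[\xi_{n+1}\mid\Fcal_n]=0$ to get $\E[\|\Ebs_{n+1}\|_F^2\mid\Fcal_n]\le\|\Ebs_n\|_F^2+2\eta^2M$. Then either a stopping time plus Markov (as in Lemma~\ref{lem:sgd_to_gd_bound}) or the maximal inequality for the nonnegative supermartingale $\|\Ebs_n\|_F^2+2\eta^2M(N-n)$ gives your bound $\eta T(1+\sigma^2 m)/\delta$.

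\textbf{The smoothness constant.} Your justification (Hessian bounded by $\E\|\widehat{\Sbs}\xbs\|^2$) only gives $L\le 1+\sigma^2 m$. That would put factors of $m$ into the first two learning-rate conditions. The correct bound is
$\nabla^2\Rcal[\Vbs,\Vbs]\le\E\|\Vbs^\top\ubs\|^2\le\|\widehat{\Sbs}\Gbs\widehat{\Sbs}^\top\|_{\mathrm{op}}\|\Vbs\|_F^2\le(\pi_1+\sigma^2)\|\Vbs\|_F^2$.
The trace $M$ enters only through the gradient-noise variance.

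\textbf{Your final worry.} The concern in your last paragraph is unnecessary. With $\|\nabla\Rcal\|_F\le\|\nabla\Rcal(\zerobs)\|_F\le 1$ along GD, the requirement $\epsilon\le c\,\Rcal_{\widehat{\Sbs}}(\Wbs(T))$ is exactly what the $\Rcal_{\widehat{\Sbs}}(\Wbs(T))^2$ term in the minimum enforces, so no sharpening is needed. The self-bounding estimate $\|\nabla\Rcal\|_F^2\le 2L\Rcal$ would only let you relax that $\Rcal^2$ to $\Rcal$.
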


Just as we did in Section \ref{section:compute_optimal_sl}, we can derive the compute optimal trade-off between the sketch dimension $m$ and the number of samples $n$. Setting a fixed level of compute $C \asymp mKn$ and balancing the approximation and optimization errors in this regime gives the optimal hyperparameters, under the conditions 
\begin{equation}
    m^* \asymp \left(\frac{\eta C}{K \log K}\right)^{1/(\alpha+1)}, \quad n^* \asymp \left(\frac{\log K}{\eta}\right)^{1/(\alpha+1)}\left(\frac{C}{K}\right)^{\alpha/(\alpha+1)},
\end{equation}
provided $(K\log K)/\eta \lesssim C \lesssim (K^{\alpha+2}\log K)/\eta$ and $\eta n^* \leq T$. This gives a compute-optimal scaling law
\begin{equation}
    \Rcal_{\widehat{\Sbs}}(C) \asymp (\log K)^{2\alpha/(\alpha+1)} \left(\frac{K}{\eta C}\right)^{(\alpha-1)/(\alpha+1)}.
\end{equation}

\section{Conclusion}\label{section:conclusion}




In this work, we studied gradient-based learning in multiclass logistic regression under cross-entropy loss, in a regime where the number of classes is large and the class frequencies follow a power law. We showed that the training dynamics proceed sequentially, with more frequent classes learned earlier; aggregating these class-wise learning curves yields a three-phase scaling law for the cross-entropy risk. We further showed that a PCA-type dimension reduction induces a natural capacity bottleneck for classification, leading to a decomposition
of the risk into approximation and optimization errors. Trading these errors off determines the compute-optimal choices of parameter count and training time. Finally, we studied the time-discretized dynamics and presented scaling laws for online SGD.

Several important directions remain open. First, for nonzero intra-class variance $\sigma^2>0$, our current proof controls the dynamics perturbatively up to a noise-limited time horizon; a refined analysis beyond this perturbative approach could lead to sharper power-law rates and additional scaling regimes. Second, we assume orthogonal class means and isotropic within-class covariance; relaxing these assumptions may lead to richer phenomenology, parallel to the associative memory setting \citep{nichani2025understanding,kim2026sharp}. Finally, our SGD analysis focuses on one-pass online training. It would be interesting to study how the scaling laws change under different optimizers \citep{ferbach2025dimension,kim2026scaling}, and under data reuse, where multiple epochs may modify the compute-optimal tradeoff between model size, sample size, and optimization time \citep{pillaud2018statistical,lin2025improved,yan2025larger}.

\bigskip

\subsection*{Acknowledgments and Disclosure of Funding}
The authors thank Noah Marshall for helpful discussions. Resources used in preparing this research were provided, in part, by the Province of Ontario, the Government of Canada through CIFAR, and companies sponsoring the Vector Institute. KCT acknowledges support from NSERC through the PGS-D program. CT acknowledges support from an NSERC Discovery Grant, the Alliance Grant ALLRP 581098-22, and a CIFAR AI Catalyst grant. MAE acknowledges support from the NSERC Grant [2019-06167], the CIFAR AI Chairs program, the CIFAR AI Catalyst grant, and the Ontario Early Researcher Award.

{
\small
\bibliographystyle{alpha}
\bibliography{ref}
}

\newpage
\appendix
\tableofcontents

\newpage

\section{Proofs for Unsketched Population Gradient Flow}\label{appendix:proofs_pop_grad_flow}
This appendix contains the proofs of the main results in Section \ref{section:pop_gf}, covering the behaviour of population gradient flow in multi-class logistic regression on the target GMM \eqref{eq:target_gmm}:
\begin{equation}
    \dot{\Wbs}(t) = -\nabla_{\Wbs} \Rcal(\Wbs), \quad \Wbs(0) = \zerobs_{d \times K}
\end{equation}
where
\begin{equation}\label{eq:unregularized_risk}
    \begin{split}
        \Rcal(\Wbs) &:= \E_{\xbs,y}[\Lcal(\Wbs;\xbs,y)] \\
        &= -\sum_{i=1}^K \pi_i \E_{\xbs|i} \bigg[\log \frac{e^{\inner{\xbs,\wbs_i}}}{\sum_{k=1}^K e^{\inner{\xbs,\wbs_k}}}\bigg] \\
        &= - \sum_{i=1}^K \pi_i \inner{\mubs_i,\wbs_i} + \sum_{i=1}^K \pi_i \E_{\xbs|i}\bigg[\log\bigg(\sum_{k=1}^K e^{\inner{\xbs,\wbs_k}}\bigg)\bigg].
    \end{split}
\end{equation}

Now, the difficulty lies in the lack of a closed form for the expectation in the last line of \eqref{eq:unregularized_risk}. We track the key quantities
\begin{equation}
    \theta_{ij} := \inner{\wbs_i, \mubs_j}, \quad p_{i|j} = p_i(\mubs_j), \quad P_{i|j} = \E_{\xbs|y=j}[p_i(\xbs)], \quad i,j \in [K].
\end{equation}

The gradient of the risk with respect to a given weight $\wbs_i$, $i \in [K]$, is
\begin{equation}
    \nabla_{\wbs_i} \Rcal(\Wbs) = -\pi_i \mubs_i + \E_{\xbs} \bigg[\frac{e^{\inner{\xbs,\wbs_i}}}{\sum_{k=1}^K e^{\inner{\xbs,\wbs_k}}}\xbs\bigg] = -\pi_i\mubs_i + \E_{\xbs}[p_i(\xbs)\xbs],
\end{equation}
Notice
\begin{equation}\label{eq:gaussian_integral_expansion}
    \begin{split}
        \E_{\xbs}[p_i(\xbs)\xbs] &= \sum_{k=1}^K  \pi_k \E_{\xbs|k}[p_i(\xbs)\xbs] \\
        &= \sum_{k=1}^K \pi_k \E_{\xbs|k}[p_i(\xbs)]\mubs_k + \sigma^2 \sum_{k=1}^K \pi_k \E_{\xbs|k}[\nabla_{\xbs} p_i(\xbs)] \\
        &= \sum_{k=1}^K \pi_k P_{i|k} \mubs_k + \sigma^2 \E_{\xbs}[\nabla_{\xbs}p_i(\xbs)]
    \end{split}
\end{equation}
by Stein's lemma. So, we have the gradient flow trajectory
\begin{equation}
    \dot{\wbs}_i = \sum_{k=1}^K \pi_k (\1\{i=k\} - P_{i|k}) \mubs_k - \sigma^2 \E_{\xbs}[\nabla_{\xbs} p_i(\xbs)].
\end{equation}

Focusing on the last term, we expand
\begin{equation}
    \nabla_{\xbs}p_i(\xbs) = \sum_{k=1}^K p_i(\xbs) \big(\1\{i=k\} - p_k(\xbs)\big)\wbs_k.
\end{equation}
With this, we can express the gradient flow as
\begin{equation}\label{eq:full_unsketched_w_dynamics}
    \dot{\wbs}_i = \sum_{k=1}^K \pi_k (\1\{i=k\}-P_{i|k})\mubs_k - \sigma^2 \sum_{\ell=1}^K \E_{\xbs}\big[p_i(\xbs)(\1\{i=\ell\}-p_{\ell}(\xbs))\big]\wbs_{\ell}.
\end{equation}
Hence, by the orthonormality assumption in \eqref{eq:target_gmm} the dynamics of the summary statistics are
\begin{equation}
    \dot{\theta}_{ij} = \pi_j(\1\{i=j\}-P_{i|j}) - \sigma^2 \sum_{\ell=1}^K \E_{\xbs}\big[p_i(\xbs)(\1\{i=\ell\} - p_{\ell}(\xbs))\big] \theta_{\ell j},
\end{equation}
which appears as \eqref{eq:full_theta_dynamics} in the main text.

\subsection{Idealized Case}\label{appendix:idealized_case}

We begin with the idealized case $\sigma^2 = 0$, where the training dynamics for each class cleanly decouple.

\subsubsection{Class-wise Learning Dynamics}

A key property of \eqref{eq:full_theta_dynamics} is the following conservation law, which gives the exact decoupling.

\begin{lemma}[Conservation Law]\label{lem:conservation_law}
    For all $t \geq 0$, $j \in [K]$, we have
    \begin{equation}
        \sum_{i=1}^K \theta_{ij}(t) = 0.
    \end{equation}
    Moreover, when $\sigma^2 = 0$,
    \begin{equation}\label{eq:conservation_law}
        \theta_{ij}(t) = -\frac{1}{K-1} \theta_{jj}(t) \quad \text{for all $i \neq j$.}
    \end{equation}
\end{lemma}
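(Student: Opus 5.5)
We will argue directly at the level of the weights rather than the summary statistics.

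\textbf{First claim.} Sum the weight dynamics \eqref{eq:full_unsketched_w_dynamics} over $i\in[K]$. The first term gives $\sum_k \pi_k\bigl(1-\sum_i P_{i|k}\bigr)\mubs_k$, which vanishes because $\sum_i p_i(\xbs)=1$ pointwise, hence $\sum_i P_{i|k}=1$. For the noise term, note that $\sum_i p_i(\xbs)(\1\{i=\ell\}-p_\ell(\xbs)) = p_\ell(\xbs)-p_\ell(\xbs)=0$ for each $\ell$. Equivalently, $\sum_i \nabla_{\xbs} p_i(\xbs)=\nabla_{\xbs}1=0$. Hence $\frac{d}{dt}\sum_i \wbs_i = 0$. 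With $\Wbs(0)=\zerobs$ this gives $\sum_i \wbs_i(t)=0$ for all $t$. Taking the inner product with $\mubs_j$ yields $\sum_i\theta_{ij}(t)=0$.

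\textbf{Second claim ($\sigma^2=0$).} Here $P_{i|j}=p_{i|j}$, and the ODE becomes $\dot\theta_{ij}=\pi_j(\1\{i=j\}-p_{i|j})$. Since $p_{i|j}$ depends only on the column $(\theta_{kj})_k$ (because $p_i(\mubs_j)=e^{\theta_{ij}}/\sum_k e^{\theta_{kj}}$), the columns decouple. Each column is an autonomous ODE $\dot{\boldsymbol\theta}_{\cdot j}=F_j(\boldsymbol\theta_{\cdot j})$ with a locally Lipschitz (indeed smooth) vector field and initial condition $\zerobs$.

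The key observation is a symmetry. For $i,i'\neq j$, swapping the coordinates $i$ and $i'$ commutes with $F_j$, since the softmax is permutation-equivariant. The initial condition $\zerobs$ is invariant under this swap, so by uniqueness of ODE solutions the trajectory is also invariant, giving $\theta_{ij}(t)=\theta_{i'j}(t)$ for all $i,i'\neq j$. (Alternatively, note directly that $\dot\theta_{ij}-\dot\theta_{i'j}=-\pi_j(p_{i|j}-p_{i'|j})$, whose sign opposes the difference $\theta_{ij}-\theta_{i'j}$. A Gr\"onwall argument starting from zero then shows the difference stays $0$.) Combining this equality with the first claim, $\theta_{jj}+(K-1)\theta_{ij}=0$, which is \eqref{eq:conservation_law}.

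\textbf{Main obstacle.} The only mildly delicate step is justifying the symmetry via uniqueness. This needs global existence and uniqueness of the flow, which follows since the vector field is smooth with bounded derivatives ($p$ takes values in $[0,1]$). The rest is bookkeeping.
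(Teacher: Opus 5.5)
Your proposal is correct and follows essentially the paper's argument. The paper sums the $\theta$-dynamics over $i$, whereas you sum the weight dynamics and project onto $\mubs_j$, which also yields the slightly stronger $\sum_i \wbs_i(t)=\zerobs$. For $\sigma^2=0$ the paper simply remarks that the equations for $\theta_{ij}$, $i\neq j$, are identical, and your swap-equivariance plus ODE-uniqueness argument is a rigorous version of exactly that remark.
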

\begin{proof}
    We have
    \begin{equation}
        \sum_{i=1}^K \dot{\theta}_{ij} = \pi_j \bigg(1- \sum_{i=1}^K P_{i|j}\bigg) - \sigma^2 \sum_{\ell=1}^K \theta_{\ell j} \sum_{i=1}^K \E_{\xbs}\big[p_i(\xbs)\big(\1\{i=\ell\} - p_{\ell}(\xbs)\big)\big] = 0.
    \end{equation}
    Since $\theta_{ij}(0) = 0$ for all $i,j$, it immediately follows that $\sum_{i=1}^K \theta_{ij}(t) = 0$ for all $t \geq 0$. When $\sigma^2 = 0$, the dynamics of the $\theta_{ij}$ do not depend on $\pi_i$ but rather only on $\pi_j$ and hence are identical for all $i \neq j$, yielding \eqref{eq:conservation_law}.
\end{proof}

\begin{remark}
    \eqref{eq:conservation_law} does \emph{not} hold when $\sigma^2 > 0$ and the $\pi_i$ are not all equal.
\end{remark}

In the case $\sigma^2 = 0$, the summary statistic dynamics simplify to
\begin{equation}\label{eq:theta_ode_idealized}
    \dot{\theta}_{ij} = \pi_j\1\{i=j\} - \pi_j p_{i|j}.
\end{equation}

We write $p_{i|i} = e^{\theta_{ii}}/Z_i$, where, by Lemma \ref{lem:conservation_law}, $Z_i = e^{\theta_{ii}} + (K-1)e^{-\frac{1}{K-1}\theta_{ii}}$. The quotient rule gives
\begin{equation}
    \begin{split}
        \dot{p}_{i|i} &= p_{i|i} \dot{\theta}_{ii} - \frac{e^{\theta_{ii}}(e^{\theta_{ii}}\dot{\theta}_{ii} - e^{-\frac{1}{K-1}\theta_{ii}}\dot{\theta}_{ii})}{Z_i^2} \\
        &= \pi_i p_{i|i} (1-p_{i|i}) \bigg(1 - \bigg(p_{i|i} - \frac{1}{K-1}(1-p_{i|i})\bigg)\bigg) \\
        &= \frac{K}{K-1} \pi_i p_{i|i}(1-p_{i|i})^2,
    \end{split}
\end{equation}
giving us \eqref{eq:idealized_p_ii_ode} from the main text. 

\begin{proof}[Proof of Lemma \ref{lem:idealized_sequential_learning}]
It is immediate from the above that $\dot{p}_{i|i}(t) \geq 0$ and $1/K \leq p_{i|i}(t) \leq 1$ for all $t \geq 0$. Separation of variables gives
\begin{equation}
    \frac{1}{p_{i|i}(1-p_{i|i})^2} \dee p_{i|i} = \frac{K}{K-1}\pi_i \dee t.
\end{equation}
A partial fraction decomposition then yields
\begin{equation}
    \int \frac{1}{p_{i|i}} + \frac{1}{1-p_{i|i}} + \frac{1}{(1-p_{i|i})^2} \dee p = \int \frac{K}{K-1} \pi_i \dee t,
\end{equation}
from which we obtain 
\begin{equation}\label{eq:implicit_idealized_ode_solution}
    \log \frac{p_{i|i}(t)}{1-p_{i|i}(t)} + \frac{1}{1-p_{i|i}(t)} = -\log (K-1) + \frac{K}{K-1} + \frac{K}{K-1} \pi_i t. 
\end{equation}
We can use this to solve for $t_i(c)$ by plugging in $p_{i|i}(t_i(c)) = c$, obtaining
\begin{equation}\label{eq:hitting_time_exact}
    t_i(c) = \frac{K-1}{K} \frac{1}{\pi_i} \left[\log \frac{c}{1-c} + \frac{1}{1-c} + \log(K-1) - \frac{K}{K-1}\right] \asymp \frac{\log K}{\pi_i}.
\end{equation}
\end{proof}

In particular, we obtain
\begin{equation}\label{eq:hitting_time_1/2}
    t_i(1/2) = \frac{K-1}{K} \frac{1}{\pi_i} \left[2 + \log (K-1) - \frac{K}{K-1}\right].
\end{equation}
Henceforth, we use $t_i$ without its argument to denote $t_i(1/2)$.

\subsubsection{Risk Decomposition}

The pieces are now in place to combine the dynamics of all classes. Fix a time $t \geq 0$ and define
\begin{equation}
    k^*(t) := \left|\{i \in [K]: t_i \leq t\}\right|.
\end{equation}
Solving $t_i \leq t$ for $i$ yields
\begin{equation}
    i \leq \left(\frac{K}{K-1}\right)^{1/\alpha} \left(\frac{t}{Z_{K,\alpha} (2+ \log(K-1) - \frac{K}{K-1})}\right)^{1/\alpha}.
\end{equation}
Hence, $k^*(t)$ is the floor of the expression on the RHS above. For simplicity, we omit constants and write
\begin{equation}
    k^*(t) \asymp \left(\frac{t}{Z_{K,\alpha} \log K}\right)^{1/\alpha}.
\end{equation}
We decompose the cross-entropy risk into class-wise losses:
\begin{equation}
    \Rcal(t) = -\sum_{i=1}^K \pi_i \log p_{i|i}.
\end{equation}
We split this into the loss on \emph{head} classes --- those for which $p_{i|i}(t) > 1/2$, or equivalently $i \leq k^*(t)$ --- and the loss on the remaining \emph{tail} classes:
\begin{equation}
    \Rcal(t) = \underbrace{-\sum_{i=1}^{k^*(t)} \pi_i \log p_{i|i}(t)}_{\Rcal_{\text{head}}(t)} - \underbrace{\sum_{i=k^*(t)+1}^K \pi_i\log p_{i|i}(t)}_{\Rcal_{\text{tail}}(t)}.
\end{equation}

\subsubsection{Tail Risk}\label{appendix:idealized_tail_risk}

Since $-\log p_{i|i}(t)$ is non-increasing for all $t \geq 0$, we trivially have $\Rcal_{\text{tail}}(t) \leq \Rcal_{\text{tail}}(0) = \log K$. When $k^*(t) \geq 1$, we obtain a sharper upper bound by the integral approximation. For $\alpha \neq 1$,
\begin{equation}\label{eq:tail_risk_upper_bound}
   \frac{\log K}{Z_{K,\alpha}} \sum_{i=k^*+1}^K i^{-\alpha} \leq \frac{\log K}{Z_{K,\alpha}}\int_{k^*}^K x^{-\alpha} \, \dee x = \frac{\log K}{Z_{K,\alpha}}\frac{x^{1-\alpha}}{1-\alpha} \bigg|_{k^*}^K = \frac{\log K}{(1-\alpha)Z_{K,\alpha}}\left(K^{1-\alpha} - (k^*)^{1-\alpha}\right)
\end{equation}
To lower bound the tail risk, define
\begin{equation}
    k'(t) := \left|\{i \in [K]: t_i(K^{-1/2}) \leq t\}\right|.
\end{equation}
Plugging into \eqref{eq:hitting_time_exact} gives
\begin{equation}
    t_i(K^{-1/2}) = \frac{K-1}{K} \frac{1}{\pi_i} \left[\log(K-1) - \frac{1}{2}\log K - \log(1-K^{-1/2}) + \frac{1}{1-K^{-1/2}} - \frac{K}{K-1}\right] \asymp \frac{\log K}{\pi_i}.
\end{equation}
Hence, $(k'(t)+1) \asymp (k^*(t)+1)$. Then, the tail error satisfies, for $\alpha \neq 1$
\begin{equation}\label{eq:tail_risk_lower_bound}
    \Rcal_{\text{tail}}(t) \geq \frac{1}{2}\sum_{i=k'(t)+1}^K \pi_i \log K \geq \frac{\log K}{2Z_{K,\alpha}} \int_{k'(t)+1}^{K} x^{-\alpha} \dee x = \frac{\log K}{2(1-\alpha)Z_{K,\alpha}}\left(K^{1-\alpha} - (k'(t)+1)^{1-\alpha}\right).
\end{equation}

\subsubsection{Head Risk}\label{appendix:idealized_head_risk}

For the head risk, rearranging \eqref{eq:hitting_time_1/2} gives
\begin{equation}
    \frac{K}{K-1}\pi_i(t-t_i) = \log \frac{p_{i|i}(t)}{1-p_{i|i}(t)} + \frac{1}{1-p_{i|i}(t)} - 2.
\end{equation}
When $p_{i|i} \geq 1/2$, we have $\log\frac{p_{i|i}}{1-p_{i|i}} \geq 0$. Hence,
\begin{equation}
    1-p_{i|i}(t) \geq \frac{1}{2(1+\frac{K}{K-1}\pi_i (t-t_i))}.
\end{equation}
On the other hand,
\begin{equation}
    \log \frac{p_{i|i}(t)}{1-p_{i|i}(t)} \leq \log \frac{1}{1-p_{i|i}(t)} \leq \frac{1}{1-p_{i|i}(t)} - 1,
\end{equation}
so
\begin{equation}
    \frac{K}{K-1}\pi_i (t-t_i) \leq \frac{2}{1-p_{i|i}(t)} - 3.
\end{equation}
With this, we have the upper bound
\begin{equation}
    1-p_{i|i}(t) \leq \frac{2}{3+\frac{K}{K-1}\pi_i (t-t_i)} \leq \frac{2}{1+\frac{K}{K-1}\pi_i (t-t_i)}.
\end{equation}
Moreover, when $p_{i|i} \geq 1/2$ we have
\begin{equation}\label{eq:log_p_inequality}
    1-p_{i|i}(t) \leq -\log p_{i|i}(t) \leq 2(1-p_{i|i}(t)).
\end{equation}
Putting these together, we can tightly control the head risk as
\begin{equation}
    \Rcal_{\text{head}}(t) \asymp -\sum_{i=1}^{k^*(t)} \pi_i\log p_{i|i}(t) \asymp \sum_{i=1}^{k^*(t)} \frac{\pi_i}{1+\frac{K}{K-1}\pi_i(t-t_i)} \asymp \sum_{i=1}^{k^*(t)} \frac{1}{t-t_i + \pi_i^{-1}}.
\end{equation}
Using the relation $\pi_i^{-1} \asymp t_i/(\log K)$, we can lower bound the head risk:
\begin{equation}
    \Rcal_{\text{head}}(t) \gtrsim \frac{k^*(t)}{t} \asymp \left(\frac{1}{Z_{K,\alpha} \log K}\right)^{1/\alpha} \cdot t^{(1-\alpha)/\alpha}
\end{equation}
As for the upper bound, we split the head classes into those which have been recovered recently and those which have not. For $i \leq k^*(t)/2$, we see that
\begin{equation}
    \frac{t_i}{t} \leq \frac{t_i}{t_{k^*}} = \left(\frac{i}{k^*}\right)^{\alpha} \leq \frac{1}{2^{\alpha}}.
\end{equation}
In particular, we can write $t - t_i \gtrsim t$ and obtain
\begin{equation}
    \sum_{i=1}^{\lceil k^*(t)/2 \rceil} \frac{1}{t-t_i + \pi_i^{-1}} \lesssim \frac{k^*(t)}{t}. 
\end{equation}
Now, for the case $k^*(t)/2 \leq i \leq k^*(t)$, we write
\begin{equation}
    t -t_i +\pi_i^{-1} \gtrsim t \left(1 - \left(\frac{i}{k^*(t)}\right)^{\alpha} + \frac{1}{\log K}\right) \asymp t\left(\frac{k^*(t)-i}{k^*(t)} + \frac{1}{\log K}\right),
\end{equation}
where the final comparison follows because for fixed $\alpha > 0$, $(1-x^{\alpha}) \asymp (1-x)$ holds uniformly over $x \in [1/2,1]$. The contribution of these classes is of order at most
\begin{equation}
    \sum_{i=\lceil k^*(t)/2 \rceil}^{k^*(t)} \frac{1}{t(\frac{k^*(t)-i}{k^*(t)} + \frac{1}{\log K})} = \frac{k^*(t)}{t} \sum_{i=\lceil k^*(t)/2 \rceil}^{k^*(t)} \frac{1}{k^*(t)-i + \frac{k^*(t)}{\log K}} = \frac{k^*(t)}{t} \left[\frac{\log K}{k^*(t)} + \sum_{i=1}^{\lceil k^*(t)/2 \rceil} \frac{1}{i + \frac{k^*(t)}{\log K}}\right],
\end{equation}
We control the last term with the integral approximation
\begin{equation}
    \sum_{i=1}^{k^*(t)/2} \frac{1}{i + \frac{k^*(t)}{\log K}} \leq \int_0^{k^*(t)/2} \frac{1}{x+\frac{k^*(t)}{\log K}} \dee x = \log\left(1 + \frac{\log K}{2}\right).
\end{equation}
Hence, we can conclude
\begin{equation}\label{eq:head_risk_bounds}
    \frac{k^*(t)}{t} \lesssim \Rcal_{\text{head}}(t) \lesssim \frac{k^*(t)}{t} \left(\log(1+\log K)\right) + \frac{\log K}{t}.
\end{equation}
Note that when $t - t_K \gtrsim t$, the upper bound collapses to $K/t$ and matches the lower bound.

\subsubsection{The Case \texorpdfstring{$\alpha > 1$}{alpha > 1}}\label{appendix:idealized_case_alpha>1}

When $\alpha > 1$, the normalization factor is constant order: $Z_{K,\alpha} \asymp 1$ and therefore $k^*(t) \asymp (t/\log K)^{1/\alpha}$. We consider three regimes for $t$.

\textbf{Early phase.} First, when $t < t_1$, we have $k^*(t) = 0$, which gives $\Rcal_{\text{head}}(t) = 0$. The upper bound $\Rcal_{\text{tail}}(t) \lesssim \log K$ is clear. The lower bound $\Rcal_{\text{tail}}(t) \gtrsim \log K$ follows from the fact $k'(t) \asymp k^*(t)$ and thus, in \eqref{eq:tail_risk_lower_bound}, we have 
\begin{equation}
    \frac{\log K}{2(\alpha-1)} \left((k'(t)+1)^{1-\alpha} - K^{1-\alpha}\right) \asymp \log K.
\end{equation}

\textbf{Intermediate phase.} Consider times when at most a constant proportion of classes have been recovered, i.e., $t_1 \leq t < c_1 t_K$ for some $c_1 \in (0,1)$. More specifically, we choose $c_1$ so that our upper and lower bounds for $\Rcal_{\text{tail}}(t)$ match up to constants. For $K \geq 4$, this is done by enforcing $k^*(t) \leq k'(t) \leq K/2$ (though any constant order denominator works). Recall the hitting time
\begin{equation}
    t_i(K^{-1/2}) = \frac{K-1}{K} \frac{1}{\pi_i} \left[\log \frac{K^{-1/2}}{1-K^{-1/2}} + \frac{1}{1-K^{-1/2}} + \log (K-1) - \frac{K}{K-1}\right].
\end{equation}
Let $c_1 \in (0,1)$ and suppose $t \leq c_1 t_K$. Then, if a class index $i$ satisfies $i \leq k'(t)$, this implies $t_i(K^{-1/2}) \leq t \leq c_1 t_K$ and
\begin{equation}
    \frac{K-1}{K} \frac{1}{\pi_i}\left[\log \frac{K^{-1/2}}{1-K^{-1/2}} + \frac{1}{1-K^{-1/2}} + \log(K-1) - \frac{K}{K-1}\right] \leq c_1 \frac{K-1}{K} \frac{1}{\pi_K} \left[2 + \log(K-1) - \frac{K}{K-1}\right].
\end{equation}
Rearranging yields
\begin{equation}\label{eq:exact_c1_equation}
    \left(\frac{i}{K}\right)^{\alpha} \leq c_1 \frac{2 + \log(K-1) - \frac{K}{K-1}}{\log \frac{K^{-1/2}}{1-K^{-1/2}} + \frac{1}{1-K^{-1/2}} + \log(K-1) - \frac{K}{K-1}}.
\end{equation}
To simplify and bound the ratio on the RHS, let $r = K^{1/2}$. Then,
\begin{equation}
    \begin{split}
        &\log \frac{K^{-1/2}}{1-K^{-1/2}} + \frac{1}{1-K^{-1/2}} + \log(K-1) - \frac{K}{K-1} \\
        &= \log \frac{1}{r(1-r^{-1})} + \frac{1}{1-r^{-1}} + \log(r^2 - 1) - \frac{r^2}{r^2-1} \\
        &= \log(r+1) + \frac{r}{r^2-1}
    \end{split}
\end{equation}
and
\begin{equation}
    2 + \log(K-1) - \frac{K}{K-1} = \log(r^2 - 1) + 1 - \frac{1}{r^2-1}.
\end{equation}
The ratio of these quantities is
\begin{equation}
    \frac{\log(r^2-1)+1 - \frac{1}{r^2-1}}{\log(r+1)+\frac{r}{r^2-1}} \leq \frac{2\log(r+1)+1}{\log(r+1)} = 2 + \frac{1}{\log(r+1)} < 3.
\end{equation}
Returning to \eqref{eq:exact_c1_equation}, we see that $c_1 = 2^{-(\alpha + 2)}$ is a valid choice to obtain $i/K \leq 1/2$.

Then, the tail upper and lower bounds match (up to constants):
\begin{equation}
    \Rcal_{\text{tail}}(t) \asymp (\log K) (k^*(t))^{1-\alpha} \asymp (\log K)^{2-1/\alpha} \cdot t^{(1-\alpha)/\alpha}.
\end{equation}
Meanwhile, the head risk satisfies
\begin{equation}
    \Rcal_{\text{head}}(t) \lesssim (\log K)^{-1/\alpha} \log(1+\log K))\cdot t^{(1-\alpha)/\alpha} + \log K \cdot t^{-1},
\end{equation}
which is of lower order than the tail risk. Hence, $\Rcal(t) \asymp (\log K)^{2-1/\alpha} \cdot t^{-(1-1/\alpha)}$ in this intermediate phase.

\textbf{Convergence phase.} When $t > (1+\delta)t_K$ for $\delta > 0$, we may write $t - t_K \gtrsim t$ and the head risk collapses to $\Theta(K/t)$. Since $\Rcal_{\text{tail}}(t) = 0$ in this regime, we have $\Rcal(t) \asymp K/t$.

\subsubsection{The Case \texorpdfstring{$\alpha < 1$}{alpha < 1}}

When $\alpha < 1$, the normalization factor
\begin{equation}
    Z_{K,\alpha} = \sum_{i=1}^K i^{-\alpha} \asymp K^{1-\alpha},
\end{equation}
is not constant. As a result, we have 
\begin{equation}
    t_i \asymp \frac{\log K}{\pi_i} \asymp K^{1-\alpha} i^{\alpha} \log K, \quad k^*(t) \asymp \left(\frac{t}{K^{1-\alpha}\log K}\right)^{1/\alpha}.
\end{equation}

The three-phase decomposition follows in a similar fashion to the previous subsection.

\textbf{Early phase.} When $t \leq t_1$, we have the tail lower bound
\begin{equation}
    \frac{1}{Z_{K,\alpha}}\frac{\log K}{2(1-\alpha)} (K^{1-\alpha} - (k')^{1-\alpha}) \asymp \log K. 
\end{equation}
Hence, $\Rcal(t) \asymp \log K$ in the early phase.

\textbf{Intermediate phase.} When $t_1 \leq t < c_1 t_K$ with $c_1 = 2^{-(\alpha+2)}$ as in the previous subsection, the tail risk is of order
\begin{equation}
    \frac{\log K}{Z_{K,\alpha}} (K^{1-\alpha} - (k^*(t))^{1-\alpha}) \asymp (\log K) \left(1- \left(\frac{k^*(t)}{K}\right)^{1-\alpha}\right) \asymp (\log K)\left(1 - \left(\frac{t}{t_K}\right)^{1/\alpha-1}\right) \asymp \log K.
\end{equation}
Meanwhile, recalling the head risk upper bound \eqref{eq:head_risk_bounds}, we have
\begin{equation}
    \Rcal_{\text{head}}(t) \lesssim \frac{\log(1+\log K)}{\log K} \left(\frac{t}{t_K}\right)^{1/\alpha-1} + \frac{\log K}{t}.
\end{equation}
For $t \geq t_1 \asymp K^{1-\alpha}\log K$, we have $(\log K)/t \lesssim K^{-(1-\alpha)}$. Moreover, $t \leq c_1t_K$ implies $(t/(K\log K))^{1/\alpha-1} = O(1)$. So, the tail risk dominates the head risk in the regime under consideration and $\Rcal(t) \asymp \log K$.

\textbf{Convergence phase.} The argument is the exact same as in the previous subsection, and we conclude $\Rcal(t) \asymp K/t$ when $t \geq (1+\delta)t_K$ for any constant $\delta > 0$.

\subsubsection{The Case \texorpdfstring{$\alpha = 1$}{alpha = 1}}

In the special case $\alpha = 1$, we have
\begin{equation}
    Z_{K,1} \asymp \sum_{i=1}^K i^{-1} \asymp \log K, \quad t_i \asymp (\log K)^2 i, \quad k^*(t) \asymp \frac{t}{(\log K)^2}.
\end{equation}
The tail lower bound becomes
\begin{equation}
    \Rcal_{\text{tail}}(t) \geq \frac{\log K}{2Z_{K,\alpha}} \int_{k'(t)+1}^K \frac{1}{x} \, \dee x \asymp \left(\log K - \log k'(t)\right) = \log \frac{K}{k'(t)},
\end{equation}
and the upper bound is similar, with $k^*(t)$ replacing $k'(t)$.

\textbf{Early phase.} When $t < t_1$, we have $\Rcal(t) \asymp \log K$ from the tail bounds.

\textbf{Intermediate phase.} When $t_1 \leq t \leq c_1t_K$, with $c_1 = 2^{-(\alpha+2)}$ as in the previous subsections, we have 
\begin{equation}
    \Rcal_{\text{tail}}(t) \asymp \log \left(K(\log K)^2/t\right). 
\end{equation}
Meanwhile, the head upper bound gives
\begin{equation}
    \Rcal_{\text{head}}(t) \lesssim \frac{\log \log K}{(\log K)^2} + \frac{\log K}{t},
\end{equation}
which is of lower order than $\Rcal_{\text{tail}}$ in the regime under consideration.

\textbf{Convergence phase.} The same argument from the previous subsections to give $\Rcal(t) \asymp K/t$ when $t \geq (1+\delta)t_K$ for any constant $\delta > 0$ still goes through.

We have now covered all cases to prove Proposition \ref{prop:idealized_scaling_law}.

\subsection{GMM Case}\label{appendix:gmm_case}

We move to the case $\sigma^2 > 0$. Recall the definitions
\begin{equation}
    p_i(\xbs) := \frac{\exp(\inner{\xbs, \wbs_i})}{\sum_{k=1}^K \exp(\inner{\xbs, \wbs_k})}, \quad p_{i|j} := p_i(\mubs_j), \quad P_{i|j} := \E_{\xbs|j}[p_i(\xbs)], \quad \theta_{ij} = \inner{\wbs_i, \mubs_j},
\end{equation}
and the summary statistic dynamics
\begin{equation}
    \dot{\theta}_{ij} = \pi_j(\1\{i=j\} - P_{i|j}) - \sigma^2 \sum_{\ell=1}^K \E_{\xbs}\left[p_i(\xbs)(\1\{i=\ell\} - p_{\ell}(\xbs))\right]\theta_{\ell j}.
\end{equation}
These dynamics do not admit a decoupling in the same way that they did in the idealized $\sigma^2 = 0$ case. We handle this by treating non-zero noise as an additive perturbation. 

\subsubsection{Perturbation Lemmas}\label{appendix:perturbation_lemmas}

We begin by bounding the evolution of the weight norms, which in turn gives bounds on the summary statistics themselves by Cauchy-Schwarz. Define 
\begin{equation}
    M(t) := \max_{j \in [K]} ||\wbs_j(t)||,
\end{equation}
i.e., the largest weight norm at time $t$. The growth of $M$ induces a time horizon $T$ over which the noisy dynamics remain close to the idealized $\sigma^2 = 0$ case.

\begin{lemma}[Weight Norm Bound]\label{lem:weight_norm_bound}
    For every $t \geq 0$,
    \begin{equation}\label{eq:weight_norm_bound}
        M(t) \leq \frac{2}{\sigma^2} \left(e^{\sigma^2 t / 2} - 1\right) \leq t e^{\sigma^2 t/2}.
    \end{equation}
    Consequently, there exists a universal $c_* > 0$ such that, for $T := c_*/\sigma$, $0 < \sigma \leq 1$, we have uniformly for $0 \leq t \leq T$,
    \begin{equation}\label{eq:time_horizon_bounds}
        M(t) \lesssim t, \quad \sigma M(t) \lesssim c_*, \quad \sigma^2 \int_0^t M(s) \dee s \lesssim c_*^2.
    \end{equation}
\end{lemma}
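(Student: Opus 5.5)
Our plan is to derive a differential inequality for $M(t)$ directly from the weight dynamics \eqref{eq:full_unsketched_w_dynamics}, and then integrate it with a Grönwall-type comparison.

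\textbf{Step 1: bound the norm of each component of the velocity.} Fix $i$. The first term of \eqref{eq:full_unsketched_w_dynamics} is $\sum_k \pi_k(\1\{i=k\}-P_{i|k})\mubs_k$. Since the $\mubs_k$ are orthonormal, its norm is
\begin{equation*}
\Big(\sum_k \pi_k^2(\1\{i=k\}-P_{i|k})^2\Big)^{1/2} \le \sum_k \pi_k = 1,
\end{equation*}
using $|\1\{i=k\}-P_{i|k}|\le 1$. For the second term we use the compact form $\sigma^2\E_{\xbs}[\nabla_{\xbs}p_i(\xbs)]$, where $\nabla_{\xbs}p_i = p_i\big(\wbs_i - \sum_\ell p_\ell \wbs_\ell\big)$. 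By the triangle inequality and $\sum_\ell p_\ell = 1$,
\begin{equation*}
\|\nabla_{\xbs}p_i\| \le p_i\big(\|\wbs_i\| + \textstyle\sum_\ell p_\ell\|\wbs_\ell\|\big) \le 2p_i M \le 2M\cdot\tfrac12\cdot 2 .
\end{equation*}
This crude estimate gives a constant $2$. We will instead try to sharpen it to $p_i(1-p_i)\cdot 2M \le M/2$, writing $\wbs_i - \sum_\ell p_\ell\wbs_\ell = (1-p_i)\wbs_i - \sum_{\ell\ne i}p_\ell\wbs_\ell$, whose norm is at most $2(1-p_i)M$. Then $p_i\cdot 2(1-p_i)M\le M/2$ because $p(1-p)\le 1/4$. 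Taking expectations preserves the bound, so $\|\dot{\wbs}_i\| \le 1 + \tfrac{\sigma^2}{2}M$.

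\textbf{Step 2: Grönwall.} $M$ is a maximum of finitely many Lipschitz functions, so it is absolutely continuous. At points of differentiability, $\dot M \le \max_i \|\dot\wbs_i\|$: take the upper Dini derivative of the active index and use $\tfrac{d}{dt}\|\wbs_i\|\le\|\dot\wbs_i\|$, with care at $\wbs_i=0$, where the one-sided bound still holds. Thus $\dot M\le 1+\tfrac{\sigma^2}{2}M$ with $M(0)=0$. Comparing with the ODE solution gives
\begin{equation*}
M(t)\le \tfrac{2}{\sigma^2}\big(e^{\sigma^2t/2}-1\big).
\end{equation*}
Applying $e^x-1\le xe^x$ yields the second inequality in \eqref{eq:weight_norm_bound}, namely $M(t)\le te^{\sigma^2t/2}$.

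\textbf{Step 3: consequences on $[0,T]$.} For $t\le T=c_*/\sigma$ with $\sigma\le1$ we have $\sigma^2t\le c_*\sigma\le c_*$, so $e^{\sigma^2t/2}\le e^{c_*/2}\lesssim 1$ once $c_*\le 1$, say. This gives $M(t)\lesssim t$. Then $\sigma M(t)\lesssim \sigma T = c_*$, and
\begin{equation*}
\sigma^2\int_0^t M(s)\,\dee s\lesssim \sigma^2 T^2/2 = c_*^2/2,
\end{equation*}
which establishes \eqref{eq:time_horizon_bounds}.

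\textbf{Main obstacle.} The delicate step is obtaining the exact constant $\sigma^2/2$ in Step 1; a crude triangle inequality gives only $2\sigma^2$. That weaker constant would still suffice for \eqref{eq:time_horizon_bounds}, but not for the precise form stated in \eqref{eq:weight_norm_bound}. If the $(1-p_i)$ factorization above does not deliver $1/2$, we would instead bound $\E[\|\nabla p_i\|]$ via the variance-type expression $\|\sum_\ell p_\ell(\1\{i=\ell\}-p_i)\wbs_\ell\|$ together with $p_i(1-p_i)\le 1/4$.
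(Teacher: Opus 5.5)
Your proposal is correct and follows the paper's proof step for step. Your pointwise bound $\|\nabla_{\xbs}p_i\|\le 2p_i(1-p_i)M$ is the same computation as the paper's $\sum_{\ell}|H_{i\ell}| = 2\E_{\xbs}[p_i(\xbs)(1-p_i(\xbs))]\le 1/2$, so the factorization does deliver the exact constant $\sigma^2/2$ and the hedging in your ``main obstacle'' paragraph is unnecessary. The only difference is cosmetic: in the Gr\"onwall step the paper avoids differentiating $M$ by integrating $\|\dot{\wbs}_i\|$ and comparing $M$ with the smooth majorant $G(t)=t+\frac{\sigma^2}{2}\int_0^t M(s)\,\dee s$, whereas your Dini-derivative argument needs, and correctly obtains, local Lipschitz continuity of $M$.
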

\begin{proof}
    Let $H_{i\ell}(t) = \E_{\xbs}[p_i(\xbs)(\1\{i=\ell\} - p_{\ell}(\xbs))]$. Then we can write the weight evolution as
    \begin{equation}
        \dot{\wbs}_i = \sum_{k=1}^K \pi_k (\1\{i=k\} - P_{i|k}) \mubs_k - \sigma^2 \sum_{\ell=1}^K H_{i\ell} \wbs_{\ell}.
    \end{equation}
    Since every mean has unit norm, the first term is bounded as
    \begin{equation}
        \norm*{\sum_{k=1}^K \pi_K (\1\{j=k\} - P_{j|k})\mubs_k} \leq \pi_j (1-P_{j|j}) + \sum_{k \neq j} \pi_k P_{j|k} \leq 1.
    \end{equation}
    Moreover, $H_{jj} = \E[p_i(1-p_i)]$, $H_{i\ell} = -\E[p_j p_{\ell}]$ for $\ell \neq i$, so
    \begin{equation}
        \sum_{\ell=1}^K |H_{i\ell}| = 2\E_{\xbs}[p_i(\xbs)(1-p_i(\xbs))] \leq \frac{1}{2}.
    \end{equation}
    It follows that 
    \begin{equation}
        \norm*{\dot{\wbs}_i(t)} \leq 1 + \frac{\sigma^2}{2}M(t).
    \end{equation}
    Integrating gives
    \begin{equation}
        \norm*{\wbs_i(t)} \leq \int_0^t \norm*{\dot{\wbs}_i(s)} \dee s \leq t + \frac{\sigma^2}{2} \int_0^t M(s) \dee s.
    \end{equation}
    Define $G(t) := t + \frac{\sigma^2}{2}\int_0^t M(s) \dee s$. Then, since $M(t) \leq G(t)$, we have
    \begin{equation}
        G'(t) = 1 + \frac{\sigma^2}{2}M(t) \leq 1 + \frac{\sigma^2}{2}G(t).
    \end{equation}
    Multiplying both sides by the integrating factor $e^{-\sigma^2 t/2}$ yields
    \begin{equation}
        \frac{\dee}{\dee t}\left[e^{-\sigma^2 t /2} G(t)\right] \leq e^{-\sigma^2 t/2},
    \end{equation}
    and therefore integrating and solving for $G(t)$ gives
    \begin{equation}
        G(t) \leq \frac{2}{\sigma^2}\left(e^{\sigma^2 t/2}-1\right).
    \end{equation}
    Using $M(t) \leq G(t)$ and the inequality $e^u -1 \leq ue^u$ for $u \geq 0$, we conclude $M(t) \leq te^{\sigma^2t/2}$. 

    Now, if $t \leq T := c_*/\sigma$ for some universal constant $c_*$, then $\sigma^2 t \leq c_* \sigma \leq c_*$, giving
    \begin{equation}
        M(t) \leq te^{c_*/2} \lesssim t,
    \end{equation}
    \begin{equation}
        \sigma M(t) \leq \sigma te^{c_*/2} \leq c_* e^{c_*/2} \lesssim c_*,
    \end{equation}
    and
    \begin{equation}
        \sigma^2 \int_0^t M(s) \dee s \lesssim \sigma^2 e^{c_*/2} \int_0^s s \dee s = \frac{\sigma^2t^2}{2}e^{c_*/2} \leq \frac{c_*^2}{2} e^{c_*/2} \lesssim c_*^2.
    \end{equation}
\end{proof}

Next, we relate the model probabilities assigned to the class means, $p_{i|j} = p_i(\mubs_j)$, to the conditional expectations $P_{i|j} = \E_{\xbs|y=j}[p_i(\xbs)]$ that appear in the $\sigma^2 > 0$ dynamics. We make use of the following technical lemma.
\begin{lemma}[Gaussian MGF Formula]\label{lem:gaussian_mgf_formula}
    Let $\abs, \bbs \in \R^d$ and $\zbs \sim \Ncal(\zerobs,\Ibs_d)$. Then,
    \begin{equation}
        \E_{\zbs}\left[e^{\inner{\abs,\zbs}}\inner{\bbs,\zbs}^2\right] = \left(\norm{\bbs}^2 + \inner{\abs,\bbs}^2\right) e^{||\abs||^2/2}.
    \end{equation}
\end{lemma}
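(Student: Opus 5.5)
The plan is to reduce to a one-dimensional Gaussian integral by completing the square, that is, by a change of measure (Cameron--Martin / exponential tilting). The density of $\zbs \sim \Ncal(\zerobs,\Ibs_d)$ is proportional to $e^{-\norm{\zbs}^2/2}$. Multiplying by $e^{\inner{\abs,\zbs}}$ and completing the square gives
\begin{equation*}
    e^{\inner{\abs,\zbs}}e^{-\norm{\zbs}^2/2} = e^{\norm{\abs}^2/2}\, e^{-\norm{\zbs-\abs}^2/2}.
\end{equation*}
Hence, for any integrable $f$,
\begin{equation*}
    \E_{\zbs}\left[e^{\inner{\abs,\zbs}} f(\zbs)\right] = e^{\norm{\abs}^2/2}\,\E_{\ubs\sim\Ncal(\abs,\Ibs_d)}[f(\ubs)].
\end{equation*}
Taking $f \equiv 1$ recovers the usual Gaussian MGF, which is a useful sanity check.

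Next I would apply this identity with $f(\zbs) = \inner{\bbs,\zbs}^2$. Under $\ubs \sim \Ncal(\abs,\Ibs_d)$, the scalar $\inner{\bbs,\ubs}$ is Gaussian with mean $\inner{\abs,\bbs}$ and variance $\bbs^\top \Ibs_d \bbs = \norm{\bbs}^2$. Its second moment is therefore variance plus squared mean, namely $\norm{\bbs}^2 + \inner{\abs,\bbs}^2$. Multiplying by the prefactor $e^{\norm{\abs}^2/2}$ gives exactly the claimed formula.

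An alternative route, which I would mention as a cross-check, is to differentiate the MGF $M(s) = \E\bigl[e^{\inner{\abs+s\bbs,\zbs}}\bigr] = e^{\norm{\abs+s\bbs}^2/2}$ twice in $s$ at $s=0$. Differentiating under the integral is justified by dominated convergence, since Gaussian tails dominate $e^{c\norm{\zbs}}\norm{\zbs}^2$. The computation gives
\begin{equation*}
    M''(0) = \left(\norm{\bbs}^2 + \inner{\abs,\bbs}^2\right)e^{\norm{\abs}^2/2},
\end{equation*}
in agreement with the first argument.

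There is no real obstacle here. The only point requiring care is the integrability and interchange justification, which holds because $e^{\inner{\abs,\zbs}}\inner{\bbs,\zbs}^2$ is integrable against the Gaussian density for all $\abs,\bbs$.
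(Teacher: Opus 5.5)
Your proof is correct. Your cross-check is exactly the paper's proof: the paper differentiates $\tau \mapsto M_{\zbs}(\abs+\tau\bbs) = e^{\norm{\abs+\tau\bbs}^2/2}$ twice and sets $\tau=0$, moving the derivative inside the expectation without comment.

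Your primary route, exponential tilting, is genuinely different. It avoids any interchange of differentiation and integration. It also gives the more general identity $\E[e^{\inner{\abs,\zbs}}f(\zbs)] = e^{\norm{\abs}^2/2}\E[f(\zbs+\abs)]$ for any integrable $f$. From that identity the lemma reduces to the second moment of the scalar Gaussian $\Ncal(\inner{\abs,\bbs},\norm{\bbs}^2)$.

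The MGF route is a one-line computation once the closed-form MGF is known. Strictly, though, it needs the dominated-convergence justification that you supply and the paper omits. Either argument suffices here.
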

\begin{proof}
    Let $M_{\zbs}$ denote the moment-generating function of $\zbs$. Then,
    \begin{equation}
        \frac{\dee^2}{\dee \tau^2} M_{\zbs}(a + \tau b) \bigg|_{\tau = 0} = \E_{\zbs}\left[e^{\inner{\abs,\zbs}}\inner{\bbs,\zbs}^2\right]
    \end{equation}
    by definition of MGF. Moreover, since $\zbs$ is standard normal, 
    \begin{equation}
        \frac{\dee}{\dee \tau^2} M_{\zbs}(\abs + \tau \bbs) = e^{||\abs + \tau \bbs||^2/2} \left(||\bbs||^2 + \inner{\abs + \tau \bbs, \bbs}^2\right).
    \end{equation}
    The result follows by plugging in $\tau = 0$.
\end{proof}

With this in hand, we prove the following comparison.
\begin{lemma}[Gaussian Perturbation of the Softmax]\label{lem:softmax_perturbation}
    There exists $C > 0$ such that, for all $i,j \in [K]$ and all $t \geq 0$, 
    \begin{equation}
        \absolute{P_{i|j}(t) - p_{i|j}(t)} \leq C\sigma^2 e^{C\sigma^2 M(t)^2} p_{i|j}(1-p_{i|j}) M(t)^2.
    \end{equation}
    In particular, for $t \in [0,T]$, where $T$ is as in Lemma \ref{lem:weight_norm_bound}, 
    \begin{equation}
        \absolute{P_{i|j}(t) - p_{i|j}(t)} \lesssim \sigma^2 p_{i|j}(1-p_{i|j}) M(t)^2.
    \end{equation}
\end{lemma}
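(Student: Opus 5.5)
The plan is to interpolate in the noise level and Taylor expand to second order, using a change-of-measure (tilting) bound so that no maximum over the $K$ logits ever appears. Fix $i,j$ and $t$, write $\wbs_k=\wbs_k(t)$, and set $f(s):=\E_{\zbs}[p_i(\mubs_j+s\zbs)]$ for $s\in[0,\sigma]$. Then $f(0)=p_{i|j}$ and $f(\sigma)=P_{i|j}$.

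\textbf{Step 1 (Taylor expansion).} Since $p_i$ is smooth with bounded derivatives and $\zbs\sim\Ncal(\zerobs,\Ibs_d)$ is symmetric, $f'(0)=\E[\inner{\nabla p_i(\mubs_j),\zbs}]=0$. Taylor's formula with integral remainder then gives
\begin{equation*}
P_{i|j}-p_{i|j}=\int_0^\sigma(\sigma-s)\,\E_{\zbs}\big[\zbs^\top\nabla^2p_i(\mubs_j+s\zbs)\,\zbs\big]\,\dee s .
\end{equation*}
So it suffices to bound the integrand by $C\,p_{i|j}(1-p_{i|j})\,M^2e^{Cs^2M^2}$, where $M=M(t)$. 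Integrating against $(\sigma-s)$ then yields the factor $\sigma^2/2$.

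\textbf{Step 2 (Hessian quadratic form).} Write $u_k:=\inner{\wbs_k,\zbs}$, $p_k=p_k(\xbs)$ and $\bar u=\sum_kp_ku_k$. A direct computation gives
\begin{equation*}
\zbs^\top\nabla^2p_i(\xbs)\zbs=p_i\big[(u_i-\bar u)^2-\mathrm{Var}_{p}(u)\big],
\qquad \mathrm{Var}_p(u)=\tfrac12\textstyle\sum_{k,l}p_kp_l(u_k-u_l)^2 .
\end{equation*}
I will bound each piece by a sum that keeps an explicit non-$i$ weight. First, Jensen gives $(u_i-\bar u)^2\le\sum_{k\ne i}p_k(u_i-u_k)^2$. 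Second, in the variance the pairs involving $i$ contribute $\sum_{k\neq i}p_ip_k(u_i-u_k)^2$. The pairs with $k,l\ne i$ are handled via $(u_k-u_l)^2\le2(u_k-u_i)^2+2(u_l-u_i)^2$ and $\sum_lp_l\le1$. Together these give
\begin{equation*}
\big|\zbs^\top\nabla^2p_i(\xbs)\zbs\big|\le C\,p_i(\xbs)\textstyle\sum_{k\ne i}p_k(\xbs)\,(u_i-u_k)^2 .
\end{equation*}
This is the form needed, because it carries a factor $p_i$ and a factor summing to $1-p_i$.

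\textbf{Step 3 (tilting, the main obstacle).} The difficulty is to pass from $p_i(\xbs)p_k(\xbs)$ at $\xbs=\mubs_j+s\zbs$ back to $p_{i|j}p_{k|j}$ without paying for $\max_k|u_k|$, which would introduce $\log K$ factors. Observe that
\begin{equation*}
p_i(\xbs)=\frac{p_{i|j}e^{su_i}}{\sum_\ell p_{\ell|j}e^{su_\ell}} .
\end{equation*}
By Jensen, the denominator is at least $e^{s\inner{\bar\wbs,\zbs}}$ with $\bar\wbs:=\sum_\ell p_{\ell|j}\wbs_\ell$, and $\|\bar\wbs\|\le M$. Squaring the denominator bound gives
\begin{equation*}
p_i(\xbs)p_k(\xbs)\le p_{i|j}p_{k|j}\,e^{s\inner{\wbs_i+\wbs_k-2\bar\wbs,\zbs}} .
\end{equation*}
Now apply Lemma~\ref{lem:gaussian_mgf_formula} with $\abs=s(\wbs_i+\wbs_k-2\bar\wbs)$, so $\|\abs\|\le4sM$, and $\bbs=\wbs_i-\wbs_k$, so $\|\bbs\|\le2M$. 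This gives
\begin{equation*}
\E\big[p_i(\xbs)p_k(\xbs)(u_i-u_k)^2\big]\le p_{i|j}p_{k|j}\,4M^2(1+16s^2M^2)\,e^{8s^2M^2}\le C\,p_{i|j}p_{k|j}M^2e^{Cs^2M^2} .
\end{equation*}
Summing over $k\ne i$ uses $\sum_{k\ne i}p_{k|j}=1-p_{i|j}$, which yields the desired integrand bound.

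\textbf{Step 4 (conclusion and time horizon).} Substituting the bound from Step 3 into Step 1, and using $s\le\sigma$ together with $\int_0^\sigma(\sigma-s)\,\dee s=\sigma^2/2$, gives the first claim. For $t\le T$, Lemma~\ref{lem:weight_norm_bound} gives $\sigma M(t)\lesssim c_*$. Hence $e^{C\sigma^2M^2}=O(1)$, which gives the second claim.
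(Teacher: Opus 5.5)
Your proposal is correct and follows essentially the same route as the paper. Both arguments use the same four ingredients:
\begin{itemize}
\item a second-order Taylor expansion in the noise with integral remainder;
\item the pointwise bound $|\nabla^2 p_i(\xbs)[\zbs,\zbs]|\lesssim p_i(\xbs)\sum_{k\neq i}p_k(\xbs)\langle \wbs_k-\wbs_i,\zbs\rangle^2$;
\item the Jensen (weighted AM--GM) lower bound on the tilted softmax denominator;
\item Lemma~\ref{lem:gaussian_mgf_formula}.
\end{itemize}
The only difference is cosmetic. The paper gets the Hessian bound with constant $1$ from the exact identity $(u_i-\bar u)^2+\mathrm{Var}_p(u)=\sum_k p_k(u_k-u_i)^2$, whereas your Jensen-plus-splitting argument loses a harmless constant factor.
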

\begin{proof}
    Let $z \sim \Ncal(\zerobs,\Ibs_d)$. We treat $P_{i|j}$ as $\E_{\zbs}[p_i(\mubs_j + \sigma \zbs)]$. We Taylor-expand the inside of the expectation about $\zbs = 0$. Using the integral form of the Taylor remainder, this yields
    \begin{equation}\label{eq:taylor_remainder}
        P_{i|j} - p_{i|j} = \sigma^2 \int_0^1 (1-s) \E_{\zbs}\left[\nabla^2 p_i(\mubs_j + s \sigma \zbs)[\zbs,\zbs]\right] \dee s.
    \end{equation}
    The first-order term vanishes as $\zbs$ has mean zero. The Hessian takes the form 
    \begin{equation}
        \nabla_{\xbs}^2 p_i(\xbs) = p_i(\xbs)\left[(\wbs_i - \bar{\wbs})(\wbs_i - \bar{\wbs})^{\top} - \sum_{k=1}^K p_k(\xbs) (\wbs_k-\bar{\wbs})(\wbs_k - \bar{\wbs})^{\top}\right].
    \end{equation}
    Specifically, given a direction $\hbs \in \R^d$,
    \begin{equation}
        \begin{split}
            \absolute{\nabla^2 p_i(\xbs)[\hbs,\hbs]} &= p_i(\xbs)\absolute*{\inner{\wbs_i - \bar{\wbs}(\xbs), \hbs}^2 - \sum_{k=1}^K p_k(\xbs)\inner{\wbs_k - \bar{\wbs}(\xbs), \hbs}^2} \\
            &\leq p_i(\xbs) \left[\inner{\wbs_i - \bar{\wbs}(\xbs),\hbs}^2 + \sum_{k=1}^K p_k(\xbs) \inner{\wbs_k - \bar{\wbs}(\xbs),\hbs}^2\right] \\
            &= p_i(\xbs) \sum_{k=1}^K p_k(\xbs) \inner{\wbs_k-\wbs_i,\hbs}^2.
        \end{split}
    \end{equation}
    We can write
    \begin{equation}
        p_i(\mubs_j + s\sigma \zbs) = \frac{p_{i|j} e^{s\sigma \inner{\wbs_i, \zbs}}}{\sum_{k=1}^K p_{k|j} e^{s\sigma \inner{\wbs_k, \zbs}}}.
    \end{equation}
    The weighted AM-GM inequality gives
    \begin{equation}
        \sum_{k=1}^K p_{k|j} e^{s\sigma \inner{\wbs_k, \zbs}} \geq \prod_{k=1}^K (e^{s\sigma \inner{\wbs_k,\zbs}})^{p_{k|j}} = \prod_{k=1}^K e^{s\sigma p_{k|j}\inner{\wbs_k,\zbs}} = e^{s\sigma\inner{\bar{\wbs}^{(j)},\zbs}},
    \end{equation}
    where $\bar{\wbs}^{(j)} = \sum_{k=1}^K p_{k|j}\wbs_k$. Subsequently, for $k \neq i$, we may write
    \begin{equation}
        p_i(\mubs_j + s \sigma \zbs)p_k(\mubs_j + s\sigma \zbs) \leq p_{i|j} p_{k|j} \exp(s\sigma \inner{\wbs_i + \wbs_k - 2\bar{\wbs}^{(j)}, \zbs}).
    \end{equation}
    Then,
    \begin{equation}
        \begin{split}
            \E_{\zbs}\absolute*{\nabla^2 p_i(\mubs_j + s\sigma \zbs)[\zbs,\zbs]} &\leq p_{i|j} \sum_{k \neq i} p_{k|j} \E_{\zbs}\left[\exp(s\sigma \inner{\wbs_i + \wbs_k - 2\bar{\wbs}^{(j)},\zbs})\inner{\wbs_k - \wbs_i, \zbs}^2\right] \\
            &\leq C p_{i|j} (1-p_{i|j})\exp(C\sigma^2 M(t)^2) M(t)^2
        \end{split}
    \end{equation}
    for a sufficiently large constant $C > 0$, by Lemma \ref{lem:gaussian_mgf_formula}. The result immediately follows by using the above to bound the Taylor remainder \eqref{eq:taylor_remainder}.
\end{proof}

Next, we compare the expression for the risk in the idealized case to the one in the noisy GMM case.
\begin{lemma}\label{lem:log_p_inequality}
    For every $i \in [K]$, 
    \begin{equation}
        -\log p_{i|i} \leq \E_{\xbs|y=i}[-\log p_i(\xbs)] \leq e^{2\sigma^2 M(t)^2} (-\log p_{i|i}).
    \end{equation}
    Hence, uniformly on $t \in [0,T]$,
    \begin{equation}
        \Rcal(t) \asymp -\sum_{i=1}^K \pi_i \log p_{i|i}(t).
    \end{equation}
\end{lemma}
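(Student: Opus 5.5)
The plan is to compare the noisy class-wise loss with the idealized one pointwise in the noise. Write $\xbs = \mubs_i + \sigma\zbs$ with $\zbs\sim\Ncal(\zerobs,\Ibs_d)$ and factor the softmax around the class mean, as in the proof of Lemma~\ref{lem:softmax_perturbation}:
\begin{equation*}
    p_i(\mubs_i+\sigma\zbs) = \frac{p_{i|i}\, e^{\sigma\inner{\wbs_i,\zbs}}}{\sum_{k=1}^K p_{k|i}\, e^{\sigma\inner{\wbs_k,\zbs}}}.
\end{equation*}
This gives the exact identity
\begin{equation*}
    -\log p_i(\xbs) = -\log p_{i|i} + \log \sum_{k=1}^K p_{k|i}\, e^{\sigma\inner{\wbs_k-\wbs_i,\zbs}}.
\end{equation*}
Both bounds then reduce to controlling the expectation of the last term.

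\textbf{Lower bound.} The weights $p_{k|i}$ form a probability vector. By weighted AM--GM (Jensen for the exponential), the sum is at least $\exp\bigl(\sigma\inner{\bar{\wbs}^{(i)}-\wbs_i,\zbs}\bigr)$, where $\bar{\wbs}^{(i)}=\sum_k p_{k|i}\wbs_k$. Its logarithm is linear in $\zbs$ and so has zero mean. Taking expectations gives $\E_{\xbs|y=i}[-\log p_i(\xbs)]\ge -\log p_{i|i}$. Equivalently, one can apply Jensen to the convex log-sum-exp map $\xbs\mapsto \log\sum_k e^{\inner{\wbs_k,\xbs}}$, since the term $-\inner{\wbs_i,\xbs}$ averages to $-\theta_{ii}$.

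\textbf{Upper bound.} By concavity of $\log$, Jensen moves the expectation inside the sum. The Gaussian MGF gives $\E e^{\sigma\inner{\wbs_k-\wbs_i,\zbs}} = e^{\sigma^2\norm{\wbs_k-\wbs_i}^2/2}\le e^{2\sigma^2M(t)^2}$ for $k\neq i$, while the $k=i$ term equals $1$. Writing $p=p_{i|i}$, this yields
\begin{equation*}
    \E\log\sum_k p_{k|i}e^{\sigma\inner{\wbs_k-\wbs_i,\zbs}} \le \log\bigl(1+(1-p)(e^{2\sigma^2M^2}-1)\bigr)\le (1-p)(e^{2\sigma^2M^2}-1).
\end{equation*}
The step I expect to be the main obstacle is upgrading this additive error to the multiplicative form in the statement. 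Bounding $\E\log\sum_k p_{k|i}e^{\sigma\inner{\wbs_k,\zbs}}$ directly, without subtracting $\wbs_i$, would only give an additive $\sigma^2M^2/2$, which is useless once $-\log p_{i|i}$ becomes small. Centering at $\wbs_i$ is what produces the factor $(1-p)$. The remaining step is the elementary inequality $1-p\le -\log p$, which gives
\begin{equation*}
    \E_{\xbs|y=i}[-\log p_i(\xbs)] \le -\log p_{i|i}\bigl(1+e^{2\sigma^2M^2}-1\bigr) = e^{2\sigma^2M(t)^2}(-\log p_{i|i}).
\end{equation*}

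\textbf{Risk comparison.} Finally, $\Rcal(t)=\sum_i\pi_i\E_{\xbs|y=i}[-\log p_i(\xbs)]$. Summing the two-sided class-wise bounds against $\pi_i$ sandwiches $\Rcal(t)$ between $-\sum_i\pi_i\log p_{i|i}(t)$ and $e^{2\sigma^2M(t)^2}$ times that quantity. On $[0,T]$, Lemma~\ref{lem:weight_norm_bound} gives $\sigma M(t)\lesssim c_*$, so the factor $e^{2\sigma^2M(t)^2}$ is bounded by a universal constant. This yields $\Rcal(t)\asymp-\sum_i\pi_i\log p_{i|i}(t)$ uniformly for $t\in[0,T]$.
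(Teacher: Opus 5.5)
Your proof is correct and follows essentially the same route as the paper. The lower bound is Jensen on the convex log-sum-exp. For the upper bound, concavity of $\log$ plus the Gaussian MGF leads both versions to the identical intermediate quantity $\log\bigl(1+e^{2\sigma^2M(t)^2}S\bigr)$, where $S=\sum_{k\neq i}e^{\theta_{ki}-\theta_{ii}}=(1-p_{i|i})/p_{i|i}$. Only the final elementary step differs: the paper uses Bernoulli's inequality $1+e^{c}S\le(1+S)^{e^{c}}$, while you use $\log(1+u)\le u$ together with $1-p\le-\log p$, and both yield the same factor $e^{2\sigma^2M(t)^2}$.
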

\begin{proof}
    First observe that the risk can be expressed as
    \begin{equation}
        \Rcal(t) = -\sum_{i=1}^K \pi_i \E_{\xbs|y=i}[\log p_i(\xbs)].
    \end{equation}
    We can re-write
    \begin{equation}
        -\log p_i(\mubs_i + \sigma \zbs) = \log\left(1 + \sum_{k \neq i} \exp(\inner{\wbs_k - \wbs_i, \mubs_i} + \sigma \inner{\wbs_k - \wbs_i, \zbs})\right).
    \end{equation}
    The lower bound is then immediate from Jensen's inequality. For the upper bound, we leverage concavity of the logarithm and Lemma \ref{lem:gaussian_mgf_formula}:
    \begin{equation}
        \begin{split}
            \E_{\zbs}[-\log p_i(\mubs_i + \sigma \zbs)] &\leq \log\left(1 + \sum_{k \neq i} \exp(\inner{\wbs_k - \wbs_i, \mubs_i}) \E_{\zbs} \left[\exp(\sigma \inner{\wbs_k - \wbs_i, \zbs})\right]\right) \\
            &\leq \log\left(1 + e^{2\sigma^2 M(t)^2} \sum_{k \neq i} \exp(\inner{\wbs_k - \wbs_i, \mubs_i})\right) \\
            &\leq \log\left(\left[1 + \sum_{k \neq i} \exp(\inner{\wbs_k - \wbs_i, \mubs_i})\right]^{\exp(2\sigma^2 M(t)^2)}\right) \\
            &= e^{2\sigma^2 M(t)^2} (-\log p_{i|i}(t)).
        \end{split}
    \end{equation}
    By \eqref{eq:time_horizon_bounds}, $e^{2\sigma^2 M(t)^2}$ is bounded by a constant over the time horizon $[0,T]$.
\end{proof}

\subsubsection{Class-wise Learning Dynamics}

\begin{lemma}\label{lem:noisy_p_ii_bounds}
    There exist constants $a,b > 0$ such that the following holds for all $i \in [K]$ and all $t \in [0,T]$:
    \begin{equation}
        a \pi_i p_{i|i}(1-p_{i|i})^2 - \sigma^2 M(t) p_{i|i}(1-p_{i|i}) \leq \dot{p}_{i|i} \leq b\pi_ip_{i|i}(1-p_{i|i})^2 + \sigma^2 M(t) p_{i|i} (1-p_{i|i}).
    \end{equation}
\end{lemma}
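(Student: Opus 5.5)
The plan is to differentiate $p_{i|i}$ directly through the softmax, substitute the summary-statistic dynamics \eqref{eq:full_theta_dynamics}, and split $\dot p_{i|i}$ into three pieces: an idealized signal term, a Gaussian-perturbation term coming from $P_{k|i}-p_{k|i}$, and the explicit $\sigma^2$ drift. Since $p_{i|i}$ is the softmax of $(\theta_{ki})_{k\in[K]}$, we have
\begin{equation*}
\dot p_{i|i} = p_{i|i}\sum_{k=1}^K v_k\,\dot\theta_{ki}, \qquad v_k := \1\{k=i\}-p_{k|i}.
\end{equation*}
The key structural fact I will use repeatedly is $\sum_k |v_k| = 2(1-p_{i|i})$. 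This supplies the factor $(1-p_{i|i})$ in every term; the conservation law of Lemma \ref{lem:conservation_law} is not available in the needed form when $\sigma^2>0$, so this factor must come from here instead.

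\textbf{Signal term.} Substituting $\dot\theta_{ki} = \pi_i(\1\{k=i\}-P_{k|i}) - \sigma^2\sum_\ell H_{k\ell}\theta_{\ell i}$, with $H_{k\ell}$ as in the proof of Lemma \ref{lem:weight_norm_bound}, first replace $P$ by $p$. This gives
\begin{equation*}
\pi_i p_{i|i}\sum_k v_k(\1\{k=i\}-p_{k|i}) = \pi_i p_{i|i}\Big[(1-p_{i|i})^2 + \sum_{k\neq i}p_{k|i}^2\Big].
\end{equation*}
Because $0 \le \sum_{k\ne i}p_{k|i}^2 \le (1-p_{i|i})^2$, this lies between $\pi_i p_{i|i}(1-p_{i|i})^2$ and $2\pi_i p_{i|i}(1-p_{i|i})^2$. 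This is the noisy analogue of the exact factor $K/(K-1)$ in \eqref{eq:idealized_p_ii_ode}.

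\textbf{Perturbation term.} The correction is $-\pi_i p_{i|i}\sum_k v_k(P_{k|i}-p_{k|i})$. I will bound it with Lemma \ref{lem:softmax_perturbation}, using the $p_{k|i}(1-p_{k|i})$ factor rather than only the sup. The $k=i$ term contributes at most $(1-p_{i|i})\cdot\sigma^2M^2 p_{i|i}(1-p_{i|i})$. The terms $k\ne i$ contribute at most $\sigma^2M^2\sum_{k\ne i}p_{k|i}^2 \le \sigma^2M^2(1-p_{i|i})^2$. The total is therefore $\lesssim \sigma^2M(t)^2\,\pi_i p_{i|i}(1-p_{i|i})^2$. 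By \eqref{eq:time_horizon_bounds}, $\sigma^2M^2 \lesssim c_*^2$ on $[0,T]$, so choosing $c_*$ small absorbs this into the signal. The resulting constants are, for example, $a = 1/2$ and $b = 3$.

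\textbf{Noise drift term.} This term is $-\sigma^2 p_{i|i}\sum_k v_k\sum_\ell H_{k\ell}\theta_{\ell i}$. By Cauchy--Schwarz, $|\theta_{\ell i}|\le M(t)$, and the proof of Lemma \ref{lem:weight_norm_bound} gives $\sum_\ell|H_{k\ell}|\le 1/2$. Hence $|\sum_\ell H_{k\ell}\theta_{\ell i}|\le M/2$ for every $k$. Combining this with $\sum_k|v_k| = 2(1-p_{i|i})$ bounds the whole term by $\sigma^2M(t)\,p_{i|i}(1-p_{i|i})$, which is exactly the additive error in the statement. Adding the three pieces yields both inequalities.

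The step I expect to be the main obstacle is obtaining the factor $(1-p_{i|i})$ on the error terms, rather than a cruder bound involving $\E_{\xbs}[p_i(\xbs)(1-p_i(\xbs))]$, which does not compare to $1-p_{i|i}$. The device that resolves it is to keep the centering vector $v$ intact and bound by $\|v\|_1\cdot\max_k|\cdot|$. A secondary subtlety is the perturbation term: a naive sup bound on $|P_{k|i}-p_{k|i}|$ produces $\pi_i\sigma^2M^2p_{i|i}(1-p_{i|i})$, which is neither $O(\sigma^2 M p_{i|i}(1-p_{i|i}))$ nor of order $(1-p_{i|i})^2$. One must instead use the $p_{k|i}(1-p_{k|i})$ factor of Lemma \ref{lem:softmax_perturbation} to get $(1-p_{i|i})^2$ and then absorb it into the signal via the small-$c_*$ horizon.
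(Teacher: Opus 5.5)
Your proposal is correct and follows essentially the same route as the paper's proof. Both arguments use the softmax chain rule $\dot p_{i|i} = p_{i|i}\sum_k v_k\dot\theta_{ki}$. In the signal term, both replace $P_{k|i}$ by $p_{k|i}$ and control the swap error through the $p_{k|i}(1-p_{k|i})$ factor of Lemma \ref{lem:softmax_perturbation}, absorbing it by taking $c_*$ small. Both then bound the $\sigma^2$ drift via $\sum_k |v_k| = 2(1-p_{i|i})$ and $\sum_\ell |H_{k\ell}| \le 1/2$.

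The only cosmetic difference is the lower bound on the signal term. You use the trivial bound $\sum_{k\neq i} p_{k|i}^2 \ge 0$, whereas the paper uses Cauchy--Schwarz to recover the factor $K/(K-1)$ from the idealized ODE.
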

\begin{proof}
    Recall the summary statistic dynamics
    \begin{equation}
        \dot{\theta}_{ki} = \pi_i(\1\{k=i\} - P_{k|i}) -\sigma^2 \sum_{\ell=1}^K \E_{\xbs}\left[p_k(\xbs)(\1\{k=\ell\} - p_{\ell}(\xbs))\right]\theta_{\ell i}.
    \end{equation}
    Then, since $p_{i|i} = e^{\theta_{ii}}/\sum_{k=1}^K e^{\theta_{ki}}$, we have
    \begin{equation}
        \dot{p}_{i|i} = p_{i|i}\left(\dot{\theta}_{ii} - \sum_{k=1}^K p_{k|i} \dot{\theta}_{ki}\right).
    \end{equation}
    We work with the RHS of the ODE above by separating it into the contributions of the first and second terms of the summary statistic dynamics. The first term contributes
    \begin{equation}
        \pi_i p_{i|i} \left(1-P_{i|i} - \sum_{k=1}^K p_{k|i} (\1\{k=i\} - P_{k|i})\right) = \pi_i p_{i|i} \left((1-p_{i|i})(1-P_{i|i}) + \sum_{k \neq i} p_{k|i} P_{k|i}\right).
    \end{equation}
    If we replace $P_{k|i}$ by $p_{k|i}$ for all $k \in [K]$, the expression becomes
    \begin{equation}
        \pi_i p_{i|i} \left((1-p_{i|i})^2 + \sum_{k \neq i} p_{k|i}^2\right).
    \end{equation}
    By Cauchy-Schwarz, we have
    \begin{equation}
        \frac{(1-p_{i|i})^2}{K-1} \leq \sum_{k \neq i} p_{k|i}^2 \leq (1-p_{i|i})^2.
    \end{equation}
    Hence,
    \begin{equation}\label{eq:perturbed_p_ii_sandwich}
        \frac{K}{K-1} \pi_i p_{i|i}(1-p_{i|i})^2 \leq \pi_i \left((1-p_{i|i})^2 + \sum_{k \neq i} p_{k|i}^2\right) \leq 2\pi_i p_{i|i}(1-p_{i|i})^2.
    \end{equation}
    The error due to replacing $P_{k|i}$ by $p_{k|i}$ is
    \begin{equation}\label{eq:P_p_replacement_error}
        \absolute*{\pi_i p_{i|i}\left((1-p_{i|i})(p_{i|i}-P_{i|i}) + \sum_{k \neq i} p_{k|i}(P_{k|i} - p_{k|i})\right)} \leq \pi_i p_{i|i} \left((1-p_{i|i})|p_{i|i}-P_{i|i}| + \sum_{k \neq i} p_{k|i} |P_{k|i} - p_{k|i}|\right).
    \end{equation}
    By Lemma \ref{lem:softmax_perturbation}, we have
    \begin{equation}
        (1-p_{i|i})\absolute{p_{i|i} - P_{i|i}} \leq C\sigma^2 e^{C\sigma^2 M(t)^2} p_{i|i} (1-p_{i|i})^2 M(t)^2
    \end{equation}
    and
    \begin{equation}
        \sum_{k \neq i} p_{k|i} |P_{k|i} - p_{k|i}| \leq C\sigma^2 e^{C\sigma^2 M(t)^2}M(t)^2 \underbrace{\sum_{k \neq i} p_{k|i}^2(1-p_{k|i})}_{\leq (1-p_{i|i})^2}.
    \end{equation}
    Therefore, the error in \eqref{eq:P_p_replacement_error} is at most
    \begin{equation}
        C \pi_i \sigma^2 e^{C\sigma^2 M(t)^2} p_{i|i} (1-p_{i|i})^2 M(t)^2 \lesssim c_*^2 e^{Cc_*^2} \pi_i p_{i|i}(1-p_{i|i})^2,
    \end{equation}
    where the last inequality follows from \eqref{eq:time_horizon_bounds}. Here, $c_*$ can be chosen sufficiently small so that the error is at most a constant fraction of $\pi_i p_{i|i}(1-p_{i|i})^2$.

    Now, we move to the contribution of the second term in the summary statistic dynamics:
    \begin{equation}\label{eq:second_dynamic_term_contribution}
        -\sigma^2 p_{i|i}\sum_{k=1}^K (\1\{k=i\} - p_{k|i}) \sum_{\ell=1}^K \E_{\xbs}\left[p_k(\xbs)(\1\{k=\ell\} - p_{\ell}(\xbs))\right]\theta_{\ell i}.
    \end{equation}
    In the proof of Lemma \ref{lem:weight_norm_bound}, we established that $\sum_{k=1}^K |\E_{\xbs}[p_k(\xbs)(\1\{k=\ell\} - p_{\ell}(\xbs))]| \leq 1/2$. Moreover, 
    \begin{equation}
        \sum_{k=1}^K |\1\{k=i\} - p_{k|i}| = 1-p_{i|i} + \sum_{k \neq i} p_{k|i} = 2(1-p_{i|i}).
    \end{equation}
    Hence, \eqref{eq:second_dynamic_term_contribution} is bounded in absolute value by $\sigma^2 M(t)p_{i|i}(1-p_{i|i})$. This proves the result.
\end{proof}

For every $i \in [K]$, define the hitting times as we did in the idealized case:
\begin{equation}
    t_i := \inf\{t \geq 0: p_{i|i}(t) \geq 1/2\}.
\end{equation}
The perturbative analysis of the learning dynamics above gives us a noisy analogue to our earlier sequential recovery finding (Lemma \ref{lem:idealized_sequential_learning}).
\begin{lemma}[Sequential Recovery]\label{lem:noisy_sequential_recovery}
    There exist constants $0 < c < C$ such that 
    \begin{enumerate}
        \item If $t_i \leq T$, then $t_i \geq c(\log K)/\pi_i$.
        \item Whenever $C(\log K)/\pi_i \leq T$, we have $t_i \leq C(\log K)/\pi_i$, i.e., class $i$ is guaranteed to be recovered within the horizon $[0,T]$.
    \end{enumerate}
\end{lemma}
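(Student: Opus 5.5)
We compare the noisy dynamics of $p_{i|i}$ with the idealized ODE \eqref{eq:idealized_p_ii_ode} through the differential inequalities of Lemma~\ref{lem:noisy_p_ii_bounds}. To use them, divide by $p_{i|i}(1-p_{i|i})^2$ and pass to the potential
\[
\Phi(p) := \log\frac{p}{1-p} + \frac{1}{1-p},
\]
which is the antiderivative from the proof of Lemma~\ref{lem:idealized_sequential_learning}. It satisfies $\frac{\dee}{\dee t}\Phi(p_{i|i}) = \dot p_{i|i}/(p_{i|i}(1-p_{i|i})^2)$. Lemma~\ref{lem:noisy_p_ii_bounds} then gives, on $[0,T]$,
\[
a\pi_i - \frac{\sigma^2 M(t)}{1-p_{i|i}} \le \frac{\dee}{\dee t}\Phi(p_{i|i}) \le b\pi_i + \frac{\sigma^2 M(t)}{1-p_{i|i}}.
\]
Before the hitting time $t_i$ we have $p_{i|i}<1/2$, so $1/(1-p_{i|i})\le 2$. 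The perturbation term is then at most $2\sigma^2 M(t)$. By \eqref{eq:time_horizon_bounds}, its integral over $[0,t]$ is $\lesssim c_*^2$, a constant independent of $K$ and $\pi_i$.

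\textbf{Lower bound (item 1).} Integrate the upper inequality from $0$ to $t_i\le T$. At time $0$ we have $p_{i|i}=1/K$, since $\Wbs(0)=\zerobs$ gives uniform softmax outputs at any input. Hence
\[
\Phi(1/2)-\Phi(1/K) \le b\pi_i t_i + 2\sigma^2\int_0^{t_i}M(s)\,\dee s \le b\pi_i t_i + C' c_*^2 .
\]
The left side equals $2+\log(K-1)-K/(K-1)\gtrsim \log K$. For $K$ larger than a constant, the additive $O(c_*^2)$ term can be absorbed into half of $\log K$. This gives $t_i\ge c(\log K)/\pi_i$. Small $K$ can be handled by adjusting constants.

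\textbf{Upper bound (item 2).} Suppose $C(\log K)/\pi_i\le T$ and, for contradiction, $p_{i|i}<1/2$ on all of $[0,C(\log K)/\pi_i]$. Integrating the lower inequality over this interval gives
\[
\Phi(p_{i|i}(t))-\Phi(1/K) \ge a\pi_i t - C'c_*^2 .
\]
At $t=C(\log K)/\pi_i$ the right side is $aC\log K - C'c_*^2$. For $C$ large this exceeds $\Phi(1/2)-\Phi(1/K)\asymp \log K$. Since $\Phi$ is increasing, this forces $p_{i|i}\ge 1/2$, a contradiction. Continuity of $p_{i|i}$ then yields $t_i\le C(\log K)/\pi_i$.

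\textbf{Main obstacle.} The delicate point is that the noise correction in Lemma~\ref{lem:noisy_p_ii_bounds} is of order $\sigma^2 M p(1-p)$. This is not proportional to $\pi_i$, so for rare classes it cannot be absorbed multiplicatively. The fix is to treat it as an additive error in $\Phi$, whose total contribution over $[0,T]$ is bounded by $\sigma^2\int_0^T M\lesssim c_*^2$. This requires the restriction $p_{i|i}\le 1/2$, which holds by definition before $t_i$, to keep the $1/(1-p)$ factor bounded. It also requires either taking $c_*$ small or taking $K$ large enough that $c_*^2\ll\log K$, which we arrange in choosing the constants $c,C$.
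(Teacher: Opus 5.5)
Your proof is correct and follows the paper's argument. On $[0,t_i]$ you use $p_{i|i}\le 1/2$ to turn Lemma~\ref{lem:noisy_p_ii_bounds} into two-sided rate bounds, integrate from $p_{i|i}(0)=1/K$, and absorb the noise as an additive $O(c_*^2)$ term via \eqref{eq:time_horizon_bounds}; the only cosmetic difference is that the paper integrates $\frac{\dee}{\dee t}\log p_{i|i}$, folding $(1-p_{i|i})^2\in[1/4,1]$ into the constants, whereas you integrate the exact idealized potential $\Phi$, which gives the same $\log K$ gap.
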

\begin{proof}
    Suppose $t < t_i$. Then, $1/2 \leq 1-p_{i|i} \leq 1$. From Lemma \ref{lem:noisy_p_ii_bounds}, there exist constants $a_0, b_0, C_0 > 0$ (which do not depend on $i$) such that
    \begin{equation}
        a_0\pi_i - C_0\sigma^2 M(t) \leq \frac{\dee}{\dee t} \log p_{i|i}(t) \leq b_0 \pi_i + C\sigma^2 M(t).
    \end{equation}
    Assume further that $t \in [0,T]$. Integrating from zero and using $p_{i|i}(0) = 1/K$ gives, for some constant $C > 0$
    \begin{equation}
        -\log K + a_0 \pi_i t - C c_*^2 \leq \log p_{i|i}(t) \leq - \log K + b_0 \pi_i t + C c_*^2,
    \end{equation}
    where we have used \eqref{eq:time_horizon_bounds}. If $t_i \leq T$, the upper bound shows that $p_{i|i}$ cannot reach $1/2$ prior to time being a constant multiple of $(\log K)/\pi_i$. This proves the first point.

    For a sufficiently large constant multiple of $(\log K)/\pi_i$, the lower bound exceeds $-\log 2$, proving the second point.
\end{proof}

The learning curves after recovery also exhibit similar behaviour to the idealized case.
\begin{lemma}[Post-Recovery Decay]\label{lem:post_recovery_decay}
    Let $i \in [K]$ and suppose $t_i \leq T$. Then $p_{i|i}(t) \geq 1/2$ for all $t \in [t_i,T]$, and, uniformly for $t_i \leq t \leq T$,
    \begin{equation}
        1-p_{i|i}(t) \asymp \frac{1}{1+\pi_i(t-t_i)}, \quad -\pi_i \log p_{i|i}(t) \asymp \frac{1}{t-t_i + \pi_i^{-1}}.
    \end{equation}
\end{lemma}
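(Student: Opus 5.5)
The plan is to work with $q(t) := 1-p_{i|i}(t)$ and use the differential inequalities of Lemma~\ref{lem:noisy_p_ii_bounds}. The aim is to show that on $[t_i,T]$ the ODE behaves like $\dot q \asymp -\pi_i q^2$, as in the idealized post-recovery analysis in Appendix~\ref{appendix:idealized_head_risk}. The noise term is $\sigma^2 M(t)\,p_{i|i}(1-p_{i|i})$, which is first order in $q$, whereas the signal term is second order. A direct absorption is therefore impossible once $q$ is small. Instead I will integrate the perturbation and use the budget $\sigma^2\int_0^T M(s)\,\dee s \lesssim c_*^2$ from Lemma~\ref{lem:weight_norm_bound}.

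\textbf{Step 1 (invariance of $p_{i|i}\ge 1/2$).} Take $\tau>t_i$ to be the first time after $t_i$ at which $p_{i|i}$ drops below some level slightly under $1/2$, say $1/4$. On $[t_i,\tau]$ we have $p_{i|i}\in[1/4,1]$, so Lemma~\ref{lem:noisy_p_ii_bounds} gives
\[
\frac{\dee}{\dee t}\log q \geq -b'\pi_i q - \sigma^2 M(t).
\]
The reverse inequality also holds: $\frac{\dee}{\dee t}\log q \le -a'\pi_i q + \sigma^2 M(t)$. Since $\frac{\dee}{\dee t}\log q\le \sigma^2 M$, integrating gives $q(t)\le q(t_i)\,e^{Cc_*^2}=\tfrac12 e^{Cc_*^2}$. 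For $c_*$ small this is below $3/4$, which contradicts $p_{i|i}$ reaching $1/4$. So $p_{i|i}$ stays above $1/4$. I would then phrase the asserted bound as "$p_{i|i}\gtrsim 1$", or tighten the argument to keep exactly $1/2$. The asymptotic statements only need $p_{i|i}$ bounded away from $0$.

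\textbf{Step 2 (two-sided rate).} Substitute $u := 1/q$. The inequalities become
\[
a'\pi_i - \sigma^2 M(t)\,u \;\le\; \dot u \;\le\; b'\pi_i + \sigma^2 M(t)\,u .
\]
These are linear in $u$, so Gr\"onwall with integrating factor $\exp(\pm\sigma^2\int_{t_i}^t M)$ applies. That factor lies in $[e^{-Cc_*^2},e^{Cc_*^2}]$ by \eqref{eq:time_horizon_bounds}. Using $u(t_i)=2$, this yields $u(t)\asymp 2+\pi_i(t-t_i)$ with constants depending only on $c_*$. Hence $1-p_{i|i}\asymp 1/(1+\pi_i(t-t_i))$.

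\textbf{Step 3 (log-loss).} Since $p_{i|i}$ is bounded below by a constant, we have $-\log p_{i|i}\asymp 1-p_{i|i}$, as in \eqref{eq:log_p_inequality}. Multiplying by $\pi_i$ gives
\[
-\pi_i\log p_{i|i}\asymp \frac{\pi_i}{1+\pi_i(t-t_i)}=\frac{1}{t-t_i+\pi_i^{-1}}.
\]

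The main obstacle is Step 1 together with the first-order noise term. Comparing pointwise fails, because $\sigma^2 M q$ can dominate $\pi_i q^2$ once $q\lesssim \sigma^2 t/\pi_i$. The resolution is to work in the variable $u=1/q$, where the noise enters linearly. Gr\"onwall then converts the time-integrated noise, which is bounded by $c_*^2$ on $[0,T]$, into a harmless multiplicative constant.
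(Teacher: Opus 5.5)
Your Steps 2 and 3 are the paper's argument: the integrating factor $\exp(\pm\sigma^2\int_{t_i}^t M(r)\,dr)$ applied to $1/(1-p_{i|i})$, with the budget from \eqref{eq:time_horizon_bounds}. The gap is in Step 1. The lemma asserts $p_{i|i}(t)\ge 1/2$ on $[t_i,T]$, and the integrated-budget argument cannot give this. Integrating $\frac{d}{dt}\log q\le\sigma^2 M(t)$ yields only $q(t)\le \tfrac12 e^{Cc_*^2}$, i.e.\ $p_{i|i}\ge 1-\tfrac12 e^{Cc_*^2}$, which is strictly weaker than $1/2$. (That inequality holds unconditionally, since the signal term has a sign, so your stopping time is not needed.) Replacing the claim by $p_{i|i}\gtrsim 1$ changes the lemma rather than proving it. Tightening is also impossible with the budget alone: the inequalities of Lemma~\ref{lem:noisy_p_ii_bounds} allow $\dot p_{i|i}<0$ at $p_{i|i}=1/2$ whenever $\sigma^2 M(t)>a\pi_i/2$, and your argument never rules this out.

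The missing idea is that the hypothesis $t_i\le T$ gives a quantitative lower bound on the signal.
\begin{itemize}
\item By part 1 of Lemma~\ref{lem:noisy_sequential_recovery}, $t_i\ge c\log K/\pi_i$. Combined with $t_i\le T=c_*/\sigma$, this forces $\pi_i\ge c\,\sigma\log K/c_*$.
\item By \eqref{eq:time_horizon_bounds}, $\sigma^2 M(t)\lesssim c_*\sigma$ on $[0,T]$.
\item Hence $\sigma^2M(t)/\pi_i\lesssim c_*^2/\log K$.
\item So for $c_*$ small, at any time in $[t_i,T]$ with $p_{i|i}=1/2$ one has $\dot p_{i|i}\ge \tfrac14\bigl(\tfrac a2\pi_i-\sigma^2M(t)\bigr)>0$.
\end{itemize}
Therefore $p_{i|i}$ cannot cross below $1/2$. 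This barrier argument is how the paper proves the first claim.

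Your remark that pointwise comparison fails is right for small $q$, which is why Step 2 needs Gr\"onwall. At the level $q=1/2$, however, pointwise comparison works once you use $t_i\le T$. With $p_{i|i}\ge 1/2$ in hand, your Steps 2--3 go through verbatim. Your weaker lower bound would already suffice for the two $\asymp$ claims.
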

\begin{proof}
    From Lemma \ref{lem:noisy_sequential_recovery}, we have that $\pi_i t_i \gtrsim \log K$. Since $t_i \leq T$, it follows that $\pi_i \gtrsim (\log K)/T \asymp \sigma \log K$. On the other hand, \eqref{eq:time_horizon_bounds} gives $\sigma^2 M(t) \lesssim c_*\sigma$ for $t \leq T$. Hence, $c_*$ can be taken sufficiently small so that
    \begin{equation}
        \frac{\dee}{\dee t} \log p_{i|i}(t) \geq a_0 \pi_i - C_0\sigma^2 M(t) > 0
    \end{equation}
    when $p_{i|i}(t) = 1/2$. This proves that $p_{i|i}(t) \in [1/2,1]$ for $t \in [t_i,T]$. Combining this with Lemma \ref{lem:noisy_p_ii_bounds}, we have 
    \begin{equation}
        \frac{a}{2}\pi_i (1-p_{i|i}(t))^2 - \sigma^2 M(t)(1-p_{i|i}(t)) \leq \dot{p}_{i|i} \leq b \pi_i(1-p_{i|i})^2 + \sigma^2 M(t) (1-p_{i|i}).
    \end{equation}
    Then,
    \begin{equation}
        \frac{a}{2}\pi_i - \frac{\sigma^2 M(t)}{1-p_{i|i}(t)} \leq \frac{\dee}{\dee t} \left[\frac{1}{1-p_{i|i}(t)}\right] \leq b\pi_i + \frac{\sigma^2 M(t)}{1-p_{i|i}(t)}.
    \end{equation}
    We multiply by the integrating factor $\exp(-\sigma \int_{t_i}^t M(r) \dee r)$ to obtain, 
    \begin{equation}
        \frac{\dee}{\dee t}\left[\exp\left(-\sigma^2 \int_{t_i}^t M(r) \dee r\right) \frac{1}{1-p_{i|i}(t)}\right] \leq b\pi_i \exp\left(-\sigma^2 \int_{t_i}^t M(r) \dee r\right).
    \end{equation}
    Integrating from $t_i$ to $t$ gives
    \begin{equation}
        \exp\left(-\sigma^2 \int_{t_i}^t M(r) \dee r\right) \frac{1}{1-p_{i|i}(t)} \leq \frac{1}{1-p_{i|i}(t_i)} + b\pi_i \int_{t_i}^t \exp\left(-\sigma^2 \int_{t_i}^s M(r) \dee r\right) \dee s.
    \end{equation}
    Thus,
    \begin{equation}
        \frac{1}{1-p_{i|i}(t)} \leq \exp\left(\sigma^2 \int_{t_i}^t M(r) \dee r\right) \left(2 + b\pi_i (t-t_i)\right).
    \end{equation}
    Since $\sigma^2 \int_{t_i}^T M(r) \dee r \leq \sigma^2 \int_0^T M(r) \dee r \lesssim c_*^2$ by \eqref{eq:time_horizon_bounds}, we have the bound
    \begin{equation}
        \frac{1}{1-p_{i|i}(t)} \lesssim 1 + \pi_i (t-t_i).
    \end{equation}
    The same integrating factor argument with $\exp(\sigma^2 \int_{t_i}^t M(r) \dee r)$ gives the matching lower bound (up to constants). Hence,
    \begin{equation}
        1-p_{i|i}(t) \asymp \frac{1}{1+\pi_i(t-t_i)}.
    \end{equation}
    Since $p_{i|i}(t) \geq 1/2$, \eqref{eq:log_p_inequality} gives
    \begin{equation}
        -\pi_i \log p_{i|i}(t) \asymp \pi_i (1-p_{i|i}(t)) \asymp \frac{1}{t-t_i + \pi_i^{-1}}.
    \end{equation}
\end{proof}

Similarly to Appendix \ref{appendix:idealized_case}, we define the head and tail risks
\begin{equation}
    \Rcal_{\text{head}}(t) := \sum_{i: t_i \leq t} \pi_i \E_{\xbs|y=i}[-\log p_i(\xbs)], \quad \Rcal_{\text{tail}}(t) := \sum_{i: t_i > t} \pi_i \E_{\xbs|y=i}[-\log p_i(\xbs)].
\end{equation}

\subsubsection{Tail Risk}\label{appendix:gmm_tail_risk}

Lemma \ref{lem:noisy_p_ii_bounds} implies the crude bound
\begin{equation}
    \frac{\dee}{\dee t} \log p_{i|i}(t) \geq -\sigma^2 M(t) (1-p_{i|i}(t)) \geq -\sigma^2 M(t).
\end{equation}
Integrating this and multiplying by $-1$ gives $-\log p_{i|i}(t) \lesssim \log K$. By Lemma \ref{lem:log_p_inequality}, we have
\begin{equation}\label{eq:risk_logK_upper_bound}
    \E_{\xbs|y=i}[-\log p_i(\xbs)] \lesssim \log K, \quad \forall \, i \in [K], t \in [0,T].
\end{equation}
On the other hand, Lemma \ref{lem:noisy_p_ii_bounds} also gives
\begin{equation}
    \log p_{i|i}(t) \leq -\log K + b_0 \pi_i t + C'c_*^2, \quad \forall \, t \in [0,T]
\end{equation}
for some constant $C' > 0$. Let $c,C$ be the same constants as in Lemma \ref{lem:noisy_sequential_recovery}. Then, taking $c_*$ sufficiently small (and possibly decreasing $c$) yields 
\begin{equation}
    -\log p_{i|i}(t) \gtrsim \log K, \quad \text{whenever } \pi_it \leq (c/2)\log K.
\end{equation}
Lemma \ref{lem:noisy_sequential_recovery} then allows us to write
\begin{equation}
    (\log K) \sum_{i: \pi_it \leq (c/2)\log K} \pi_i \lesssim \Rcal_{\text{tail}}(t) \lesssim (\log K)\sum_{i:\pi_i t < C \log K} \pi_i.
\end{equation}
Define
\begin{equation}
    k_-^*(t) := \left(\frac{t}{CZ_{K,\alpha}\log K}\right)^{1/\alpha}, \quad k_+^* := \left(\frac{t}{cZ_{K,\alpha}\log K}\right)^{1/\alpha}.
\end{equation}
Using this notation, we re-write the tail risk bounds as
\begin{equation}
    \frac{\log K}{Z_{K,\alpha}} \sum_{i \geq k_+^*(2t)} i^{-\alpha} \lesssim \Rcal_{\text{tail}}(t) \lesssim \frac{\log K}{Z_{K,\alpha}} \sum_{i > k_-^*(t)} i^{-\alpha}.
\end{equation}

\subsubsection{Head Risk}\label{appendix:gmm_head_risk}

If $t_i \leq t/2$, then
\begin{equation}
    \frac{1}{t-t_i + \pi_i^{-1}} \leq \frac{2}{t}.
\end{equation}
Moreover, there are at most $k_+^*(t/2)$ classes with recovery times $t_i \leq t/2$. Hence, the head risk contribution of classes recovered by time $t/2$ is at most of order $(1/t) \cdot k_+^*(t/2)$.

On the other hand, suppose that for a given index $i \in [K]$, the recovery time satisfies $t/2 \leq t_i \leq t$. Then,
\begin{equation}
    \frac{1}{t-t_i + \pi_i^{-1}} \leq \pi_i.
\end{equation}
By Lemma \ref{lem:noisy_sequential_recovery}, the condition on recovery time implies $k_-^*(t/2) \leq i \leq k_+^*(t)$. Therefore,
\begin{equation}
    \Rcal_{\text{head}}(t) \lesssim \frac{1}{t}\left(\frac{t}{Z_{K,\alpha} \log K}\right)^{1/\alpha} + \frac{1}{Z_{K,\alpha}} \sum_{k_-^*(t/2) \leq i \leq k_+^*(t)} i^{-\alpha}.
\end{equation}
The sum in the second term can be handled by an integral approximation
\begin{equation}
    \sum_{k_-^*(t/2) \leq i \leq k_+^*(t)} i^{-\alpha} \leq \left(k_-^*(t/2)\right)^{-\alpha} + \int_{k_-^*(t/2)}^{k_+^*(t)} x^{-\alpha} \dee x.
\end{equation}
For $\alpha \neq 1$, the integral is
\begin{equation}
    \begin{split}
        &\frac{1}{1-\alpha} \left[\left(\frac{t}{cZ_{K,\alpha} \log K}\right)^{1/\alpha-1} - \left(\frac{t}{2CZ_{K,\alpha} \log K}\right)^{1/\alpha-1}\right] \\
        &= \left(\frac{t}{Z_{K,\alpha} \log K}\right)^{1/\alpha -1} \frac{c^{1-1/\alpha} - (2C)^{1-1/\alpha}}{1-\alpha} \\
        &\asymp \left(\frac{t}{Z_{K,\alpha} \log K}\right)^{1/\alpha-1}.
    \end{split}
\end{equation}
For $\alpha = 1$, the integral is
\begin{equation}
    \log\left(\frac{t/(cZ_{K,1}\log K)}{t/(2CZ_{K,1}\log K)}\right) = \log \frac{2C}{c}.
\end{equation}
So, for all $\alpha > 0$, we have
\begin{equation}
    \frac{1}{Z_{K,\alpha}} \sum_{k_-^*(t/2) \leq i \leq k_+^*(t)} i^{-\alpha} \lesssim \frac{1}{Z_{K,\alpha}}\left(\frac{t}{Z_{K,\alpha}\log K}\right)^{1/\alpha-1}.
\end{equation}
Next, we lower bound the head error. Notice that if $i \leq k_-^*(t/2)$, then $t_i \leq t/2$ and $\pi_i^{-1} \lesssim t$. So 
\begin{equation}
    \frac{1}{t-t_i + \pi_i^{-1}} \gtrsim \frac{1}{t}
\end{equation}
and
\begin{equation}
    \Rcal_{\text{head}}(t) \gtrsim \frac{1}{t} \min\left\{K, \left(\frac{t}{2CZ_{K,\alpha}\log K}\right)^{1/\alpha}\right\}.
\end{equation}

\subsubsection{Decomposition into Risk Phases}

We are now ready to assess the aggregate risk.

\textbf{Early phase.} Suppose $0 \leq t \leq 2C Z_{K,\alpha} \log K$. Then, if $i \geq (4C/c)^{1/\alpha}$, we have $\pi_i t = i^{-\alpha} t / Z_{K,\alpha} = (c/2) \log K$. As a result,
\begin{equation}
    \Rcal_{\text{tail}}(t) \gtrsim (\log K) \sum_{i \geq (4C/c)^{1/\alpha}} \pi_i \asymp \log K.
\end{equation}
\eqref{eq:risk_logK_upper_bound} gives a matching upper bound, and therefore $\Rcal(t) \asymp \log K$ in this phase.

\textbf{Intermediate phase.} Suppose $2C Z_{K,\alpha} \log K \leq t \leq c_1 Z_{K,\alpha} K^{\alpha} \log K$, where $c_1 = c \cdot 2^{-(\alpha+2)}$. 

\underline{Case $\alpha > 1$.} The same integrals \eqref{eq:tail_risk_upper_bound} and \eqref{eq:tail_risk_lower_bound} from the idealized case give 
\begin{equation}
    \sum_{i \geq k_+^*(2t)} i^{-\alpha} \asymp \sum_{i \geq k_-^*(t)} i^{-\alpha} \asymp \left(\frac{t}{Z_{K,\alpha} \log K}\right)^{1/\alpha-1}.
\end{equation}
Hence,
\begin{equation}
    \Rcal_{\text{tail}}(t) \asymp \frac{\log K}{Z_{K,\alpha}} \left(\frac{t}{Z_{K,\alpha} \log K}\right)^{1/\alpha-1} \asymp (\log K)^{2-1/\alpha} \cdot t^{-(1-1/\alpha)}.
\end{equation}
Meanwhile,
\begin{equation}
    \Rcal_{\text{head}}(t) \lesssim \frac{1}{t} \min\left\{K, \left(\frac{t}{\log K}\right)^{1/\alpha}\right\} + \left(\frac{t}{\log K}\right)^{1/\alpha-1}.
\end{equation}
The first term is of order $\Rcal_{\text{tail}}(t)/(\log K)^2$, and the second term is of order $\Rcal_{\text{tail}}(t)/(\log K)$. Hence, $\Rcal(t) \asymp (\log K)^{2-1/\alpha} \cdot t^{-(1-1/\alpha)}$ in this regime.

\underline{Case $0 < \alpha < 1$.} Recall that in this regime, we have $Z_{K,\alpha} \asymp K^{1-\alpha}$. We obtain
\begin{equation}
    \Rcal_{\text{tail}}(t) \asymp (\log K) \left(1-\left(\frac{t}{K\log K}\right)^{1/\alpha-1}\right),
\end{equation}
using the same integral bounds as in Appendix \ref{appendix:idealized_case}. The choice of $c_1$ ensures that the expression in parentheses is bounded below by a positive constant. As for the head risk, the recently recovered classes contribute 
\begin{equation}
    \frac{1}{Z_{K,\alpha}} \left(\frac{t}{Z_{K,\alpha} \log K}\right)^{1/\alpha -1} \lesssim \frac{K^{1-\alpha}}{Z_{K,\alpha}} \lesssim 1.
\end{equation}
while the contribution of the other head classes is of order at most $1/\log K$, using a similar argument as in the $\alpha >1$ case. This confirms that the tail risk dominates in this regime as well.

\underline{Case $\alpha = 1$.} We know that $Z_{K,1} \asymp \log K$. Once again, arguing via integral approximation as in Appendix \ref{appendix:idealized_case} yields
\begin{equation}
    \Rcal_{\text{tail}}(t) \asymp \log\left(\frac{K(\log K)^2}{t}\right).
\end{equation}
Meanwhile, $\Rcal_{\text{head}}(t) \lesssim 1/(\log K)$, and so the tail risk dominates.

\textbf{Convergence phase.} Suppose $2C Z_{K,\alpha} K^{\alpha} \log K \leq t \leq T$. Then,
\begin{equation}
    \frac{C \log K}{\pi_i} \leq C Z_{K,\alpha} K^{\alpha} \log K \leq \frac{t}{2},
\end{equation}
i.e., all classes have been recovered. Hence,
\begin{equation}
    \Rcal(t) = \Rcal_{\text{head}}(t) \asymp K/t.
\end{equation}

\section{Proofs for Sketched Population Gradient Flow}\label{appendix:proofs_sketching}
This section builds up to and contains the proof of the main result in Section \ref{section:compute_optimal_sl}. We focus solely on the case $\alpha > 1$, where the risk curve will obey power laws in both the sketch dimension and training time. The logic flow is similar to that of Appendix \ref{appendix:proofs_pop_grad_flow}. We adapt the notation in this appendix to account for the sketching matrix $\Sbs \in \R^{m \times d}$. Namely,
\begin{equation}
    \theta_{ij} := \inner{\wbs_i, \Sbs \mubs_j}, \quad p_{i|j} := p_i(\Sbs \mubs_j), \quad P_{i|j} := \E_{\xbs|y=j}[p_i(\Sbs \xbs)].
\end{equation}
The risk takes the form
\begin{equation}
    \Rcal(\Wbs) = -\sum_{i=1}^K \pi_i \E_{\xbs|y=i}[\log p_i(\Sbs \xbs)].
\end{equation}

\subsection{Idealized Case}\label{appendix:gf_pca_idealized}

In the idealized case $\sigma^2 = 0$, we can separate the dynamics of the top-$m$ classes, which remain as they were in the unsketched case, from the dynamics of the bottom $K-m$ classes, which are mapped to zero and are thus indistinguishable.

Let the sketch matrix $\Sbs \in \R^{m \times d}$ be the projection of $\R^d$ onto $\mathrm{span}\{\mubs_1,\dots,\mubs_m\}$. The dynamics for the weights and summary statistics respectively take the form
\begin{equation}
    \dot{\wbs}_k = \pi_k \Sbs \mubs_k - \sum_{j=1}^K \pi_j p_{k|j} \Sbs \mubs_j,
\end{equation}
\begin{equation}
    \dot{\theta}_{ki} = \pi_k \inner{\Sbs \mubs_k, \Sbs \mubs_i} - \sum_{j=1}^K \pi_j p_{k|j} \inner{\Sbs \mubs_j, \Sbs \mubs_i}.
\end{equation}
As a result, for $k \neq i$,
\begin{equation}
    \dot{\theta}_{ii} =
    \begin{cases}
        \pi_i(1-p_{i|i}), & i \leq m, \\
        0, &i> m,
    \end{cases}
    \quad\quad
    \dot{\theta}_{ki} = 
    \begin{cases}
        -\pi_i p_{k|i}, & i \leq m \\
        0, & i > m.
    \end{cases}
\end{equation}
From the above, we see that the same conservation law from Lemma \ref{lem:conservation_law} continues to hold in the sketched case. That is, $\sum_{k=1}^K \dot{\theta}_{ki} = 0$ and the dynamics of $\theta_{ki}$, for all $k \neq i$ are identical. Hence, $\theta_{ki} = -\frac{1}{K-1}\theta_{ii}$ and $p_{k|i} = \frac{1}{K-1}(1-p_{i|i})$ for all $k \neq i$.

As a consequence, for $i \leq m$, the softmax probability for class $i$ follows
\begin{equation}
    \dot{p}_{i|i} = p_{i|i} \left(\dot{\theta}_{ii} - \sum_{k=1}^K p_{k|i} \dot{\theta}_{ki}\right) = \frac{K}{K-1}\pi_i p_{i|i}(1-p_{i|i})^2,
\end{equation}
which is identical to the class-wise dynamics in Appendix \ref{appendix:idealized_case}. In particular, this implies that the recovery times $t_1,\dots,t_m$ are as in \eqref{eq:hitting_time_1/2}.

On the other hand, for $i > m$, $p_{k|i}(t) = 1/K$ for all $k \in [K]$ and $t \geq 0$ since $\Sbs \mubs_i = 0$ and thus all inner products in the softmax are zero. Hence, we have an approximation error $\Rcal_{\text{approx}}$ that stays constant over time for fixed $m$.

Hence,
\begin{equation}
    \Rcal(t) = \underbrace{- \sum_{i=1}^m \pi_i \log p_{i|i}(t)}_{\Rcal_{\text{opt}}(t)} + \underbrace{\sum_{i=m+1}^K \pi_i \log K}_{\Rcal_{\text{approx}}}.
\end{equation}
$\Rcal_{\text{opt}}(t)$ can be decomposed into a head and tail risk using the recovery times $t_1,\dots,t_m$, just as in Appendix \ref{appendix:idealized_case}. The same analysis as in the latter gives
\begin{equation}
    \Rcal_{\text{opt}}(t) \asymp 
    \begin{cases}
        \log K, & 0 \leq t \leq t_1 \\
        (\log K)^{2-1/\alpha} \cdot t^{-(1-1/\alpha)}, & t_1 \leq t \leq c_1t_m \\
        m/t, & (1+c_2)t_m \leq t
    \end{cases}
\end{equation}
for constants $c_1, c_2 \in (0,1)$.

If $m \leq \kappa K$ for some fixed $\kappa \in (0,1)$, i.e., at most a constant proportion of classes are kept by the sketch, then the approximation error takes the form
\begin{equation}
    \Rcal_{\text{approx}}(t) \asymp m^{1-\alpha} \log K,
\end{equation}
which follows from an integral approximation. 

Note that $t_m \asymp m^{\alpha} \log K$ and therefore $\Rcal_{\text{opt}}(c_1t_m) \asymp m^{1-\alpha} \log K$. Since $\Rcal_{\text{opt}}$ is decreasing along population gradient flow, we can conclude 
\begin{equation}
    \Rcal(t) \asymp 
    \begin{cases}
        \log K, & 0 \leq t \leq t_1 \\
        m^{1-\alpha} \log K + (\log K)^{2-1/\alpha} \cdot t^{-(1-1/\alpha)}, & t_1 \leq t.
    \end{cases}
\end{equation}

\subsection{GMM Case}\label{appendix:gf_pca_gmm}

We move to the dynamics for $\sigma^2 > 0$, still with the sketch matrix $\Sbs = \Mbs_m^{\top} \in \R^{m \times d}$. Note that $\Sbs \Sbs^{\top} = \Ibs_m$. Adapting the dynamics from \eqref{eq:full_theta_dynamics} to the sketched case yields the evolution
\begin{equation}\label{eq:full_sketched_w_dynamics}
    \begin{split}
        \dot{\wbs}_i &= \sum_{k=1}^K \pi_k (\1\{i=k\}-P_{i|k})\Sbs\mubs_k - \sigma^2 \sum_{\ell=1}^K \E_{\xbs}\big[p_i(\Sbs\xbs)(\1\{i=\ell\}-p_{\ell}(\Sbs\xbs))\big]\Sbs \Sbs^{\top} \wbs_{\ell} \\
        &= \sum_{k=1}^K \pi_k (\1\{i=k\}-P_{i|k})\Sbs\mubs_k - \sigma^2 \sum_{\ell=1}^K \E_{\xbs}\big[p_i(\Sbs\xbs)(\1\{i=\ell\}-p_{\ell}(\Sbs\xbs))\big]\wbs_{\ell}.
    \end{split}
\end{equation}
The summary statistics $\theta_{ij} = \inner{\wbs_i, \Sbs \mubs_j}$ evolve as 
\begin{equation}\label{eq:theta_ij_dynamics_pop_pca}
    \dot{\theta}_{ij} = \sum_{k=1}^K \pi_k (\1\{i=k\} - P_{i|k})\inner{\Sbs \mubs_j, \Sbs \mubs_k} - \sigma^2 \sum_{\ell=1}^K \E_{\xbs}\big[p_i(\Sbs\xbs)(\1\{i=\ell\}-p_{\ell}(\Sbs\xbs))\big]\theta_{\ell j}.
\end{equation}
We split these based on the index $i$. For $i \leq m$, we have
\begin{equation}
    \dot{\theta}_{ii} = \pi_i(1-P_{i|i}) - \sigma^2 \sum_{\ell=1}^K \E_{\xbs}\big[p_i(\Sbs\xbs)(\1\{i=\ell\} - p_{\ell}(\Sbs\xbs))\big]\theta_{\ell i},
\end{equation}
and
\begin{equation}
    \dot{\theta}_{ki} = -\pi_i P_{k|i} - \sigma^2 \sum_{\ell=1}^K \E_{\xbs}\big[p_k(\Sbs\xbs)(\1\{k=\ell\}-p_{\ell}(\Sbs\xbs))\big]\theta_{\ell i}, \quad k \neq i.
\end{equation}
The same perturbative analysis as in the unsketched case of Appendix \ref{appendix:gmm_case} carries over to the current sketched case, with some modifications. Comparing \eqref{eq:full_sketched_w_dynamics} to the unsketched weight dynamics \eqref{eq:full_unsketched_w_dynamics}, the second term only changes in that $p_i$ and $p_{\ell}$ take $\Sbs \xbs$ as argument rather than just $\xbs$. This does not affect the size of the second term or the weight norm bounds that are central to our earlier perturbative analysis. Moreover, the size of the first term does not increase since $||\Sbs \mubs_k|| \leq 1$ for all $k \in [K]$. Hence, Lemma \ref{lem:weight_norm_bound} holds for the sketched dynamics. In particular, we inherit the same time horizon $T := c_*/\sigma$.

Given $\zbs \sim \Ncal(\zerobs,\Ibs_d)$, we have $\Sbs \zbs \sim \Ncal(\zerobs,\Ibs_m)$. Therefore, Lemma \ref{lem:softmax_perturbation} holds with the sketched definition of $p_{i|j}$ and $P_{i|j}$. Meanwhile, Lemma \ref{lem:log_p_inequality} continues to hold for the expectation over the sketched conditional distribution. Note that if $\zbs \sim \Ncal(\zerobs, \sigma^2 \Ibs_d)$, then $\Sbs \zbs \sim \Ncal(\zerobs, \sigma^2 \Ibs_m)$.

Lemma \ref{lem:noisy_p_ii_bounds} carries over as-is for $i \leq m$, with the notation $p, P$ now referring to probabilities under sketching. Lemmas \ref{lem:noisy_sequential_recovery} and \ref{lem:post_recovery_decay} follow from this by the exact same argument as in Appendix \ref{appendix:gmm_case}, once again only for $i \leq m$. 

Hence, if we continue to define
\begin{equation}
    \Rcal_{\text{opt}}(t) := -\sum_{i=1}^m \pi_i \E_{\xbs|y=i}[\log p_i(\Sbs \xbs)], \quad \Rcal_{\text{approx}}(t) := -\sum_{i=m+1}^K \pi_i \E_{\xbs|y=i}[\log p_i(\Sbs \xbs)],
\end{equation}
then, the usual head-tail decomposition for $\Rcal_{\text{opt}}$ goes through and there exist $c_1, c_2 > 0$ such that, for $t \in [0,T]$,
\begin{equation}
    \Rcal_{\text{opt}}(t) \asymp
    \begin{cases}
        \log K, & 0 \leq t \lesssim \log K \\
        (\log K)^{2-1/\alpha} \cdot t^{-(1-1/\alpha)}, & \log K \lesssim t \leq c_1 m^{\alpha}\log K \\
        m/t, & c_2 m^{\alpha} \log K \leq t. \\
    \end{cases}
\end{equation}

Meanwhile, for $i > m$, $\dot{\theta}_{ki} = 0$ for all $k \in [K]$ since $\Sbs\mubs_i = \zerobs$. Hence, $p_{i|i} = 1/K$. It remains to bound the loss on noisy perturbations around zero. 

\begin{lemma}[Risk of a Collapsed Class]\label{lem:collapsed_class_risk}
    There exists $C > 0$ such that for every $i > m$ and $0 \leq t \leq T$, 
    \begin{equation}
        \log K \leq \E_{\xbs|y=i}[-\log p_i(\Sbs \xbs)] \leq C \log K.
    \end{equation}
    Consequently if $m \leq \kappa K$ for some $\kappa \in (0,1)$,
    \begin{equation}
        \Rcal_{\text{approx}}(t) \asymp m^{1-\alpha} \log K.
    \end{equation}
\end{lemma}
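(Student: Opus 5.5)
For $i > m$ we have $\Sbs \mubs_i = \zerobs$, so conditionally on $y=i$ the sketched input is $\Sbs\xbs = \sigma \Sbs\zbs$ with $\Sbs\zbs \sim \Ncal(\zerobs,\Ibs_m)$. The class-wise loss is therefore
\begin{equation*}
\E_{\zbs}\Big[\log\Big(\sum_{k=1}^K e^{\sigma\inner{\wbs_k - \wbs_i, \Sbs\zbs}}\Big)\Big].
\end{equation*}
The plan is to sandwich this between $\log K$ and a constant multiple of $\log K$. The argument mirrors Lemma~\ref{lem:log_p_inequality}, with $p_{i|i} = 1/K$ playing the role of the reference value.

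\textbf{Lower bound.} The function $\gbs \mapsto -\log p_i(\gbs)$ is convex in the input, since it is log-sum-exp of linear functions minus a linear function. Jensen's inequality then gives
\begin{equation*}
\E[-\log p_i(\sigma\Sbs\zbs)] \ge -\log p_i(\zerobs) = \log K .
\end{equation*}
This holds exactly, with constant one.

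\textbf{Upper bound.} Move the expectation inside the logarithm by concavity of $\log$, and use the Gaussian MGF:
\begin{equation*}
\E\big[e^{\sigma\inner{\Sbs^{\top}(\wbs_k-\wbs_i),\zbs}}\big] \le e^{\sigma^2\norm{\wbs_k-\wbs_i}^2/2} \le e^{2\sigma^2 M(t)^2}.
\end{equation*}
This yields
\begin{equation*}
\E[-\log p_i] \le \log K + 2\sigma^2 M(t)^2 .
\end{equation*}
Here $M(t)$ is the sketched weight-norm maximum. Lemma~\ref{lem:weight_norm_bound} carries over to the sketched dynamics, as already noted, so $\sigma M(t) \lesssim c_*$ on $[0,T]$. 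Hence the additive term is $O(c_*^2) = O(1) \le C\log K$, and the bound becomes $C\log K$ for $K \ge 2$.

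\textbf{Consequence for $\Rcal_{\text{approx}}$.} Summing the class-wise bounds with weights $\pi_i$ gives
\begin{equation*}
\Rcal_{\text{approx}}(t) \asymp (\log K)\sum_{i=m+1}^K \pi_i .
\end{equation*}
Since $\alpha > 1$, we have $Z_{K,\alpha} \asymp 1$. An integral comparison gives
\begin{equation*}
\sum_{i>m} i^{-\alpha} \le \frac{m^{1-\alpha}}{\alpha-1},
\end{equation*}
which handles the upper bound. For the lower bound,
\begin{equation*}
\sum_{i>m} i^{-\alpha} \ge \int_{m+1}^{K+1} x^{-\alpha}\,dx = \frac{(m+1)^{1-\alpha} - (K+1)^{1-\alpha}}{\alpha-1}.
\end{equation*}
Using $m \le \kappa K$, the second term is at most $\big(\kappa/(1-\ldots)\big)$-proportionally smaller. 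More precisely, $(K+1)^{1-\alpha} \le \big((m+1)/\kappa'\big)^{1-\alpha}$ with $\kappa' = \kappa+o(1) < 1$, so the difference is $\gtrsim m^{1-\alpha}$.

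\textbf{Main obstacle.} The only delicate point is confirming that the weight-norm bound of Lemma~\ref{lem:weight_norm_bound} genuinely applies to the sketched flow. Once that holds, the noise correction stays additive and $O(1)$ rather than multiplicative in $K$. The lower bound on the tail sum needing $m \le \kappa K$ is routine.
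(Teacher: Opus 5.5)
Your proof is correct and follows essentially the same route as the paper, which gets both bounds from Lemma~\ref{lem:log_p_inequality} with $p_{i|i}=1/K$. That lemma gives Jensen for the lower bound and concavity of $\log$ plus the Gaussian MGF for the upper bound, with the noise correction controlled on $[0,T]$ by the weight-norm bound of Lemma~\ref{lem:weight_norm_bound}. The differences are cosmetic: you keep the correction additive, $\log K + 2\sigma^2 M(t)^2$, instead of the paper's multiplicative factor $e^{2\sigma^2 M(t)^2}$, and you write out the tail-sum integral comparison under $m \le \kappa K$ that the paper leaves implicit.
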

\begin{proof}
    The lower bound is immediate from Lemma \ref{lem:log_p_inequality}, which we confirmed applies to the sketched case. Moreover, for the upper bound, Lemma \ref{lem:log_p_inequality} gives
    \begin{equation}
        \E_{\xbs|y=i}[-\log p_i(\Sbs \xbs)] \leq e^{2\sigma^2 M(t)^2} (\log K).
    \end{equation}
    The prefactor $e^{2\sigma^2 M(t)^2}$ is uniformly bounded by a constant over the time horizon $[0,T]$ by \eqref{eq:time_horizon_bounds}.
\end{proof}

By the same argument as in Appendix \ref{appendix:gf_pca_idealized}, the $\Rcal_{\text{approx}}$ is at least of the same order as $\Rcal_{\text{opt}}$ by the time $c_1 m^{\alpha} \log K$. We can therefore conclude that, for $t \in [0,T]$,
\begin{equation}
    \Rcal(t) \asymp 
    \begin{cases}
        \log K, & 0 \leq t \lesssim \log K, \\
        m^{1-\alpha} \log K + (\log K)^{2-1/\alpha} \cdot t^{-(1-1/\alpha)}, & \log K \lesssim t.
    \end{cases}
\end{equation}

\subsection{Empirical PCA}\label{appendix:gf_pca_empirical}

Now, we move to the case of a random sketching matrix that approximates the perfect identification of the top-$m$ class means that occurs at the population level. Note that data distributed as in \eqref{eq:target_gmm} satisfies
\begin{equation}
    \Gbs :=\E_{\xbs}[\xbs \xbs^{\top}] = \sum_{i=1}^K \pi_i \E_{\zbs}\big[(\mubs_i + \sigma \zbs)(\mubs_i + \sigma \zbs)^{\top}\big] = \sum_{i=1}^K \pi_i \mubs_i \mubs_i^{\top} + \sigma^2 \Ibs_d.
\end{equation}
The top-$m$ eigenvectors of this matrix correspond exactly to the top $m$ class means $\mu_1,\dots,\mu_m$, with associated eigenvalues $\lambda_1 := \pi_1+\sigma^2,\dots,\lambda_m := \pi_m + \sigma^2$. Hence, the discussion in the previous subsection arises naturally from population-level PCA. Now, consider the sample second moment matrix
\begin{equation}
    \widehat{\Gbs} = \frac{1}{N} \sum_{i=1}^N \xbs_i \xbs_i^{\top} \in \R^{d \times d}
\end{equation}
given i.i.d.\@ unlabelled data $\xbs_1,\dots,\xbs_N$ from the target GMM \eqref{eq:target_gmm}. This yields eigenvalues $\widehat{\lambda}_1 \geq \dots \geq \widehat{\lambda}_d$ and eigenvectors $\widehat{\vbs}_1,\dots,\widehat{\vbs}_d \in \R^d$. The resulting sketch matrix $\widehat{\Sbs} \in \R^{m \times d}$ is such that $\widehat{\Sbs} \xbs = (\inner{\xbs,\widehat{\vbs}_1},\dots,\inner{\xbs,\widehat{\vbs}_m})$ for all $x \in \R^d$. 

Let $\Pbs_m = \Mbs_m \Mbs_m^{\top} \in \R^{d \times d}$ denote the population top-$m$ projector, i.e., the projection map onto the top-$m$ class means, and $\widehat{\Pbs}_m = \widehat{\Sbs}^{\top} \widehat{\Sbs} \in \R^{d \times d}$ denote the empirical top-$m$ projector. The key quantity that will be used in bounding the gap between the dynamics under population and empirical PCA is
\begin{equation}\label{eq:eps_pca_def}
    \eps_{\text{PCA}} := \norm{\widehat{\Pbs}_m - \Pbs_m}_{\text{op}}.
\end{equation}
In particular, we will leverage the inequality
\begin{equation}\label{eq:empirical_pca_deivation}
    \absolute{\inner{\hat{\Sbs}\mubs_i, \hat{\Sbs}\mubs_j} - \1\{i=j \leq m\}} \leq \eps_{\text{PCA}}, \quad \forall \, i,j \in [K].
\end{equation}
Moreover, if $j > m$, then $\Pbs_m \mubs_j = \zerobs$ and
\begin{equation}\label{eq:bottom_class_deviation}
    \norm{\widehat{\Sbs}\mubs_j} = \norm{\widehat{\Pbs}_m \mubs_j} = \norm{(\widehat{\Pbs}_m - \Pbs_m)\mubs_j} \leq \eps_{\text{PCA}}.
\end{equation}
Abusing the notation from the previous subsection, we track
\begin{equation}
    \theta_{ij} = \inner{\wbs_i, \widehat{\Sbs} \mubs_j}, \quad p_{i|j} := p_i(\widehat{\Sbs}\mubs_j), \quad P_{i|j} := \E_{\xbs|y=j}[p_i(\widehat{\Sbs}\xbs)].
\end{equation}
Note that Lemmas \ref{lem:weight_norm_bound}-\ref{lem:log_p_inequality} hold under the empirical sketch since $\norm{\widehat{\Sbs}\mubs_k} \leq 1$ for all $k \in [K]$ and $\widehat{\Sbs} \widehat{\Sbs}^{\top} = \Ibs_m$. Thus, the same proofs hold by replacing $d$ with $m$. 

As for the dynamics, we must be careful and account for the additional error due to replacing the population sketch by the empirical one. 
\begin{lemma}[Stability under Empirical Sketch]\label{lem:empirical_pca_p_ii_dynamics}
    There exist constants $a,b,C > 0$ such that, for all $i \leq m$ and $t \in [0,T]$,
    \begin{equation}\label{eq:empirical_pca_p_ii_dynamics}
        p_{i|i}(1-p_{i|i}) \left(a \pi_i (1-p_{i|i}) - C\sigma^2 M(t) - C\eps_{\text{PCA}}\right) \leq \dot{p}_{i|i} \leq p_{i|i}(1-p_{i|i})\left(b\pi_i (1-p_{i|i}) + C\sigma^2 M(t) + C\eps_{\text{PCA}}\right).
    \end{equation}
\end{lemma}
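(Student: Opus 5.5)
We follow the proof of Lemma \ref{lem:noisy_p_ii_bounds}, now tracking the extra terms created because $\widehat{\Sbs}\mubs_j$ is no longer exactly $\mubs_j$ (for $j\le m$) or zero (for $j>m$). The weight dynamics under the empirical sketch are given by \eqref{eq:full_sketched_w_dynamics} with $\Sbs$ replaced by $\widehat{\Sbs}$; the $\sigma^2$-term is unchanged because $\widehat{\Sbs}\widehat{\Sbs}^\top=\Ibs_m$. Taking inner products with $\widehat{\Sbs}\mubs_i$ gives
\begin{equation*}
\dot\theta_{ki}=\sum_{j=1}^K \pi_j(\1\{k=j\}-P_{k|j})\inner{\widehat{\Sbs}\mubs_j,\widehat{\Sbs}\mubs_i}-\sigma^2\sum_{\ell}H_{k\ell}\theta_{\ell i}.
\end{equation*}
We split the Gram factor as $\inner{\widehat{\Sbs}\mubs_j,\widehat{\Sbs}\mubs_i}=\1\{j=i\}+E_{ji}$ for $i\le m$. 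By \eqref{eq:empirical_pca_deivation}, $|E_{ji}|\le\eps_{\text{PCA}}$. The diagonal part reproduces exactly the population-sketch dynamics, and we plug everything into $\dot p_{i|i}=p_{i|i}(\dot\theta_{ii}-\sum_k p_{k|i}\dot\theta_{ki})$.

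The diagonal part and the $\sigma^2$ part are handled verbatim as before. The $P\to p$ replacement uses Lemma \ref{lem:softmax_perturbation}, the sandwich \eqref{eq:perturbed_p_ii_sandwich} gives the $a\pi_i p_{i|i}(1-p_{i|i})^2$ and $b\pi_i p_{i|i}(1-p_{i|i})^2$ terms, and the noise contribution is bounded by $\sigma^2M(t)p_{i|i}(1-p_{i|i})$ exactly as in \eqref{eq:second_dynamic_term_contribution}.

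The new term is
\begin{equation*}
p_{i|i}\sum_j\pi_j E_{ji}\Big[(\1\{i=j\}-P_{i|j})-\sum_k p_{k|i}(\1\{k=j\}-P_{k|j})\Big]=p_{i|i}\sum_j \pi_j E_{ji}\sum_k(\1\{k=i\}-p_{k|i})(\1\{k=j\}-P_{k|j}).
\end{equation*}
For each $j$, the inner sum over $k$ is at most $\sum_k|\1\{k=i\}-p_{k|i}|=2(1-p_{i|i})$ in absolute value, since $|\1\{k=j\}-P_{k|j}|\le1$. Using $\sum_j\pi_j=1$ and $|E_{ji}|\le\eps_{\text{PCA}}$, the whole term is bounded by $2\eps_{\text{PCA}}p_{i|i}(1-p_{i|i})$. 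Absorbing the constants gives \eqref{eq:empirical_pca_p_ii_dynamics} with $C\ge2$.

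The main obstacle is the factor $(1-p_{i|i})$ in the error term. A crude bound on the $\eps_{\text{PCA}}$ contribution would give only $\eps_{\text{PCA}}p_{i|i}$, which is too weak for the post-recovery analysis. The point is to keep the projection structure $\sum_k(\1\{k=i\}-p_{k|i})(\cdot)$ coming from differentiating the softmax, which always supplies $2(1-p_{i|i})$, so the estimate remains of the form $p_{i|i}(1-p_{i|i})$ times the bracket. A secondary check is that the $P\to p$ replacement still works with the perturbed Gram entries. It does: Lemma \ref{lem:softmax_perturbation} only uses that $\widehat{\Sbs}\zbs\sim\Ncal(\zerobs,\Ibs_m)$ and the bound on $M(t)$, and both remain valid under $\widehat{\Sbs}$, as already noted after \eqref{eq:bottom_class_deviation}.
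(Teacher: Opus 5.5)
Your proposal is correct and follows the paper's proof essentially verbatim. It uses the same split $\inner{\widehat{\Sbs}\mubs_i,\widehat{\Sbs}\mubs_j}=\1\{i=j\}+E_{ji}$, reuses Lemma~\ref{lem:noisy_p_ii_bounds} for the diagonal and $\sigma^2$ parts, and bounds the deviation's contribution by a constant times $\eps_{\text{PCA}}\,p_{i|i}(1-p_{i|i})$; the only difference is cosmetic, in that you write the bracket as $\sum_k(\1\{k=i\}-p_{k|i})(\1\{k=j\}-P_{k|j})$ and get constant $2$, whereas the paper splits it as $(\1\{i=j\}-p_{j|i})-(P_{i|j}-\sum_k p_{k|i}P_{k|j})$ and gets $3$.
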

\begin{proof}
    The summary statistics follow the same dynamics as in \eqref{eq:theta_ij_dynamics_pop_pca}, where we replace the population sketch matrix $\Sbs$ by the empirical sketch matrix $\widehat{\Sbs}$:
    \begin{equation}\label{eq:summary_statistic_dynamics_emp_pca}
        \dot{\theta}_{ki} = \sum_{j=1}^K \pi_j (\1\{k=j\} - P_{k|j}) \inner{\widehat{\Sbs}\mubs_i, \widehat{\Sbs}\mubs_j} - \sigma^2 \sum_{\ell=1}^K \E_{\xbs}[p_k(\widehat{\Sbs}\xbs) (\1\{k=\ell\} - p_{\ell}(\widehat{\Sbs}\xbs))]\theta_{\ell i}.
    \end{equation}
    Note that for $i \leq m$, $\inner{\Sbs\mubs_i, \Sbs\mubs_j} = \1\{i=j\}$ under the population sketch $\Sbs$. Hence, to emphasize the deviation brought on by the empirical sketch $\widehat{\Sbs}$, we re-write the first term as
    \begin{equation}
        \pi_i(\1\{k=i\} - P_{k|i}) + \sum_{j=1}^K \pi_j(\1\{k=j\} - P_{k|j}) (\inner{\widehat{\Sbs}\mubs_i, \widehat{\Sbs}\mubs_j} - \1\{i=j\}).
    \end{equation}
    Recalling the identity
    \begin{equation}
        \dot{p}_{i|i} = p_{i|i}\left(\dot{\theta}_{ii} - \sum_{k=1}^K p_{k|i} \dot{\theta}_{ki}\right),
    \end{equation}
    we have that the empirical PCA error $(\inner{\widehat{\Sbs}\mubs_i, \widehat{\Sbs}\mubs_j} - \1\{i=j\})$ contributes
    \begin{equation}
        p_{i|i}\sum_{j=1}^K \pi_j\left(\inner{\widehat{\Sbs}\mubs_i, \widehat{\Sbs}\mubs_j} - \1\{i=j\}\right) \left[\1\{i=j\} - P_{i|j} - \sum_{k=1}^K p_{k|i} (\1\{k=j\} - P_{k|j})\right]
    \end{equation}
    to $\dot{p}_{i|i}$ Focusing on the right-most expression in parentheses, we have
    \begin{equation}
        \begin{split}
            \1\{i=j\} - P_{i|j} - \sum_{k=1}^K p_{k|i} (\1\{k=j\} - P_{k|j}) &= \1\{i=j\} - P_{i|j} - p_{j|i} + \sum_{k=1}^K p_{k|i} P_{k|j} \\
            &= (\1\{i=j\} - p_{j|i}) - \left(P_{i|j} - \sum_{k=1}^K p_{k|i} P_{k|j}\right).
        \end{split}
    \end{equation}
    Since $|\1\{i=j\} - p_{j|i}| \leq 1-p_{i|i}$ and $P_{i|j} - \sum_{k=1}^K p_{k|i} P_{k|j} = (1-p_{i|i}) P_{i|j} - \sum_{k \neq i} p_{k|i} P_{k|j}$, we have
    \begin{equation}
        \absolute*{(\1\{i=j\} - p_{j|i}) - \left(P_{i|j} - \sum_{k=1}^K p_{k|i} P_{k|j}\right)} \leq \absolute{\1\{i=j\} - p_{j|i}} + \absolute*{P_{i|j} - \sum_{k=1}^K p_{k|i} P_{k|j}} \leq 3(1-p_{i|i}).
    \end{equation}
    Hence, the empirical PCA error contributes at most $3\eps_{\text{PCA}} p_{i|i}(1-p_{i|i})$ to $\dot{p}_{i|i}$.

    The result then follows from the same argument as in Lemma \ref{lem:noisy_p_ii_bounds}, since we have argued that Lemmas \ref{lem:weight_norm_bound}-\ref{lem:log_p_inequality} still hold in this sketched setting.
\end{proof}

For sequential recovery, starting from the above lemma and employing the same argument as in the proof of Lemma \ref{lem:noisy_sequential_recovery} yields the same result in this setting so long as $\eps_{\text{PCA}} \leq c_0 \pi_m$ for a sufficiently small constant $c_0$. Indeed, this ensures that the $\pi_i p_{i|i} (1-p_{i|i})^2$ is dominant in the dynamics at least until $p_{i|i} \geq 1/2$.

For the post-recovery decay, we follow a similar integrating-factor approach to the one in the proof of Lemma \ref{lem:post_recovery_decay}. Indeed, within the time horizon, assuming $\eps_{\text{PCA}} \leq c_0 \pi_m$ for a sufficiently small constant $c_0 > 0$ ensures that $p_{i|i}$ does not decrease below $1/2$ after recovery. This yields the lower bound
\begin{equation}
    \frac{\dee}{\dee t} \left[\frac{1}{1-p_{i|i}(t)}\right] \geq \frac{a}{2}\pi_i - \frac{C\sigma^2 M(t)}{1-p_{i|i}(t)} - \frac{C\eps_{\text{PCA}}}{1-p_{i|i}(t)}.
\end{equation}
We introduce the integrating factor $\exp(C \int_{t_i}^t (\sigma^2 M(r) + \eps_{\text{PCA}}) \dee r)$ to obtain
\begin{equation}
    \frac{\dee}{\dee t} \left[\exp\left(C\int_{t_i}^t (\sigma^2 M(r) + \eps_{\text{PCA}} \dee r\right)\frac{1}{1-p_{i|i}(t)}\right] \geq \frac{a}{2}\pi_i \exp\left(C \int_{t_i}^t (\sigma^2 M(r) + \eps_{\text{PCA}}) \dee r\right).
\end{equation}
Integrating from $t_i$ to $t$ gives
\begin{equation}
    \begin{split}
        \frac{1}{1-p_{i|i}(t)} &\geq \exp\left(-C\int_{t_i}^t (\sigma^2 M(r) + \eps_{\text{PCA}}) \dee r \right)\left(2 + \frac{a}{2}\pi_i \int_{t_i}^t \exp\left(C\int_{t_i}^s (\sigma^2 M(r) + \eps_{\text{PCA}}) \dee r \right) \dee s\right) \\
        &= 2\exp\left(-C \int_{t_i}^t (\sigma^2 M(r)  + \eps_{\text{PCA}}) \dee r\right) + \frac{a}{2}\pi_i \int_{t_i}^t \exp\left(-C \int_s^t (\sigma^2 M(r) + \eps_{\text{PCA}}) \dee r\right) \dee s.
    \end{split}
\end{equation}
Recall that in the proof of Lemma \ref{lem:post_recovery_decay}, we argued that $\sigma^2 \int_{t_i}^T M(r) \dee r \lesssim c_*^2$. Therefore, $\exp(-C\int_{t_i}^t \sigma^2 M(r) \dee r) \gtrsim 1$. Moreover, we trivially have $\frac{1}{1-p_{i|i}(t)} \geq 1$. Hence, 
\begin{equation}
    \begin{split}
        \frac{1}{1-p_{i|i}(t)} &\gtrsim 1 + \pi_i \int_{t_i}^t \exp(-C \eps_{\text{PCA}}(t-s)) \dee s \\
        &= 1 + \frac{\pi_i}{C\eps_{\text{PCA}}}(1-e^{-C\eps_{\text{PCA}}(t-t_i)}) \\
        &\asymp 1 + \min\left\{\pi_i (t-t_i), \frac{\pi_i}{\eps_{\text{PCA}}}\right\},
    \end{split}
\end{equation}
where the last line follows from the relation $1-e^{-u} \asymp \min\{u,1\}$. Taking the reciprocal gives
\begin{equation}
    1-p_{i|i}(t) \lesssim \frac{1}{1+\min\{\pi_i(t-t_i), \frac{\pi_i}{\eps_{\text{PCA}}}\}} \leq \frac{1}{1+\pi_i(t-t_i)} + \frac{\eps_{\text{PCA}}}{\pi_i}.
\end{equation}
Since $p_{i|i} \geq 1/2$, we have $-\log p_{i|i}(t) \leq 2(1-p_{i|i}(t))$. Then, Lemma \ref{lem:log_p_inequality} implies, over the time horizon $[0,T]$,
\begin{equation}\label{eq:emp_pca_head_risk_term}
    \E_{\xbs|y=i}[-\log p_i(\widehat{\Sbs}\xbs)] \lesssim \frac{1}{1+\pi_i(t-t_i)} + \frac{\eps_{\text{PCA}}}{\pi_i}.
\end{equation}

Note that, under our assumption $\eps_{\text{PCA}} \leq c_0 \pi_m$,
\begin{equation}\label{eq:emp_pca_head_risk_err_term}
    \sum_{i=1}^m \pi_i \frac{\eps_{\text{PCA}}}{\pi_i} = m\eps_{\text{PCA}} \lesssim m \pi_m \asymp m^{1-\alpha}.
\end{equation}
We show that the above contribution is dominated by the approximation error arising from the bottom $K-m$ classes.

\begin{lemma}[Stability of Discarded Class Risk]\label{lem:emp_pca_discarded_class_risk}
    If $\eps_{\text{PCA}} \leq c\log K/T$ for some $c \in (0,1)$, then for $t \in [0,T]$,
    \begin{equation}
        \sum_{i=m+1}^K \pi_i \E_{\xbs|y=i}[-\log p_i(\widehat{\Sbs}\xbs)] \asymp m^{1-\alpha}\log K. 
    \end{equation}
\end{lemma}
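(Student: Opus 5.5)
For each discarded class $i>m$, I would sandwich the per-class loss $\E_{\xbs|y=i}[-\log p_i(\widehat{\Sbs}\xbs)]$ between constant multiples of $\log K$, and then sum against the prior. For $m\le K/2$ the integral approximation already used in Lemma \ref{lem:collapsed_class_risk} gives $\sum_{i>m}\pi_i\asymp m^{1-\alpha}$, so this is enough. Under the population sketch, $\widehat{\Sbs}\mubs_i=\zerobs$ forces $p_{k|i}=1/K$. Now $\widehat{\Sbs}\mubs_i$ is only small: by \eqref{eq:bottom_class_deviation} its norm is at most $\eps_{\text{PCA}}$. So the whole task is to show that the logits at $\widehat{\Sbs}\mubs_i$ stay $O(\log K)$ away from uniform over the horizon.

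\textbf{Upper bound.} Lemma \ref{lem:log_p_inequality} holds under the empirical sketch, and $e^{2\sigma^2M(t)^2}\lesssim 1$ on $[0,T]$, so it suffices to bound $-\log p_{i|i}(t)$ with $p_{i|i}=p_i(\widehat{\Sbs}\mubs_i)$. Write $\theta_{ki}=\inner{\wbs_k,\widehat{\Sbs}\mubs_i}$. By Cauchy--Schwarz and Lemma \ref{lem:weight_norm_bound},
\[
|\theta_{ki}(t)|\le M(t)\,\eps_{\text{PCA}}\lesssim t\,\eps_{\text{PCA}}\le T\eps_{\text{PCA}}\le c\log K .
\]
This gives
\[
-\log p_{i|i}=\log\sum_k e^{\theta_{ki}-\theta_{ii}}\le \log K+2\max_k|\theta_{ki}|\le (1+2c')\log K .
\]
Summing against $\pi_i$ over $i>m$ yields the upper bound $\lesssim m^{1-\alpha}\log K$.

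\textbf{Lower bound.} Jensen (the lower half of Lemma \ref{lem:log_p_inequality}) gives $\E_{\xbs|y=i}[-\log p_i(\widehat{\Sbs}\xbs)]\ge -\log p_{i|i}$. Using the same logit bound,
\[
-\log p_{i|i}\ge \log K-2\max_k|\theta_{ki}|\ge(1-2c')\log K .
\]
This is $\gtrsim\log K$ provided the constant $c'$, which absorbs $e^{c_*/2}$ from $M(t)\le te^{c_*/2}$, is below $1/2$. That is arranged by taking $c$ small enough, or equivalently by shrinking $c_*$.

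\textbf{Main obstacle.} The delicate point is that the crude bound $|\theta_{ki}|\le M\eps_{\text{PCA}}$ only gives a constant-fraction perturbation of $\log K$. The lower bound therefore needs the constant in the hypothesis $\eps_{\text{PCA}}\le c\log K/T$ to be small, not merely $c<1$. If that fails, my fallback is to use the conservation law $\sum_k\theta_{ki}=0$, which still holds since $\sum_k\dot{\wbs}_k=0$. With it, $\theta_{ii}-\theta_{ki}$ is controlled by $2M\eps_{\text{PCA}}$. For the lower bound I would instead use
\[
-\log p_{i|i}\ge -\theta_{ii}+\log K+\tfrac1K\textstyle\sum_k\theta_{ki}=\log K-\theta_{ii}
\]
(by convexity of log-sum-exp). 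Then I only need $\theta_{ii}\le M\eps_{\text{PCA}}\le c'\log K$ with $c'<1$, which tolerates any $c<1$ after adjusting $c_*$.
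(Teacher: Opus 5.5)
Your proposal is correct and follows the paper's proof. Both arguments sandwich each discarded class's loss between $-\log p_{i|i}$ and a constant multiple of it via Lemma~\ref{lem:log_p_inequality}. Both then show $|{-\log p_{i|i}}-\log K|\le 2M(t)\,\eps_{\text{PCA}}\lesssim t\,\eps_{\text{PCA}}$; the paper phrases this as the gradient bound $\|\nabla(-\log p_i)\|\le 2M(t)$ along the segment from $\zerobs$ to $\widehat{\Sbs}\mubs_i$. The conclusion then follows from $\sum_{i>m}\pi_i\asymp m^{1-\alpha}$.

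You are right that the lower bound needs $c$ small: the paper's $O(t\,\eps_{\text{PCA}})$ hides the factor $2e^{c_*/2}$. Your fallback, which uses $\sum_k\theta_{ki}=0$ to get $-\log p_{i|i}\ge\log K-\theta_{ii}$, is a valid way to recover the statement for any fixed $c<1$ after shrinking $c_*$.
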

\begin{proof}
    Recall that Lemma \ref{lem:log_p_inequality} applied to the sketched case implies
    \begin{equation}
        -\log p_{i|i} \leq \E_{\xbs|y=i}[-\log p_i(\widehat{\Sbs}\xbs)] \leq e^{2\sigma^2 M(t)^2} (-\log p_{i|i}),
    \end{equation}
    where the prefactor in the upper bound is constant order up to time $T$. 

    Let $g(\ubs) = -\log p_i(\ubs)$. Then $\nabla g(u) = \sum_{k=1}^K p_k(\ubs) \wbs_k - \wbs_i$. Hence, $\norm{\nabla g(u)} \leq 2M(t)$. Hence,
    \begin{equation}
        -\log p_{i|i}(t) = \log K + O\left(M(t) \norm{\widehat{\Sbs}\mubs_i}\right) = \log K + O(M(t) \eps_{\text{PCA}}) = \log K + O(t \eps_{\text{PCA}}).
    \end{equation}
    Hence, under the assumption $t \eps_{\text{PCA}} \leq c_1 \log K$, we obtain $-\sum_{i=m+1}^K \pi_i \log p_{i|i} \asymp m^{1-\alpha}\log K$.
\end{proof}

\begin{theorem}[Empirical PCA Scaling Law]\label{thm:emp_pca_scaling_law}
    Assume $\alpha > 1$ and $m \leq K/2$. There exists a constant $c_0 > 0$ such that if 
    \begin{equation}\label{eq:eps_pca_condition}
        \eps_{\text{PCA}} \leq c_0 \min\{\pi_m, \sigma \log K\},
    \end{equation}
    the population risk under empirical sketch satisfies, uniformly for $t \in [0,T]$,
    \begin{equation}
        \Rcal_{\widehat{\Sbs}}(t) \asymp
        \begin{cases}
            \log K, & 0 \leq t \lesssim \log K, \\
            m^{1-\alpha} \log K + (\log K)^{2-1/\alpha} \cdot t^{-(1-1/\alpha)}, &\log K \lesssim t \leq T.
        \end{cases}
    \end{equation}
\end{theorem}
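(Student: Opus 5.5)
We prove the theorem by decomposing the risk as $\Rcal_{\widehat{\Sbs}}(t) = \Rcal_{\text{opt}}(t) + \Rcal_{\text{approx}}(t)$. The first term sums over the retained classes $i \leq m$ and the second over the discarded classes $i > m$. We then bound each piece with the tools already built for the empirical sketch.

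First, we check that the condition \eqref{eq:eps_pca_condition} gives the hypotheses we need. Since $T = c_*/\sigma$, the bound $\eps_{\text{PCA}} \leq c_0 \sigma \log K$ is the same as $\eps_{\text{PCA}} \leq (c_0/c_*)\log K/T$. This is exactly the assumption of Lemma \ref{lem:emp_pca_discarded_class_risk}, once $c_0$ is small relative to $c_*$. That lemma then gives $\Rcal_{\text{approx}}(t) \asymp m^{1-\alpha}\log K$ uniformly on $[0,T]$. The other half, $\eps_{\text{PCA}} \leq c_0\pi_m$, makes the drift $a\pi_i(1-p_{i|i})$ in Lemma \ref{lem:empirical_pca_p_ii_dynamics} dominate the $C\eps_{\text{PCA}}$ perturbation for every $i \leq m$ while $p_{i|i} \leq 1/2$. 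Here we use $\pi_i \geq \pi_m$ and that $1-p_{i|i} \geq 1/2$ in this range. The $\sigma^2 M(t)$ term is handled as in Lemma \ref{lem:noisy_sequential_recovery}, via \eqref{eq:time_horizon_bounds}. The conclusions of Lemma \ref{lem:noisy_sequential_recovery} then transfer, giving $t_i \asymp (\log K)/\pi_i$ for $i \leq m$ within the horizon. We also get the tail lower bound $-\log p_{i|i} \gtrsim \log K$ for $\pi_i t \lesssim \log K$, together with the crude bound $-\log p_{i|i} \lesssim \log K$.

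For $\Rcal_{\text{opt}}$ we repeat the head/tail analysis of Appendix \ref{appendix:gmm_tail_risk}--\ref{appendix:gmm_head_risk}, restricted to $i \leq m$.

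The tail classes give two bounds. For the upper bound, $(\log K)\sum_{k_-^*(t) < i \leq m}\pi_i \lesssim (\log K)^{2-1/\alpha}t^{-(1-1/\alpha)}$. For the lower bound, $(\log K)\sum_{k_+^*(2t) \leq i \leq m}\pi_i$ matches this up to constants only while $k_+^*(2t) \leq m/2$, that is, $t \lesssim m^\alpha\log K$. In the complementary range the power-law term is already $\lesssim m^{1-\alpha}\log K$. The lower bound on the full risk then comes from $\Rcal_{\text{approx}}$ alone, so the claimed sum is still matched.

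The head classes use the modified post-recovery bound \eqref{eq:emp_pca_head_risk_term}. Its first piece reproduces the unsketched head estimate, which is of lower order than the tail as in the $\alpha>1$ case. Its second piece sums to $m\eps_{\text{PCA}} \lesssim m^{1-\alpha}$ by \eqref{eq:emp_pca_head_risk_err_term}, and this is absorbed into $\Rcal_{\text{approx}}$. A head lower bound is not needed, because the upper bound is dominated.

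In the early phase $t \lesssim \log K$, we get $\log K$ from two sources: the tail lower bound over a constant fraction of the prior mass, and the uniform $O(\log K)$ upper bound on every class.

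The main obstacle is the post-recovery step. The perturbation $\eps_{\text{PCA}}/(1-p_{i|i})$ could push $p_{i|i}$ back below $1/2$, and only an upper bound on $1-p_{i|i}$ was derived. We will check that at $p_{i|i}=1/2$ the drift $a\pi_i/4 - C\sigma^2M - C\eps_{\text{PCA}}$ stays positive. This follows from $\eps_{\text{PCA}} \leq c_0\pi_m \leq c_0\pi_i$ together with $\pi_i \gtrsim \sigma\log K$ for recovered classes, taking $c_0$ and $c_*$ small. We will also confirm that all constants stay uniform in $i$ and $t \leq T$.
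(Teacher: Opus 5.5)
Your proposal is correct and matches the paper's proof: you use the same split into $\Rcal_{\text{opt}}$ and $\Rcal_{\text{approx}}$, Lemma~\ref{lem:emp_pca_discarded_class_risk} for the discarded classes, sequential recovery and post-recovery stability under $\eps_{\text{PCA}}\le c_0\pi_m$, and absorb the head correction $m\eps_{\text{PCA}}\lesssim m^{1-\alpha}$ into the approximation error. Your handling of $t\gtrsim m^{\alpha}\log K$ through $\Rcal_{\text{approx}}$ alone is slightly cleaner than the paper's separate $m/t$ bound, and there is one harmless slip: since $\sigma=c_*/T$, the condition becomes $\eps_{\text{PCA}}\le c_0c_*\log K/T$, not $(c_0/c_*)\log K/T$.
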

\begin{proof}
    As in the previous subsection, let $\Rcal_{\text{opt}} := -\sum_{i=1}^m \pi_i \E_{\xbs|y=i}[\log p_i(\widehat{\Sbs}\xbs)]$. For each fixed time $t \in [0,T]$, we can split this into head and tail risks using the sequential recovery times from Lemma \ref{lem:noisy_sequential_recovery}, which we argued carry over to the empirical PCA case when $\eps_{\text{PCA}} \leq c_0 \pi_m$. Hence, we obtain the same tail risk as in Appendix \ref{appendix:gmm_tail_risk}. As for the head risk, it is upper-bounded by the same expression as in Appendix \ref{appendix:gmm_head_risk}, plus an additional $m^{1-\alpha}$ term due to \eqref{eq:emp_pca_head_risk_term} and \eqref{eq:emp_pca_head_risk_err_term}. Hence, for some $c_1, c_2 \in (0,1)$ and all $t \in [0,T]$,
    \begin{equation}
        \Rcal_{\text{opt}}(t) \asymp
        \begin{cases}
            \log K, & 0 \leq t \lesssim \log K, \\
            (\log K)^{2-1/\alpha} \cdot t^{-(1-1/\alpha)} + O(m^{1-\alpha}), & \log K \leq t \leq c_1 m^{\alpha} \log K
        \end{cases}
    \end{equation}
    and $\Rcal_{\text{opt}}(t) \lesssim m/t + m^{1-\alpha}$ for $(1+c_2)m^{\alpha} \log K \leq t \leq T$. Combining this with Lemma \ref{lem:emp_pca_discarded_class_risk}, whose condition is satisfied when \eqref{eq:eps_pca_condition} holds, gives the result.
\end{proof}

We now derive a sufficient condition on the sample size $N$ to ensure that \eqref{eq:eps_pca_condition} holds with high-probability. We bound $\eps_{\text{PCA}}$ using the Davis-Kahan Theorem \cite{davis1970rotation}. Specifically, we employ the operator norm version that captures distances and angles between eigenspaces via the distance between projection matrices \cite[Lemma 2.5, Theorem 2.7]{chen2021spectral}:
\begin{equation}\label{eq:davis_kahan_bound}
    \eps_{\text{PCA}} = \norm{\widehat{\Pbs}_m-\Pbs_m} \lesssim \norm{\sin \Theta(\widehat{\Vbs}_m, \Vbs_m)} \lesssim \frac{\norm{\widehat{\Gbs} - \Gbs}}{\pi_m - \pi_{m+1}}.
\end{equation}
To bound the numerator, we relate the second moment matrices $\widehat{\Gbs}$ and $\Gbs$ to the corresponding empirical and population covariance matrices $\widehat{\Sigmabs}$ and $\Sigmabs$, respectively. Letting $\mubs = \E[\xbs]$ and $\widehat{\mubs} = \frac{1}{N}\sum_{n=1}^N \xbs_n$, note that
\begin{equation}
    \widehat{\Gbs} - \Gbs = \left[\frac{1}{N}\sum_{n=1}^N (\xbs_n - \mubs)(\xbs_n - \mubs)^{\top} - \Sigmabs\right] + \mubs(\widehat{\mubs} - \mubs)^{\top} + (\widehat{\mubs} - \mubs)\mubs^{\top}.
\end{equation}
Since $||\mubs|| \leq 1$, we have
\begin{equation}
    \norm{\widehat{\Gbs}-\Gbs} \leq \norm*{\frac{1}{N}\sum_{n=1}^N (\xbs_n - \mubs)(\xbs_n - \mubs)^{\top} - \Sigmabs} + 2\norm{\hat{\mubs}-\mubs}. 
\end{equation}
Now, $\xbs - \mubs$ is a sub-Gaussian vector with sub-Gaussian norm $O(1+\sigma)$. Hence, with probability at least $1-\delta/2$,
\begin{equation}
    \norm*{\frac{1}{N}\sum_{n=1}^N (\xbs_n - \mubs)(\xbs_n - \mubs)^{\top} - \Sigmabs} \lesssim (1+\sigma^2) \left[\sqrt{\frac{d + \log(2/\delta)}{N}} + \frac{d+\log(2/\delta)}{N}\right].
\end{equation}
Similarly, with probability at least $1-\delta/2$,
\begin{equation}
    ||\widehat{\mubs}-\mubs|| \lesssim \sqrt{1+\sigma^2} \sqrt{\frac{d+\log(2/\delta)}{N}}.
\end{equation}
Therefore, whenever $N \geq d + \log(2/\delta)$, a union bound gives
\begin{equation}
    \norm{\widehat{\Gbs}-\Gbs} \lesssim (1+\sigma^2) \sqrt{\frac{d+\log(2/\delta)}{N}}
\end{equation}
with probability at least $1-\delta$.

As for the denominator in the RHS of \eqref{eq:davis_kahan_bound},
\begin{equation}
    \pi_m - \pi_{m+1} \asymp m^{-\alpha} - (m+1)^{-\alpha} = \alpha \xi^{-(\alpha+1)}
\end{equation}
for some $\xi \in [m,m+1]$ by the Mean Value Theorem. Therefore, on the high-probability event,
\begin{equation}
    \eps_{\text{PCA}} \lesssim (1+\sigma^2)m^{\alpha+1}\sqrt{\frac{d+\log(2\delta)}{N}}.
\end{equation}
Therefore, a sufficient condition for \eqref{eq:eps_pca_condition} to hold with probability at least $1-\delta$ is
\begin{equation}
    N \gtrsim (1+\sigma^2)^2 (d + \log(2/\delta)) m^{2\alpha+2} \max\left\{m^{2\alpha}, \frac{1}{\sigma^2 (\log K)^2}\right\}.
\end{equation}

This proves Theorem \ref{thm:gmm_sketch}.

\section{Proofs for Online SGD}\label{appendix:proofs_online_sgd}
In this section, we discretize the continuous-time dynamics and work with i.i.d.\@ samples from $p^*$, i.e., the target GMM \eqref{eq:target_gmm}. We work in a general framework that will cover both the sketched and unsketched cases.

Let $\Sbs$ denote either the identity $\Ibs_d$ or a sketching matrix in $\R^{m \times d}$. In either case, $||\Sbs||_{\text{op}} \leq 1$. Write $\ubs = \Sbs \xbs$, and let each weight $\wbs_i$ have the same dimension as $\ubs$. To make the dependence on the weights $\Wbs$ explicit, we use the notation
\begin{equation}
    p_i(\Wbs; \ubs) = \frac{\exp(\inner{\wbs_i, \ubs})}{\sum_{k=1}^K \exp(\inner{\wbs_k, \ubs})}, \quad \ell(\Wbs; \ubs, y) = -\log p_y(\Wbs; \ubs)
\end{equation}
for the softmax probability of each class and the loss on a given instance $(\ubs,y)$, respectively. The population gradient flow trajectory, the population gradient descent iterates, and the online SGD iterates are denoted respectively by
\begin{align}
    &\dot{\Wbs}(t) = -\nabla \Rcal(\Wbs(t)), \quad &\Wbs(0) = \zerobs, \\
    &\widetilde{\Wbs}^{(n+1)} = \widetilde{\Wbs}^{(n)} - \eta \nabla \Rcal(\widetilde{\Wbs}^{(n)}), \quad &\widetilde{\Wbs}^{(0)} = \zerobs, \\
    &\Wbs^{(n+1)} = \Wbs^{(n)} - \eta \nabla \ell(\Wbs^{(n)}, \ubs^{(n+1)}, y^{(n+1)}), \quad &\Wbs^{(0)} = \zerobs,
\end{align}
where each $(\ubs^{(n)}, y^{(n)})$ arises from an i.i.d.\@ sample $(\xbs^{(n)}, y^{(n)})$ from $p^*$, with $\ubs^{(n)} = \Sbs \xbs^{(n)}$. To go between the continuous and discrete-time optimization, we use the effective optimization time
\begin{equation}
    t_n = \eta n.
\end{equation}
Further define the second-moment scales,
\begin{equation}\label{eq:L_M_def}
    L = \norm{\E[\ubs \ubs^{\top}]}_{\text{op}}, \quad M = \E||\ubs||^2 = \tr(\E[\ubs \ubs^{\top}]),
\end{equation}
where both expectations are taken conditional on $\Sbs$, which can be random but is assumed fixed. The constant $L$ controls smoothness and factors the discretization error, while $M$ controls the variance of the stochastic gradients.

\subsection{Risk and Gradient Descent Properties}\label{appendix:risk_and_gd_properties}

\begin{lemma}\label{lem:convex_smooth_risk}
    The risk $\Rcal$ is convex and $L$-smooth in the weights $\Wbs$. Moreover, 
    \begin{equation}\label{eq:risk_grad_at_zero}
        \norm{\nabla \Rcal(0)}_F^2 \leq \left(1 - \frac{1}{K}\right)\sum_{i=1}^K \pi_i^2.
    \end{equation}
\end{lemma}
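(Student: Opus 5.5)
The plan is to use the softmax--cross-entropy structure of the per-sample loss and compute everything in terms of the logits. For a fixed input $(\ubs,y)$, the loss $\ell(\Wbs;\ubs,y) = -\inner{\wbs_y,\ubs} + \log\sum_{k} e^{\inner{\wbs_k,\ubs}}$ is the composition of the linear map $\Wbs \mapsto \zbs := \Wbs^{\top}\ubs \in \R^K$ with the function $\phi_y(\zbs) = -z_y + \operatorname{logsumexp}(\zbs)$. This function is convex, since log-sum-exp is convex and $-z_y$ is linear. A convex function composed with a linear map is convex, and taking the expectation over $(\ubs,y)$ preserves convexity, so $\Rcal$ is convex.

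For smoothness, I would compute the Hessian of $\phi_y$, namely $\nabla^2\phi_y(\zbs) = \mathrm{diag}(\pbs) - \pbs\pbs^{\top}$ with $\pbs$ the softmax vector. Its operator norm is at most $1$: for any $\vbs$, $\vbs^{\top}(\mathrm{diag}(\pbs)-\pbs\pbs^{\top})\vbs = \mathrm{Var}_{k\sim\pbs}(v_k) \le \sum_k p_k v_k^2 \le \norm{\vbs}^2$. (One could sharpen this to $1/2$, but that is not needed.) By the chain rule, for a direction $\Vbs \in \R^{\dim \ubs \times K}$,
\[
\nabla^2\ell[\Vbs,\Vbs] = (\Vbs^{\top}\ubs)^{\top}\nabla^2\phi_y\,(\Vbs^{\top}\ubs) \le \norm{\Vbs^{\top}\ubs}^2 = \sum_k \inner{\vbs_k,\ubs}^2 .
\]
Taking expectations gives $\nabla^2\Rcal[\Vbs,\Vbs] \le \sum_k \vbs_k^{\top}\E[\ubs\ubs^{\top}]\vbs_k \le L\norm{\Vbs}_F^2$. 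Exchanging differentiation and expectation is justified by dominated convergence, because the Gaussian moments of $\ubs$ are finite and the integrand derivatives are bounded by $\norm{\ubs}^2\norm{\Vbs}_F^2$. Combined with convexity, this Hessian bound yields $L$-smoothness.

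For the gradient at zero, every softmax output equals $1/K$ at $\Wbs = 0$. Hence
\[
\nabla_{\wbs_i}\Rcal(0) = \E[(1/K - \1\{y=i\})\ubs] = \tfrac1K\E[\ubs] - \pi_i\,\E[\ubs\mid y=i].
\]
The conditional means are $\E[\ubs\mid y=j] = \Sbs\mubs_j$, and the Gaussian noise has mean zero. Writing $\abs_j := \Sbs\mubs_j$, this gives $\nabla_{\wbs_i}\Rcal(0) = \sum_j \pi_j(1/K - \1\{i=j\})\abs_j$. Summing the squared norms over $i$,
\[
\norm{\nabla\Rcal(0)}_F^2 = \sum_{j,j'}\pi_j\pi_{j'}\inner{\abs_j,\abs_{j'}}\sum_i (1/K-\1\{i=j\})(1/K-\1\{i=j'\}).
\]
The inner sum equals $\1\{j=j'\} - 1/K$. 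This reduces the expression to the quadratic form $\bbs^{\top}\Abs\bbs$ with $b_j = \pi_j$ and $\Abs = \Gbs_{\abs}^{1/2}(\Ibs - \tfrac1K\onebs\onebs^{\top})\Gbs_{\abs}^{1/2}$, where $\Gbs_{\abs}$ is the Gram matrix of the $\abs_j$. Equivalently, the quantity is $\norm{\sum_j \pi_j(\ebs_j - \tfrac1K\onebs)\otimes \abs_j}^2$. Since $\norm{\Sbs}_{\text{op}}\le 1$ and the $\mubs_j$ are orthonormal, $\Gbs_{\abs} = \Mbs^{\top}\Sbs^{\top}\Sbs\Mbs \preceq \Ibs$. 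Therefore the expression is at most $\bbs^{\top}(\Ibs-\tfrac1K\onebs\onebs^{\top})\bbs$, using that the matrix $\Ibs_K - \tfrac1K\onebs\onebs^{\top}$ tensored with $\Gbs_{\abs}$ is dominated by the same matrix tensored with $\Ibs$.

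This last step is where I expect the main obstacle. The reduction gives the bound $\sum_j\pi_j^2 - \tfrac1K(\sum_j\pi_j)^2 = \sum_j\pi_j^2 - 1/K$, which is not literally $(1-1/K)\sum_j\pi_j^2$. I would recheck the exact form. By Cauchy--Schwarz, $\sum_j\pi_j^2 \ge 1/K$, so the obvious bound is $\sum\pi_j^2 - 1/K \le \sum\pi_j^2$. The stated factor $(1-1/K)$ likely comes from dropping the cross terms differently, bounding each column separately as $\norm{\pi_i\abs_i - \tfrac1K\sum_j\pi_j\abs_j}^2$. I would verify which grouping gives $(1-1/K)\sum\pi_j^2$. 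If the paper's constant cannot be matched exactly, I would prove the bound $\sum_j\pi_j^2$, which is what is used downstream up to constants.
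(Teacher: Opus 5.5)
Your convexity and smoothness argument is correct and essentially matches the paper, which bounds the same Hessian quadratic form by $\E\norm{\Vbs^{\top}\ubs}^2\le L\norm{\Vbs}_F^2$; you just make the constant explicit. The gap is in the gradient-at-zero bound. Your formula $\nabla_{\wbs_i}\Rcal(0)=\sum_j\pi_j(1/K-\1\{i=j\})\Sbs\mubs_j$ is right, and so is the inner sum $\1\{j=j'\}-1/K$. Hence
\[
\norm{\nabla\Rcal(0)}_F^2=\sum_{j,j'}\pi_j\pi_{j'}(\Gbs_{\abs})_{jj'}\Big(\1\{j=j'\}-\frac1K\Big).
\]
This is the quadratic form of the entrywise (Hadamard) product $\Gbs_{\abs}\circ(\Ibs-\frac1K\mathbf 1\mathbf 1^{\top})$, not of $\Gbs_{\abs}^{1/2}(\Ibs-\frac1K\mathbf 1\mathbf 1^{\top})\Gbs_{\abs}^{1/2}$. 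Your tensor domination step is fine. However, evaluating $(\Ibs-\frac1K\mathbf 1\mathbf 1^{\top})\otimes\Ibs$ at the vector $\sum_j\pi_j\ebs_j\otimes\ebs_j$ picks up only the diagonal entries; it does not give $\bbs^{\top}(\Ibs-\frac1K\mathbf 1\mathbf 1^{\top})\bbs$. The resulting claim $\norm{\nabla\Rcal(0)}_F^2\le\sum_j\pi_j^2-1/K$ is false, not just a different constant. For uniform priors in the unsketched case it says the gradient at zero vanishes, but in fact $\norm{\nabla\Rcal(0)}_F^2=(1-1/K)/K>0$. Your fallback bound $\sum_j\pi_j^2$ was to be derived from this claim, so as written it is unsupported too.

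Evaluated correctly, your own tensor argument gives exactly the stated constant. Let $\Mbs=[\mubs_1,\dots,\mubs_K]$ and $\Bbs=\Sbs\Mbs$, so $\norm{\Bbs}_{\mathrm{op}}\le 1$. Up to sign, the gradient is $(\Ibs_K\otimes\Bbs)\hbs$ with $\hbs=\sum_j\pi_j(\ebs_j-\frac1K\mathbf 1)\otimes\ebs_j$. Hence
\[
\norm{\nabla\Rcal(0)}_F^2\le\norm{\hbs}^2=\sum_{j,j'}\pi_j\pi_{j'}\Big(\1\{j=j'\}-\frac1K\Big)\1\{j=j'\}=\Big(1-\frac1K\Big)\sum_j\pi_j^2,
\]
with equality when $\Sbs=\Ibs_d$. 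The paper takes the column-by-column route you suspected. It drops $\Sbs$ from each column via $\norm{\Sbs\vbs}\le\norm{\vbs}$. It then uses orthonormality to expand $\norm{-\pi_i\mubs_i+\frac1K\sum_k\pi_k\mubs_k}^2=\pi_i^2(1-\frac2K)+\frac{1}{K^2}\sum_k\pi_k^2$, and sums over $i$. Either way, nothing in the statement needs to be weakened.
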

\begin{proof}
We have
\begin{equation}\label{eq:risk_gradient_w}
    \nabla_{\wbs_i} \Rcal(\Wbs) = -\pi_i \Sbs \mubs_i + \E_{\ubs}[p_i(\ubs)\ubs]
\end{equation}
and
\begin{equation}
    \nabla_{\wbs_j}\big[\nabla_{\wbs_i} \Rcal(\Wbs)\big] = \E_{\ubs}\big[p_i(\ubs)\big(\1\{i=j\}-p_j(\ubs)\big)\ubs \ubs^{\top}\big].
\end{equation}
Therefore, for any $\Vbs$ with the same size as $\Wbs$, we have
\begin{equation}
    \nabla^2 \Rcal(\Wbs)[\Vbs,\Vbs] = \sum_{i=1}^K \sum_{j=1}^K \E_{\ubs}\big[p_i(\ubs)\big(\1\{i=j\}-p_j(\ubs)\big)\inner{\vbs_i,\ubs}\inner{\vbs_j,\ubs}\big].
\end{equation}
Now, if we let $\abs = \Vbs^{\top} \ubs \in \R^K$ and $\pbs = (p_1(\ubs),\dots,p_k(\ubs))$, we have
\begin{equation}
    \big|\nabla^2 \Rcal(\Wbs)[\Vbs,\Vbs]\big| = \E\big[|\abs^{\top}(\mathrm{diag}(\pbs) - \pbs\pbs^{\top})\abs|\big] \lesssim \E[||\abs||^2].
\end{equation}
Now, we can expand
\begin{equation}
    \E[\norm{\abs}^2] = \E_{\ubs}\left[\inner{\Vbs^{\top} \ubs, \Vbs^{\top} \ubs}\right] = \tr\left(\Vbs^{\top} \E[\ubs \ubs^{\top}] \Vbs\right) \leq ||\E[\ubs \ubs^{\top}|| \, ||\Vbs||_{F}^2 = L ||\Vbs||_F^2.
\end{equation}
Hence, $\Rcal$ is convex and $\nabla R(\Wbs)$ is Lipschitz with constant $L$.

Moreover, from \eqref{eq:risk_gradient_w}, we have
\begin{equation}
    \nabla_{\wbs_i} \Rcal(0) = \Sbs \left(-\pi_i \mubs_i + \frac{1}{K} \sum_{k=1}^K \pi_k \mubs_k\right).
\end{equation}
Then,
\begin{equation}
    ||\nabla \Rcal(0)||_F^2 \leq \sum_{i=1}^K \norm*{-\pi_i \mubs_i + \frac{1}{K}\sum_{k=1}^K \pi_k \mubs_k}^2 = \sum_{i=1}^K \pi_i^2 - \frac{2}{K} \sum_{i=1}^K \pi_i^2 + \frac{1}{K} \sum_{k=1}^K \pi_k^2 = \left(1-\frac{1}{K}\right)\sum_{i=1}^K \pi_i^2,
\end{equation}
which in particular is at most one.
\end{proof}

\begin{lemma}[Non-Expansiveness of a Population GD Step]\label{lem:non_expansive_gd_step}
    For $0 < \eta \leq 2/L$, the map
    \begin{equation}
        U_{\eta}(\Wbs) = \Wbs - \eta \nabla \Rcal(\Wbs)
    \end{equation}
    is non-expansive with respect to the Frobenius norm, i.e.,
    \begin{equation}
        \norm{U_{\eta}(\Abs) - U_{\eta}(\Bbs)}_F \leq \norm{\Abs - \Bbs}_F.
    \end{equation}
    Moreover, the gradient norm is non-increasing along both population gradient flow and population gradient descent.
\end{lemma}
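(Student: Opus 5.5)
The first claim follows from the standard co-coercivity property of gradients of convex, smooth functions. By Lemma \ref{lem:convex_smooth_risk}, $\Rcal$ is convex and $L$-smooth on the weight space with the Frobenius inner product. The Baillon--Haddad theorem therefore gives, for all $\Abs,\Bbs$,
\begin{equation*}
    \inner{\nabla \Rcal(\Abs)-\nabla\Rcal(\Bbs), \Abs-\Bbs}_F \geq \frac{1}{L}\norm{\nabla\Rcal(\Abs)-\nabla\Rcal(\Bbs)}_F^2 .
\end{equation*}
Expand the squared distance between the images:
\begin{equation*}
    \norm{U_\eta(\Abs)-U_\eta(\Bbs)}_F^2 = \norm{\Abs-\Bbs}_F^2 - 2\eta\inner{\nabla \Rcal(\Abs)-\nabla\Rcal(\Bbs), \Abs-\Bbs}_F + \eta^2\norm{\nabla \Rcal(\Abs)-\nabla\Rcal(\Bbs)}_F^2 .
\end{equation*}
Apply co-coercivity to the middle term. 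This bounds the right-hand side by $\norm{\Abs-\Bbs}_F^2 - \eta(2/L-\eta)\norm{\nabla \Rcal(\Abs)-\nabla\Rcal(\Bbs)}_F^2$, which is at most $\norm{\Abs-\Bbs}_F^2$ whenever $\eta\le 2/L$.

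\emph{Gradient norm along population GD.} Write $\gbs(\Wbs):=\nabla\Rcal(\Wbs)$, so that $\widetilde{\Wbs}^{(n+1)} = U_\eta(\widetilde{\Wbs}^{(n)})$. We have
\begin{equation*}
    \gbs(\widetilde{\Wbs}^{(n+1)}) = \frac{1}{\eta}\bigl(\widetilde{\Wbs}^{(n+1)}-\widetilde{\Wbs}^{(n+2)}\bigr) = \frac{1}{\eta}\bigl(U_\eta(\widetilde{\Wbs}^{(n)})-U_\eta(\widetilde{\Wbs}^{(n+1)})\bigr).
\end{equation*}
Non-expansiveness bounds the Frobenius norm of this by $\frac{1}{\eta}\norm{\widetilde{\Wbs}^{(n)}-\widetilde{\Wbs}^{(n+1)}}_F = \norm{\gbs(\widetilde{\Wbs}^{(n)})}_F$. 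So the gradient norm is non-increasing along population GD.

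\emph{Gradient norm along population gradient flow.} Differentiate $\tfrac12\norm{\nabla\Rcal(\Wbs(t))}_F^2$ in time. Since $\Rcal$ is $C^2$, the derivative is $-\nabla^2\Rcal(\Wbs)[\nabla\Rcal(\Wbs),\nabla\Rcal(\Wbs)]$. Convexity makes the Hessian positive semidefinite, as shown in the proof of Lemma \ref{lem:convex_smooth_risk}, so this derivative is $\le 0$. This step needs $\Rcal$ to be twice differentiable, which holds because the Hessian formula in that proof is finite: the Gaussian mixture has finite second moments and the softmax is smooth. A fallback that avoids the Hessian uses monotonicity of $\nabla\Rcal$ with $\Abs=\Wbs(t+h)$, $\Bbs=\Wbs(t)$, divides by $h^2$, and lets $h\to0$.

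There is no genuine obstacle. The only point needing care is that smoothness and co-coercivity are applied with the Frobenius inner product on the matrix space; the Hessian bound in Lemma \ref{lem:convex_smooth_risk} is already phrased in exactly that form.
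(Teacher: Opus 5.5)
Your proof is correct. The gradient-flow part is the same as the paper's, but the non-expansiveness and gradient-descent parts take a different route.

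The paper works with the integrated Hessian. It writes $U_\eta(\Abs)-U_\eta(\Bbs)=\bigl(\Ibs-\eta\int_0^1\nabla^2\Rcal(\Bbs+s(\Abs-\Bbs))\,\mathrm{d}s\bigr)(\Abs-\Bbs)$. It then uses $0\preceq\nabla^2\Rcal\preceq L\,\Ibs$ from Lemma~\ref{lem:convex_smooth_risk} to bound the operator norm of the bracket by one. For gradient descent, it applies the same mean-value trick a second time, to $\nabla\Rcal$ along the segment from $\Wbs$ to $U_\eta(\Wbs)$.

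You replace this with Baillon--Haddad co-coercivity, which buys three things:
\begin{itemize}
\item It needs only convexity and an $L$-Lipschitz gradient, with no second derivatives.
\item It gives the explicit slack $\eta(2/L-\eta)\norm{\nabla\Rcal(\Abs)-\nabla\Rcal(\Bbs)}_F^2$.
\item Gradient-descent monotonicity becomes a free corollary of non-expansiveness, via the identity $\eta\nabla\Rcal(U_\eta(\Wbs))=U_\eta(\Wbs)-U_\eta(U_\eta(\Wbs))$, rather than a separate computation.
\end{itemize}

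The cost is that you import Baillon--Haddad as a black box, and its proof is itself a nontrivial descent-lemma argument. The paper's route is self-contained given the Hessian bounds it has already computed, and it handles all three claims with one tool. Since $\Rcal$ is $C^2$ here, nothing is lost either way.

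Your fallback for the flow also works without $C^2$, provided you take the limit only at the almost-every $t$ where the Lipschitz map $t\mapsto\nabla\Rcal(\Wbs(t))$ is differentiable.
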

\begin{proof}
    We have
    \begin{equation}
        \begin{split}
            U_{\eta}(\Abs) - U_{\eta}(\Bbs) &= (\Abs - \eta \nabla \Rcal(\Abs)) - (\Bbs - \eta \nabla \Rcal(\Bbs)) \\
            &= \left(\Ibs - \eta \int_0^1 \nabla^2 \Rcal(\Bbs + s(\Abs - \Bbs)) \dee s\right)(\Abs - \Bbs),
        \end{split}
    \end{equation}
    where we have used, by the Fundamental Theorem of Calculus,
    \begin{equation}
        \nabla \Rcal(\Abs) - \nabla \Rcal(\Bbs) = \left(\int_0^1 \nabla^2 \Rcal(\Bbs + s(\Abs -\Bbs)) \dee s\right)(\Abs - \Bbs).
    \end{equation}
    By Lemma \ref{lem:convex_smooth_risk}, $I - \eta \int_0^1 \nabla^2 \Rcal(\cdot) \dee s$ has operator norm at most one when $\eta \leq 2/L$, which establishes non-expansiveness.

    Along population gradient flow,
    \begin{equation}
        \frac{\dee}{\dee t} \left[\frac{1}{2} \norm{\nabla \Rcal(\Wbs(t))}_F^2\right] = -\nabla^2 \Rcal(\Wbs(t))[\nabla \Rcal(\Wbs(t)), \nabla \Rcal(\Wbs(t))] \leq 0.
    \end{equation}
    For population gradient descent,
    \begin{equation}
        \nabla \Rcal(U_{\eta}(\Wbs)) = \left(\Ibs - \eta \int_0^1 \nabla^2 \Rcal(\Wbs - s\eta \nabla \Rcal(\Wbs) \dee s\right) \nabla \Rcal(\Wbs),
    \end{equation}
    where the operator norm of the matrix multiplying $\nabla \Rcal(\Wbs)$ is at most one due to the $\eta \leq 2/L$ condition.
\end{proof}

\subsection{From Gradient Flow to Gradient Descent}\label{appendix:gf_to_gd}

\begin{lemma}[Discretization Error]\label{lem:discretization_error}
    Assume $0 < \eta \leq 1/L$. Fix $T > 0$ and an integer $N$ with $\eta N \leq T$. Then,
    \begin{equation}\label{eq:weight_discretization_gap}
        \max_{0 \leq n \leq N} \norm{\widetilde{\Wbs}^{(n)} - \Wbs(t_n)}_F \leq \frac{1}{2}L\eta T.
    \end{equation}
    Consequently,
    \begin{equation}\label{eq:risk_discretization_gap}
        \max_{0 \leq n \leq N} \absolute{\Rcal(\widetilde{\Wbs}^{(n)})- \Rcal(\Wbs(t_n))} \leq \frac{1}{2} L\eta T + \frac{1}{8} L^3 \eta^2 T^2.
    \end{equation}
    In particular, the right-hand side is at most a universal constant times $\eta LT (1+LT)$.
\end{lemma}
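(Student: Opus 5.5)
We compare the gradient descent iterates to gradient flow sampled at the grid times through a one-step local truncation error, and propagate it with the non-expansiveness of Lemma~\ref{lem:non_expansive_gd_step}. Write $e_n := \norm{\widetilde{\Wbs}^{(n)} - \Wbs(t_n)}_F$, so $e_0 = 0$. We split
\begin{equation}
    \widetilde{\Wbs}^{(n+1)} - \Wbs(t_{n+1}) = \big[U_\eta(\widetilde{\Wbs}^{(n)}) - U_\eta(\Wbs(t_n))\big] + \big[U_\eta(\Wbs(t_n)) - \Wbs(t_{n+1})\big].
\end{equation}
Since $\eta \leq 1/L \leq 2/L$, Lemma~\ref{lem:non_expansive_gd_step} bounds the first bracket by $e_n$. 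The second bracket is the local error of one explicit Euler step:
\begin{equation}
    \Wbs(t_{n+1}) - U_\eta(\Wbs(t_n)) = -\int_{t_n}^{t_{n+1}} \big(\nabla\Rcal(\Wbs(s)) - \nabla\Rcal(\Wbs(t_n))\big)\dee s .
\end{equation}

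For the local error we use $L$-smoothness (Lemma~\ref{lem:convex_smooth_risk}):
\begin{equation}
    \norm{\nabla\Rcal(\Wbs(s)) - \nabla\Rcal(\Wbs(t_n))}_F \leq L\norm{\Wbs(s)-\Wbs(t_n)}_F \leq L(s-t_n)\sup_r\norm{\nabla\Rcal(\Wbs(r))}_F .
\end{equation}
By Lemma~\ref{lem:non_expansive_gd_step}, the gradient norm is non-increasing along the flow. Together with \eqref{eq:risk_grad_at_zero}, this gives $\norm{\nabla \Rcal(\Wbs(r))}_F \leq \norm{\nabla\Rcal(0)}_F \leq 1$. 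Integrating in $s$ gives a local error of at most $\tfrac12 L\eta^2$. The recursion is therefore $e_{n+1} \leq e_n + \tfrac12 L\eta^2$, so $e_n \leq \tfrac12 L\eta^2 n \leq \tfrac12 L\eta T$ for $n \leq N$. This proves \eqref{eq:weight_discretization_gap}.

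For the risk gap \eqref{eq:risk_discretization_gap}, we apply the descent-lemma form of $L$-smoothness around $\Wbs(t_n)$, with $\Delta := \widetilde{\Wbs}^{(n)} - \Wbs(t_n)$:
\begin{equation}
    |\Rcal(\Wbs(t_n)+\Delta) - \Rcal(\Wbs(t_n))| \leq \norm{\nabla\Rcal(\Wbs(t_n))}_F\norm{\Delta}_F + \tfrac{L}{2}\norm{\Delta}_F^2 .
\end{equation}
We then substitute the gradient bound $\leq 1$ and $\norm{\Delta}_F \leq \tfrac12 L\eta T$. This yields $\tfrac12 L\eta T + \tfrac{L}{2}\cdot\tfrac14 L^2\eta^2T^2 = \tfrac12 L\eta T + \tfrac18 L^3\eta^2T^2$, which is exactly the stated bound.

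The final claim, that the right-hand side is at most a universal constant times $\eta LT(1+LT)$, follows from $\eta L \leq 1$: this gives $L^3\eta^2T^2 = (\eta L)(LT)^2\cdot \eta L \leq \eta L\,(LT)^2 \cdot 1$, which is absorbed into $\eta LT\cdot LT$ times a constant.

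The only step needing care is the uniform gradient bound along the flow. Monotonicity of the gradient norm from Lemma~\ref{lem:non_expansive_gd_step}, combined with \eqref{eq:risk_grad_at_zero}, settles it. Everything else is a routine Euler-method argument that uses only convexity and smoothness. No Gronwall factor arises, because the step map is non-expansive rather than merely Lipschitz.
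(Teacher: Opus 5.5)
Your proof is the same as the paper's: you bound the Euler residual by $\tfrac12 L\eta^2$ via $L$-smoothness, monotonicity of $\|\nabla\Rcal(\Wbs(t))\|_F$ along the flow and $\|\nabla\Rcal(0)\|_F\le 1$; you propagate it through the non-expansive map $U_\eta$; and you get the risk gap from the two-sided smoothness bound. One small slip in the final remark: $(\eta L)(LT)^2\,\eta L=\eta^2L^4T^2\neq L^3\eta^2T^2$, and $\eta L(LT)^2$ is absorbed into $\eta LT\cdot LT$ only when $L\lesssim 1$. The correct one-liner is $L^3\eta^2T^2=(\eta L)(\eta LT)(LT)\le \eta LT\cdot LT$ by $\eta L\le 1$, so the right-hand side is at most $\tfrac12\eta LT(1+LT)$.
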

\begin{proof}
    We may write
    \begin{equation}
        \Wbs(t_{n+1}) = U_{\eta}(\Wbs(t_n)) + \tau_n, \quad \text{where} \quad \tau_n = \int_{t_n}^{t_{n+1}} \nabla \Rcal(\Wbs(t_n)) - \nabla \Rcal(\Wbs(s)) \dee s.
    \end{equation}
    By $L$-smoothness (Lemma \ref{lem:convex_smooth_risk}) and monotonicity of the gradient norm along gradient flow (Lemma \ref{lem:non_expansive_gd_step}), we have
    \begin{equation}
        \begin{split}
            \norm{\nabla \Rcal(\Wbs(t_n)) - \nabla \Rcal(\Wbs(s))}_F &\leq L\norm{\Wbs(t_n) - \Wbs(s)}_F \\
            &\leq L\int_{t_n}^s \norm{\nabla \Rcal(\Wbs(r))}_F \dee r \\
            &\leq L(s-t_n) ||\nabla \Rcal(\Wbs(0))||_F \\
            &\leq L(s-t_n),
        \end{split}
    \end{equation}
    where we have used Lemma \ref{lem:convex_smooth_risk} again in the last inequality. Therefore,
    \begin{equation}
        ||\tau_n||_F \leq \int_{t_n}^{t_{n+1}} L(s-t_n) \dee s = \frac{1}{2}L\eta^2.
    \end{equation}
    Set $e_n = \norm{\widetilde{\Wbs}^{(n)} - \Wbs(t_n)}_F$. Then,
    \begin{equation}
        \begin{split}
            e_{n+1} &= \norm{\widetilde{\Wbs}^{(n+1)} - \Wbs(t_{n+1})}_F \\
            &= \norm*{U_{\eta}(\widetilde{\Wbs}^{(n)}) - \left(U_{\eta}(\Wbs(t_n)) + \tau_n\right)}_F \\
            &\leq ||\widetilde{\Wbs}^{(n)} - \Wbs(t_n)||_F + ||\tau_n||_F \\
            &\leq e_n + \frac{1}{2}L\eta^2, 
        \end{split}
    \end{equation}
    where the third line uses non-expansiveness. Since $e_0 = 0$, we obtain $e_n \leq n L\eta^2/2 \leq L\eta T/2$, proving \eqref{eq:weight_discretization_gap}.

    By convexity and $L$-smoothness of $\Rcal$ (Lemma \ref{lem:convex_smooth_risk}) and Cauchy-Schwarz, we have,
    \begin{equation}
        |\Rcal(\widetilde{\Wbs}^{(n)}) - \Rcal(\Wbs(t_n))| \leq ||\nabla \Rcal(\Wbs(t_n))||_F e_n + \frac{L}{2}e_n^2 \leq e_n + \frac{L}{2}e_n^2,
    \end{equation}
    where we recall that $||\nabla \Rcal(\Wbs(t_n))||_F \leq ||\nabla \Rcal(0)||_F \leq 1$. \eqref{eq:risk_discretization_gap} immediately follows.
\end{proof}

\subsection{From Population to Stochastic Gradient Descent}\label{appendix:gd_to_sgd}

We now compare online SGD to population GD. Let $\Fcal_n$ denote the $\sigma$-algebra generated by the first $n$ samples $(\ubs_i, y_i)_{i=1}^n$ used by online SGD. Define the stochastic gradient error
\begin{equation}
    \Dbs_{n+1} = \nabla \ell(\Wbs^{(n)}; \ubs^{(n+1)}, y^{(n+1)}) - \nabla \Rcal(\Wbs^{(n)}).
\end{equation}
Then, $\E[\Dbs_{n+1} \mid \Fcal_n] = 0$, and the online update becomes
\begin{equation}
    \Wbs^{(n+1)} = U_{\eta}(\Wbs^{(n)}) - \eta \Dbs_{n+1}.
\end{equation}

\begin{lemma}[Conditional Stochastic Gradient Variance]\label{lem:conditional_stoch_grad_variance}
    For every $n \geq 0$, 
    \begin{equation}
        \E\left[\norm{\Dbs_{n+1}}_F^2 \mid \Fcal_n\right] \leq 2M.
    \end{equation}
\end{lemma}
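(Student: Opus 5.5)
The bound follows from the variance being dominated by the second moment: I will bound $\E[\|\Dbs_{n+1}\|_F^2\mid\Fcal_n]$ by the conditional second moment of the single-sample gradient. Since $\Dbs_{n+1}$ is the centered version of $\nabla\ell(\Wbs^{(n)};\ubs,y)$ given $\Fcal_n$, and $\Wbs^{(n)}$ is $\Fcal_n$-measurable while the fresh sample $(\ubs^{(n+1)},y^{(n+1)})$ is independent of $\Fcal_n$, we have
\begin{equation*}
\E\left[\norm{\Dbs_{n+1}}_F^2 \mid \Fcal_n\right] = \E_{\ubs,y}\norm{\nabla \ell(\Wbs;\ubs,y)}_F^2 - \norm{\nabla\Rcal(\Wbs)}_F^2 \le \E_{\ubs,y}\norm{\nabla \ell(\Wbs;\ubs,y)}_F^2,
\end{equation*}
evaluated at the frozen value $\Wbs=\Wbs^{(n)}$. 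It therefore suffices to show $\norm{\nabla\ell(\Wbs;\ubs,y)}_F^2\le 2\norm{\ubs}^2$ pointwise for every $\Wbs$, $\ubs$, $y$.

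For the pointwise bound, differentiate $\ell(\Wbs;\ubs,y)=-\log p_y(\Wbs;\ubs)$ in $\wbs_i$:
\begin{equation*}
\nabla_{\wbs_i}\ell = \left(p_i(\Wbs;\ubs)-\1\{i=y\}\right)\ubs .
\end{equation*}
Hence $\nabla\ell=\ubs\,(\pbs-\ebs_y)^\top$ is rank one, and
\begin{equation*}
\norm{\nabla\ell}_F^2=\norm{\ubs}^2\norm{\pbs-\ebs_y}^2 .
\end{equation*}

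It remains to bound $\norm{\pbs-\ebs_y}^2$ for $\pbs$ in the probability simplex:
\begin{equation*}
\norm{\pbs-\ebs_y}^2=(1-p_y)^2+\sum_{k\ne y}p_k^2\le (1-p_y)^2+\Big(\sum_{k\neq y}p_k\Big)^2=2(1-p_y)^2\le 2 .
\end{equation*}

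Taking expectations gives
\begin{equation*}
\E\left[\norm{\Dbs_{n+1}}_F^2\mid\Fcal_n\right]\le 2\,\E\norm{\ubs}^2=2M,
\end{equation*}
with $M$ as defined in \eqref{eq:L_M_def}.

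The only step needing care is the independence/measurability justification in the first display: conditioning on $\Fcal_n$ (and on the fixed sketch $\Sbs$) freezes $\Wbs^{(n)}$ while the new sample keeps its law $p^*$. Everything else is elementary.
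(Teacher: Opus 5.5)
Your proposal is correct and matches the paper's proof step for step: you bound the conditional variance by the conditional second moment, use the rank-one form $\nabla\ell=\ubs(\pbs-\ebs_y)^\top$, and show $\|\pbs-\ebs_y\|^2\le 2$. The only difference is cosmetic: you get $2(1-p_y)^2$ via $\sum_{k\neq y}p_k^2\le(1-p_y)^2$, whereas the paper expands the norm as $\sum_k p_k^2-2p_y+1$; both yield the constant $2$.
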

\begin{proof}
    Given a data sample $(\ubs,y)$, we have
    \begin{equation}
        \nabla \ell(\Wbs; \ubs, y) = \ubs \left(\pbs(\Wbs; \ubs) - \ebs_y\right)^{\top},
    \end{equation}
    where $\pbs(\Wbs; \ubs)$ is the vector of softmax probabilities and $\ebs_y$ is the $y$th standard basis vector in $\R^K$. Then,
    \begin{equation}
        ||\nabla \ell(\Wbs; \ubs, y)||_F^2 = ||\ubs||^2 ||\pbs(\Wbs;\ubs) - \ebs_y||^2 = ||\ubs||^2 \norm*{\sum_{i=1}^K p_i(\Wbs; \ubs)^2 - 2p_y(\Wbs;\ubs) + 1} \leq 2||\ubs||^2. 
    \end{equation}
    Consequently, since the conditional variance is at most the conditional second moment,
    \begin{equation}
        \E[||\Dbs_{n+1}||_F^2 \mid \Fcal_n] \leq \E[\norm{\nabla \ell(\Wbs^{(n)};\ubs,y)}_F^2 \mid \Fcal_n] \leq 2 \E||\ubs||_2^2 = 2M, 
    \end{equation}
    by definition of $M$ \eqref{eq:L_M_def}.
\end{proof}

Next, we prove a uniform-in-time high-probability bound.
\begin{lemma}[Uniform Online SGD Stability]\label{lem:sgd_to_gd_bound}
    Assume $0 < \eta < 1/L$. Fix $T > 0$, $\delta \in (0,1)$, and $N$ such that $\eta N \leq T$. Then, with probability at least $1-\delta$,
    \begin{equation}\label{eq:sgd_gd_weight_gap}
        \max_{0 \leq n \leq N} \norm{\Wbs^{(n)} - \widetilde{\Wbs}^{(n)}} \leq \sqrt{\frac{2\eta MT}{\delta}}.
    \end{equation}
    On the same event,
    \begin{equation}\label{eq:sgd_gd_risk_gap}
        \max_{0 \leq n \leq N} \absolute{\Rcal(\Wbs^{(n)}) - \Rcal(\widetilde{\Wbs}^{(n)})} \leq \sqrt{\frac{2\eta MT}{\delta}} + \frac{L\eta MT}{\delta}.
    \end{equation}
\end{lemma}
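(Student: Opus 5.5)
We track the gap $\Ebs_n := \Wbs^{(n)} - \widetilde{\Wbs}^{(n)}$ and show it is dominated by a martingale. Subtracting the population GD recursion from $\Wbs^{(n+1)} = U_{\eta}(\Wbs^{(n)}) - \eta \Dbs_{n+1}$ gives
\begin{equation*}
    \Ebs_{n+1} = U_{\eta}(\Wbs^{(n)}) - U_{\eta}(\widetilde{\Wbs}^{(n)}) - \eta \Dbs_{n+1}.
\end{equation*}
A naive triangle inequality would lose the martingale cancellation, so we instead expand the square. Write $\Abs_n := U_{\eta}(\Wbs^{(n)}) - U_{\eta}(\widetilde{\Wbs}^{(n)})$, which is $\Fcal_n$-measurable and satisfies $\norm{\Abs_n}_F \leq \norm{\Ebs_n}_F$ by Lemma \ref{lem:non_expansive_gd_step}, since $\eta < 1/L \leq 2/L$. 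Then
\begin{equation*}
    \norm{\Ebs_{n+1}}_F^2 \leq \norm{\Ebs_n}_F^2 - 2\eta \inner{\Abs_n, \Dbs_{n+1}} + \eta^2 \norm{\Dbs_{n+1}}_F^2 .
\end{equation*}

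Set $S_n := \norm{\Ebs_n}_F^2$. Taking conditional expectations and using $\E[\Dbs_{n+1}\mid\Fcal_n]=0$ together with Lemma \ref{lem:conditional_stoch_grad_variance}, we get $\E[S_{n+1}\mid\Fcal_n] \leq S_n + 2\eta^2 M$. Hence $X_n := S_n - 2\eta^2 M n$ is a nonnegative-offset supermartingale; more conveniently, $Y_n := S_n + 2\eta^2 M(N-n)$ is a nonnegative supermartingale with $Y_0 = 2\eta^2 M N \leq 2\eta M T$, using $\eta N \leq T$. Ville's maximal inequality for nonnegative supermartingales then gives $\Pr(\max_{n\leq N} Y_n \geq \lambda) \leq Y_0/\lambda$. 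Choosing $\lambda = 2\eta M T/\delta$, with probability at least $1-\delta$ we have $S_n \leq Y_n \leq 2\eta M T/\delta$ for all $n \leq N$. Taking square roots yields \eqref{eq:sgd_gd_weight_gap}, in the Frobenius norm and hence also in the operator norm.

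For the risk gap \eqref{eq:sgd_gd_risk_gap}, we argue on the same event as in Lemma \ref{lem:discretization_error}. Convexity and $L$-smoothness (Lemma \ref{lem:convex_smooth_risk}) give
\begin{equation*}
    \absolute{\Rcal(\Wbs^{(n)}) - \Rcal(\widetilde{\Wbs}^{(n)})} \leq \norm{\nabla \Rcal(\widetilde{\Wbs}^{(n)})}_F \, e + \tfrac{L}{2} e^2, \qquad e := \norm{\Ebs_n}_F .
\end{equation*}
Applying the smoothness bound at the anchor $\widetilde{\Wbs}^{(n)}$ handles both signs of the difference, since convexity lets us bound the reverse direction by the same gradient. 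Lemma \ref{lem:non_expansive_gd_step} together with \eqref{eq:risk_grad_at_zero} gives $\norm{\nabla \Rcal(\widetilde{\Wbs}^{(n)})}_F \leq \norm{\nabla\Rcal(0)}_F \leq 1$. Substituting $e^2 \leq 2\eta MT/\delta$ then gives the bound $\sqrt{2\eta MT/\delta} + L\eta MT/\delta$.

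The main obstacle is obtaining uniformity over $n$ while keeping only a $1/\delta$ dependence. This is why we use the maximal inequality for the compensated nonnegative supermartingale rather than Chebyshev's inequality at each $n$ followed by a union bound, which would lose a factor of $N$. The key point to verify is that the cross term has zero conditional mean, which holds precisely because $\Abs_n$ is $\Fcal_n$-measurable.
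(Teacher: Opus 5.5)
Your proposal is correct and follows essentially the same route as the paper. It uses the same conditional second-moment recursion $\E[\norm{\Ebs_{n+1}}_F^2 \mid \Fcal_n] \le \norm{\Ebs_n}_F^2 + 2\eta^2 M$, obtained from non-expansiveness and the vanishing cross term, and the same convexity-plus-smoothness bound anchored at $\widetilde{\Wbs}^{(n)}$ for the risk gap. The only difference is in how the maximal step is packaged: the paper stops the process at $\tau = \inf\{n : \norm{\Ebs_n}_F \ge r\}$ and applies Markov's inequality to $\norm{\Ebs_{N\wedge\tau}}_F^2$, which is the standard proof of the Ville inequality you invoke for the compensated supermartingale $Y_n$.
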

\begin{proof}
    Let $\Ebs_n = \Wbs^{(n)} - \widetilde{\Wbs}^{(n)}$. Then, 
    \begin{equation}
        E_{n+1} = U_{\eta}(\Wbs^{(n)}) - U_{\eta}(\widetilde{\Wbs}^{(n)}) - \eta \Dbs_{n+1}.
    \end{equation}
    We compute
    \begin{equation}
        \E[||\Ebs_{n+1}||_F^2 \mid \Fcal_n] = \norm{U_{\eta}(\Wbs^{(n)}) - U_{\eta}(\widetilde{\Wbs}^{(n)})}_F^2 + \eta^2 \E[||\Dbs_{n+1}||_F^2 \mid \Fcal_n] \leq ||\Ebs_n||_F^2 + 2\eta^2 M,
    \end{equation}
    by Lemmas \ref{lem:non_expansive_gd_step} and \ref{lem:conditional_stoch_grad_variance}.

    Fix $r > 0$ and define the stopping time
    \begin{equation}
        \tau := \inf\{n \geq 0: ||\Ebs_n||_F \geq r\},
    \end{equation}
    with the convention $\inf \empty = \infty$. We have
    \begin{equation}
        \begin{split}
            \E\norm*{\Ebs_{(n+1)\land \tau}}_F^2 &= \E\left[\1_{\tau \leq n} ||\Ebs_{\tau}||_F^2\right] + \E\left[\1_{\tau > n}||\Ebs_{n+1}||_F^2\right] \\
            &= \E\left[\1_{\tau \leq n} ||\Ebs_{\tau}||_F^2\right] + \E\left[\1_{\tau > n}\E[||\Ebs_{n+1}||_F^2 \mid \Fcal_n]\right] \\
            &\leq \E\left[\1_{\tau \leq n} ||\Ebs_{\tau}||_F^2\right] + \E\left[\1_{\tau > n} (||\Ebs_n||_F^2 + 2\eta^2 M)\right] \\
            &\leq \E\norm*{\Ebs_{n \land \tau}}_F^2 + 2\eta^2 M.
        \end{split}
    \end{equation}
    Since $\Ebs_0 = \zerobs$, we have
    \begin{equation}
        \E\norm*{\Ebs_{N \land \tau}}_F^2 \leq 2\eta^2 MN.
    \end{equation}
    If $\tau \leq N$, then $||\Ebs_{N \land \tau}||_F = ||\Ebs_{\tau}||_F \geq r$. By Markov's inequality,
    \begin{equation}
        \Prob\left(\max_{0 \leq n \leq N} ||\Ebs_n||_F \geq r\right) = \Prob(\tau \leq N) \leq \frac{\E\norm*{\Ebs_{N \land \tau}}_F^2}{r^2} \leq \frac{2\eta^2 M N}{r^2}.  
    \end{equation}
    Choose $r = \eta \sqrt{2MN/\delta}$. Then, $\eta N \leq T$ implies \eqref{eq:sgd_gd_weight_gap}. Moreover, by convexity and $L$-smoothness of $\Rcal$ (Lemma \ref{lem:convex_smooth_risk}), we have, on the event that \eqref{eq:sgd_gd_weight_gap} holds,
    \begin{equation}
        |\Rcal(\Wbs^{(n)}) - \Rcal(\widetilde{\Wbs}^{(n)})| \leq ||\nabla \Rcal(\widetilde{\Wbs}^{(n)})||_F ||\Ebs_n||_F + \frac{L}{2}||\Ebs_n||_F^2 \leq \sqrt{\frac{2\eta MT}{\delta}} + \frac{L\eta MT}{\delta},
    \end{equation}
    where the last inequality uses monotonicity of gradient norms along GD (Lemma \ref{lem:non_expansive_gd_step}) and \eqref{eq:risk_grad_at_zero}.
\end{proof}

\subsection{From Population Gradient Flow to Online SGD}\label{appendix:gf_to_sgd}

\begin{theorem}[Uniform Transfer from GF to SGD]\label{thm:gf_to_sgd}
    Assume $0 < \eta \leq 1/L$. Fix $T > 0$, $\delta \in (0,1)$, and $N$ such that $\eta N \leq T$. With probability at least $1-\delta$, for each $0 \leq n \leq N$,
    \begin{equation}
        \absolute*{\Rcal(\Wbs^{(n)}) - \Rcal(\Wbs(t_n))} \leq \frac{1}{2}L\eta T + \frac{1}{8} L^3 \eta^2 T^2 + \sqrt{\frac{2\eta MT}{\delta}} + \frac{L\eta MT}{\delta}.
    \end{equation}
\end{theorem}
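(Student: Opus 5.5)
The bound follows from the triangle inequality, inserting the population gradient descent iterate $\widetilde{\Wbs}^{(n)}$ between the online SGD iterate and the gradient flow at the matched time $t_n=\eta n$:
\begin{equation*}
    \absolute*{\Rcal(\Wbs^{(n)}) - \Rcal(\Wbs(t_n))} \leq \absolute*{\Rcal(\Wbs^{(n)}) - \Rcal(\widetilde{\Wbs}^{(n)})} + \absolute*{\Rcal(\widetilde{\Wbs}^{(n)}) - \Rcal(\Wbs(t_n))}.
\end{equation*}
The second term is purely deterministic. Since $\eta \leq 1/L$ and $\eta N \leq T$, Lemma \ref{lem:discretization_error} applies directly. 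Its bound \eqref{eq:risk_discretization_gap} gives $\frac{1}{2}L\eta T + \frac{1}{8}L^3\eta^2T^2$, uniformly over $0 \leq n \leq N$ and with no probabilistic qualification.

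For the first term I invoke Lemma \ref{lem:sgd_to_gd_bound}. It yields an event of probability at least $1-\delta$ on which \eqref{eq:sgd_gd_risk_gap} holds simultaneously for all $0 \leq n \leq N$, with bound $\sqrt{2\eta MT/\delta} + L\eta MT/\delta$. That lemma is stated under the strict inequality $\eta < 1/L$. However, its proof only uses non-expansiveness of $U_\eta$ (Lemma \ref{lem:non_expansive_gd_step}, valid for $\eta \leq 2/L$) and monotonicity of the gradient norm along GD. Both therefore hold at the endpoint $\eta = 1/L$, and the lemma extends to it verbatim.

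On this single event, summing the two bounds gives the claimed inequality for every $n \leq N$ at once. The uniformity in $n$ is what matters here: the maximal (stopping-time) form of Lemma \ref{lem:sgd_to_gd_bound} means no union bound over $n$ is needed. The only real step is confirming that the hypotheses of the two lemmas match those of the theorem, namely the endpoint $\eta = 1/L$ just discussed and the fact that the same $T$ and $N$ are used in both. There is no substantive obstacle.
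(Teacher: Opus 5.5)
Your proposal is correct and matches the paper's proof: the paper also combines Lemma~\ref{lem:discretization_error} and Lemma~\ref{lem:sgd_to_gd_bound} by the triangle inequality through the population GD iterate, using the single uniform-in-$n$ event of the latter. Your remark that Lemma~\ref{lem:sgd_to_gd_bound} extends to the endpoint $\eta = 1/L$ is also correct, since its proof only needs Lemma~\ref{lem:non_expansive_gd_step}, which holds for $\eta \le 2/L$; this fixes a small hypothesis mismatch that the paper passes over.
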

\begin{proof}
    The result is immediate from the triangle inequality and Lemmas \ref{lem:discretization_error} and \ref{lem:sgd_to_gd_bound}.
\end{proof}

\begin{corollary}[Learning Rate Condition]\label{cor:general_lr_condition}
    Fix $T > 0$ and $\delta \in (0,1)$. There exists a universal constant $c > 0$ such that, if 
    \begin{equation}
        \eta \leq c \min\left\{\frac{1}{L}, \frac{\Rcal(\Wbs(T))}{LT(1+LT)}, \frac{\delta \Rcal(\Wbs(T))^2}{MT}, \frac{\delta \Rcal(\Wbs(T))}{LMT}\right\},
    \end{equation}
    then with probability at least $1-\delta$,
    \begin{equation}
        \frac{1}{2}\Rcal(\Wbs(t_n)) \leq \Rcal(\Wbs^{(n)}) \leq \frac{3}{2} \Rcal(\Wbs(t_n))
    \end{equation}
    for every $n$ satisfying $t_n \leq T$.
\end{corollary}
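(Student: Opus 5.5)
The plan is to apply Theorem~\ref{thm:gf_to_sgd} with $N = \lfloor T/\eta \rfloor$ and choose $\eta$ so that each of the four error terms is at most $\Rcal(\Wbs(T))/8$. Along population gradient flow the risk is non-increasing, since $\frac{\dee}{\dee t}\Rcal(\Wbs(t)) = -\norm{\nabla\Rcal}_F^2 \le 0$. Hence $\Rcal(\Wbs(T)) \le \Rcal(\Wbs(t_n))$ for every $n$ with $t_n \le T$. Once the total error bound $\Delta$ from Theorem~\ref{thm:gf_to_sgd} satisfies $\Delta \le \tfrac12 \Rcal(\Wbs(T))$, we get $|\Rcal(\Wbs^{(n)}) - \Rcal(\Wbs(t_n))| \le \tfrac12 \Rcal(\Wbs(t_n))$ uniformly in $n$ on the same probability-$(1-\delta)$ event. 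That is exactly the two-sided claim.

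The term-by-term bookkeeping goes as follows; the constraint $\eta \le c/L$ guarantees the hypothesis $\eta \le 1/L$.
\begin{itemize}
\item \emph{Discretization terms.} By Lemma~\ref{lem:discretization_error}, $\tfrac12 L\eta T + \tfrac18 L^3\eta^2T^2 \lesssim \eta LT(1+LT)$. This is at most $\Rcal(\Wbs(T))/8$ once $\eta \le c\,\Rcal(\Wbs(T))/(LT(1+LT))$ with $c$ small.
\item \emph{Martingale square-root term.} $\sqrt{2\eta MT/\delta} \le \Rcal(\Wbs(T))/8$ is equivalent to $\eta \le \delta\,\Rcal(\Wbs(T))^2/(128\, MT)$, which is the third entry of the minimum.
\item \emph{Martingale linear term.} $L\eta MT/\delta \le \Rcal(\Wbs(T))/8$ is equivalent to $\eta \le \delta\,\Rcal(\Wbs(T))/(8LMT)$, which is the fourth entry.
\end{itemize}
Summing the four contributions gives $\Delta \le \tfrac12\Rcal(\Wbs(T))$. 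Taking $c$ to be the smallest of the resulting universal constants proves the statement.

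Nothing here is a real obstacle; the argument is routine algebra. The only point needing care is the monotonicity of $t \mapsto \Rcal(\Wbs(t))$, which is what lets the single quantity $\Rcal(\Wbs(T))$ control the relative error at every earlier time. Once that is in place, the bound is uniform over $n$ because the probability event in Theorem~\ref{thm:gf_to_sgd} is already uniform over $0 \le n \le N$.
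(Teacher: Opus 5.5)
Your proposal is correct and follows the paper's proof essentially verbatim. You invoke Theorem~\ref{thm:gf_to_sgd}, use the four entries of the minimum to make each error term a small fraction of $\Rcal(\Wbs(T))$, and use monotonicity of the risk along gradient flow to turn this into a relative bound at every $t_n \le T$. Your explicit constant bookkeeping is slightly more careful than the paper's, for instance noting that $\eta \le c/L$ with $c<1$ gives the strict $\eta<1/L$ required by Lemma~\ref{lem:sgd_to_gd_bound}.
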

\begin{proof}
    By Theorem \ref{thm:gf_to_sgd} and the condition $\eta \leq 1/L$, we have, with probability at least $1-\delta$,
    \begin{equation}
        \absolute{\Rcal(\Wbs^{(n)}) - \Rcal(\Wbs(t_n))} \leq C \left(\eta LT (1+LT) + \sqrt{\frac{\eta MT}{\delta}} + \frac{L\eta MT}{\delta}\right),
    \end{equation}
    for every n with $t_n \leq T$, where $C > 0$ is a universal constant. In the condition on $\eta$, we can choose $c$ sufficiently small such that
    \begin{equation}
        \absolute{\Rcal(\Wbs^{(n)}) - \Rcal(\Wbs(t_n))} \leq \frac{1}{2}\Rcal(\Wbs(t)).
    \end{equation}
    The risk is non-increasing along population gradient flow because
    \begin{equation}
        \frac{\dee}{\dee t} \Rcal(\Wbs(t)) = \inner{\nabla \Rcal(\Wbs(t)), \dot{\Wbs}(t)}_F = -||\nabla \Rcal(\Wbs(t))||_F^2 \leq 0.
    \end{equation}
    So, whenever $t_n \leq T$, we have $\Rcal(\Wbs(T)) \leq \Rcal(\Wbs(t_n))$. Therefore,
    \begin{equation}
        \absolute{\Rcal(\Wbs^{(n)}) - \Rcal(\Wbs(t_n))} \leq \frac{1}{2}\Rcal(\Wbs(t_n)),
    \end{equation}
    which gives the result.
\end{proof}

\subsection{Unsketched GMM Case}\label{appendix:online_sgd_unsketched_case}

When $\Sbs = \Ibs_d$, the means are orthonormal with
\begin{equation}
    \E[\xbs \xbs^{\top}] = \sigma^2 \Ibs_d + \sum_{i=1}^K \pi_i \mubs_i \mubs_i^{\top}.
\end{equation}
Hence,
\begin{equation}
    L \leq \sigma^2 + \pi_1 \leq 1 + \sigma^2 \asymp 1, \quad M = \E||\xbs||^2 = 1 + \sigma^2 d.
\end{equation}

We recall the notation from Theorem \ref{thm:gmm_scaling_law}: $T_{\sigma} = c_*/\sigma$ is the population gradient flow time horizon, and $t_i$, $i \in [K]$ are the class recovery times, with $t_i \asymp Z_{K,\alpha} i^{\alpha} \log K$. Moreover, let
\begin{equation}
    f_{K,\alpha}(t) = 
    \begin{cases}
        (\log K)(1- (t/t_K)^{1/\alpha-1}), & 0 < \alpha < 1, \\
        \log(K (\log K)^2 /t), & \alpha = 1\\
        (\log K)^{2-1/\alpha} \cdot t^{-(1-1/\alpha)}, & \alpha > 1.
    \end{cases}
\end{equation}

We re-state and prove Theorem \ref{thm:sgd_gmm} from the main text. 

\begin{theorem}[Online SGD Inherits Unsketched Population Phases]
    Fix $\delta \in (0,1)$ and $T \leq T_{\sigma}$. There exists a constant $c > 0$ such that, if
    \begin{equation}
        \eta \lesssim \min\left\{1, \frac{\Rcal(\Wbs(T))}{T(1+T)}, \frac{\delta \min\{\Rcal(\Wbs(T)), \Rcal(\Wbs(T))^2\}}{(1+\sigma^2d)T}\right\},
    \end{equation}
    then, there exist $0 < c_1 < c_2$ such that, with probability at least $1-\delta$,
    \begin{equation}
        \Rcal(\Wbs^{(n)}) \asymp 
        \begin{cases}
            \log K, & 0 \leq t_n \leq \min\{t_1, T\}, \\
            f_{K,\alpha}(t_n), & t_1 \leq t_n \leq \min\{c_1 t_K, T\}, \\
            K/t_n, & c_2 t_K \leq t_n \leq T.
        \end{cases}
    \end{equation}
    for every $n$ satisfying $t_n \leq T$.
\end{theorem}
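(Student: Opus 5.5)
The argument is a direct combination of Corollary \ref{cor:general_lr_condition} with the population gradient flow result, Theorem \ref{thm:gmm_scaling_law}. The first step is to specialize the general framework of Appendix \ref{appendix:proofs_online_sgd} to $\Sbs = \Ibs_d$. As computed just above, $L \le 1+\sigma^2 \le 2$ because $\sigma^2 \le 1$. Also $M = 1+\sigma^2 d$. Substituting these into the learning-rate condition of Corollary \ref{cor:general_lr_condition} gives the following:
\begin{itemize}
\item $1/L \gtrsim 1$;
\item $\Rcal(\Wbs(T))/(LT(1+LT)) \asymp \Rcal(\Wbs(T))/(T(1+T))$;
\item the remaining two terms combine into $\delta \min\{\Rcal(\Wbs(T)),\Rcal(\Wbs(T))^2\}/((1+\sigma^2 d)T)$, up to constants.
\end{itemize}
So the hypothesis on $\eta$, with the implicit constant taken small enough, implies the hypothesis of the corollary. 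Hence, with probability at least $1-\delta$, we have $\tfrac12 \Rcal(\Wbs(t_n)) \le \Rcal(\Wbs^{(n)}) \le \tfrac32 \Rcal(\Wbs(t_n))$ simultaneously for all $n$ with $t_n = \eta n \le T$.

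The second step works on this event. Since $T \le T_\sigma = c_*/\sigma$, Theorem \ref{thm:gmm_scaling_law} applies at every time $t_n \in [0,T]$. Its three-phase characterization of $\Rcal(\Wbs(t))$ transfers to $\Rcal(\Wbs^{(n)})$ with constants worse by at most a factor of $3$. The phase boundaries come from Lemma \ref{lem:noisy_sequential_recovery}: $t_1 \asymp Z_{K,\alpha}\log K$ and $t_K \asymp Z_{K,\alpha}K^\alpha \log K$. Writing the boundaries as $\min\{t_1,T\}$, $\min\{c_1 t_K, T\}$ and $c_2 t_K$, as in the restated theorem, amounts to intersecting each regime of Theorem \ref{thm:gmm_scaling_law} with $[0,T]$. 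For example, the convergence phase is nonempty only when $c_2 t_K \le T$. The constants $c_1,c_2$ are those of Theorem \ref{thm:gmm_scaling_law}, possibly adjusted so that the boundaries stated with $t_1,t_K$ agree with those stated with $Z_{K,\alpha}\log K$ and $Z_{K,\alpha}K^\alpha\log K$.

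The main obstacle is a circularity: the learning-rate condition is expressed through $\Rcal(\Wbs(T))$, the gradient-flow risk at the horizon. That is legitimate, since it is a deterministic quantity fixed before SGD runs. The monotonicity of the risk along gradient flow, used inside the corollary, is what turns a single condition at time $T$ into a uniform relative-error bound over all $t_n \le T$. One also has to check that $\Rcal(\Wbs(T)) > 0$, so that the condition is non-vacuous. This holds because $\Rcal(\Wbs(T)) \ge \Rcal^*_{K,\pibs}(\sigma) > 0$ for $\sigma > 0$. To make the condition explicit, one can substitute the order of $\Rcal(\Wbs(T))$ given by Theorem \ref{thm:gmm_scaling_law} in whichever phase contains $T$; this is the breakdown recorded in Corollary \ref{cor:lr_condition_online_sgd_unsketched}.
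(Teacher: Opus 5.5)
Your proposal is correct and follows the paper's proof. The paper likewise specializes the general learning-rate rule (Corollary~\ref{cor:general_lr_condition}, via Theorem~\ref{thm:gf_to_sgd}) using $L \asymp 1$ and $M = 1+\sigma^2 d$, then transfers the three phases of Theorem~\ref{thm:gmm_scaling_law} to the SGD iterates through the resulting constant-factor sandwich. Your additional remarks make explicit what the paper leaves implicit: monotonicity of the gradient-flow risk, positivity of $\Rcal(\Wbs(T))$, and matching the phase boundaries via $t_1 \asymp Z_{K,\alpha}\log K$ and $t_K \asymp Z_{K,\alpha}K^{\alpha}\log K$.
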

\begin{proof}
    Immediate from Theorem \ref{thm:gmm_scaling_law} and Theorem \ref{thm:gf_to_sgd}. The learning rate condition is obtained from the general rule
    \begin{equation}
        \eta \leq c \min\left\{\frac{1}{L}, \frac{\Rcal(\Wbs(T))}{LT(1+LT)}, \frac{\delta \Rcal(\Wbs(T))^2}{MT}, \frac{\delta \Rcal(\Wbs(T))}{LMT}\right\}
    \end{equation}
    by noticing $L \asymp 1$, $M = 1+\sigma^2 d$.
\end{proof}

The following corollary is immediate.

\begin{corollary}[Explicit Learning Rate Conditions for $\alpha > 1$]\label{cor:lr_condition_online_sgd_unsketched}
    Suppose $\alpha > 1$, $0 < \sigma \leq 1$, and $T = T_{\sigma} \asymp 1/\sigma$. Then, up to constants depending on $\delta$, the learning rate condition in the theorem above is as follows.
    \begin{enumerate}
        \item If $T \lesssim \log K$, it is sufficient to have
        \begin{equation}
            \eta \lesssim \min\left\{1, \sigma^2 \log K, \frac{\sigma \log K}{1+\sigma^2d}\right\}.
        \end{equation}
        \item If $\log K \lesssim T \leq c_1 K^{\alpha} \log K$, it is sufficient to have
        \begin{equation}
            \eta \lesssim \min\left\{1, (\log K)^{2-1/\alpha}\sigma^{3-1/\alpha}, \frac{(\log K)^{4-2/\alpha} \sigma^{3-2/\alpha}}{1+\sigma^2 d}, \frac{(\log K)^{2-1/\alpha} \sigma^{2-1/\alpha}}{1+\sigma^2 d}\right\}.
        \end{equation}
        \item If $c_2 K^{\alpha} \log K \leq T$, it is sufficient to have
        \begin{equation}
            \eta \lesssim \min\left\{1, K\sigma^3, \frac{K^2\sigma^3}{1+\sigma^2d}, \frac{K\sigma^2}{1+\sigma^2d}\right\}.
        \end{equation}
    \end{enumerate}
\end{corollary}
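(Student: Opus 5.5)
This corollary is obtained by inserting the risk value $\Rcal(\Wbs(T))$ into the general learning rate rule from the theorem above,
\begin{equation*}
    \eta \lesssim \min\left\{1, \frac{\Rcal(\Wbs(T))}{T(1+T)}, \frac{\delta \Rcal(\Wbs(T))^2}{(1+\sigma^2 d)T}, \frac{\delta \Rcal(\Wbs(T))}{(1+\sigma^2 d)T}\right\},
\end{equation*}
with $L \asymp 1$ and $M = 1+\sigma^2 d$, and with $T = T_\sigma \asymp 1/\sigma$. For $\alpha>1$ we have $Z_{K,\alpha}\asymp 1$. The value $\Rcal(\Wbs(T))$ then depends only on which phase of Theorem \ref{thm:gmm_scaling_law} contains $T$. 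The plan is therefore to split into the three regimes of the statement, find the order of $\Rcal(\Wbs(T))$ in each, and simplify the four terms. Since $\sigma \leq 1$, we have $T \gtrsim 1$, so $T(1+T) \asymp T^2 \asymp \sigma^{-2}$. Constants depending on $\delta$ are absorbed, as the statement permits.

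\textbf{Regime 1 ($T \lesssim \log K$).} Here $\Rcal(\Wbs(T)) \asymp \log K$, which is at least $1$, so $\min\{\Rcal,\Rcal^2\}=\Rcal$. The four terms are of order
\begin{equation*}
    1,\qquad \sigma^2\log K,\qquad \frac{\sigma(\log K)^2}{1+\sigma^2 d},\qquad \frac{\sigma\log K}{1+\sigma^2 d}.
\end{equation*}
The third term is dominated by the fourth, which yields the stated condition.

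\textbf{Regime 2 ($\log K \lesssim T \le c_1 K^\alpha\log K$).} Here $\Rcal(\Wbs(T)) \asymp (\log K)^{2-1/\alpha}\sigma^{1-1/\alpha}$. The second term becomes $(\log K)^{2-1/\alpha}\sigma^{3-1/\alpha}$. The $\Rcal^2$ term becomes $(\log K)^{4-2/\alpha}\sigma^{3-2/\alpha}/(1+\sigma^2 d)$, and the $\Rcal$ term becomes $(\log K)^{2-1/\alpha}\sigma^{2-1/\alpha}/(1+\sigma^2 d)$. We keep both of the last two terms, because whether $\Rcal \gtrless 1$ is not determined in this regime. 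This gives exactly the four-term minimum in the statement.

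\textbf{Regime 3 ($T \ge c_2 K^\alpha\log K$).} Here $\Rcal(\Wbs(T)) \asymp K/T \asymp K\sigma$. The terms become $K\sigma^3$, $K^2\sigma^3/(1+\sigma^2 d)$ and $K\sigma^2/(1+\sigma^2 d)$. Again both of the last two are kept, since $K\sigma$ may lie on either side of $1$.

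The computation is routine. The one point needing care is that for $\alpha>1$ the phase boundaries $Z_{K,\alpha}\log K$ and $Z_{K,\alpha}K^\alpha\log K$ in Theorem \ref{thm:gmm_scaling_law} reduce to $\log K$ and $K^\alpha\log K$. The first regime also has to be read consistently with the early phase: when $T \lesssim \log K$, the time horizon stays within the plateau, and monotonicity of $\Rcal$ along gradient flow together with the early-phase lower bound gives $\Rcal(\Wbs(T))\asymp\log K$. Finally, replacing $\Rcal(\Wbs(T))$ by its order changes the conditions only by constants, so each displayed condition implies the theorem's condition after shrinking the implicit constant.
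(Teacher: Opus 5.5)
Your proposal is correct and is exactly the argument the paper has in mind when it calls this corollary immediate: you substitute the phase-wise order of $\Rcal(\Wbs(T))$ from Theorem~\ref{thm:gmm_scaling_law} (with $Z_{K,\alpha}\asymp 1$, $L\asymp 1$, $M=1+\sigma^2 d$, $T(1+T)\asymp\sigma^{-2}$) into the general learning-rate rule, and only the lower bound on the risk is needed. One harmless inaccuracy is in Regime 3, where $\sigma\asymp 1/T\lesssim 1/(K^{\alpha}\log K)$ forces $K\sigma\lesssim K^{1-\alpha}/\log K\ll 1$, so $K\sigma$ does not lie on either side of $1$ and the final term $K\sigma^2/(1+\sigma^2 d)$ is in fact redundant, though keeping it does not affect sufficiency.
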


\subsection{Empirical Sketch Case}\label{appendix:online_sgd_empirical_sketch_case}

Let $\widehat{\Sbs} \in \R^{m \times d}$ be the sketch matrix obtained from empirical PCA on $N$ samples, as in Section \ref{section:compute_optimal_sl}. Then, conditional on $\hat{\Sbs}$,
\begin{equation}
    L = \norm*{\E_{\xbs}[\widehat{\Sbs}\xbs \xbs^{\top} \widehat{\Sbs}^{\top}]}_{\text{op}} \leq \norm*{\E_{\xbs}[\xbs \xbs^{\top}]} \leq 1 + \sigma^2, 
\end{equation}
\begin{equation}
    M = \E_{\xbs}\norm{\widehat{\Sbs} \xbs}^2 = \sum_{i=1}^K \pi_i \E_{z \sim \Ncal(0,\Ibs_d)}\left[\norm{\widehat{\Sbs}(\mubs_i + \sigma \zbs)}^2\right] = \sum_{i=1}^K \pi_i \norm{\widehat{\Sbs}\mubs_i}^2 + \sigma^2 m \leq 1 + \sigma^2 m.
\end{equation}

We are ready to prove Theorem \ref{thm:sgd_sketched} from the main text.

\begin{theorem}[Online SGD after Empirical PCA]
    Assume $\alpha >1$ and $m \leq K/2$. Fix $\delta \in (0,1)$ and $T \leq T_{\sigma}$. Assume the sample used to construct $\hat{\Sbs}$ is independent of the samples used in online SGD. Suppose $\eps_{\text{PCA}} \leq c_0 \min\{\pi_m, \sigma \log K\}$ for a sufficiently small $c_0 > 0$, where $\eps_{\text{PCA}}$ is defined in \eqref{eq:eps_pca_def}. If
    \begin{equation}
        \eta \lesssim \min\left\{1, \frac{\Rcal_{\widehat{\Sbs}}(T)}{T(1+T)}, \frac{\delta \min\{\Rcal_{\widehat{\Sbs}}(T), \Rcal_{\widehat{\Sbs}}(T)^2\}}{(1+\sigma^2 m)T}\right\},
    \end{equation}
    then, with probability at least $1-\delta$,
    \begin{equation}
        \Rcal_{\widehat{\Sbs}}(\Wbs^{(n)}) \asymp 
        \begin{cases}
            \log K, & 0 \leq t_n \lesssim \log K \\
            m^{1-\alpha}\log K + (\log K)^{2-1/\alpha} \cdot t_n^{-(1-1/\alpha)}, & \log K \lesssim t_n \leq T
        \end{cases}
    \end{equation}
    uniformly for all $n$ satisfying $t_n \leq T$.
\end{theorem}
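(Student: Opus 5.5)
The plan is to reduce the statement to two facts already established in the paper: the population gradient flow scaling law under empirical sketch (Theorem \ref{thm:emp_pca_scaling_law}), and the general gradient-flow-to-SGD transfer (Theorem \ref{thm:gf_to_sgd} together with Corollary \ref{cor:general_lr_condition}). We work conditionally on the PCA sample. Since that sample is independent of the online SGD samples, we can treat $\widehat{\Sbs}$ as a fixed matrix with $\widehat{\Sbs}\widehat{\Sbs}^{\top} = \Ibs_m$ and $\norm{\widehat{\Sbs}}_{\text{op}} \le 1$. With $\widehat{\Sbs}$ fixed, the SGD samples $\ubs^{(n)} = \widehat{\Sbs}\xbs^{(n)}$ are i.i.d., and the filtration $\Fcal_n$ argument of Lemma \ref{lem:sgd_to_gd_bound} goes through unchanged. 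The hypothesis $\eps_{\text{PCA}} \le c_0\min\{\pi_m,\sigma\log K\}$ is exactly condition \eqref{eq:eps_pca_condition}. Hence Theorem \ref{thm:emp_pca_scaling_law} gives the two-phase law for $\Rcal_{\widehat{\Sbs}}(\Wbs(t))$ uniformly on $[0,T]$ for every $T \le T_\sigma$.

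Next, we read off the smoothness and variance constants computed just above for the sketched case: $L \le 1+\sigma^2 \asymp 1$ (using $\sigma^2 \le 1$) and $M \le 1+\sigma^2 m$. Substituting these into Corollary \ref{cor:general_lr_condition} simplifies its four-term minimum as follows. The first two terms become $1$ and $\Rcal_{\widehat{\Sbs}}(T)/(T(1+T))$. The last two combine into $\delta\min\{\Rcal_{\widehat{\Sbs}}(T),\Rcal_{\widehat{\Sbs}}(T)^2\}/((1+\sigma^2 m)T)$. This is precisely the assumed learning-rate condition, after absorbing constants into $\lesssim$. The corollary then yields, with probability at least $1-\delta$, the bound $\tfrac12\Rcal_{\widehat{\Sbs}}(\Wbs(t_n)) \le \Rcal_{\widehat{\Sbs}}(\Wbs^{(n)}) \le \tfrac32\Rcal_{\widehat{\Sbs}}(\Wbs(t_n))$ simultaneously for all $n$ with $t_n \le T$. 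One point needs care: the corollary uses monotonicity of the risk along gradient flow to replace $\Rcal(\Wbs(t_n))$ by $\Rcal(\Wbs(T))$ in the error budget. This monotonicity holds for the sketched risk as well, since it is just the identity $\frac{\dee}{\dee t}\Rcal = -\norm{\nabla \Rcal}_F^2$.

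Combining the multiplicative two-sided comparison with the $\asymp$ statements of Theorem \ref{thm:emp_pca_scaling_law} at time $t_n$ gives both phases, since constant factors are absorbed into $\asymp$. The early phase covers $t_n \lesssim \log K$, and the capacity-limited phase covers $\log K \lesssim t_n \le T$. We also need the probability statement to be unconditional. Because the $1-\delta$ bound holds conditionally on every fixed $\widehat{\Sbs}$ satisfying the $\eps_{\text{PCA}}$ condition, it holds unconditionally on that event.

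The only genuinely delicate step is verifying that the ingredients of Corollary \ref{cor:general_lr_condition} remain valid for the sketched risk. These are Lemmas \ref{lem:convex_smooth_risk} and \ref{lem:non_expansive_gd_step}, in particular the bound $\norm{\nabla\Rcal(0)}_F \le 1$. Here the gradient is $\widehat{\Sbs}(-\pi_i\mubs_i + \frac1K\sum_k\pi_k\mubs_k)$, and since $\norm{\widehat{\Sbs}}_{\text{op}}\le1$ the same bound holds. The rest is bookkeeping of constants.
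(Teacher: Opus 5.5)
Your proposal is correct and follows the paper's proof exactly: the paper also just combines Theorem~\ref{thm:emp_pca_scaling_law} with Corollary~\ref{cor:general_lr_condition}, using the sketched constants $L \le 1+\sigma^2$ and $M \le 1+\sigma^2 m$ computed conditionally on $\widehat{\Sbs}$. Your additional checks (conditioning on the independent PCA sample, and the validity of Lemmas~\ref{lem:convex_smooth_risk} and~\ref{lem:non_expansive_gd_step} and of risk monotonicity under the sketch) only make explicit what the paper leaves implicit.
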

\begin{proof}
    Follows from Theorem \ref{thm:emp_pca_scaling_law} and Corollary \ref{cor:general_lr_condition}.
\end{proof}

\section{The Irreducible Risk}\label{appendix:irreducible risk}
While we lack the machinery to establish the exact behaviour of population gradient flow beyond the time horizon $T$, we can find tight bounds for the infimum of the population risk, which is nonzero when $\sigma^2 > 0$.

 Under our usual GMM \eqref{eq:target_gmm}, define
\begin{equation}
    \Rcal_{K,\pibs}^*(\sigma) := \inf_{\Wbs \in \R^{d \times K}} \Rcal(\Wbs).
\end{equation}
We begin by showing that this infimum is attained and that population gradient flow converges to it. 
\begin{proposition}
    Let $\Wcal = \{\Wbs \in \R^{d \times K}: \sum_{i=1}^K \wbs_i = 0\}$. Then, there exists a unique $\Wbs^* \in \Wcal$ such that $\Rcal(\Wbs^*) = \Rcal_{K,\pibs}^*(\sigma)$. Moreover, under population gradient flow initialized at zero, $\Wbs(t) \to \Wbs^*$ with respect to $||\cdot||_F$ and
    \begin{equation}\label{eq:pop_gf_conv_rate}
        \Rcal(\Wbs(t)) - \Rcal_{K,\pibs}^*(\sigma) \leq \frac{||\Wbs(0) - \Wbs^*||_F^2}{2t}
    \end{equation}
    for all $t > 0$.
\end{proposition}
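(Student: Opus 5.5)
Since $\sigma^2>0$, the plan is to show that $\Rcal$ restricted to the affine-invariant subspace is strongly convex along every direction of $\Wcal$, and coercive there. Existence and uniqueness of the minimizer then follow, and convergence of gradient flow follows from standard convex analysis.

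\textbf{Reduction to $\Wcal$.} The softmax is invariant under $\wbs_i\mapsto \wbs_i+\vbs$ for all $i$, so $\Rcal(\Wbs)=\Rcal(\Pi\Wbs)$, where $\Pi$ subtracts the column mean. Hence $\inf_{\R^{d\times K}}\Rcal=\inf_{\Wcal}\Rcal$. Along gradient flow, $\sum_i\nabla_{\wbs_i}\Rcal = -\sum_i\pi_i\mubs_i+\E[\xbs\sum_i p_i(\xbs)]=0$, so $\sum_i\wbs_i(t)\equiv 0$ and the trajectory stays in $\Wcal$ (the vector form of Lemma \ref{lem:conservation_law}).

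\textbf{Strict convexity and coercivity.} From the Hessian formula in Lemma \ref{lem:convex_smooth_risk},
\[
\nabla^2\Rcal(\Wbs)[\Vbs,\Vbs]=\E\big[\abs^\top(\mathrm{diag}(\pbs)-\pbs\pbs^\top)\abs\big],\qquad \abs=\Vbs^\top\xbs .
\]
The matrix $\mathrm{diag}(\pbs)-\pbs\pbs^\top$ has kernel exactly $\mathrm{span}(\mathbf 1)$ whenever all $p_i>0$. For $\Vbs\in\Wcal\setminus\{0\}$, the vector $\Vbs^\top\xbs\in\mathrm{span}(\mathbf 1)$ forces $\Vbs^\top\xbs=0$, because $\Vbs\mathbf 1=0$ gives $\mathbf 1^\top\Vbs^\top\xbs=0$. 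This can only happen on the proper subspace $\ker\Vbs^\top$, which has Gaussian measure zero when $\sigma>0$. So the Hessian is positive definite on $\Wcal$, and $\Rcal$ is strictly convex there.

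For coercivity, fix a direction $\Vbs\in\Wcal$ with $\norm{\Vbs}_F=1$. With positive probability, the vector $\Vbs^\top\xbs$ has some coordinate $\langle \vbs_y-\vbs_k,\xbs\rangle\le -c$, since it is nonconstant and sums to zero, and the Gaussian noise puts mass everywhere. Along the ray $s\Vbs$ with $s\to\infty$, the loss on that event grows linearly in $s$. A compactness argument over the unit sphere of $\Wcal$ makes this uniform, giving $\Rcal(s\Vbs)\gtrsim s$ for large $s$. A continuous, strictly convex, coercive function on $\Wcal$ has a unique minimizer $\Wbs^*$, and by the reduction this minimizer attains $\Rcal^*_{K,\pibs}(\sigma)$.

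\textbf{Convergence.} For convex $L$-smooth gradient flow with minimizer $\Wbs^*$, consider the Lyapunov function
\[
E(t)=t\big(\Rcal(\Wbs(t))-\Rcal^*\big)+\tfrac12\norm{\Wbs(t)-\Wbs^*}_F^2 .
\]
Using convexity, $\langle\nabla\Rcal(\Wbs),\Wbs-\Wbs^*\rangle\ge\Rcal(\Wbs)-\Rcal^*$, which gives $\dot E\le0$ and hence \eqref{eq:pop_gf_conv_rate}. Thus $\Rcal(\Wbs(t))\to\Rcal^*$. The distance $\norm{\Wbs(t)-\Wbs^*}_F$ is nonincreasing, so the trajectory is bounded. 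Every limit point lies in $\Wcal$ and is a minimizer, hence equals $\Wbs^*$ by uniqueness, so $\Wbs(t)\to\Wbs^*$.

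\textbf{Main obstacle.} The delicate step is coercivity. It has to be uniform over directions, and the linear growth must dominate the bounded-below remaining terms. Note that $-\log p_y\ge0$ always, and on the positive-probability event $-\log p_y\ge s\langle\vbs_k-\vbs_y,\xbs\rangle-\log K$. I would take the probability and margin bounds to depend continuously on $\Vbs$, and then take their minimum over the compact sphere.
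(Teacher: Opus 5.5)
Your proposal is correct and follows essentially the same route as the paper: translation invariance with $\sum_i \wbs_i(t)\equiv 0$, then strict convexity on $\Wcal$ because $\mathrm{diag}(\pbs)-\pbs\pbs^\top$ is singular only along $\mathrm{span}(\mathbf 1)$, then coercivity from linear growth along rays made uniform by compactness, and finally the standard convex gradient-flow bound. Two small remarks: the convexity is strict, not strong as your opening line says, since the Hessian degenerates as the softmax saturates; and your limit-point argument for $\Wbs(t)\to\Wbs^*$ is more explicit than the paper's appeal to strict convexity. For the uniformity step, you can fix the margin: the event probability is then continuous in $\Vbs$, because its boundary is a null union of hyperplanes, and it is positive, so it is bounded below on the sphere. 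Alternatively, as the paper does, use the Lipschitz functional $c(\Vbs)=\E_{\xbs,y}[\max_j\langle\vbs_j-\vbs_y,\xbs\rangle]$, whose positive infimum $c_0$ over the unit sphere of $\Wcal$ gives $\Rcal(\Wbs)\ge c_0\norm{\Wbs}_F$ directly, because log-sum-exp dominates the maximum.
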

\begin{proof}
    A simple calculation yields
    \begin{equation}\label{eq:risk_gradient}
        \nabla_{\wbs_i} \Rcal(\Wbs) = -\E_{\xbs,y}\left[(\1\{y=i\} - p_i(\xbs))\xbs\right]
    \end{equation}
    and
    \begin{equation}
        \nabla^2_{\wbs_i, \wbs_j} \Rcal(\Wbs) = \E_{\xbs}\left[p_i(\xbs) \left(\1\{i=j\} - p_j(\xbs)\right)\xbs \xbs^{\top}\right].
    \end{equation}
    Then, given $\Vbs \in \R^{d \times K}$, we have
    \begin{equation}
        \begin{split}
            \nabla^2 \Rcal(\Wbs)[\Vbs, \Vbs] &= \sum_{i,j=1}^K \vbs_i^{\top} \nabla^2_{\wbs_i, \wbs_j} \Rcal(\Wbs) \vbs_j \\
            &= \E_{\xbs}\left[\sum_{i,j=1}^K p_i(\xbs)(\1\{i=j\} - p_j(\xbs)) \inner{\vbs_i, \xbs} \inner{\vbs_j, \xbs}\right] \\
            &= \E_{\xbs} \left[\sum_{i=1}^K p_i(\xbs) \inner{\vbs_i,\xbs}^2 - \left(\sum_{i=1}^K p_i(\xbs) \inner{\vbs_i,\xbs}\right)^2\right] \\
            &= \E_{\xbs}\left[\Var_{J \sim p(\cdot|\xbs)}\left(\inner{\vbs_J, \xbs}\right)\right].
        \end{split}
    \end{equation}
    The above is nonnegative and therefore $\Rcal$ is convex. Moreover, $\Rcal$ is strictly convex on $\Wcal$. Indeed, suppose that $\nabla^2 \Rcal(\Wbs)[\Vbs,\Vbs] = 0$ for some $\Vbs \in \Wbs$. Then, for almost every $\xbs \in \R^d$, we have $\inner{\vbs_i, \xbs} = \inner{\vbs_j,\xbs}$ for all $i,j \in [K]$. Since $\sigma^2 > 0$, it must be that $\vbs_i = \vbs_j$. But since $\Vbs \in \Wbs$, it must be that $\Vbs = 0$. 

    Next, we prove coercivity of $\Rcal$ on $\Wbs$ to establish the existence of a minimizer. For this, given $\Vbs \in \Wcal$, define
    \begin{equation}
        c(V) := \E_{\xbs,y} \left[\max_{j \in [K]} \inner{\vbs_j - \vbs_y, \xbs}\right].
    \end{equation}
    For all nonzero $\Vbs \in \Wcal$, we have $c(\Vbs) > 0$. Indeed, there exists $i,j$ such that $v_j - v_i \neq 0$. Then,
    \begin{equation}
        c(\Vbs) \geq \pi_i \E_{\xbs|y=i}\left[\left(\inner{\vbs_j -\vbs_i, \xbs}\right)_+\right] > 0.
    \end{equation}
    Furthermore, one can show that $\Vbs \mapsto c(\Vbs)$ is $2\E_{\xbs}[||\xbs||]$-Lipschitz with respect to $||\cdot||_F$. Hence, by continuity and compactness, we can define
    \begin{equation}
        c_0 := \inf_{\substack{V \in \Wbs \\ ||\Vbs||_F = 1}} c(\Vbs) > 0.
    \end{equation}
    With this in hand, notice that
    \begin{equation}
        \Rcal(t\Vbs) = \E_{\xbs,y} \left[\log\left(\sum_{j=1}^K \exp\left(t\inner{\vbs_j-\vbs_y,\xbs}\right)\right)\right] \geq t\E_{\xbs,y} \left[\max_j \inner{\vbs_j-\vbs_y,\xbs}\right] = tc(\Vbs).
    \end{equation}
    For nonzero $\Wbs \in \Wcal$, taking $t = ||\Wbs||_F$, $\Vbs = \Wbs/||\Wbs||_F$ yields
    \begin{equation}
        \Rcal(\Wbs) \geq c_0 ||\Wbs||_F.
    \end{equation}
    Hence, $\Rcal$ is coercive on $\Wcal$. This, combined with the continuity of $\Rcal$, ensures the existence of a minimizer $\Wbs^*$. To see that this is in fact a global minimizer for $\Rcal$ on the full domain $\R^{d \times K}$, we leverage the translation-invariance of the softmax. For any $\Wbs \in \R^{d \times K}$, let $\Wbs - \bar{\wbs}$ denote the matrix that results from subtracting the mean $\frac{1}{K}\sum_{i=1}^K \wbs_i$ from each column of $\Wbs$. Then, $\Wbs - \bar{\wbs} \in \Wcal$ and $\Rcal(\Wbs - \bar{\wbs}) = \Rcal(\Wbs)$.

    The iterates $(\Wbs(t))_{t \geq 0}$ of population gradient flow $\dot{\Wbs} = -\nabla \Rcal(\Wbs)$ with initialization $\Wbs(0) = 0$ lie in $\Wcal$ for all $t \geq 0$. This can be seen by summing both sides of \eqref{eq:risk_gradient} over all classes. Convexity gives
    \begin{equation}
        \Rcal(\Wbs(t)) - \Rcal(\Wbs^*) \leq \inner{\nabla \Rcal(\Wbs(t)), \Wbs(t) - \Wbs^*}.
    \end{equation}
    Since $\dot{\Wbs}(t) = -\nabla \Rcal(\Wbs(t))$, we have
    \begin{equation}
        \frac{\dee}{\dee t} \frac{1}{2} \norm{\Wbs(t) - \Wbs^*}_F^2 = -\inner{\nabla \Rcal(\Wbs(t)), \Wbs(t) - \Wbs^*}_F \leq -\left(\Rcal(\Wbs(t)) - \Rcal(\Wbs^*)\right).
    \end{equation}
    Integrating from $0$ to $t$ gives
    \begin{equation}
        \int_0^t \left(\Rcal(\Wbs(s)) - \Rcal(\Wbs^*)\right) \dee s \leq \frac{1}{2} \norm{\Wbs(0) - \Wbs^*}_F^2.
    \end{equation}
    The risk is non-increasing along gradient flow, so
    \begin{equation}
        t\left(\Rcal(\Wbs(t)) - \Rcal(\Wbs^*)\right) \leq \frac{1}{2}\norm{\Wbs(0) - \Wbs^*}_F^2.
    \end{equation}
    Hence, $\Rcal(\Wbs(t)) \to \Rcal^*$. Moreover, since $\norm{\Wbs(t) - \Wbs^*}_F^2$ is non-increasing (and therefore bounded), it must converge. By strict convexity, it converges to zero.
\end{proof}

To find tight bounds on $\Rcal_{K,\pibs}^*(\sigma)$, the natural starting point is to consider the Bayes-optimal classifier. The Bayesian posterior of class $i$ given an input $\xbs$ and our prior $\pibs$ over classes is 
\begin{equation}
    p_i^*(\xbs) = \frac{\pi_i \exp(\inner{\xbs,\mubs_i}/\sigma^2)}{\sum_{k=1}^K \pi_k \exp(\inner{\xbs,\mubs_k}/\sigma^2)}.
\end{equation}
Recall our logistic regression model with class probabilities $p_i(\xbs)$, $i \in [K]$, parametrized by weights $\Wbs$. We have a natural lower bound for its cross-entropy risk via
\begin{equation}
    \Rcal(\Wbs) = \E_{\xbs} \left[H(p^*(\xbs), p(\xbs))\right] = \E_{\xbs}\left[H(p^*(\xbs)) + \mathrm{KL}\left(p^*(\xbs) \mid\mid p(\xbs)\right) \right] \geq \E_{\xbs}\left[H(p^*(\xbs))\right] = \Rcal^*_{\text{Bayes}}.
\end{equation}

Note that the posterior probabilities $p_i^*$ depend only on the vector $\zbs \in \R^K$ such that $\zbs_i = \inner{\xbs,\mubs_i}$, $i \in [K]$. We abuse notation to write the true posterior as $p_i^*(\zbs)$. Then, the Bayes risk is
\begin{equation}\label{eq:bayes_risk}
    \Rcal_{\text{Bayes}}^* = -\sum_{i=1}^K \pi_i \E_i \left[\log p_i^*(\zbs)\right],
\end{equation}
where $\E_i$ denotes the expectation over $\zbs$ given the label $y = i$. There is no closed form expression for the above, so we settle for deriving tight bounds.

Denote the scalar softplus function by $s(u) := \log(1+e^u)$, and the log posterior ratio
\begin{equation}\label{eq:log_posterior_ratio}
    \ell_{ji}(\zbs) := \log \frac{p_j^*(\zbs)}{p_i^*(\zbs)} = \log \frac{\pi_j}{\pi_i} + \frac{z_j - z_i}{\sigma^2}.
\end{equation}
Then, notice that 
\begin{equation}
    p_i^*(\zbs) = \frac{\pi_i \exp(z_i/\sigma^2)}{\sum_{k=1}^K \pi_k \exp(z_k/\sigma^2)} = \frac{1}{1+\sum_{k \neq i} \frac{\pi_k}{\pi_i} \exp(\frac{z_k-z_i}{\sigma^2})} = \frac{1}{1 + \sum_{k\neq i} \exp(\ell_{ki})}.
\end{equation}
Hence,
\begin{equation}\label{eq:loss_term_log_posterior}
    -\log p_i^*(\zbs) = \log\left(1 + \sum_{k \neq i} \exp(\ell_{ki})\right),
\end{equation}
a log-sum-exp of a Gaussian vector with correlated entries. In the binary case with classes $i \neq j$, we have the equivalence $-\log p_i^*(\zbs) = s(\ell_{ji})$. This motivates the study of the pairwise Bayes conditional entropy contribution
\begin{equation}
    B_{ij}(\sigma) := \pi_i \E_i[s(\ell_{ji})] + \pi_j \E_j [s(\ell_{ij})].
\end{equation}
That is, we study the contribution of competitor class $j$ when the true label is $i$, and vice versa.
\begin{lemma}\label{lem:Bij_bound}
    Let $b_{ij} = \log \pi_i - \log \pi_j$ and $h(u) = e^{-u/2}s(u)$. Then,
    \begin{equation}\label{eq:exact_B_ij}
        B_{ij} = \frac{\sigma \sqrt{\pi_i \pi_j}}{\sqrt{\pi}} \exp\left(-\frac{1}{4\sigma^2} - \frac{\sigma^2 b_{ij}^2}{4}\right) \int_{\R} h(u) e^{-\sigma^2 u^2/4} \cosh\left(\frac{\sigma^2 b_{ij} u}{2}\right) \dee u.
    \end{equation}
    In particular, if $\sigma^2 |b_{ij}| \leq c$ for some fixed $c \in (0,1)$,
    \begin{equation}\label{eq:approximate_B_ij}
        B_{ij}(\sigma) = 2\sqrt{\pi} \sigma \sqrt{\pi_i \pi_j} e^{-\frac{1}{4\sigma^2}}e^{-\sigma^2 b_{ij}^2/4} (1+ O(\sigma^2(1+\sigma^2 b_{ij}^2)).
    \end{equation}
\end{lemma}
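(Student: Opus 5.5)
The identity \eqref{eq:exact_B_ij} is a direct Gaussian computation; \eqref{eq:approximate_B_ij} then follows by expanding the integrand around $\sigma=0$.

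\textbf{Step 1: the conditional laws.} Conditionally on $y=i$, write $\xbs=\mubs_i+\sigma\gbs$ with $\gbs\sim\Ncal(\zerobs,\Ibs_d)$. By orthonormality of the means, $z_j-z_i=-1+\sigma(\inner{\gbs,\mubs_j}-\inner{\gbs,\mubs_i})\sim\Ncal(-1,2\sigma^2)$. Put $m:=1/\sigma^2$ and $b:=b_{ij}$. By \eqref{eq:log_posterior_ratio}, under $y=i$ we have $\ell_{ji}\sim\Ncal(-b-m,\,2m)$. Symmetrically, under $y=j$ we have $\ell_{ij}\sim\Ncal(b-m,\,2m)$. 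Hence
\begin{equation*}
B_{ij}=\frac{1}{\sqrt{4\pi m}}\int_{\R}s(u)\Big[\pi_i e^{-(u+m+b)^2/(4m)}+\pi_j e^{-(u-b+m)^2/(4m)}\Big]\dee u .
\end{equation*}

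\textbf{Step 2: expand the squares.} We have $(u+m+b)^2/(4m)=u^2/(4m)+u/2+ub/(2m)+m/4+b/2+b^2/(4m)$. Using $\pi_i=\sqrt{\pi_i\pi_j}\,e^{b/2}$, the $e^{b/2}$ factors cancel. The $j$-term is handled the same way with $b\mapsto-b$ and $\pi_j=\sqrt{\pi_i\pi_j}\,e^{-b/2}$. The bracket therefore equals
\begin{equation*}
2\sqrt{\pi_i\pi_j}\,e^{-m/4-b^2/(4m)}\,e^{-u/2}e^{-u^2/(4m)}\cosh\!\big(ub/(2m)\big).
\end{equation*}
Substituting $m=1/\sigma^2$ and noting $2/\sqrt{4\pi m}=\sigma/\sqrt{\pi}$ gives exactly \eqref{eq:exact_B_ij}, with $h(u)=e^{-u/2}s(u)$.

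\textbf{Step 3: asymptotics.} First, the leading integral. From the standard formula $\int_{\R}e^{-au}\log(1+e^u)\,\dee u=\pi/(a\sin\pi a)$ for $a\in(0,1)$ (obtained by integrating by parts into a Beta integral), we get $\int_{\R} h=2\pi$. The prefactor $\sigma/\sqrt{\pi}$ times $2\pi$ yields the constant $2\sqrt{\pi}\,\sigma$ in \eqref{eq:approximate_B_ij}.

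Second, the remainder. Set $\beta:=\sigma^2 b/2$ and $\gamma:=\sigma^2/4$, and bound $\int_{\R} h(u)\,\big|e^{-\gamma u^2}\cosh(\beta u)-1\big|\,\dee u$. The elementary inequalities $|e^{-\gamma u^2}-1|\le\gamma u^2$ and $|\cosh(\beta u)-1|\le \tfrac12\beta^2u^2\cosh(\beta u)$ bound the integrand by $h(u)\,u^2(\gamma+\beta^2)\cosh(\beta u)$. The function $h$ satisfies $h(u)\lesssim(1+|u|)e^{-|u|/2}$. The hypothesis $\sigma^2|b|\le c<1$ gives $|\beta|\le c/2<1/2$. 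Together these make $\int_{\R} h\,u^2\cosh(\beta u)\,\dee u$ bounded by a constant depending only on $c$. The relative error is thus $O(\gamma+\beta^2)=O\big(\sigma^2(1+\sigma^2 b^2)\big)$, which is exactly \eqref{eq:approximate_B_ij}.

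The only delicate point is this uniform integrability in Step 3. The factor $\cosh(\beta u)$ grows exponentially against the $e^{-|u|/2}$ decay of $h$, and the condition $\sigma^2|b_{ij}|\le c<1$ is what keeps the resulting constant finite and independent of $\sigma$ and $b_{ij}$.
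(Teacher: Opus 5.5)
Your proposal is correct and follows essentially the same route as the paper. You identify the Gaussian laws of $\ell_{ji}$ and $\ell_{ij}$, expand the square, use $\pi_i e^{-b_{ij}/2}=\pi_j e^{b_{ij}/2}=\sqrt{\pi_i\pi_j}$ to obtain the $\cosh$ form, and then compare the resulting integral to $\int_{\R} h=2\pi$. For that comparison you use $u^2$-type bounds on $e^{-\sigma^2u^2/4}\cosh(\sigma^2 b_{ij}u/2)-1$, together with the $e^{-|u|/2}$ tails of $h$ and the condition $\sigma^2|b_{ij}|\le c<1$ for uniform integrability. The only differences are cosmetic: you evaluate $\int_{\R} h$ through the Beta identity $\pi/(a\sin\pi a)$ at $a=1/2$ instead of the paper's direct substitution, and you state explicitly that the implied constant depends only on $c$.
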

\begin{proof}
    By its definition \eqref{eq:log_posterior_ratio}, we have $\ell_{ji} \sim \Ncal(-b_{ij}-\frac{1}{\sigma^2}, \frac{2}{\sigma^2})$. Hence,
    \begin{equation}
        \pi_i \E_i[s(\ell_{ji})] = \frac{\sigma \pi_i}{2\sqrt{\pi}} e^{-1/(4\sigma^2)} \int_{\R} s(u) e^{-(u+b_{ij})/2} e^{-\sigma^2 (u+b_{ij})^2/4} \dee u
    \end{equation}
    and similarly,
    \begin{equation}
        \pi_j \E_j[s(\ell_{ij})] = \frac{\sigma \pi_j}{2\sqrt{\pi}} e^{-1/(4\sigma^2)} \int_{\R} s(u) e^{-(u-b_{ij})/2} e^{-\sigma^2 (u-b_{ij})^2/4} \dee u.
    \end{equation}
    The first part of the result follows from the identity $\pi_i e^{-b_{ij}/2} = \pi_j e^{b_{ij}/2} = \sqrt{\pi_i \pi_j}$, along with the definitions of $h$ and $\cosh$. 

    For the second part, let
    \begin{equation}
        I_{ij} := \int_{\R} h(u) e^{-\sigma^2 u^2/4} \cosh\left(\frac{\sigma^2 b_{ij}u}{2}\right) \dee u.
    \end{equation}
    Notice that, via change of variable and integration by parts,
    \begin{equation}
        \begin{split}
            \int_{\R} h(u) \dee u &= \int_{\R} e^{-u/2} \log(1+e^u) \dee u \\
            &= \int_0^{\infty} x^{-3/2} \log(1+x) \dee x \\
            &= 2\int_0^{\infty} \frac{x^{-1/2}}{1+x} \dee x \\
            &= 4 \int_0^{\infty} \frac{1}{1+t^2}  \\
            &= 4 \arctan t \big|_{0}^{\infty} \\
            &= 2\pi.
        \end{split}
    \end{equation}
    On the other hand, we decompose
    \begin{equation}
        e^{-\sigma^2u^2/4} \cosh(\sigma^2 b_{ij}u/2) - 1 = (e^{-\sigma^2u^2/4}-1) + e^{-\sigma^2u^2/4}\left(\cosh(\sigma^2 b_{ij}u/2)-1\right),
    \end{equation}
    so we can bound
    \begin{equation}
        \begin{split}
            \left|e^{-\sigma^2u^2/4}\cosh(\sigma^2 b_{ij}u/2)-1\right| &\leq |e^{-\sigma^2u^2/4}-1| + |\cosh(\sigma^2 b_{ij}u/2)-1| \\
            &\lesssim \sigma^2 u^2 + \sigma^4 b_{ij}^2 u^2 e^{\sigma^2 |b_{ij}||u|/2},
        \end{split}
    \end{equation}
    where the last line follows from the inequalities $1-e^{-x} \leq x$ and $\cosh x -1  \leq \frac{1}{2}x^2 e^{|x|}$ --- the latter of which can be seen via Taylor expansion.
    Then, the error associated with approximating $I_{ij}$ by $2\pi$ is
    \begin{equation}
        \left|\int_{\R} h(u) \left[e^{-\sigma^2 u^2/4} \cosh\left(\frac{\sigma^2 b_{ij}u}{2}\right)-1\right] \dee u\right| \lesssim \sigma^2 \int_{\R} h(u) u^2 \dee u + \sigma^4 b_{ij}^2 \int_{\R} h(u)u^2 e^{\sigma^2|b_{ij}||u|/2} \dee u.
    \end{equation}
    Note that, by definition,
    \begin{equation}
        h(u) \lesssim
        \begin{cases}
            ue^{-u/2}, & \text{as } u \to +\infty, \\
            e^{u/2}, & \text{as } u \to -\infty.
        \end{cases}
    \end{equation}
    Hence, both of the above integrals are finite since $\sigma^2 |b_{ij}| < 1$. We can conclude 
    \begin{equation}
        I_{ij} = 2\pi + O(\sigma^2 + \sigma^4 b_{ij}^2),
    \end{equation}
    which we can plug into \eqref{eq:exact_B_ij} to obtain \eqref{eq:approximate_B_ij}.
\end{proof}

Next, we aggregate the binary Bayes conditional entropies:
\begin{equation}
    P(\sigma) := \sum_{i < j}^K B_{ij}(\sigma),
\end{equation}
and relate this quantity to the Bayes risk.
\begin{proposition}\label{prop:bayes_risk_lower_bound}
    Let 
    \begin{equation}
        T(\sigma) := e^{-\frac{5}{16\sigma^2}} \left(\sum_{i=1}^K \pi_i^{1/2}\right)\left(\sum_{i=1}^K \pi_i^{1/4}\right)^2.
    \end{equation}
    Then, for all $\sigma > 0$,
    \begin{equation}
        \Rcal_{\text{Bayes}}^* \geq P(\sigma) - T(\sigma).
    \end{equation}
\end{proposition}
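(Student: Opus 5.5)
The argument is a pointwise inclusion–exclusion (Bonferroni-type) lower bound on $-\log p_i^*(\zbs)$ via the log-sum-exp representation \eqref{eq:loss_term_log_posterior}, followed by Gaussian computations for the correction terms. Write $a_k := \exp(\ell_{ki}) \geq 0$ for $k \neq i$. The target pointwise inequality is
\begin{equation}
    \log\Big(1 + \sum_{k\neq i} a_k\Big) \;\geq\; \sum_{k \neq i} \log(1+a_k) \;-\; \sum_{\substack{k<l\\ k,l\neq i}} \min\{a_k a_l, \sqrt{a_k a_l}\}.
\end{equation}
Summing it against $\pi_i \E_i$ and then over $i$, the first term gives $\sum_i \sum_{k\neq i}\pi_i \E_i[s(\ell_{ki})] = \sum_{i<j} B_{ij}(\sigma) = P(\sigma)$ by the definition of $B_{ij}$. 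So the whole claim reduces to showing that the aggregated correction is at most $T(\sigma)$.

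\textbf{Pointwise step.} Use $\prod_k (1+a_k) \leq \exp(\cdot)$-type bounds. Concretely, $\sum_k \log(1+a_k) - \log(1+\sum_k a_k) = \log \frac{\prod_k(1+a_k)}{1+\sum_k a_k}$. Expanding the product, the numerator equals the denominator plus the elementary symmetric sums $e_2 + e_3 + \cdots$. The bound $\log(1+x)\leq x$ alone gives a correction of $\sum_{k<l} a_k a_l$ times higher-order factors, which fails when the $a_k$ are large. I would therefore prove a cleaner pairwise inequality: for $a,b \geq 0$ and $c \geq 0$, $\log(1+a) + \log(1+b+c)$ exceeds $\log(1+a+b+c)$ by at most $\log(1+\min\{a,b\}) \leq \sqrt{ab}$. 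Iterating this over the competitors (adding one at a time) yields a correction $\sum_{k<l} \sqrt{a_k a_l}$, or at least something dominated by it. The square root is essential because $\E[a_k a_l]$ under $\Ncal$ has a worse exponent.

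\textbf{Gaussian step.} Under $y=i$, $z_k - z_i$ has mean $-1$ and variance $2\sigma^2$, with covariance $\sigma^2$ between the $k$ and $l$ coordinates. Then $\frac12(\ell_{ki}+\ell_{li}) = \log\frac{\sqrt{\pi_k\pi_l}}{\pi_i} + \frac{(z_k+z_l)/2 - z_i}{\sigma^2}$. The Gaussian numerator has mean $-1$ and variance $\sigma^2(1/2 + 1) = \frac32\sigma^2$, so
\begin{equation}
    \E_i\sqrt{a_ka_l} = \frac{\sqrt{\pi_k\pi_l}}{\pi_i}\exp\!\Big(-\frac{1}{\sigma^2} + \frac{3}{4\sigma^2}\Big) = \frac{\sqrt{\pi_k\pi_l}}{\pi_i} e^{-\frac{1}{4\sigma^2}}.
\end{equation}
This gives exponent $1/4$, not $5/16$, and the $\pi_i^{-1}$ factor is too harsh. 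So the correct move is to interpolate, using $\min\{x,y\}\leq x^{\theta}y^{1-\theta}$ on $\log(1+\min\{a_k,a_l\})$ together with $\log(1+u) \leq u^{\beta}$ for $\beta\in(0,1]$. This gives a correction $(a_k a_l)^{\beta/2}$. With exponent $\gamma = \beta/2$, the moment $\pi_i\E_i[(a_ka_l)^{\gamma}]$ equals $\pi_i^{1-2\gamma}(\pi_k\pi_l)^{\gamma}\exp(-2\gamma/\sigma^2 + 3\gamma^2/\sigma^2)$. Choosing $\gamma = 1/4$ gives exponent $-\frac{1}{2}+\frac{3}{16} = -\frac{5}{16}$ and weights $\pi_i^{1/2}\pi_k^{1/4}\pi_l^{1/4}$. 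Summing over $i$ and over pairs $k,l$ is bounded by $(\sum_i \pi_i^{1/2})(\sum_k\pi_k^{1/4})^2$, which is exactly $T(\sigma)$. This confirms that $\gamma = 1/4$, i.e.\ $\log(1+u)\leq u^{1/2}$ applied to $\min\{a_k,a_l\} \leq \sqrt{a_ka_l}$, is the intended route.

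\textbf{Main obstacle.} The hard part is the pointwise inequality with correction $\sum_{k<l}\log(1+\min\{a_k,a_l\})$. I would prove it by induction on the number of competitors. For the step, set $A=\sum_{k<n}a_k$ and $b=a_n$, and show
\begin{equation}
    \log(1+A) + \log(1+b) - \log(1+A+b) = \log\Big(1+\frac{Ab}{1+A+b}\Big) \leq \log\big(1+\min\{A,b\}\big).
\end{equation}
One then needs $\log(1+\min\{A,b\}) \leq \sum_{k<n}\log(1+\min\{a_k,b\})$. This follows from $\min\{\sum_k a_k, b\} \leq \sum_k\min\{a_k,b\}$ and subadditivity of $\log(1+\cdot)$. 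Finally, $\log(1+\min\{a_k,a_l\})\leq \min\{a_k,a_l\}^{1/2}\leq (a_ka_l)^{1/4}$, which closes the argument.
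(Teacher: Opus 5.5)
Your proof is correct and follows the paper's route. The same one-competitor-at-a-time induction (via $\log(1+Ab/(1+A+b))\le\log(1+\min\{A,b\})$ and distributing the minimum) gives the correction $\sum_{k<l}\log(1+\min\{a_k,a_l\})$; this is bounded by $(a_ka_l)^{1/4}$ and integrated to $\pi_i^{1/2}(\pi_k\pi_l)^{1/4}e^{-5/(16\sigma^2)}$ exactly as in the paper, and your $\log(1+u)\le\sqrt{u}$ even removes the factor $2$ in the paper's $s(\min\{u,v\})\le 2e^{(u+v)/4}$. The only flaw is the exploratory ``pairwise inequality'' with the extra $c\ge 0$ in your pointwise-step paragraph. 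It is false as stated (take $b=0$ and $a,c>0$), but your final argument does not use it.
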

In particular, this gives us a lower bound on $\Rcal_{K,\pibs}^*(\sigma)$. To prove this result, we require the following technical lemma.

\begin{lemma}\label{lem:softplus_min_inequalities}
    For $u_1,\dots,u_m \in \R$, 
    \begin{equation}
        \sum_{j=1}^m s(u_j) - \log \left(1 + \sum_{j=1}^m e^{u_j}\right) \leq \sum_{j < k} s\left(\min\{u_j,u_k\}\right). 
    \end{equation}
    Moreover,
    \begin{equation}
        s(\min\{u,v\}) \leq 2e^{(u+v)/4}.
    \end{equation}
\end{lemma}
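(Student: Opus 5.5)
The plan is to prove the two inequalities separately. The first goes by induction on $m$, adding one variable at a time. The second is a case split on the sign of the smaller argument.

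\emph{First inequality.} Set $F_m(u_1,\dots,u_m) := \sum_{j=1}^m s(u_j) - \log(1+\sum_{j=1}^m e^{u_j})$. For $m=1$ we have $F_1 = 0$, and the right-hand side is an empty sum, so the claim holds. For the inductive step, write $x := e^{u_{m+1}}$, $y_j := e^{u_j}$ and $S := \sum_{j\le m} y_j$. The increment is
\begin{equation*}
F_{m+1} - F_m = \log(1+x) - \log\frac{1+S+x}{1+S} = \log\left(1 + \frac{xS}{1+S+x}\right).
\end{equation*}
The right-hand side of the claimed inequality increases by $\sum_{j\le m} s(\min\{u_j,u_{m+1}\}) = \log\prod_{j\le m}(1+\min\{x,y_j\})$. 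So it suffices to show
\begin{equation*}
1 + \frac{xS}{1+S+x} \le \prod_{j\le m}\bigl(1+\min\{x,y_j\}\bigr).
\end{equation*}
This follows from a chain of elementary bounds. Expanding the product gives $\prod_j(1+\min\{x,y_j\}) \ge 1 + \sum_j \min\{x,y_j\}$. Next, $\sum_j \min\{x,y_j\} \ge \min\{x,S\}$: either some $y_j \ge x$, or every term equals $y_j$. Finally, $xS/(1+S+x) \le \min\{x,S\}$, because $S/(1+S+x) \le 1$ and $x/(1+S+x)\le 1$. Chaining these closes the induction.

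\emph{Second inequality.} Let $a = \min\{u,v\}$ and $b=\max\{u,v\}$. Since $b \ge a$, we have $e^{(u+v)/4} \ge e^{a/2}$, so it suffices to show $s(a) \le 2e^{a/2}$. If $a \le 0$, then $s(a) = \log(1+e^a) \le e^a \le e^{a/2}$. If $a>0$, then $s(a) \le a + \log 2 \le 2 + a \le 2e^{a/2}$, where the last step uses $e^{a/2} \ge 1 + a/2$.

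\emph{Main obstacle.} The only real step is the inductive comparison. The key reduction is recognising the increment as $\log(1+xS/(1+S+x))$ and bounding it by $\min\{x,S\}$, which then sits below the product of pairwise terms. The remaining steps are routine.
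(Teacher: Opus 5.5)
Your proposal is correct and takes essentially the same route as the paper. For the first inequality you use the same induction with increment $\log\bigl(1+xS/(1+S+x)\bigr)\le\log(1+\min\{x,S\})\le\sum_{j}\log(1+\min\{x,e^{u_j}\})$. For the second you use the same sign split on $\min\{u,v\}$ together with $e^{(u+v)/4}\ge e^{\min\{u,v\}/2}$.
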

\begin{proof}
    For the first part, we proceed by induction. The base case $m = 1$ is trivial. Now, suppose $m > 1$ and the result holds up to $m-1$. Define
    \begin{equation}
        F_m := \sum_{j=1}^m s(u_j) - \log\left(1 + \sum_{j=1}^m e^{u_j}\right). 
    \end{equation}
    Let $S = \sum_{j=1}^{m-1} e^{u_j}$ and $x = e^{u_m}$. Then,
    \begin{equation}
        \begin{split}
            F_m &= \sum_{j=1}^{m-1} s(u_j) - \log\left(1 + \sum_{j=1}^{m-1} e^{u_j}\right) + s(u_m) - \log\left(\frac{1+\sum_{j=1}^m e^{u_j}}{1+\sum_{j=1}^{m-1} e^{u_j}}\right) \\
            &= F_{m-1} + \log(1+x) + \log(1+S) - \log(1+S+x) \\
            &= F_{m-1} + \log\left(\frac{(1+S)(1+x)}{1+S+x}\right) \\
            &= F_{m-1} + \log\left(1 + \frac{Sx}{1+S+x}\right) \\
            &\leq F_{m-1} + \log(1+\min\{S,x\}).
        \end{split}
    \end{equation}
    Subsequently, we can write
    \begin{equation}
        1 + \min\{S,x\} \leq \prod_{j=1}^{m-1} \left(1 + \min\{e^{u_j},x\}\right)
    \end{equation}
    since
    \begin{equation}
        \prod_{j=1}^{m-1} \left(1 + \min\{e^{u_j},x\}\right) \geq 1 + \sum_{j=1}^{m-1} \min\{e^{u_j},x\} \geq 1 + \min\{S,x\}.
    \end{equation}
    This proves the first part. 
    
    For the second part, we split into two cases. Let $m = \min\{u,v\}$. If $m \leq 0$, we have
    \begin{equation}
        s(m) = \log(1+e^m) \leq e^m \leq e^{(u+v)/4},
    \end{equation}
    using the elementary inequality $\log(1+x) \leq x$ for $x \geq -1$. On the other hand, if $m > 0$, we have
    \begin{equation}
        s(m) = \log(1+e^m) \leq 1 +m \leq 2e^{m/2} \leq 2e^{(u+v)/4}.
    \end{equation}
\end{proof}

\begin{proof}[Proof of Proposition \ref{prop:bayes_risk_lower_bound}]
    \eqref{eq:bayes_risk} and \eqref{eq:loss_term_log_posterior} imply that the Bayes risk can be expressed as
    \begin{equation}
        \Rcal^*_{\text{Bayes}} = \sum_{i=1}^K \pi_i \E_i \left[\log \left(1 + \sum_{k \neq i} \exp(\ell_{ki}) \right)\right]. 
    \end{equation}
    Focusing on each term individually, we have, by Lemma \ref{lem:softplus_min_inequalities},
    \begin{equation}\label{eq:risk_term_lower_bound}
        \log\left(1 + \sum_{k \neq i} \exp(\ell_{ki}) \right) \geq \sum_{k \neq i} s(\ell_{ki}) - \sum_{\substack{j < k \\ j,k \neq i}} s\left(\min\{\ell_{ji}, \ell_{ki}\}\right).
    \end{equation}
    Notice that
    \begin{equation}
        \sum_{i=1}^K \pi_i \sum_{k \neq i} \E_i [s(\ell_{ki})] = \sum_{i < j} B_{ij}(\sigma) = P(\sigma).
    \end{equation}
    It remains to handle the rightmost term in \eqref{eq:risk_term_lower_bound}. By Lemma \ref{lem:softplus_min_inequalities},
    \begin{equation}
       \pi_i \E_i \left[s(\min\{\ell_{ji}, \ell_{ki}\})\right] \leq 2 \pi_i \E_i \left[e^{(\ell_{ji} + \ell_{ki})/4}\right] = 2 \pi_i^{1/2} (\pi_j \pi_k)^{1/4} e^{-5/(16\sigma^2)}.
    \end{equation}
    Putting things together,
    \begin{equation}
        2 \sum_{i=1}^K \pi_i^{1/2} \sum_{\substack{j < k \\ j,k \neq i}} (\pi_j \pi_k)^{1/4} e^{-5/(16\sigma^2)} \leq e^{-5/(16\sigma^2)} \left(\sum_{i=1}^K \pi_i^{1/2}\right) \left(\sum_{j=1}^K \pi_j^{1/4}\right)^2 =: T(\sigma).
    \end{equation}
\end{proof}

To obtain an upper bound, we construct a classifier that fits in our model class (i.e., it has no biases) and approximately achieves the Bayes risk. Prior to stating it, we give some motivation. Recall that the Bayesian posterior satisfies
\begin{equation}
    \log \frac{p_j^*(\zbs)}{p_i^*(\zbs)} = \log \frac{\pi_j}{\pi_i} + \frac{z_j - z_i}{\sigma^2}.
\end{equation}
As a result, the Bayes-optimal decision boundary between classes $i$ and $j$ is 
\begin{equation}
    \inner{\xbs, \mubs_j - \mubs_i} = \sigma^2 \log \frac{\pi_i}{\pi_j}.
\end{equation}
This is achieved by taking a logistic regression model with weights $\wbs_i = \frac{1}{\sigma^2} \mubs_i$ and biases $\log \pi_i$. However, we do not have biases in our model class and thus cannot achieve the Bayes-optimal classifier unless the priors are equal. Instead, we settle for a modification of the Bayes-optimal weights $\wbs_i$ to mimic the behaviour of the Bayes classifier on the line segment between $\mubs_i$ and $\mubs_j$.

Suppose that we take weights $\wbs_i = r_i \mubs_i$, $i \in [K]$, for some positive scalars $r_i$. Then, the decision boundary between classes $i$ and $j$ takes the form
\begin{equation}
    \inner{\xbs,r_j \mubs_j - r_i \mubs_i} = 0.
\end{equation}
For $\lambda \in [0,1]$, we have
\begin{equation}
    \inner{(1-\lambda) \mubs_i + \lambda \mubs_j, r_j \mubs_j - r_i \mubs_i} = 0
\end{equation}
if and only if
\begin{equation}
    \lambda = \frac{r_i}{r_i + r_j}.
\end{equation}
So, decision boundary crosses the line segment between $\mubs_i$ and $\mubs_j$ at the point $\frac{1}{r_i + r_j}(r_j \mubs_i + r_i \mubs_j)$. Now, compare this to the Bayes rule: we have
\begin{equation}
    \inner{\xbs, \mubs_j - \mubs_i} = \inner{(1-\lambda) \mubs_i + \lambda \mubs_j, \mubs_j - \mubs_i} = \sigma^2 \log \frac{\pi_i}{\pi_j}
\end{equation}
if and only if
\begin{equation}
    \lambda = \frac{1}{2} + \frac{\sigma^2}{2} \log \frac{\pi_i}{\pi_j} =: \lambda_{\text{Bayes}}.
\end{equation}
Now, if for all $k \in [K]$ we take $r_k := \frac{1}{\sigma^2} + \Delta_k$ for some $\Delta_k \in \R$, a bivariate Taylor expansion about $(\Delta_i, \Delta_j) = (0,0)$ gives
\begin{equation}
    \frac{r_i}{r_i + r_j} = \frac{1}{2} + \frac{\sigma^2}{4}(\Delta_i - \Delta_j) + o(\sigma^2 |\Delta_i - \Delta_j|).
\end{equation}
As a result, choosing $\Delta_i = 2\log \pi_i$, $\Delta_j = 2 \log \pi_j$ gives an approximation to $\lambda_{\text{Bayes}}$ that is accurate up to an additive error of $o(\sigma^2)$. This leads us to our bias-free logistic regression model.

Define
\begin{equation}\label{eq:beta_r_defs}
    \beta_i := \log \pi_i - \frac{1}{2}(\log \pi_1 + \log \pi_K), \quad B_{\pibs} := \max_i |\beta_i| = \frac{1}{2} \log \frac{\pi_1}{\pi_K}, \quad r_i := \frac{1}{\sigma^2} + 2\beta_i,
\end{equation}
and take the model weights to be $\tilde{\wbs}_i = r_i \mubs_i$. Let
\begin{equation}\label{eq:J_C_tau_defs}
    J(m,\tau) := \E_{t \sim \Ncal(m,\tau^2)}\left[s(-t)\right], \quad C_{ij} := \pi_i J(r_i, \tau_{ij}) + \pi_j J(r_j, \tau_{ij}), \quad \tau_{ij} := \sigma \sqrt{r_i^2 + r_j^2}.
\end{equation}
Then, the risk of the proposed weight configuration is
\begin{equation}
    \begin{split}
        \Rcal(\tilde{\Wbs}) &= -\sum_{i=1}^K \pi_i \E_i\left[\log p_i(\zbs)\right] \\
        &= \sum_{i=1}^K \pi_i \E_i \left[\log\left(1 + \sum_{j \neq i} \exp(r_jz_j - r_iz_i)\right)\right] \\
        &\leq \sum_{i=1}^K \pi_i \sum_{j \neq i} \E_i \left[s(r_j z_j - r_i z_i)\right] \\
        &= \sum_{i=1}^K \pi_i \sum_{j \neq i} \E_{t \sim \Ncal(r_i, \sigma^2(r_i^2 + r_j^2))}[s(-t)] \\
        &= \sum_{i < j} C_{ij}.
    \end{split}
\end{equation}
Then, we can control the risk by precisely characterizing each of the $C_{ij}$.
\begin{lemma}\label{lem:C_ij_bound}
    There exists $c \in (0,1)$ such that if $0 < \sigma < c$ and $\sigma B_{\pibs} < c$, then, for all indices $i,j$ such that $i \neq j$, 
    \begin{equation}
        C_{ij} = 2 \sqrt{\pi} \sigma \sqrt{\pi_i \pi_j} e^{-\frac{1}{4\sigma^2}}(1+\eps_{ij}^U), \quad \text{with} \quad |\eps_{ij}^U| \lesssim \sigma^2 (1 + B_{\pibs} + B_{\pibs}^2).
    \end{equation}
\end{lemma}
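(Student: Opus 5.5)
The plan is to follow the computation in the proof of Lemma~\ref{lem:Bij_bound}. We write each of the two terms of $C_{ij}$ as an explicit one-dimensional Gaussian integral of the softplus. We then pull out the common factor $\sigma\sqrt{\pi_i\pi_j}\,e^{-1/(4\sigma^2)}$ and show that the remaining integral equals $\int_\R h(u)\,\dee u = 2\pi$ up to a relative error of order $\sigma^2(1+B_{\pibs}+B_{\pibs}^2)$. Throughout we write $a=\beta_i$, $b=\beta_j$ (so $|a|,|b|\le B_{\pibs}$) and $r_i=(1+2\sigma^2 a)/\sigma^2$. The hypotheses $\sigma<c$ and $\sigma B_{\pibs}<c$ give $\sigma^2|a|\le c^2$, so $r_i,r_j\asymp 1/\sigma^2$ and $\tau_{ij}^2 = \sigma^2(r_i^2+r_j^2)\asymp 1/\sigma^2$.

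First, with the substitution $t=-u$,
\[
\pi_i J(r_i,\tau_{ij}) = \frac{\pi_i}{\sqrt{2\pi}\,\tau_{ij}}\, e^{-r_i^2/(2\tau_{ij}^2)} \int_\R s(u)\, e^{-u r_i/\tau_{ij}^2}\, e^{-u^2/(2\tau_{ij}^2)}\,\dee u,
\]
and the analogous formula holds for the $j$ term. We then expand the three $(a,b)$-dependent quantities.

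\begin{enumerate}
\item The prefactor satisfies $1/(\sqrt{2\pi}\,\tau_{ij}) = \frac{\sigma}{2\sqrt\pi}\bigl(1+O(\sigma^2 B_{\pibs})\bigr)$.
\item The constant exponent satisfies
\[
\frac{r_i^2}{2\tau_{ij}^2} = \frac{(1+2\sigma^2a)^2}{2\sigma^2\bigl[(1+2\sigma^2a)^2+(1+2\sigma^2b)^2\bigr]} = \frac{1}{4\sigma^2} + \frac{a-b}{2} + O\bigl(\sigma^2(a^2+b^2)\bigr),
\]
which follows by expanding the ratio to second order in $\sigma^2 a$ and $\sigma^2 b$.
\item The linear coefficient satisfies $r_i/\tau_{ij}^2 = \tfrac12 + \tfrac{\sigma^2(a-b)}{2} + O(\sigma^4 B_{\pibs}^2)$. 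Write $\kappa_i := r_i/\tau_{ij}^2 - \tfrac12 = O(\sigma^2 B_{\pibs})$.
\end{enumerate}

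Since $a-b=\log\pi_i-\log\pi_j$, we have $\pi_i e^{-(a-b)/2}=\sqrt{\pi_i\pi_j}$, and symmetrically $\pi_j e^{-(b-a)/2}=\sqrt{\pi_i\pi_j}$. The bound $\sigma^2B_{\pibs}^2\le c^2$ makes $e^{O(\sigma^2 B_{\pibs}^2)} = 1+O(\sigma^2B_{\pibs}^2)$. Combining these gives
\[
\pi_i J(r_i,\tau_{ij}) = \frac{\sigma\sqrt{\pi_i\pi_j}}{2\sqrt\pi}\,e^{-\frac1{4\sigma^2}}\bigl(1+O(\sigma^2(B_{\pibs}+B_{\pibs}^2))\bigr)\int_\R h(u)\,e^{-\kappa_i u}\,e^{-u^2/(2\tau_{ij}^2)}\,\dee u,
\]
and the same holds for the $j$ term with $\kappa_j$ in place of $\kappa_i$.

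It remains to show that each of these integrals equals $2\pi\bigl(1+O(\sigma^2(1+B_{\pibs}))\bigr)$. Summing the two terms then yields $2\cdot\frac{\sigma}{2\sqrt\pi}\cdot 2\pi = 2\sqrt\pi\,\sigma$ times $\sqrt{\pi_i\pi_j}\,e^{-1/(4\sigma^2)}$, which is the claim. We bound the integrand deviation as in Lemma~\ref{lem:Bij_bound}:
\[
\bigl|e^{-\kappa u}e^{-u^2/(2\tau^2)}-1\bigr| \le \frac{u^2}{2\tau^2} + |\kappa||u|\,e^{|\kappa||u|} \lesssim \sigma^2u^2 + \sigma^2B_{\pibs}|u|\,e^{|\kappa||u|}.
\]
Because $h(u)$ decays like $|u|e^{-|u|/2}$ in both tails and $|\kappa|\lesssim c^2<1/4$ once $c$ is small, the moments $\int h(u)u^2\,\dee u$ and $\int h(u)|u|e^{|\kappa||u|}\,\dee u$ are bounded by absolute constants. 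This gives the desired error, and together with the previous display proves $|\eps^U_{ij}|\lesssim \sigma^2(1+B_{\pibs}+B_{\pibs}^2)$.

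The main obstacle is the second expansion above. One has to verify that the remainder in $r_i^2/(2\tau_{ij}^2)$ is genuinely $O(\sigma^2B_{\pibs}^2)$, and in particular that no $O(B_{\pibs})$ term survives, since the prefactor $e^{-1/(4\sigma^2)}$ leaves no slack and any such term would destroy the claimed precision. I would check this by writing $x=\sigma^2a$, $y=\sigma^2b$ and Taylor expanding $(1+2x)^2/\bigl[(1+2x)^2+(1+2y)^2\bigr] = \tfrac12 + (x-y) + O(x^2+y^2)$; dividing by $2\sigma^2$ then leaves the remainder $O(\sigma^2(a^2+b^2))$.
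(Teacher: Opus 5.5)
Your proposal is correct and follows essentially the paper's route: the same Gaussian-integral representation of $J$, the same Taylor expansions of $1/\tau_{ij}$, $r_i^2/(2\tau_{ij}^2)$ and $r_i/\tau_{ij}^2$, and a comparison of the remaining integral with $\int_\R h(u)\,du = 2\pi$. The one difference is that you bound the integrand deviation pointwise, as in Lemma~\ref{lem:Bij_bound}, whereas the paper evaluates $I(\theta,0)=\pi/(\theta\sin(\pi\theta))$ exactly via the Beta function and then treats $\rho$ separately. There is one small slip: the correct expansion is $r_i/\tau_{ij}^2 = \frac{1}{2} - \sigma^2\beta_j + O(\sigma^4 B_{\pibs}^2)$, not $\frac{1}{2} + \frac{\sigma^2(\beta_i-\beta_j)}{2} + O(\sigma^4 B_{\pibs}^2)$; this does not affect the argument, since you only use $|\kappa_i| = O(\sigma^2 B_{\pibs})$.
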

\begin{proof}
    First, we compute
    \begin{equation}\label{eq:J_integral}
        J(m,\tau) := \E_{t \sim \Ncal(m,\tau^2)}[s(-t)] = \frac{e^{-m^2/(2\tau^2)}}{\sqrt{2\pi}\tau} \int_{\R} \exp\left(\frac{m}{\tau^2}t -\frac{t^2}{2\tau^2}\right)s(-t) \dee t.
    \end{equation}
    using the Gaussian density. Focusing on the integral on the RHS, we define
    \begin{equation}
        I(\theta,\rho) := \int_{\R} \exp\left(\theta t - \frac{\rho t^2}{2}\right)s(-t) \dee t.
    \end{equation}
    Consider the special case
    \begin{equation}
        I(\theta, 0) = \int_{\R} e^{\theta t} \log(1+e^{-t}) \dee t = \frac{1}{\theta}\int_{\R} \frac{e^{\theta t}}{1+e^t} \dee t = \frac{1}{\theta} \int_{0}^{\infty} \frac{x^{\theta-1}}{1+x} \dee x = \frac{1}{\theta} B(\theta, 1-\theta) = \frac{\pi}{\theta \sin(\pi \theta)},
    \end{equation}
    where the first step follows from integration by parts, the second uses a change of variable, and the remaining steps follow from properties of the Beta function. As a function of $\theta$, the above has bounded derivative on any compact sub-interval of $(0,1)$ and therefore we can expand up to first-order about $\theta = 1/2$:
    \begin{equation}
        I(\theta,0) = 2\pi + O(|\theta - 1/2|).
    \end{equation}
    Meanwhile,
    \begin{equation}
        I(\theta,0) - I(\theta,\rho) = \int_{\R} e^{\theta t} \left(1 - e^{-\rho t^2/2}\right)s(-t) \dee t \leq \frac{\rho}{2} \int_{\R} t^2 e^{\theta t} s(-t) \dee t.
    \end{equation}
    The integral resulting from the final inequality is finite, as the integrand decays exponentially both as $t \to -\infty$ and as $t \to +\infty$. Hence,
    \begin{equation}
        I(\theta,\rho) = 2\pi + O\left(|\theta-1/2| + \rho\right).
    \end{equation}
    For the purposes of calculating $C_{ij}$, we are interested in the case $\theta =\frac{r_i}{\tau_{ij}^2}$, $\rho = \frac{1}{\tau_{ij}^2}$. Recalling the definitions \eqref{eq:beta_r_defs} and \eqref{eq:J_C_tau_defs}, we can treat these quantities as bivariate functions of $(\sigma^2\beta_i,\sigma^2\beta_j)$. A Taylor expansion about $(0,0)$ gives
    \begin{equation}
        \frac{r_i}{\tau_{ij}^2} = \frac{1}{2} - \sigma^2 \beta_j + O(\sigma^4 \beta_i^2 + \sigma^4 \beta_j^2) 
    \end{equation}
    and
    \begin{equation}
        \frac{1}{\tau_{ij}} = \frac{\sigma}{\sqrt{2}} \left(1 - \sigma^2 (\beta_i + \beta_j) + O(\sigma^4 B_{\pibs}^2)\right).
    \end{equation}
    Hence, 
    \begin{equation}
        I\left(\frac{r_i}{\tau_{ij}^2}, \frac{1}{\tau_{ij}^2}\right) = 2\pi\left(1 + O(\sigma^2 + \sigma^2 B_{\pibs})\right).
    \end{equation}
    Plugging this into our expression of interest and recalling \eqref{eq:J_integral}, we have 
    \begin{equation}
        \pi_i J(r_i, \tau_{ij}) = \frac{\sqrt{2\pi}}{\tau_{ij}} \pi_i e^{-r_i^2/2\tau_{ij}^2} \left(1 + O(\sigma^2 + \sigma^2 B_{\pibs})\right).
    \end{equation}
    Another Taylor expansion gives
    \begin{equation}
        \begin{split}
            \log \pi_i - \frac{r_i^2}{2\tau_{ij}^2} &= \log \pi_i -\frac{1}{4\sigma^2} - \frac{1}{2}\beta_i + \frac{1}{2}\beta_j + O(\sigma^2\beta_i^2 + \sigma^2\beta_j^2) \\
            &= -\frac{1}{4\sigma^2} + \frac{1}{2}\left(\log \pi_i + \log \pi_j\right) + O(\sigma^2 B_{\pibs}^2).
        \end{split}
    \end{equation}
    Hence,
    \begin{equation}
        \pi_i J(r_i, \tau_{ij}) = \sigma \sqrt{\pi} \sqrt{\pi_i \pi_j} e^{-1/(4\sigma^2)} \left(1 + O(\sigma^2(1+B_{\pibs} + B_{\pibs}^2)\right).
    \end{equation}
    Note that $\pi_j J(r_j, \tau_{ij})$ has the same leading term and error order. This implies the desired result.
\end{proof}
We are now ready to state upper and lower bounds for the best achievable risk under our logistic regression model.
\begin{theorem}\label{thm:inf_risk_upper_lower_bounds}
    Let
    \begin{equation}
        S(\pibs) = \sum_{i < j} \sqrt{\pi_i \pi_j}, \quad P_0(\sigma, \pibs) = 2\sqrt{\pi} \sigma e^{-\frac{1}{4\sigma^2}}S(\pibs), \quad A(\pibs) = \frac{(\sum_{i=1}^K \pi_i^{1/2})(\sum_{i=1}^K \pi_i^{1/4})^2}{S(\pibs)}.
    \end{equation}
    There exist constants $c,C > 0$ such that if $0 < \sigma \leq c$ and $\sigma B_{\pibs} \leq c$, then
    \begin{equation}
        \Rcal_{K,\pibs}^*(\sigma) \geq P_0(\sigma, \pibs) \left(1 - C(\sigma^2 + (\sigma B_{\pibs})^2) - C \frac{e^{-\frac{1}{16\sigma^2}}}{\sigma}A(\pibs)\right),
    \end{equation}
    \begin{equation}
        \Rcal_{K,\pibs}^*(\sigma) \leq P_0(\sigma, \pibs) \left(1 + C\sigma^2 (1+ B_{\pibs} + B_{\pibs}^2)\right).
    \end{equation}
\end{theorem}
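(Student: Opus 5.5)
The two bounds come from different sides: the lower bound from the Bayes risk, and the upper bound from the explicit bias-free configuration $\tilde{\Wbs}$ with $\tilde{\wbs}_i = r_i\mubs_i$.

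\emph{Upper bound.} Since $\Rcal_{K,\pibs}^*(\sigma) \le \Rcal(\tilde{\Wbs})$, the computation preceding Lemma~\ref{lem:C_ij_bound} gives $\Rcal(\tilde{\Wbs}) \le \sum_{i<j} C_{ij}$. This computation only uses the union-type bound $\log(1+\sum_j e^{a_j}) \le \sum_j s(a_j)$ together with $r_jz_j - r_iz_i \sim \Ncal(-r_i, \sigma^2(r_i^2+r_j^2))$ under class $i$. For $\sigma$ small and $\sigma B_{\pibs}$ small, which also guarantees $r_i>0$, Lemma~\ref{lem:C_ij_bound} gives $C_{ij} = 2\sqrt{\pi}\sigma\sqrt{\pi_i\pi_j}e^{-1/(4\sigma^2)}(1+\eps^U_{ij})$ with $|\eps^U_{ij}| \lesssim \sigma^2(1+B_{\pibs}+B_{\pibs}^2)$ uniformly in $i,j$. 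Summing over $i<j$ and factoring out $S(\pibs)$ yields $P_0(\sigma,\pibs)\bigl(1+C\sigma^2(1+B_{\pibs}+B_{\pibs}^2)\bigr)$. This direction is bookkeeping.

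\emph{Lower bound.} By the KL decomposition, $\Rcal(\Wbs) \ge \Rcal^*_{\mathrm{Bayes}}$ for every $\Wbs$, so $\Rcal^*_{K,\pibs}(\sigma) \ge \Rcal^*_{\mathrm{Bayes}} \ge P(\sigma) - T(\sigma)$ by Proposition~\ref{prop:bayes_risk_lower_bound}. I then handle the two pieces as follows.
\begin{itemize}
\item \textbf{The main term $P(\sigma)$.} Lemma~\ref{lem:Bij_bound} applies when $\sigma^2|b_{ij}| \le c$. Since $|b_{ij}| \le 2B_{\pibs}$, this follows from $\sigma^2 B_{\pibs} \le \sigma\cdot\sigma B_{\pibs} \le c^2$. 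It gives $B_{ij} = 2\sqrt{\pi}\sigma\sqrt{\pi_i\pi_j}e^{-1/(4\sigma^2)}e^{-\sigma^2 b_{ij}^2/4}(1+O(\sigma^2(1+\sigma^2b_{ij}^2)))$. Using $e^{-x} \ge 1-x$ with $\sigma^2 b_{ij}^2/4 \le (\sigma B_{\pibs})^2$, and $\sigma^4 b_{ij}^2 \lesssim \sigma^2(\sigma B_{\pibs})^2 \le (\sigma B_{\pibs})^2$, each $B_{ij}$ is at least $2\sqrt{\pi}\sigma\sqrt{\pi_i\pi_j}e^{-1/(4\sigma^2)}\bigl(1-C(\sigma^2+(\sigma B_{\pibs})^2)\bigr)$. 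Summing gives $P(\sigma) \ge P_0(1-C(\sigma^2+(\sigma B_{\pibs})^2))$.
\item \textbf{The correction $T(\sigma)$.} Writing $e^{-5/(16\sigma^2)} = e^{-1/(4\sigma^2)}e^{-1/(16\sigma^2)}$ gives
\[
T(\sigma) = P_0(\sigma,\pibs)\cdot \frac{e^{-1/(16\sigma^2)}}{2\sqrt{\pi}\,\sigma}A(\pibs),
\]
which is exactly of the claimed form after absorbing $1/(2\sqrt{\pi})$ into $C$.
\end{itemize}

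The main obstacle is the uniformity of the $O(\cdot)$ constants in Lemma~\ref{lem:Bij_bound} over all pairs, since the integral $\int h(u)u^2e^{\sigma^2|b_{ij}||u|/2}\,du$ must stay bounded. I would check that $\sigma^2|b_{ij}|/2 \le c^2 < 1/2$, so the exponential weight never cancels the $e^{-|u|/2}$ decay of $h$. That makes the constant universal once $c$ is fixed small, and then $c$ and $C$ can be chosen jointly for both directions.
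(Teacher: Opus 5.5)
Your proposal is correct and follows the paper's own proof. The lower bound goes through $\Rcal^*_{K,\pibs}(\sigma)\ge\Rcal^*_{\text{Bayes}}\ge P(\sigma)-T(\sigma)$, using Lemma~\ref{lem:Bij_bound} together with $e^{-x}\ge 1-x$ for $P$, and the identity $T/P_0 = e^{-1/(16\sigma^2)}A(\pibs)/(2\sqrt{\pi}\sigma)$; the upper bound goes through $\Rcal(\tilde{\Wbs})\le\sum_{i<j}C_{ij}$ with Lemma~\ref{lem:C_ij_bound}, and your explicit checks that $\sigma^2|b_{ij}|\le 2c^2<1$ keeps the constants uniform over pairs and that $r_i>0$ are details the paper leaves implicit.
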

\begin{proof}
    For the lower bound, we have $\Rcal_{K,\pibs}^* \geq \Rcal^*_{\text{Bayes}} \geq P - T$ by Proposition \ref{prop:bayes_risk_lower_bound}. Then, by Lemma \ref{lem:Bij_bound},
    \begin{equation}
        P = \sum_{i < j} B_{ij} = 2\sqrt{\pi} \sigma e^{-1/(4\sigma^2)} \left(\sum_{i < j} (\pi_i \pi_j)^{1/2} e^{-\sigma^2 b_{ij}^2/4} (1+\eps_{ij}^B)\right),
    \end{equation}
    with
    \begin{equation}
        |\eps_{ij}^B| \leq C(\sigma^2 + \sigma^4 b_{ij}^2) \leq C(\sigma^2 + 2\sigma^4 B_{\pibs}^2).
    \end{equation}
    We can write 
    \begin{equation}
        \exp(-\sigma^2 b_{ij}^2/4) \geq 1 - \frac{\sigma^2}{4} b_{ij}^2 \geq 1 - \sigma^2 B_{\pibs}^2.
    \end{equation}
    Hence, up to re-defining the constant $C$, we have
    \begin{equation}
        P(\sigma) \geq P_0(\sigma, \pibs) \left(1 - C(\sigma^2 + (\sigma B_{\pibs})^2\right).
    \end{equation}
    With an appropriate choice of $C$, we also have
    \begin{equation}
        \frac{T(\sigma)}{P_0(\sigma,\pibs)} \leq C \frac{e^{-1/(16\sigma^2)}}{\sigma} A(\pibs),
    \end{equation}
    which gives the lower bound result.

    As for the upper bound, our earlier finding $\Rcal(\tilde{\Wbs}) \leq \sum_{i < j} C_{ij}$, along with Lemma \ref{lem:C_ij_bound}, imply
    \begin{equation}
        \Rcal_{K,\pibs}^*(\sigma) \leq P_0(\sigma, \pi)\left[1 + C\sigma^2 (1 + B_{\pibs} + B_{\pibs}^2)\right].
    \end{equation}
\end{proof}


Given our power law prior,
\begin{equation}\label{eq:exact_power_law_prior}
    \pi_i = \frac{i^{-\alpha}}{Z_{K,\alpha}}, \quad Z_{K,\alpha} = \sum_{i=1}^K i^{-\alpha}.
\end{equation}
Then,
\begin{equation}\label{eq:B_S_power_law_prior}
    B_{\pibs} = \frac{\alpha}{2}\log K, \quad S(\pibs) = \frac{1}{2}\left[\frac{Z_{K,\alpha/2}^2}{Z_{K,\alpha}} - 1\right]. 
\end{equation}
For every fixed $\alpha > 0$, standard generalized harmonic sum estimates give
\begin{equation}\label{eq:S_estimate}
    S(\pibs) \asymp
    \begin{cases}
        K, & 0 < \alpha < 1, \\
        K/\log K, & \alpha = 1, \\
        K^{2-\alpha}, & 1 < \alpha < 2, \\
        (\log K)^2, & \alpha = 2, \\
        1, & \alpha > 2.
    \end{cases}
\end{equation}
\begin{corollary}\label{cor:power_law_optimal_risk}
    If $0 < \sigma = o(1/\log K)$ and the prior $\pibs$ satisfies \eqref{eq:exact_power_law_prior} for some $\alpha > 0$, then
    \begin{equation}
        \Rcal_{K,\pibs}^*(\sigma) = \sqrt{\pi} \sigma e^{-\frac{1}{4\sigma^2}}\left(\frac{Z_{K,\alpha/2}^2}{Z_{K,\alpha}}-1\right) (1+o(1)).
    \end{equation}
\end{corollary}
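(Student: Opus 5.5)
The plan is to obtain the corollary directly from Theorem~\ref{thm:inf_risk_upper_lower_bounds}, by specializing its three error terms to the power-law prior and showing each is $o(1)$.

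\textbf{Step 1: identify the leading term.} By \eqref{eq:B_S_power_law_prior}, $S(\pibs)=\tfrac12\bigl(Z_{K,\alpha/2}^2/Z_{K,\alpha}-1\bigr)$. Hence
\[
P_0(\sigma,\pibs)=2\sqrt{\pi}\,\sigma e^{-1/(4\sigma^2)}S(\pibs)=\sqrt{\pi}\,\sigma e^{-1/(4\sigma^2)}\bigl(Z_{K,\alpha/2}^2/Z_{K,\alpha}-1\bigr),
\]
which is exactly the claimed leading term. (Formally, $S(\pibs)=\tfrac12[(\sum_i\sqrt{\pi_i})^2-1]$.)

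\textbf{Step 2: check the hypotheses of Theorem~\ref{thm:inf_risk_upper_lower_bounds}.} We have $B_{\pibs}=\tfrac{\alpha}{2}\log K$. Since $\sigma=o(1/\log K)$, both $\sigma\to0$ and $\sigma B_{\pibs}=\tfrac{\alpha}{2}\sigma\log K\to0$. So for $K$ large, $\sigma\le c$ and $\sigma B_{\pibs}\le c$, and the theorem applies.

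\textbf{Step 3: the upper bound.} The upper-bound error is $C\sigma^2(1+B_{\pibs}+B_{\pibs}^2)$. This is at most
\[
C\bigl(\sigma^2+\sigma\cdot\sigma B_{\pibs}+(\sigma B_{\pibs})^2\bigr)=o(1).
\]

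\textbf{Step 4: the lower bound.} The first error, $C(\sigma^2+(\sigma B_{\pibs})^2)$, is $o(1)$ for the same reason as in Step 3. The remaining error is $C\sigma^{-1}e^{-1/(16\sigma^2)}A(\pibs)$, and I expect this to be the main obstacle, since $A(\pibs)$ may grow polynomially in $K$.

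To handle it, bound $A(\pibs)$ crudely. Since $\sum_i\pi_i^{1/4}\le K^{3/4}$ by H\"older and $\sum_i\pi_i^{1/2}\le\sqrt K$, the numerator of $A(\pibs)$ is at most $K^2$. For the denominator, $S(\pibs)\gtrsim 1$: by \eqref{eq:S_estimate} it is at least a constant for every $\alpha>0$, and directly $S(\pibs)\ge\sqrt{\pi_1\pi_2}\asymp 1$ for $\alpha>1$, with similar control otherwise. Therefore $A(\pibs)\lesssim K^2$.

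Now use the noise condition. Since $\sigma\log K\to0$, we have $1/(16\sigma^2)\ge \omega(1)\cdot(\log K)^2$. Consequently
\[
\sigma^{-1}e^{-1/(16\sigma^2)}K^2=\exp\Bigl(2\log K+\log(1/\sigma)-\tfrac{1}{16\sigma^2}\Bigr)\to0.
\]
Indeed, both $\log(1/\sigma)$ and $2\log K=o(1/\sigma)$ are negligible compared with $1/\sigma^2$. This shows the last error term is $o(1)$.

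\textbf{Step 5: conclude.} Combining the two-sided bounds, $\Rcal_{K,\pibs}^*(\sigma)=P_0(\sigma,\pibs)(1+o(1))$, which is the statement of Corollary~\ref{cor:power_law_optimal_risk}.
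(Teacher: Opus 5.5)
Your proposal is correct and follows the paper's proof essentially step for step. You apply Theorem~\ref{thm:inf_risk_upper_lower_bounds}, use $B_{\pibs}=\frac{\alpha}{2}\log K$ so that $\sigma=o(1/\log K)$ sends the polynomial error terms to $o(1)$, bound $A(\pibs)\lesssim K^2$ via H\"older so that $e^{-1/(16\sigma^2)}$ dominates $K^2/\sigma$, and read off $P_0$ from \eqref{eq:B_S_power_law_prior}; the one difference is that you explicitly justify $S(\pibs)\gtrsim 1$, which the paper uses tacitly when it passes from $K^2/S(\pibs)$ to $K^2$.
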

\begin{proof}
    The condition $\sigma = o(1/\log K)$ implies $\sigma B_{\pibs} = o(1)$ and $\sigma^2 B_{\pibs} = o(1)$. Moreover, by H\"older's inequality,
    \begin{equation}
        \sum_{i=1}^K \pi_i^{1/2} \leq \left(K \sum_{i=1}^K \pi_i\right)^{1/2} = K^{1/2}, \quad \sum_{i=1}^K \pi_i^{1/4} \leq K^{3/4} \left(\sum_{i=1}^K \pi_i\right)^{1/4} \leq K^{3/4}.
    \end{equation}
    Hence, $A(\pibs) \lesssim K^{1/2} (K^{3/4})^2 = K^2/S(\pibs) \lesssim K^2$. We can use this to argue 
    \begin{equation}
        \frac{e^{-1/(16\sigma^2)}}{\sigma} A(\pi) = o(1).
    \end{equation}
    Indeed, taking the logarithm of the left-hand side gives at most
    \begin{equation}
        \log \frac{1}{\sigma} - \frac{1}{16\sigma^2} + 2\log K.
    \end{equation}
    Under the assumption $\sigma = o(1/\log K)$, the $O(1/\sigma^2)$ term dominates, which implies the above tends to $-\infty$ as $K \to +\infty$. 

    Hence, by Theorem \ref{thm:inf_risk_upper_lower_bounds}, we can conclude $\sigma = o(1/\log K)$ implies
    \begin{equation}
        \Rcal_{K,\pibs}^*(\sigma) = P_0(\sigma, \pibs)(1+o(1)).
    \end{equation}
    The result then follows from the definition $P_0(\sigma, \pibs) = 2\sqrt{\pi} \sigma e^{-1/(4\sigma^2)} S(\pibs)$ and \eqref{eq:B_S_power_law_prior}.
\end{proof}
Hence, if $0 < \sigma = o(1/\log K)$, we have
\begin{equation}
    \Rcal_{K,\pibs}^*(\sigma) \asymp \sigma e^{-\frac{1}{4\sigma^2}}
    \begin{cases}
        K, & 0 < \alpha < 1, \\
        K/\log K, & \alpha = 1, \\
        K^{2-\alpha}, & 1< \alpha < 2, \\
        (\log K)^2, & \alpha = 2, \\
        1, & \alpha > 2.
    \end{cases}
\end{equation}
In Remark \ref{remark:irr_risk_is_small}, we note that the irreducible risk $\Rcal_{K,\pibs}^*(\sigma)$ is of strictly lower order than the risk of population gradient flow stated in Theorem \ref{thm:gmm_scaling_law} for $t \leq T$ under the usual $\sigma = o(1/\log K)$ assumption. To see this when $\alpha > 1$, notice that
\begin{equation}
    \exp\left(-\frac{1}{4\sigma^2}\right) = \exp\left(-\frac{1}{4o(\frac{1}{(\log K)^2})}\right) = \exp\left(-\omega((\log K)^2)\right) = K^{-\omega(\log K)}.
\end{equation}
This is of strictly lower order than $\log K$ (the risk during Phase 1) and $K\sigma$ (the risk during Phase 3 for $t \asymp T = c_*/\sigma$). As for Phase 2, we have
\begin{equation}
    \frac{\sigma e^{-1/(4\sigma^2)}K^{2-\alpha}}{(\log K)^{2-1/\alpha} \sigma^{1-1/\alpha}} = \frac{\sigma^{1/\alpha} K^{2-\alpha}}{(\log K)^{2-1/\alpha}} K^{-\omega(\log K)} \to 0.
\end{equation}

\section{Squared Loss in the Idealized Case}\label{appendix:proofs_squared_loss}
In this appendix, we compare our findings in the setting of classification with the cross-entropy loss to training with the squared loss. We consider two instantiations of the latter: casting the classification problem as multi-output regression (Appendix \ref{appendix:multi_output_regression}) and applying the squared loss on top of the softmax activation (Appendix \ref{appendix:squared_loss_softmax}). We highlight the differences in scaling behaviour that result from the change in loss function. For this discussion, we focus entirely on the case $\sigma^2 = 0$ and $\alpha > 1$.

\subsection{Multi-Output Linear Regression}\label{appendix:multi_output_regression}

In this section, we convert the classification problem in \eqref{eq:target_gmm} to a multi-output regression problem. Given an input $\xbs$ with classification label $y$, the objective is to predict the one-hot vector $\ebs_y \in \R^K$. We have the loss
\begin{equation}
    \ell_{\text{sq}}(\Wbs;\xbs,\ybs) = \sum_{k=1}^K \big(\1\{y=k\} - \inner{\xbs, \wbs_k}\big)^2,
\end{equation}
where $\Wbs \in \R^{d \times K}$ with columns $\wbs_1,\dots,\wbs_K$. The risk is then
\begin{equation}
    \begin{split}
        \Rcal_{\text{sq}}(\Wbs) &= \E_{\xbs,\ybs}\left[\ell(\Wbs;\xbs,\ybs)\right] \\
        &= \sum_{i=1}^K \pi_i \sum_{k=1}^K \E_{\xbs|i}\left[(\1\{i=k\} - \inner{\xbs,\wbs_k})^2\right] \\
        &= \sum_{i=1}^K \pi_i \left(1 - 2\theta_{ii} + \sum_{k=1}^K \left(\theta_{ki}^2 + \sigma^2 ||\wbs_k||^2\right)\right).
    \end{split}
\end{equation}
When $\sigma^2 = 0$, this simplifies to
\begin{equation}
    \Rcal_{\text{sq}}(\Wbs) = \sum_{i=1}^K \pi_i \bigg(1 - 2\theta_{ii} + \sum_{k=1}^K \theta_{ki}^2\bigg).
\end{equation}
The gradient with respect to a parameter vector $\wbs_i \in \R^d$ is then
\begin{equation}
    \nabla_{\wbs_i} \Rcal_{\text{sq}} = -2\pi_i \mubs_i + 2\sum_{j=1}^K \pi_j \theta_{ij} \mubs_j,
\end{equation}
leading to an ODE system for the summary statistics:
\begin{equation}
    \dot{\theta}_{ij} = 2\pi_j \left(\1\{i=j\} -\theta_{ij}\right), \quad i,j \in [K].
\end{equation}
Now, to reduce this to a scalar ODE as we did in Appendix \ref{appendix:idealized_case}, notice that if we initialize $\Wbs(0) = \zerobs$, we have $\dot{\theta}_{ij}(t) = 0$ for all times $t \geq 0$ whenever $i \neq j$. Hence, it suffices to focus on the (linear) scalar ODE
\begin{equation}
    \dot{\theta}_{ii} = 2\pi_i (1-\theta_{ii})
\end{equation}
and a simplified expression for the risk as a function of only the ``diagonal'' summary statistics:
\begin{equation}\label{eq:squared_risk_logits}
    \Rcal(t) = \sum_{i=1}^K \pi_i (1-\theta_{ii}(t))^2.
\end{equation}
Solving the linear ODE and plugging in the initial condition yields
\begin{equation}
    1 - \theta_{ii}(t) = e^{-2\pi_i t}
\end{equation}
and therefore
\begin{equation}
    \Rcal_{\text{sq}}(t) = \sum_{i=1}^K \pi_i e^{-4\pi_i t}.
\end{equation}
We now proceed to deriving the scaling law. For any constant $c \in (1/K,1)$, if we define $t_i(c) := \inf\{t \geq 0: \theta_{ii}(t) \geq c\}$, we have
\begin{equation}
    t_i(c) = \frac{\log(1/(1-c))}{2\pi_i} \asymp \frac{1}{\pi_i}.
\end{equation}
Compare this to the analogous definition of $t_i$ in Appendix \ref{appendix:idealized_case}, where we find $t_i \asymp (\log K)/\pi_i$. The difference here is that, in addition to the dynamics of each class evolving separately, the loss and gradient operate on the logits directly, rather than softmax probabilities. With the softmax activation, the sum-to-one constraint means that the small initialization $p_{i|i}(0) = 1/K$ must be overcome.

Now, if we define  $k^*(t) := |\{i \in [K]: t_i(1/2) \leq t\}|$, then 
\begin{equation}
    k^*(t) = \left\lfloor\left(\frac{2t}{\log 2}\right)^{1/\alpha} \right\rfloor \asymp t^{1/\alpha}.
\end{equation}

\begin{proposition}
     In the GMM setting of \eqref{eq:target_gmm}, suppose $\sigma^2 = 0$ and $\alpha > 1$. Then, for the population gradient flow on the squared loss \eqref{eq:squared_risk_logits}, there exists a constant $c_1 \in (0,1)$ such that
    \begin{equation}
        \Rcal_{\text{sq}}(t) \asymp
        \begin{cases}
            1, & 0 \leq t \leq t_1 \\
            t^{-(1-1/\alpha)}, & t_1 \leq t \leq c_1t_K.
        \end{cases}
    \end{equation}
    Moreover, $\pi_K e^{-4\pi_K t} \leq \Rcal_{\text{sq}}(t) \leq e^{-4\pi_K t}$ for all $t \geq 0$.
\end{proposition}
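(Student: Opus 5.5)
We work directly from the closed form $\Rcal_{\text{sq}}(t)=\sum_{i=1}^K \pi_i e^{-4\pi_i t}$, with $\pi_i = i^{-\alpha}/Z_{K,\alpha}$ and $Z_{K,\alpha}\asymp 1$ because $\alpha>1$. The two-sided global bound is immediate. Since $\pi_i\ge \pi_K$, every exponential satisfies $e^{-4\pi_i t}\le e^{-4\pi_K t}$, and $\sum_i\pi_i=1$ then gives the upper bound. For the lower bound we keep only the $i=K$ term, which gives $\pi_K e^{-4\pi_K t}$.

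\textbf{Early phase} ($t\le t_1$). The upper bound is $\Rcal_{\text{sq}}(t)\le 1$. For the lower bound, note that $t_1=\log 2/(2\pi_1)$, so $\pi_i t\le \pi_1 t_1=\log 2/2$ for every $i$. Hence each factor satisfies $e^{-4\pi_i t}\ge 1/4$, and $\Rcal_{\text{sq}}(t)\ge 1/4$.

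\textbf{Intermediate phase.} We split the sum at $k^*(t)\asymp t^{1/\alpha}$, the index where $\pi_i t\asymp 1$.

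For the tail ($i>k^*(t)$) we have $\pi_i t\lesssim 1$, so the exponentials are bounded below by a constant. The tail sum is therefore comparable to $\sum_{i>k^*}\pi_i$. By an integral approximation this equals
\[
\tfrac{1}{(\alpha-1)Z_{K,\alpha}}\bigl((k^*)^{1-\alpha}-K^{1-\alpha}\bigr).
\]
If $t\le c_1 t_K$, then $k^*(t)\le K/2$ once we choose $c_1=2^{-\alpha}$ (up to the floor constants), since $t_K=\log2/(2\pi_K)\asymp K^\alpha$. In that case the $K^{1-\alpha}$ term is at most a constant fraction of the first term, and the tail is $\asymp (k^*)^{1-\alpha}\asymp t^{-(1-1/\alpha)}$. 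This gives both the lower bound and the tail part of the upper bound.

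For the head ($i\le k^*$) we have $\pi_i t\gtrsim 1$. We use $\pi_i e^{-4\pi_i t}=t^{-1}(\pi_i t)e^{-4\pi_i t}$. Indexing by $i\asymp (t\pi_i)^{1/\alpha}$, the values $x_i=\pi_i t$ range over $[c,\infty)$ geometrically as $i$ decreases. The sum $\sum_{i\le k^*}x_i e^{-4x_i}$ should then be dominated by the indices near $k^*$, so it is $O(k^*)$. To verify this, set $x=\pi_1 t\, i^{-\alpha}$ and compare with the integral
\[
\int_1^{k^*} t\,x^{-\alpha}e^{-4tx^{-\alpha}}\,dx\cdot t^{-1}\lesssim\int_0^{k^*}(k^*/x)^{\alpha}e^{-c(k^*/x)^\alpha}\,dx\lesssim k^*,
\]
using the substitution $u=x/k^*$; the integrand is bounded on $(0,1]$. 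The head contribution is then $\lesssim k^*/t\asymp t^{1/\alpha-1}$, which is of the same order as the tail. Adding the two pieces gives $\Rcal_{\text{sq}}(t)\asymp t^{-(1-1/\alpha)}$ for $t_1\le t\le c_1 t_K$.

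The step that needs care is the head-sum comparison with the integral, because $x e^{-4x}$ is not monotone. We handle it by splitting the sum into $x_i\le 1$ and $x_i\ge 1$, bounding each monotone piece by its integral plus its maximal term ($\le 1/t$ times a constant). The only other thing to check is that the choice of $c_1$ keeps $k^*\le K/2$, so that the tail lower bound does not degenerate.
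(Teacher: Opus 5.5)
Your proposal is correct and follows the paper's proof: the same $e^{-4\pi_i t}\ge 1/4$ bound in the early phase, the same head/tail split at $k^*(t)$ with the tail comparable to $\sum_{i>k^*}\pi_i\asymp (k^*)^{1-\alpha}$ once $k^*\le K/2$, and the same uniform-exponential and last-term bounds for the global estimate. The only difference is in the head, where your integral comparison is unnecessary: the paper uses $xe^{-4x}\le 1/(4e)$ termwise to get $\sum_{i\le k^*}\pi_i e^{-4\pi_i t}\le k^*/(4et)$, which already follows from your remark that the integrand is bounded, so the non-monotonicity of $xe^{-4x}$ never needs handling.
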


\begin{proof}
\underline{Early phase, $t \leq t_1$:} Every summand in $\Rcal_{\text{sq}}(t)$ is non-negative and decreasing and therefore
\begin{equation}
    \Rcal_{\text{sq}}(t) \leq \Rcal_{\text{sq}}(0) = \sum_{i=1}^K \pi_i = 1.
\end{equation}
When $t \leq t_1$, we have
\begin{equation}
    e^{-4\pi_i t} \geq e^{-4\pi_i t_1} = \exp\left(-2 \log 2\right) = 1/4
\end{equation}
Hence, 
\begin{equation}
    \Rcal_{\text{sq}}(t) \geq \frac{1}{4} \sum_{i=1}^K \pi_i = \frac{1}{4}.
\end{equation}
Consequently, $\Rcal_{\text{sq}}(t) \asymp 1$ in the early phase.

\underline{Intermediate phase, $t_1 < t \leq c_1 t_K$:} For every $i > k^*(t)$, we have $\theta_{ii} < 1/2$ and therefore $(1-\theta_{ii}(t))^2 \geq 1/4$. Thus,
\begin{equation}
    \Rcal_{\text{sq}}(t) \geq \frac{1}{4} \sum_{i=k^*(t)+1}^K \pi_i.
\end{equation}
The same integral approximation as in Appendix \ref{appendix:idealized_case}, which is valid up to $t = c_1t_K$ for constant $c_1 \in (0,1)$, gives
\begin{equation}
    \sum_{i=k^*(t)+1}^K i^{-\alpha} \asymp (k^*(t))^{1-\alpha} \asymp t^{-(1-1/\alpha)}.
\end{equation}
Meanwhile,
\begin{equation}
    \sum_{i=1}^{k^*(t)} \pi_i e^{-4\pi_i t} = \sum_{i=1}^{k^*(t)} (\pi_it)e^{-4\pi_i t} \leq \sum_{i=1}^{k^*(t)} \frac{1}{4et} \asymp \frac{k^*(t)}{t} \asymp t^{-(1-1/\alpha)},
\end{equation}
where we have used the inequality $xe^{-4x} \leq 1/(4e)$ for all $x \geq 0$. Hence, 
\begin{equation}
    \Rcal_{\text{sq}}(t) \asymp t^{-(1-1/\alpha)}
\end{equation}
in this phase.

\underline{Convergence phase, $t \geq t_K$:} Since $\pi_i \geq \pi_K$ for every $i$, we can write
\begin{equation}
    \Rcal(t) \leq e^{-4\pi_K t} \sum_{i=1}^K \pi_i = e^{-4\pi_K t}.
\end{equation}
Moreover, focusing only on the last class, which converges at the slowest rate, we have
\begin{equation}
    \Rcal(t) \geq \pi_K e^{-4\pi_K t}.
\end{equation}
Hence,
\begin{equation}
    \pi_K e^{-4\pi_K t} \leq \Rcal(t) \leq e^{-4\pi_K t}.
\end{equation}
\end{proof}

Multi-output linear regression therefore reproduces the intermediate exponent $1-1/\alpha$, but through a different mechanism from multiclass logistic regression. Each regression coordinate converges exponentially and independently, and the aggregate power law results from summing a power-law distribution of exponential relaxation rates. The initial $\log K$ plateau is replaced by a constant order plateau (for a shorter window of time), and the terminal $K/t$ phase is replaced by exponential convergence.



\subsection{Squared Loss with Softmax Activation}\label{appendix:squared_loss_softmax}

The latter subsection does not constitute a fair comparison to cross-entropy training, since no conditional probability distribution over the classes is parametrized. The model trained by squared loss on the logits cannot be evaluated using the cross-entropy loss, precluding a direct quantitative comparison to Appendix \ref{appendix:idealized_case}. In this subsection, we consider the squared loss applied on top of the softmax activation:
\begin{equation}\label{eq:squared_loss}
    \ell_{\text{MSE}}(\Wbs; \xbs, y) = \frac{1}{2} \sum_{k=1}^K \left(\1\{y=k\} - p_k(\xbs)\right)^2.
\end{equation}
When $\sigma^2 = 0$, we have the training loss
\begin{equation}\label{eq:square_risk}
    \Rcal_{\text{MSE}}(\Wbs) = \frac{1}{2} \sum_{i=1}^K \pi_i \sum_{j=1}^K \left(\1\{j=i\} - p_{j|i}\right)^2.
\end{equation}
We proceed the same way we did in Section \ref{appendix:idealized_case}. The gradients with respect to weights take the form
\begin{equation}
    \nabla_{\wbs_k} \Rcal_{\text{MSE}} = -\sum_{i=1}^K \pi_i \sum_{j=1}^K p_{j|i} \left(\1\{j=i\} - p_{j|i}\right)\left(\1\{j=k\} - p_{k|i} \right)\mubs_i.
\end{equation}
As we did in the analyses of cross-entropy training dynamics, we track the summary statistic dynamics
\begin{equation}
    \begin{split}
        \dot{\theta}_{ki} &= \pi_i \sum_{j=1}^K p_{j|i} \left(\1\{j=i\} - p_{j|i}\right)\left(\1\{j=k\} - p_{k|i} \right) \\
        &= \pi_i p_{k|i}\left(\1\{i=k\} - p_{i|i} - p_{k|i} + \sum_{j=1}^K p_{j|i}^2\right)
    \end{split}
\end{equation}
It is straightforward to see that $\sum_{k=1}^K \dot{\theta}_{ki} = 0$ by the softmax sum-to-one constraint. Moreover, the dynamics of each $\theta_{ki}$, for $k \neq i$, depend on $k$ only through $p_{k|i}$. Since the $p_{k|i}$ all start from the same initialization at $1/K$, we identify the same symmetry as we did in the idealized case for cross-entropy training (Lemma \ref{lem:conservation_law}). Namely, for all $k \neq i$,
\begin{equation}
    \theta_{ki} = -\frac{1}{K-1} \theta_{ii}, \quad p_{k|i} = \frac{1}{K-1}(1-p_{i|i}).
\end{equation}
Under this symmetry, the squared loss simplifies to
\begin{equation}
    \Rcal_{\text{MSE}}(t) = \frac{1}{2} \frac{K}{K-1} \sum_{i=1}^K \pi_i (1-p_{i|i})^2.
\end{equation}
Plugging into our usual $p_{i|i}$ dynamics gives
\begin{equation}\label{eq:squared_softmax_ode}
    \begin{split}
        \dot{p}_{i|i} &= p_{i|i}\left[\dot{\theta}_{ii} - \sum_{k=1}^K p_{k|i} \dot{\theta}_{ki} \right] \\
        &= \frac{K}{K-1} p_{i|i} (1-p_{i|i}) \dot{\theta}_{ii} \\
        &= \frac{K}{K-1} \pi_i p_{i|i}^2 (1-p_{i|i})\left[1-2p_{i|i} + \left(p_{i|i}^2 + \frac{1}{K-1}(1-p_{i|i})^2\right)\right] \\
        &= \left(\frac{K}{K-1}\right)^2 \pi_i p_{i|i}^2 (1-p_{i|i})^3.
    \end{split}
\end{equation}
Notice that the right-hand side of the ODE carries an extra factor $\frac{K}{K-1}p_{i|i}(1-p_{i|i})$ compared to the cross-entropy case \eqref{eq:idealized_p_ii_ode}, which suggests learning under squared loss proceeds more slowly. Indeed, we obtain the following sequential learning result (c.f., Lemma \ref{lem:idealized_sequential_learning}).
\begin{lemma}[Sequential Learning under Squared Loss]\label{lem:square_sequential_learning}
    In the GMM setting of \eqref{eq:target_gmm}, suppose that $\sigma^2 = 0$. Fix a constant $c \in (1/K,1)$ and define the hitting time $t_i := \inf \{t: p_{i|i}(t) \geq c\}$ under population gradient flow with the squared loss \eqref{eq:square_risk}. Then
    \begin{equation}
        t_i \asymp \frac{K}{\pi_i}.
    \end{equation}
\end{lemma}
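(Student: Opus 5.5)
We separate variables in the autonomous ODE \eqref{eq:squared_softmax_ode}, just as in the proof of Lemma~\ref{lem:idealized_sequential_learning}, and then estimate the resulting explicit antiderivative between $1/K$ and $c$. Writing $\kappa := (K/(K-1))^2 \in [1,4]$ for $K \geq 2$, the ODE gives
\begin{equation}
    \frac{\dee p}{p^2(1-p)^3} = \kappa \pi_i \, \dee t, \qquad p_{i|i}(0) = \frac{1}{K}.
\end{equation}
The right-hand side is positive on $(0,1)$, so $p_{i|i}$ is increasing. The hitting time is therefore well defined and given exactly by
\begin{equation}
    t_i = \frac{1}{\kappa \pi_i} \int_{1/K}^{c} \frac{\dee p}{p^2 (1-p)^3}.
\end{equation}
The whole proof reduces to showing that this integral is $\asymp K$ uniformly in $K$, for fixed constant $c$.

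For the upper bound, since $p \leq c < 1$ on the domain of integration, we have $(1-p)^{-3} \leq (1-c)^{-3}$. Hence the integral is at most
\begin{equation}
    (1-c)^{-3} \int_{1/K}^{c} p^{-2} \, \dee p \leq (1-c)^{-3} K.
\end{equation}
For the lower bound, we drop the factor $(1-p)^{-3} \geq 1$ and integrate $p^{-2}$ from $1/K$ to $c$, which gives $K - 1/c$.

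The lower bound $K - 1/c$ is $\gtrsim K$ only when $K \geq 2/c$, since then $K - 1/c \geq K/2$. This is the one minor point to handle. Since $c$ is a fixed constant and $K$ is large, this condition holds, and we get a lower bound of $K/2$. Alternatively, one can integrate only over $[1/K, 2/K] \subset [1/K, c]$, which contributes at least $K/2$ whenever $c \geq 2/K$.

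Combining the two bounds with $\kappa \in [1,4]$ yields $t_i \asymp K/\pi_i$, with constants depending only on $c$.

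If desired, one can record the exact antiderivative via the partial fractions
\begin{equation}
    \frac{1}{p^2(1-p)^3} = \frac{1}{p^2} + \frac{3}{p} + \frac{1}{(1-p)^3} + \frac{2}{(1-p)^2} + \frac{3}{1-p}.
\end{equation}
This gives an explicit formula analogous to \eqref{eq:hitting_time_exact}, in which the dominant term is $K/(\kappa\pi_i)$ coming from the $p^{-2}$ piece. The extra factor $p_{i|i}$ in the ODE is exactly what turns the $\log K$ of the cross-entropy case into $K$.
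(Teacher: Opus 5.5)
Your proof is correct and is essentially the paper's argument: separate variables in \eqref{eq:squared_softmax_ode} and show $\int_{1/K}^{c} p^{-2}(1-p)^{-3}\,dp \asymp K$. The only difference is that the paper evaluates the explicit antiderivative $F$ and reads off $F(c)-F(1/K) = K + O(\log K)$. You instead sandwich the integrand between $p^{-2}$ and $(1-c)^{-3}p^{-2}$, which is slightly more economical, makes the $c$-dependence of the constants explicit, and also covers non-constant thresholds such as $c = K^{-1/2}$, the case used later in Proposition~\ref{prop:square_scaling_law}.
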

This is substantially worse than the $\Theta((\log K)/\pi_i)$ recovery time under cross-entropy training. 

\begin{proof}
We can implicitly solve the ODE \eqref{eq:squared_softmax_ode} via separation of variables and a partial fraction decomposition. Indeed,
\begin{equation}
    \frac{1}{p^2(1-p)^3} = \frac{1}{p^2} + \frac{3}{p} + \frac{3}{1-p} + \frac{2}{(1-p)^2} + \frac{1}{(1-p)^3}.
\end{equation}
Then, omitting the additive constant, the indefinite integral is
\begin{equation}\label{eq:F_p_def}
    F(p) := \int \frac{\dee p}{p^2(1-p)^3} = 3 \log \frac{p}{1-p} - \frac{1}{p} + \frac{2}{1-p} + \frac{1}{2(1-p)^2}.
\end{equation}
Then,
\begin{equation}
    F(p_{i|i}(t)) = \int \left(\frac{K}{K-1}\right)^2 \pi_i \dee t = \left(\frac{K}{K-1}\right)^2 \pi_i t + C.
\end{equation}
Plugging in the initial condition $p = 1/K$ gives
\begin{equation}\label{eq:F_1/K}
    C = F(1/K) = -K - 3 \log (K-1) + \frac{2K}{K-1} + \frac{1}{2}\left(\frac{K}{K-1}\right)^2.
\end{equation}
Therefore, the implicit ODE solution is
\begin{equation}
    F(p_{i|i}(t)) = -K - 3 \log (K-1) + \frac{2K}{K-1} + \frac{1}{2}\left(\frac{K}{K-1}\right)^2 + \left(\frac{K}{K-1}\right)^2 \pi_i t.
\end{equation}
Then, the recovery time for a threshold $c \geq 1/K$ is
\begin{equation}
    t_i(c) = \left(\frac{K-1}{K}\right)^2 \frac{F(c) - F(1/K)}{\pi_i}.
\end{equation}
By \eqref{eq:F_1/K}, we have $F(1/K) = -K + O(\log K)$, while $F(c) = O(1)$. Hence,
\begin{equation}
    t_i(c) \asymp \frac{K}{\pi_i}.
\end{equation}
\end{proof}

As in other sections of the paper, let $t_i$ denote $t_i(1/2)$.

Although optimization is performed with the squared loss $\Rcal_{\text{MSE}}$, we evaluate using the cross-entropy risk $\Rcal_{\text{CE}}$ in order to compare directly against Section \ref{section:idealized_case}. The same proof idea of decomposing the risk into head and tail class contributions carries over, yielding the following result.

\begin{proposition}[Idealized Scaling Law for Squared Loss, Softmax Activation]\label{prop:square_scaling_law}
    In the GMM setting of \eqref{eq:target_gmm}, suppose $\sigma^2 = 0$ and $\alpha > 1$. Then, for the population gradient flow on the squared loss with softmax activation \eqref{eq:square_risk}, there exist constants $c_1, c_2 \in (0,1)$ such that
    \begin{equation}
        \Rcal_{\text{CE}}(t) \asymp
        \begin{cases}
            \log K, & 0 \leq t \leq t_1 \\
            K^{1-1/\alpha} \log K \cdot t^{-(1-1/\alpha)} + t^{-1/2} g_{\alpha}((t/K)^{1/\alpha}), & t_1 \leq t \leq c_1t_K, \\
            t^{-1/2} g_{\alpha}(K), & (1+c_2) t_K \leq t,
        \end{cases}
    \end{equation}
    where
    \begin{equation}
        g_{\alpha}(m) =
        \begin{cases}
            m^{1-\alpha/2}, & 1 < \alpha < 2 \\
            \log (1+m), & \alpha = 2 \\
            1, & \alpha > 2.
        \end{cases}
    \end{equation}
\end{proposition}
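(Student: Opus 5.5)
The plan is to mirror the head/tail argument of Appendix~\ref{appendix:idealized_case}, but using the implicit solution $F(p_{i|i}(t)) = F(1/K) + (\tfrac{K}{K-1})^2\pi_i t$ from the proof of Lemma~\ref{lem:square_sequential_learning} in place of \eqref{eq:implicit_idealized_ode_solution}. By the symmetry $p_{k|i} = \frac{1}{K-1}(1-p_{i|i})$, the cross-entropy risk is still $\Rcal_{\text{CE}}(t) = -\sum_i \pi_i \log p_{i|i}(t)$.

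\textbf{Step 1: class-wise early and late asymptotics from \eqref{eq:F_p_def}.}
\begin{itemize}
\item \emph{Small $p$.} For $p \le 1/2$, $F(p) = -1/p + O(\log(1/p))$. Hence before recovery we get $1/p_{i|i}(t) = K - \Theta(\pi_i t) + O(\log K)$.
\item \emph{Large $p$.} For $p \ge 1/2$, $F(p) = \tfrac{1}{2}(1-p)^{-2} + O((1-p)^{-1})$. Combining this with $F(1/2) = O(1)$ and the ODE bound $\dot p_{i|i} \asymp \pi_i(1-p_{i|i})^3$ on $[1/2,1]$, an integration as in Appendix~\ref{appendix:idealized_head_risk} gives, for $t \ge t_i$,
\[
1 - p_{i|i}(t) \asymp \bigl(1+\pi_i(t-t_i)\bigr)^{-1/2}, \qquad -\log p_{i|i}(t) \asymp 1-p_{i|i}(t),
\]
using \eqref{eq:log_p_inequality}.
\item \emph{Counting functions.} Set $k^*(t) = |\{i : t_i \le t\}| \asymp (t/K)^{1/\alpha}$. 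Also define $k'(t)$ through the threshold $p = K^{-1/2}$. Since $F(K^{-1/2}) = -K^{1/2} + O(\log K)$, we have $t_i(K^{-1/2}) \asymp K/\pi_i$, so $k'(t)+1 \asymp k^*(t)+1$.
\end{itemize}

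\textbf{Step 2: tail risk.} For $i > k'(t)$ we have $p_{i|i} \le K^{-1/2}$, so $-\log p_{i|i} \ge \tfrac12\log K$. Trivially $-\log p_{i|i} \le \log K$, since $p_{i|i}$ is increasing. The same integral comparison as in Appendix~\ref{appendix:idealized_tail_risk} then gives, for $t \le c_1 t_K$ (with $c_1$ chosen so that $k'(t) \le K/2$),
\[
\Rcal_{\text{tail}}(t) \asymp \log K\cdot k^*(t)^{1-\alpha} \asymp K^{1-1/\alpha}\log K\cdot t^{-(1-1/\alpha)}.
\]
For $t \le t_1$ the same bounds give $\log K$. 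The classes between $k^*$ and $k'$ are absorbed because the two counting functions are comparable.

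\textbf{Step 3: head risk.} By Step 1, $\Rcal_{\text{head}}(t) \asymp \sum_{i\le k^*(t)} \pi_i\bigl(1+\pi_i(t-t_i)\bigr)^{-1/2}$. I split this sum into old and recent classes.
\begin{itemize}
\item \emph{Old classes, $i \le k^*/2$.} Here $t - t_i \gtrsim t$, and $\pi_i t \ge \pi_i t_i \asymp K \gg 1$. So each term is $\asymp \pi_i^{1/2}t^{-1/2} \asymp i^{-\alpha/2}t^{-1/2}$. Summing gives $t^{-1/2}g_\alpha(k^*(t))$ by the standard generalized harmonic sum estimates; note that $g_\alpha(k^*(t)) = g_\alpha((t/K)^{1/\alpha})$ up to constants. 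This sum is also a lower bound for the whole head.
\item \emph{Recent classes, $k^*/2 < i \le k^*$.} I bound each term by $\pi_i$, giving a total $\lesssim k^{*\,1-\alpha}$. This is smaller than the tail by a factor $\log K$, so it is harmless.
\end{itemize}
Adding Steps 2 and 3 yields the intermediate-phase formula. For $t \ge (1+c_2)t_K$, every class satisfies $t - t_i \gtrsim t$ and the tail is empty. The old-class computation then runs over all $K$ indices and gives $t^{-1/2}g_\alpha(K)$.

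\textbf{Main obstacle.} The hardest step is making Step 1 uniform. The expansion $1/p \approx K - \pi_i t$ degenerates as $\pi_i t$ approaches $K$, and the $(1-p)^{-1/2}$ decay must be glued to it across $p \approx 1/2$ with constants independent of $i$ and $K$. I would handle this exactly as above: use the $K^{-1/2}$ threshold so that only the regimes $p \le K^{-1/2}$ and $p \ge 1/2$ need sharp control. Classes with $p_{i|i} \in (K^{-1/2}, 1/2)$ are then bounded crudely by $\log K$ and by $\pi_i$, and both crude bounds are dominated by the main terms.
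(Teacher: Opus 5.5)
Your proposal is correct and follows essentially the same route as the paper's proof. Both use the $t_i(K^{-1/2})$ versus $t_i(1/2)$ comparison to get $k'(t)+1 \asymp k^*(t)+1$ for the tail, and the $q_i^{-2}$ integration giving $1-p_{i|i}(t) \asymp (1+\pi_i(t-t_i))^{-1/2}$. Both then split the head at $k^*(t)/2$, bounding recent classes by $\pi_i$ so they are absorbed into the tail.

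The only cosmetic difference is in the head lower bound. The paper bounds every head term, not just the old ones, below by $\sqrt{\pi_i/t}$ using $\pi_i^{-1} \asymp t_i/K$. This avoids the empty old-class sum when $k^*(t)=1$, although in that window the head term is in any case dominated by the $\log K$-order tail.
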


\begin{proof}
We work with the same head-tail decomposition as in Appendix \ref{appendix:idealized_case}. For the head classes, i.e., the indices $i \in [K]$ for which $t > t_i$, let $q_i = 1-p_{i|i}$. We have the dynamics
\begin{equation}
    \dot{q}_i = -\left(\frac{K}{K-1}\right)^2 \pi_i (1-q_i)^2 q_i^3.
\end{equation}
For $t \geq t_i$, we have $1/4 \leq (1-q_i(t))^2 \leq 1$. Hence, $\dot{q}_i \asymp -\pi_i q_i^3$. By separation of variables, and plugging in the initial condition $q_i(t_i) = 1/2$, we have 
\begin{equation}
    q_i(t) \asymp \sqrt{\frac{1}{1+\pi_i(t-t_i)}}.
\end{equation}
Now, for a given time $t \geq 0$, we define a cut-off $k^*(t)$ between head and tail classes:
\begin{equation}
    k^*(t) := \left|\{i \in [K]: t_i \leq t\}\right|.
\end{equation}
Using Lemma \ref{lem:square_sequential_learning} and the assumption $\alpha > 1$, we can solve
\begin{equation}
    k^*(t) \asymp \left(\frac{t}{K}\right)^{1/\alpha}.
\end{equation}
We evaluate the model trained by squared loss with the cross-entropy risk so that we can have a proper comparison with Proposition \ref{prop:idealized_scaling_law}. We begin with the tail risk
\begin{equation}
    \Rcal_{\text{CE,tail}}(t) = -\sum_{i=k^*(t)+1}^K \pi_i \log p_{i|i}.
\end{equation}
We can upper bound this with the same integral approximation as in Appendix \ref{appendix:idealized_tail_risk}:
\begin{equation}
    \begin{split}
    \Rcal_{\text{CE,tail}}(t) \lesssim \sum_{i=k^*(t)+1}^K i^{-\alpha} \log K \leq (\log K)\int_{k^*(t)}^K x^{-\alpha} \dee x &= \frac{\log K}{1-\alpha} (K^{1-\alpha} - (k^*(t))^{1-\alpha}),
    \end{split}
\end{equation}
which is of order $(k^*(t))^{1-\alpha} \asymp K^{1-1/\alpha} \log K \cdot t^{-(1-1/\alpha)}$ when $t \leq c_1 t_K$ for constant $c_1 \in (0,1)$.

For the lower bound, we show that $t_i(1/\sqrt{K})$ is on the same order as $t_i(1/2)$. Indeed, recalling the definition of $F$ in \eqref{eq:F_p_def}, we have
\begin{equation}
    F(1/\sqrt{K}) = -\sqrt{K} + O(\log K).
\end{equation}
Then,
\begin{equation}
    t_i(1/\sqrt{K}) \asymp \frac{F(1/\sqrt{K}) - F(1/K)}{\pi_i} = \frac{K-\sqrt{K}+O(\log K)}{\pi_i} \asymp \frac{K}{\pi_i}
\end{equation}
and
\begin{equation}
    \frac{t_i(1/\sqrt{K})}{t_i(1/2)} = \frac{K-\sqrt{K} + O(\log K)}{K + O(\log K)} = 1 -\frac{1}{\sqrt{K}} + O\left(\frac{\log K}{K}\right) \asymp 1
\end{equation}
So, defining $k'(t) := |\{i \in [K]: t_i(1/\sqrt{K}) \leq t\}|$, we have $(k'(t) + 1) \asymp (k^*(t)+1)$. Thus,
\begin{equation}
    \Rcal_{\text{CE,tail}}(t) \geq \frac{1}{2}\sum_{i=k'(t)+1}^K \pi_i \log K \asymp K^{1-1/\alpha} \log K \cdot t^{-(1-1/\alpha)},
\end{equation}
using the same integral approximation and assuming $t \leq c_1 t_K$ for some constant $c_1 \in (0,1)$.

This leaves the head risk,
\begin{equation}
    \Rcal_{\text{CE,head}}(t) = -\sum_{i=1}^{k^*(t)} \pi_i \log p_{i|i} \asymp \sum_{i=1}^{k^*(t)} \pi_i \sqrt{\frac{1}{1+\pi_i(t-t_i)}}
\end{equation}
Recall that when $p_{i|i} \geq 1/2$, we have $1-p_{i|i}(t) \leq -\log p_{i|i}(t) \leq 2(1-p_{i|i}(t))$. Hence, we lower-bound the head error by
\begin{equation}
    \Rcal_{\text{CE,head}}(t) \gtrsim \sum_{i=1}^{k^*(t)} \pi_i \sqrt{\frac{1}{1+\pi_i(t-t_i)}} \asymp \sum_{i=1}^{k^*(t)} \sqrt{\pi_i} \sqrt{\frac{1}{t-t_i + \pi_i^{-1}}}.
\end{equation}
Using the relation $\pi_i^{-1} \asymp t_i / K$, we have $t - t_i + \pi_i^{-1} \leq t + \frac{Ct_i}{K} \lesssim t$ for some constant $C > 0$, and therefore
\begin{equation}
    \Rcal_{\text{CE,head}} \gtrsim t^{-1/2} \sum_{i=1}^{k^*(t)} i^{-\alpha/2}
\end{equation}
We split the sum into three cases, which each follow from an integral approximation:
\begin{equation}
    \sum_{i=1}^{k^*(t)} i^{-\alpha/2} \asymp
    \begin{cases}
        (k^*(t))^{1-\alpha/2}, & 1 < \alpha < 2 \\
        \log (1+k^*(t)), & \alpha = 2 \\
        1, & \alpha > 2.
    \end{cases}
\end{equation}
For the upper bound, we split the head risk into
\begin{equation}
    \Rcal_{\text{CE,head}}(t) = \sum_{i=1}^{\floor{k^*(t)/2}} -\pi_i \log p_{i|i}(t) + \sum_{i=\floor{k^*(t)/2} + 1}^{k^*(t)} -\pi_i \log p_{i|i}(t).
\end{equation}
For $i \leq k^*(t)/2$, we have 
\begin{equation}
    \frac{t_i}{t} \leq \frac{t_i}{t_{k^*(t)}} = \left(\frac{i}{k^*(t)}\right)^{\alpha} \leq 2^{-\alpha}. 
\end{equation}
Therefore, $t-t_i \asymp t$. As a result, we can proceed as in the lower bound and write
\begin{equation}
    \sum_{i=1}^{\floor{k^*(t)/2}} -\pi_i \log p_{i|i}(t) \lesssim t^{-1/2} \sum_{i=1}^{k^*(t)} i^{-\alpha/2}.
\end{equation}
For $k^*(t)/2 < i \leq k^*(t)$, we have, since $-\log p_{i|i}(t) \leq \log 2$,
\begin{equation}
    \sum_{i=\floor{k^*(t)/2}+1}^{k^*(t)} -\pi_i \log p_{i|i}(t) \leq (\log 2) \sum_{i=\floor{k^*(t)/2}+1}^{k^*(t)} \pi_i \lesssim \sum_{i=\floor{k^*(t)/2}+1}^K i^{-\alpha} \lesssim (k^*(t))^{1-\alpha},
\end{equation}
which is dominated by $\Rcal_{\text{CE,tail}}(t)$ up to $c_1t_K$. Therefore,
\begin{equation}
    \Rcal_{\text{CE,head}}(t) \lesssim t^{-1/2}\sum_{i=1}^{k^*(t)} i^{-\alpha/2} + (k^*(t))^{1-\alpha}.
\end{equation}
In the early phase $0 \leq t \leq t_1$, the tail lower bound gives
\begin{equation}
    \Rcal_{\text{CE}}(t) \gtrsim (\log K) \sum_{i=k'(t)+1}^K \pi_i \asymp \log K
\end{equation}
since $(k'(t) + 1) \asymp (k^*(t)+1)$. The matching upper bound follows from $p_{i|i}(t) \geq 1/K$. 

The convergence phase rate follows from the fact that $t-t_i \asymp t$ when $t \geq c_2 t_K$ for constant $c_2 > 1$. Hence,
\begin{equation}
    \Rcal_{\text{CE}}(t) = \Rcal_{\text{CE,head}}(t) \asymp \sum_{i=1}^K \sqrt{\frac{\pi_i}{t}}.
\end{equation}
\end{proof}




\begin{remark}
    This scaling law from training with squared loss is worse than the one from training with cross-entropy loss \eqref{eq:idealized_risk_phases}. Ignoring log factors, both the early and intermediate phases last longer by a factor of $K$. This is because the key ODE \eqref{eq:squared_softmax_ode} depends on $p_{i|i}^2$ under squared loss, as opposed to the linear dependence in \eqref{eq:idealized_p_ii_ode}. As for the convergence phase, i.e., when $t \gtrsim K^{1+\alpha}$, the $\Theta(t^{-1/2})$ rate under squared loss (when $\alpha > 2$) is worse than the $\Theta(K/t)$ rate in Proposition \ref{prop:idealized_scaling_law}. This worse rate appears due to the cubic dependence on $(1-p_{i|i})$ in \eqref{eq:squared_softmax_ode}, compared to the quadratic dependence in \eqref{eq:idealized_p_ii_ode}.
\end{remark}

\end{document}